\documentclass{article}

\usepackage{microtype}
\usepackage{graphicx}
\usepackage{subcaption}
\usepackage{booktabs} 
\usepackage{tabularx}
\usepackage{comment}
\usepackage{conference}

\usepackage{hyperref}
\usepackage{tikz}
\usetikzlibrary{arrows.meta,positioning,fit,calc,backgrounds}

\usepackage[preprint]{icml2026}

\usepackage{amsmath}
\DeclareMathOperator{\sign}{sign}
\usepackage{amssymb}
\usepackage{mathtools}
\usepackage{amsthm}
\usepackage{float}
\makeatletter

\def\tagform@#1{(\ignorespaces#1\unskip\@@italiccorr)}
\def\maketag@@@#1{\hbox{\m@th\normalfont#1}}
\def\@eqnnum{\hbox{\normalfont\normalcolor\tagform@{\theequation}}}

\makeatother

\makeatletter
\newif\if@eqnumstepped
\everydisplay{\global\@eqnumsteppedfalse} 

\def\incr@eqnum{%
  \if@eqnumstepped\else
    \refstepcounter{equation}%
    \global\@eqnumsteppedtrue
  \fi
}
\makeatother
\makeatletter
\def\SLI@incr@eqnum{\refstepcounter{equation}\let\incr@eqnum\@empty}

\everydisplay{\global\let\incr@eqnum\SLI@incr@eqnum}

\usepackage[capitalize,noabbrev]{cleveref}

\theoremstyle{plain}
 
\usepackage[textsize=tiny]{todonotes}

\icmltitlerunning{Sign Lock-In: Randomly Initialized Weight Signs Persist and Bottleneck Sub-Bit Model Compression}

\begin{document}

\twocolumn[
  \icmltitle{Sign Lock-In: Randomly Initialized Weight Signs Persist and Bottleneck Sub-Bit Model Compression}



  \icmlsetsymbol{equal}{*}

  \begin{icmlauthorlist}
    \icmlauthor{Akira Sakai}{comp,dep1}
    \icmlauthor{Yuma Ichikawa}{comp,dep2}
  \end{icmlauthorlist}

  \icmlaffiliation{comp}{Fujitsu Limited}
  \icmlaffiliation{dep1}{Tokai University}
  \icmlaffiliation{dep2}{Riken Center for AIP}
  \icmlcorrespondingauthor{Akira Sakai}{akira.sakai@fujitsu.com}
  \icmlcorrespondingauthor{Yuma Ichikawa}{ichikawa.yuma@fujitsu.com}

  \icmlkeywords{Machine Learning, ICML}

  \vskip 0.3in
]



\printAffiliationsAndNotice{}  
\begin{abstract}
Sub-bit model compression targets storage below one bit per weight; as magnitudes are aggressively compressed, the sign bit becomes a fixed-cost bottleneck.
Across Transformers, CNNs, and MLPs, learned sign matrices resist low-rank approximation and are spectrally indistinguishable from an i.i.d.\ Rademacher baseline.
This randomness gives rise to the lower bound of sub-bit model compression---the one-bit wall.
Despite this apparent randomness, most weights retain their initialization signs; flips primarily occur via rare near-zero boundary crossings, \textbf{suggesting that sign-pattern randomness is largely inherited from initialization.}
We formalize this behavior with \emph{sign lock-in theory}, a stopping-time analysis of sign flips under SGD noise.
Under bounded updates and a rare re-entry condition into a small neighborhood of zero, the number of effective sign flips exhibits a geometric tail.
Building on this mechanism, we introduce a \emph{from-scratch} low-rank sign-template training method that prevents the emergence of this one-bit wall.
\end{abstract}

\section{Introduction}
\label{sec:introduction}

\begin{figure}[t]
    \centering
    \includegraphics[width=0.98\linewidth]{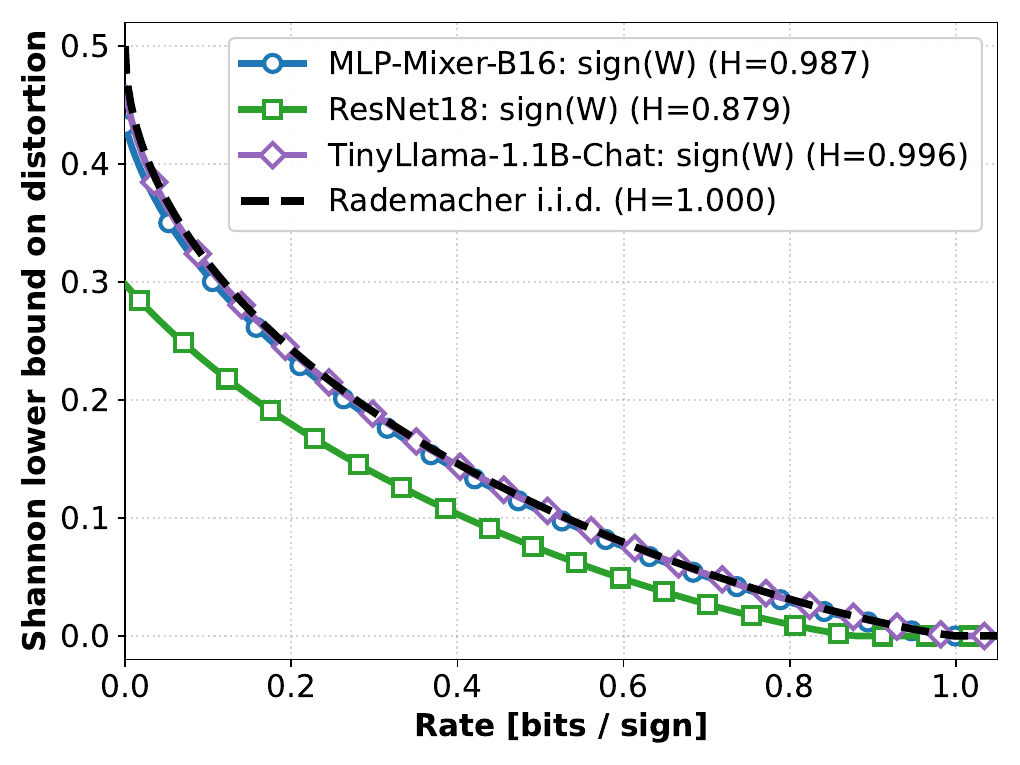}
    \caption{\textbf{One-bit wall.} Shannon's rate-distortion lower bound for binary sign patterns under Hamming distortion evaluated using an entropy-rate proxy estimated from pretrained weights. Across all models, this proxy is close to one, and the bound is nearly indistinguishable from that of an i.i.d.\ Rademacher baseline, indicating that sign patterns contain little redundancy.
    }
    \label{fig:shannon_inverse}
\end{figure}

The sign is the minimal discrete attribute of a real-valued weight; it maps $w\in\mathbb{R}$ to a binary state $\mathrm{sign}(w)\in\{\pm1\}$, carrying one bit of information per scalar weight.
Historically, most practical compression pipelines focused on the few-bit regime, where the sign bit constituted a small and nearly constant overhead relative to magnitude storage and, therefore, rarely emerged as a bottleneck.
Our study demonstrates that the sub-bit regime is qualitatively distinct.
As indicated in Figure~\ref{fig:shannon_inverse}, once magnitudes are compressed to approximately one bit per weight, the remaining sign becomes a fixed-cost barrier referred to as the one-bit wall.
Furthermore, learned sign patterns across architectures are close to i.i.d.\ Rademacher; they are nearly uniform and only weakly correlated, leaving little redundancy for further compression.
However, tracking sign dynamics has revealed that this apparent randomness is largely inherited from initial random weight signs.
This empirical picture points to a paradoxical dynamical regime:
although the \emph{marginal} distribution of trained signs is nearly indistinguishable from i.i.d.\ Rademacher noise,
the \emph{trajectory} of each sign is highly persistent throughout training.

Our approach follows the stochastic-process viewpoint for analyzing SGD beyond asymptotic linearization, aiming to explain this \emph{sign lock-in phenomenon}.
When noise drives the dynamics through rare events such as boundary hits and escapes, tracking only the mean flow can miss the key mechanism;
localization via stopping times is therefore a central but challenging approach.
This stopping-time perspective connects several classical frameworks:
the ODE method controls stability by stopping-time localization \citep{KushnerYin2003,Benaim1999},
diffusion approximations turn boundary crossings into first-passage problems \citep{LiTaiE2017},
and Freidlin--Wentzell theory explains exponentially rare boundary hits via metastable exit times \citep{FreidlinWentzell1998}.
Recent ML work adopts SDE/Markov-process lenses to study hitting-time-like behavior under realistic schedules and noise levels \citep{MandtHoffmanBlei2017}.
Following this line, we formalize sign dynamics under schedule-aware SGD as an effective minimal theory built around stopping times.

\paragraph{Contributions.}
The main contributions are as follows:
\begin{itemize}
    \item \textbf{Empirical discovery}.
 The learned weight signs are much harder to compress than magnitudes in various representative pretrained architectures. In particular, sign matrices exhibit low-rank approximation error decay and behave like an i.i.d.\ Rademacher baseline. Moreover, sign patterns remain largely inherited from initialization throughout training.  The practical implications of this phenomenon are formalized in the sub-bit regime, which is referred to as a one-bit wall.


    \item \textbf{Sign lock-in theory.}
    An excursion-based effective framework is introduced to characterize sign dynamics. Under two verifiable conditions, a \emph{geometric} tail law is proved for the effective outer-to-outer sign flip count, which provides a mechanistic explanation for the persistence of noise-like signs in experiments. This theory is also numerically validated.

    \item \textbf{Lock-in enhancement.}
    Building on the theory, we propose a from-scratch low-rank sign-template method: a re-generable template $T=\sign(GH^\top)$ is selected before training, and training is biased to preserve it. Gap initialization and early-phase outer-drift regularization reduce boundary visits and re-entry, preserving the structured sign template during training. 

\end{itemize}

Related work is discussed in Appendix~\ref{app:add_review}.

\begin{figure*}[t]
  \centering

  \begin{subfigure}[t]{0.48\textwidth}
    \centering
    \includegraphics[width=\linewidth,height=0.285\textheight,keepaspectratio]{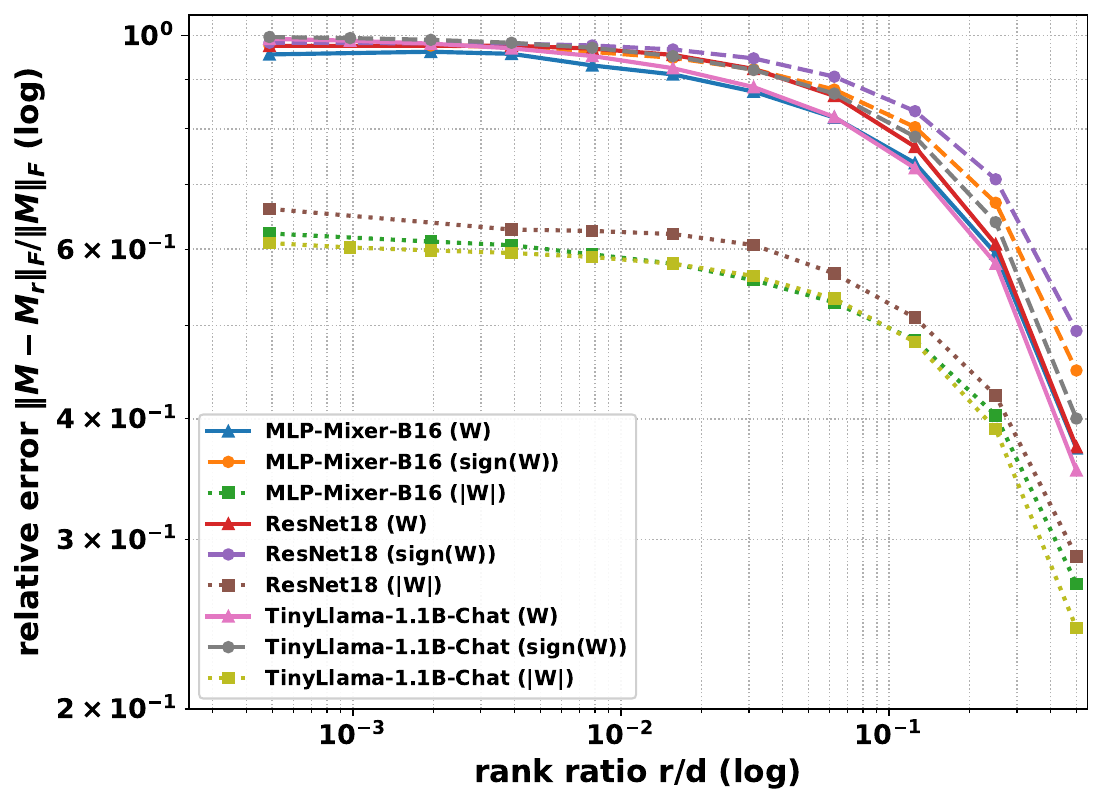}
    \subcaption{SVD compressibility.}
    \label{fig:empirical_motivation:a}
  \end{subfigure}\hfill
  \begin{subfigure}[t]{0.51\textwidth}
    \centering
    \includegraphics[width=\linewidth,height=0.285\textheight,keepaspectratio]{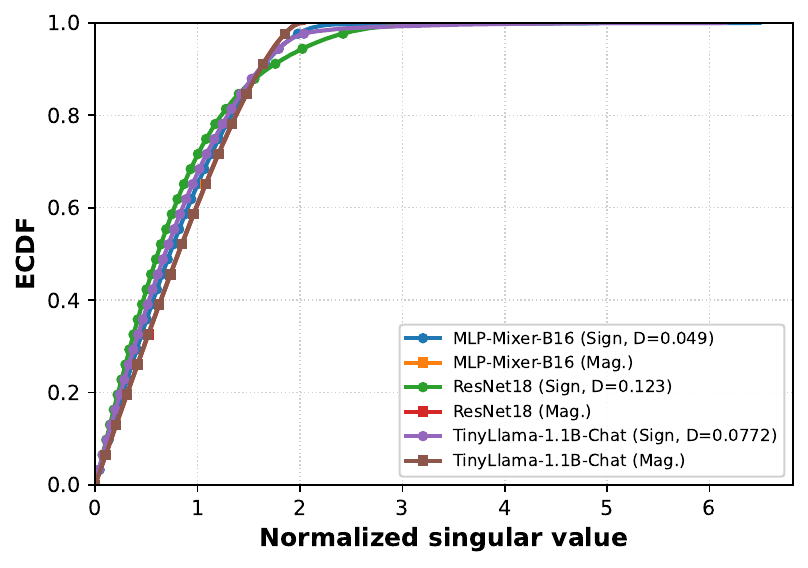}
    \subcaption{Spectral randomness test.}
    \label{fig:empirical_motivation:b}
  \end{subfigure}

  \vspace{0.6em}

  \begin{subfigure}[t]{\textwidth}
    \centering
    \includegraphics[width=\linewidth]{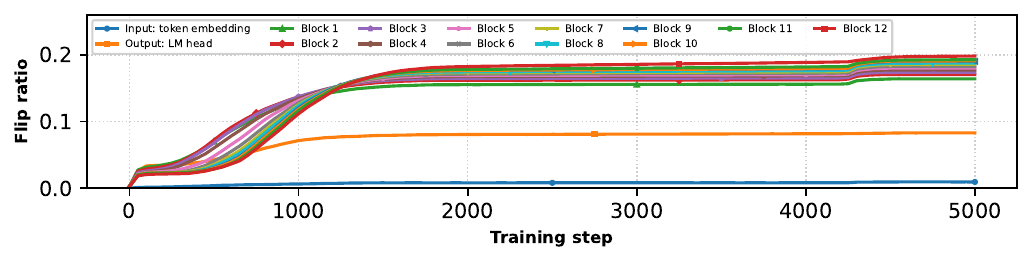}
    \subcaption{Sign drift during training.}
    \label{fig:empirical_motivation:c}
  \end{subfigure}

  \caption{\textbf{Empirical validation of one-bit wall.}
  \textbf{(a)} SVD compressibility (best rank-$r$ approximation error) of the raw weight matrix $W$, sign matrix $S=\sign(W)$, and magnitude matrix $A=|W|$ as a function of rank ratio $r/d$ ($d=\min(m,n)$).
  \textbf{(b)} Spectral fit of sign matrices to an i.i.d.\ Rademacher baseline using a two-sample KS test on normalized singular values.
  \textbf{(c)} Initialization-to-trained sign drift in a Transformer trained on next-token prediction: flip ratio vs.\ initialization across layers (input$\rightarrow$output) and pooled.
  }
  \label{fig:empirical_motivation}
\end{figure*}

\section{One-Bit Wall of Model Compression}
\label{sec:empirical_motivation}

We empirically find that learned weight signs are particularly difficult to compress.
Across diverse pretrained architectures (MLP, CNN, Transformer) and layers,
the sign component exhibits (i) weak low-rank compressibility, (ii) spectral statistics close to i.i.d.\ Rademacher noise,
and (iii) strong persistence during training.
In contrast, the magnitude component is significantly more compressible.
Figure~\ref{fig:empirical_motivation}(a--c) summarizes these signatures.
Notation is given in Appendix~\ref{app:notation}.
The experimental settings and additional analyses can be found in Appendix~\ref{app:empirical_repro}--\ref{app:sec2_details}.

Modern compression pipelines leverage structure learned during training.
For signs, our evidence indicates that training does not yield exploitable structure:
the trained pattern remains close to an initialization-level random template, and optimization seldom modifies it.



\subsection{Phenomenon: Signs Look Like Noise Yet Persist}
\label{sec:analysis_tools}

We investigate both \emph{compressibility} and \emph{randomness} of learned weight signs.
Our evaluation includes representative pretrained models:
MLP (MLP-Mixer-B16), CNN (ResNet18), and Transformer (TinyLlama-1.1B-Chat).
To investigate training-time dynamics, we track sign drift in a scratch-trained multi-layer Transformer language model designed for next-token prediction.

\paragraph{Sign-magnitude decomposition.}
Let $W\in\mab{R}^{m\times n}$ denote a weight matrix and define the components of sign and magnitude as follows:
\begin{equation*}
    S \coloneqq \sign(W)\in\{\pm1\}^{m\times n},~~
    A \coloneqq|W|\in\mab{R}_{\ge0}^{m\times n},
\label{eq:sign_amp_decomp}
\end{equation*}
so that $W=S\odot A$.
This isolates the discrete sign pattern $S$ from the nonnegative magnitudes $A$ and allows us to test their structure separately.

\paragraph{Compressibility probe: low-rank approximation error.}
To assess low-rank compressibility, 
the optimal rank-$r$ approximation error in the Frobenius norm is measured.
For a matrix $M$, let 
\begin{equation*}
    M_{r} \in \argmin_{\mathrm{rank}(X) \le r} \|M-X\|_{F}
\end{equation*}
denote the best rank-$r$ approximation, given by truncated SVD, and define $E_r(M) \coloneqq \nicefrac{\|M-M_r\|_F}{\|M\|_F}$.
Since layers have different shapes,
set $d \coloneqq \min(m, n)$ and parameterize $r$ using the rank ratio $q \coloneqq r/d \in (0, 1]$ with $r=\lfloor qd \rceil$.
Across architectures, $E_r(S)$ decays substantially slower than $E_r(A)$ at matched $q$,
and the raw matrix error $E_r(W)$ tracks the sign-side behavior more closely than the magnitude-side behavior.
Thus, the sign matrix is far more resistant to low-rank compression than the magnitudes and largely explains why direct low-rank approximation of $W$ is difficult.

\paragraph{Randomness probe.}
Low-rank error alone does not distinguish structured matrices from those that behave like random noise. To test whether $S$ is spectrally consistent with a random baseline,
we sample multiple $s\times s$ sub-matrices $S_{\mathrm{sub}}$ from $S$, compute their singular values, and normalize by $\sqrt{s}$ to remove trivial scaling.
As a baseline, we use i.i.d.\ Rademacher matrices $R\in\{\pm1\}^{s\times s}$.
Let $F_{\mathrm{sign}}$ and $F_{\mathrm{rand}}$ denote the pooled empirical cumulative distribution functions (ECDFs) of the normalized singular values from $S_{\mathrm{sub}}$ and $R$, respectively, and measure the discrepancy using the two-sample Kolmogorov–Smirnov statistic $D \coloneqq \sup_x |F_{\mathrm{sign}}(x)-F_{\mathrm{rand}}(x)|.$
We observe small KS distances $D$ across layers: the singular value statistics of learned sign submatrices closely track the Rademacher baseline.
This supports the interpretation that the trained sign patterns exhibit \emph{noise-like behavior at the spectral level}, consistent with their poor low-rank compressibility. Additional randomness diagnostics are provided in Appendix~\ref{sec:algo-compress}.

Furthermore, Figure~\ref{fig:shannon_inverse} presents an information-theoretic evaluation.
For a binary sign source with entropy rate $H_{\mathrm{RD}}$ under Hamming distortion,
Shannon's rate--distortion lower bound implies
$\mac{D}^{\mathrm{lb}}_{\mathrm{RD}}(\mac{R}_{\mathrm{RD}})
= h_2^{-1}(\max\{0,\,H_{\mathrm{RD}}-\mac{R}_{\mathrm{RD}}\})$.
Estimated $H_{\mathrm{RD}}$ for $\mathrm{sign}(W)$ in representative pretrained models yields a near-Rademacher regime
($H_{\mathrm{RD}}\approx 1$), leaving little room for sub-bit sign storage without incurring non-negligible sign distortion. The details of this analysis are provided in Appendix~\ref{app:fig1}.

\paragraph{Dynamics Probe: Sign Drift during Training.}
To assess whether the noise-like behavior of $S$ is produced during optimization or inherited from initialization,
we track the sign mismatch ratio relative to the initial sign pattern during training.
Let $\{W^{(l)}(t)\}_{l=1}^{L}$ denote the collection of matrix-shaped weight tensors at training step $t$, where
$W^{(l)}(t)\in \mab{R}^{m_l\times n_l}$ and $N_l \coloneqq m_l n_l$.
Define the sign flip ratio
\begin{equation*}
    \mathrm{flip}(t)
    \coloneqq \frac{1}{\sum_{l} N_{l}} \sum_{l,i,j} \mathbf{1}\ab[\sign(W^{(l)}_{ij}(t))\neq\sign(W^{(l)}_{ij}(0))].
\end{equation*}
We track matrix-shaped weights and report $\mathrm{flip}(t)$ over time for token embeddings (input), each Transformer block, and the LM head (output).
$\mathrm{flip}(t)$ increases during the early phase but typically remains well below $0.5$ throughout training, indicating that most signs are inherited from initialization and remain stable.
This connects the two observations above: signs appear noise-like because they remain close to a random initial template,
and training rarely alters them.

\begin{tcolorbox}[title=Takeaway: empirical phenomenon]
    Empirically, weight signs are spectrally noise-like yet largely persistent during training.
\end{tcolorbox}

\subsection{Why Noise-Like Signs Create the One-Bit Wall}
\label{sec:one_bit_wall_problem}

Sub-bit compression targets an average storage cost of less than one bit per parameter.
Many successful schemes can reduce the \emph{magnitude} $A$ to below one bit per weight on average through
quantization, low-rank factorization, pruning, and entropy coding.
However, if the sign pattern $S$ closely resembles i.i.d.\ noise, it offers little exploitable structure for such compressors.
In that regime, storing $S$ alone costs one bit per weight under any coordinatewise representation,
establishing the \textbf{one-bit wall} even when the magnitudes are highly compressible.

The empirical phenomenon described translates into a concrete bottleneck:
\emph{in the sub-bit regime, sign storage becomes the dominant and potentially irreducible cost}.
This motivates the remainder of the paper:
we seek a mechanistic understanding of why signs persist as random-like and stable, and how to transform signs from a bottleneck into a controllable component.
Section~\ref{sec:lockin} provides a minimal theory that explains sign persistence as a consequence of rare boundary excursions.
Section~\ref{sec:lockin_enhance_main} then converts this understanding into practical interventions that actively promote compressible sign structure.

\section{Sign Lock-In Theory}
\label{sec:lockin}

The empirical results suggest that optimization rarely changes signs:
most coordinates keep the sign they received at initialization.
This section formalizes a simple mechanism behind this persistence.
A sign flip of a scalar coordinate can occur only if the trajectory crosses the boundary at $0$.
If typical training dynamics keep coordinates at magnitudes bounded away from $0$, then sign flips must be initiated by rare
excursions into a narrow boundary neighborhood.
We show that, under the standard training setting of deep learning, the number of effective outer-to-outer sign flips admits a geometric-tail bound.

\subsection{Problem Setting}\label{subsec:problem-setting}

We analyze a single coordinate because sign patterns are defined entrywise and sign-storage cost is inherently coordinatewise.
The theory is stated for a one-dimensional adapted process; in experiments, we aggregate per-coordinate statistics across layers.

Let $(\Omega,\mac{F},(\mac{F}_t)_{t\ge0},\mab{P})$ be a filtered probability space, and let
$(w_t)_{t\ge0}$ be a one-dimensional $(\mac{F}_t)$-adapted process that represents the discrete-time evolution
of a scalar parameter over a finite horizon $T\in\mab{N}$.
We introduce the radii $0<\epsilon<\rho$ that separate a sign-stable outer region from a sign-ambiguous boundary neighborhood.

\begin{definition}[Regions]
    \label{def:regions}
    Fix an outer threshold $\rho>0$ and select a base boundary radius $\epsilon_0\in(0,\rho)$.
    Let $\Delta$ be as stated in Assumption~\ref{ass:bounded_update} and define
    \begin{equation*}
        \label{eq:eps_def}
        \epsilon \coloneqq \max\{\epsilon_0,\Delta\}\in(0,\rho),
    \end{equation*}
    assuming $\rho > \Delta$ such that $\epsilon < \rho$.
    The outer region $\mathsf{Outer}(\rho)$ and boundary neighborhood $\mathsf{Bd}(\epsilon)$ are defined as
    \begin{equation*}
    \mathsf{Outer}(\rho) \coloneqq \{|w_t|\ge \rho\},~~\mathsf{Bd}(\epsilon) \coloneqq \{|w_t|\le \epsilon\}.
    \end{equation*}
\end{definition}

\begin{definition}[Stopping time]
    \label{def:stopping}
    With the convention $\inf\emptyset=\infty$, define recursively
    \begin{align*}
        \sigma_0 &\coloneqq \inf\{t\ge 0:\ |w_t|\ge \rho\}, \\
        \tau_k &\coloneqq \inf\{t>\sigma_{k-1}:\ |w_t|\le \epsilon\},~~k \ge 1, \\
        \sigma_k &\coloneqq \inf\{t>\tau_k:\ |w_t|\ge \rho\},~~k \ge 1.
    \end{align*}
\end{definition}

\begin{figure*}[t]
  \centering

  \begin{subfigure}[t]{0.485\textwidth}
    \centering
    \includegraphics[width=\linewidth]{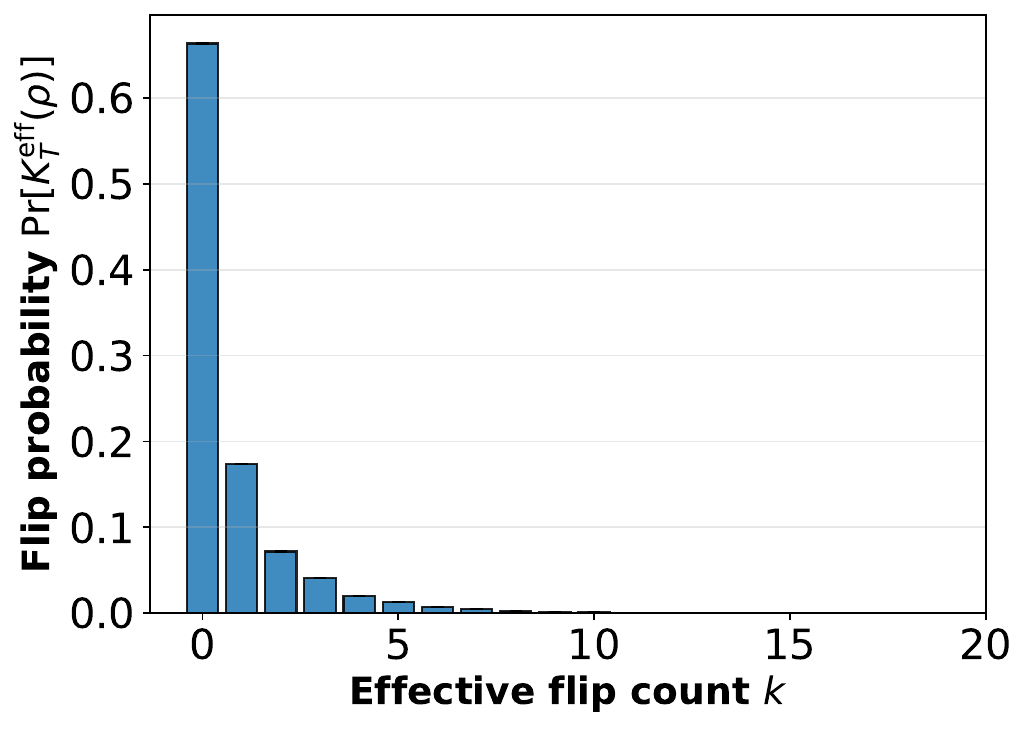}
    \subcaption{Flip-count histogram.}
    \label{fig:q1_lockin_charlm:a}
  \end{subfigure}\hfill
  \begin{subfigure}[t]{0.485\textwidth}
    \centering
    \includegraphics[width=\linewidth]{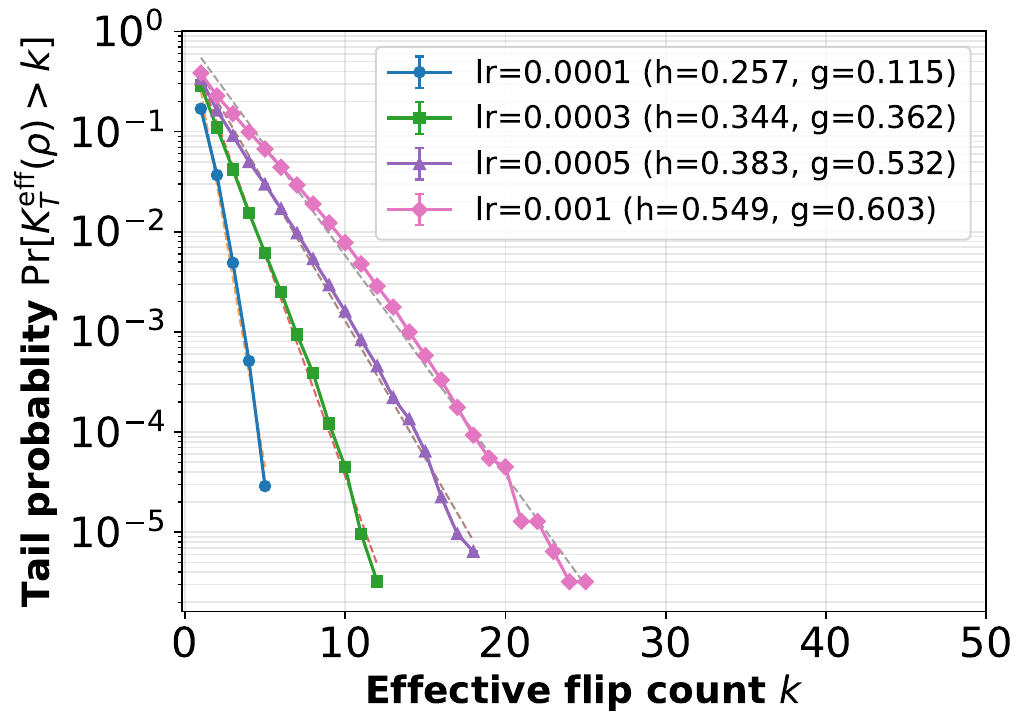}
    \subcaption{Tail probability and geometric fit.}
    \label{fig:q1_lockin_charlm:b}
  \end{subfigure}

  \caption{\textbf{Sign lock-in validation.}
  \textbf{Left:} Histogram of the effective outer-to-outer flip count $K^{\mathrm{eff}}_T(\rho)$ across scalar weights; see Appendix~\ref{app:charlm_q1q2q3_details} for $T$, $\rho$, and $\epsilon$.
  \textbf{Right:} Tail probability $\mab{P}[K^{\mathrm{eff}}_T(\rho)\ge k]$ on a log scale for multiple learning rates, with dashed
  geometric fits of the form $\hat h\,\hat g^{\,k-1}$.}
  \label{fig:q1_lockin_charlm}
\end{figure*}

\subsection{Assumption: Bounded Update and Re-Entry}\label{subsec:assumption}

The sign lock-in bound is stated for the abstract adapted process $(w_t)$ and depends on two key components. 
The first excludes ``pathological'' one-step sign flips. The second controls the likelihood that the process returns to the boundary neighborhood after moving back into the outer region.

\begin{assumption}[Bounded update]
\label{ass:bounded_update}
    Fix $T\in\mab N$.
    There exist deterministic constants $\Delta>0$ and $\delta_{\mathrm{upd}}\in[0,1)$ such that, for every stopping time
    $\theta$ satisfying $\theta\le T-1$,
    \begin{equation*}
        \mab{P}\left[\mac{E}_\Delta(\theta)| \mac{F}_\theta\right] \ge 1-\delta_{\mathrm{upd}}
        ~~\mathrm{a.s.},
    \end{equation*}
    where the \emph{good event} $\mac E_\Delta(\theta)$ is
    \begin{equation*}
        \mac{E}_\Delta(\theta)
        \coloneqq
        \left\{\max_{\theta\le t\le T-1}|w_{t+1}-w_t|\le \Delta\right\}.
    \end{equation*}
    In particular, on $\mac{E}_\Delta \coloneqq \mac{E}_{\Delta}(0)$ we have $|w_{t+1}-w_t|\le \Delta$ for all $0 \le t \le T-1$, and the special case $\delta_{\mathrm{upd}}=0$ yields an almost-sure uniform increment bound.
\end{assumption}

Assumption~\ref{ass:bounded_update} prevents ``pathological'' one-step sign flips that jump across the origin while remaining in the outer region.
With $\epsilon\ge\Delta$ (Definition~\ref{def:regions}), any outer-to-outer sign flip must pass through $\mathsf{Bd}(\epsilon)$.

\begin{assumption}[Re-entry bound condition]
    \label{ass:reentry_T}
    There exists $g_T\in(0,1)$ such that for all $k\ge 0$,
    \begin{equation*}
        \mab{P}[\tau_{k+1}\le T |\mac{F}_{\sigma_k}] \le g_T
        ~~\mathrm{a.s.}~\mathrm{on}~\{\sigma_{k} \le T\}.
    \end{equation*}
\end{assumption}

Rather than being a technical convenience, Assumption~\ref{ass:reentry_T}
delineates the boundary between a stable training regime, where meaningful weights persist,
and a degenerate floating regime (Appendix~\ref{app:floating_mode})
in which the boundary becomes attractive and weights collapse toward zero.
The latter requires inward drift strong enough to distort task loss and is therefore non-standard.
Under standard smoothness and bounded-noise conditions, this assumption is satisfied for scheduled SGD
(Proposition~\ref{prop:appF_ass2_from_sgd}).
The proof is based on Lemma~\ref{lem:appF_cum_drift}, which shows that the inward drift toward the sign boundary---even if present---is bounded by the finite cumulative gradient norm ensured by standard descent theory.

\begin{proposition}[Informal version of Proposition~\ref{prop:appF_ass2_from_sgd}: re-entry bound in SGD]\label{prop:reentry_informal}
    Under the standard bounded-update and descent/noise conditions, 
    there exists an explicit upper bound $g_T^{\mathrm{SGD}}$ such that for all $k\ge 0$,
    \begin{equation*}\label{eq:informal_reentry}
        \mab{P}[\tau_{k+1}\le T \mid \mac{F}_{\sigma_k}] \le g_T^{\mathrm{SGD}}.
    \end{equation*}
    Moreover, $g_T^{\mathrm{SGD}}$ decreases when (i) the boundary margin $\rho-\epsilon$ grows,
    (ii) step sizes decay so that $\sum_{t<T}\eta_t^2$ is small, and (iii) mini-batch noise is moderate.
    Consequently, Assumption~\ref{ass:reentry_T} holds with $g_T:=g_T^{\mathrm{SGD}}$, yielding the geometric-tail bound in
    Theorem~\ref{thm:sign_lockin_T_init}.
\end{proposition}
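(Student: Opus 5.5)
The plan is to fix $k\ge0$ and work on $\{\sigma_k\le T\}$. There $|w_{\sigma_k}|\ge\rho$, and a re-entry $\tau_{k+1}\le T$ requires the coordinate to travel from magnitude at least $\rho$ down to magnitude at most $\epsilon$. So on the event $\{\tau_{k+1}\le T\}$ the displacement $|w_{\tau_{k+1}}|-|w_{\sigma_k}|\le -(\rho-\epsilon)$. Without loss of generality take $w_{\sigma_k}\ge\rho$, so the sign stays positive until $\tau_{k+1}$ by the bounded-update argument.

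Write the scheduled SGD recursion as
\[
w_{t+1}=w_t-\eta_t\,\partial_w L(\theta_t)-\eta_t\,\xi_t ,
\]
with $\xi_t$ a martingale-difference noise term. This gives the decomposition
\[
w_{\tau\wedge T}-w_{\sigma_k}=-\sum_{t=\sigma_k}^{\tau\wedge T-1}\eta_t\,\partial_wL(\theta_t)\;-\;\sum_{t=\sigma_k}^{\tau\wedge T-1}\eta_t\,\xi_t=:D_k+M_k .
\]
The event $\{\tau_{k+1}\le T\}$ forces $D_k+M_k\le-(\rho-\epsilon)$. Hence for any split $a+b=\rho-\epsilon$ we get $\{\tau_{k+1}\le T\}\subseteq\{|D_k|\ge a\}\cup\{|M_k|\ge b\}$.

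\textbf{Drift term.} I would invoke Lemma~\ref{lem:appF_cum_drift}. Its content is that standard descent theory (an $L$-smooth loss bounded below, steps $\eta_t\le 1/L$) bounds $\mathbb E\sum_t\eta_t\|\nabla L(\theta_t)\|^2$ by $L(\theta_0)-L^*+\tfrac{L}{2}\sigma^2\sum_t\eta_t^2$. Cauchy--Schwarz then gives
\[
|D_k|\le\Bigl(\sum_t\eta_t\Bigr)^{1/2}\Bigl(\sum_t\eta_t\|\nabla L\|^2\Bigr)^{1/2},
\]
and a conditional Markov inequality given $\mathcal F_{\sigma_k}$ yields $\mathbb P[|D_k|\ge a\mid\mathcal F_{\sigma_k}]\le C_{\mathrm{drift}}/a$ or $C_{\mathrm{drift}}/a^2$. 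The conditioning is the delicate point. The descent inequality holds conditionally from time $\sigma_k$ onward, with $L(\theta_{\sigma_k})-L^*$ in place of the initial gap, so I would assume (or obtain from monotone expected descent) an a.s.\ bound $L(\theta_t)-L^*\le B$.

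\textbf{Noise term.} The stopped sum $M_k$ is a martingale started at the stopping time $\sigma_k$. Under a bounded conditional second moment $\mathbb E[\xi_t^2\mid\mathcal F_t]\le\sigma^2$, Doob's maximal inequality applied to the stopped martingale gives
\[
\mathbb P\Bigl[\max_{s\le T}|M_k^{(s)}|\ge b\Bigm|\mathcal F_{\sigma_k}\Bigr]\le\frac{\sigma^2\sum_{t<T}\eta_t^2}{b^2}.
\]
If instead the noise is bounded, Azuma--Freedman gives the sharper estimate $2\exp\!\bigl(-b^2/(2\sigma^2\sum\eta_t^2)\bigr)$.

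\textbf{Combining.} Summing the two tail bounds and choosing $a=b=(\rho-\epsilon)/2$ gives an explicit
\[
g_T^{\mathrm{SGD}}=\frac{4C_{\mathrm{drift}}}{(\rho-\epsilon)^2}+\frac{4\sigma^2\sum_{t<T}\eta_t^2}{(\rho-\epsilon)^2}+\delta_{\mathrm{upd}} .
\]
The term $\delta_{\mathrm{upd}}$ absorbs the failure of Assumption~\ref{ass:bounded_update}, handled by intersecting with $\mathcal E_\Delta(\sigma_k)$. This bound is visibly decreasing in $\rho-\epsilon$, increasing in $\sum\eta_t^2$, and increasing in $\sigma^2$, which matches monotonicity claims (i)--(iii). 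The proposition also needs the assumption that the parameters make $g_T^{\mathrm{SGD}}<1$. The bound is uniform in $k$, so Assumption~\ref{ass:reentry_T} follows, and Theorem~\ref{thm:sign_lockin_T_init} then applies.

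\textbf{Main obstacle.} The hardest step is making the drift bound hold \emph{conditionally on $\mathcal F_{\sigma_k}$, uniformly in $k$ and almost surely}, rather than merely in expectation from time $0$. I would first try the bounded-loss-gap assumption. Failing that, I would restate the cumulative gradient bound as an almost-sure envelope on a high-probability event and fold its failure probability into $g_T^{\mathrm{SGD}}$.
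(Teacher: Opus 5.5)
Your proposal is correct. It shares the skeleton of the paper's proof of Proposition~\ref{prop:appF_ass2_from_sgd}: a drift/martingale split after $\sigma_k$, a descent-lemma gradient-energy budget telescoped from the stopping time, Cauchy--Schwarz plus conditional Markov for the drift, a martingale inequality for the noise, and a union bound. Where you differ is in how the margin is spent and which concentration tool controls the noise.

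\textbf{The paper's route.} It fixes a confidence level $\delta$ and calls the drift good when the first-moment proxy $\sum_t\eta_t\,\mathbb{E}[\|g_t\|\mid\mathcal{F}_t]$ is at most $\bar B_T/\delta$. The leftover margin $a_T=((\rho-\epsilon)-\bar B_T/\delta)_+$ then goes into Freedman's inequality. Freedman needs increments bounded by $2\Delta$, which is why the paper stops the martingale at the first update-violation time and pays $\delta_{\mathrm{upd}}$.

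\textbf{Your route.} You split $\rho-\epsilon=a+b$ and apply Markov to $|D_k|^2\le\Lambda_T\sum_t\eta_t\|\nabla\mathcal{L}(v_t)\|^2$, using the exact SGD drift rather than the cruder $\mathbb{E}\|g_t\|$ proxy. For the noise you use Doob's $L^2$ inequality. This needs only second moments. Indeed, $w_{\sigma_k}\ge\rho$ and $w_{\tau_{k+1}}\le|w_{\tau_{k+1}}|\le\epsilon$ already give $D_k+M_k\le-(\rho-\epsilon)$. So neither your sign-preservation remark nor bounded increments are actually used, and your $\delta_{\mathrm{upd}}$ term is harmless but superfluous.

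\textbf{Comparing the bounds.} The paper's Freedman step buys an exponential dependence in the noise term, but its drift step is the bottleneck. Since $a_T=0$ unless $\delta>\bar B_T/(\rho-\epsilon)$, the paper's $g_T^{\mathrm{SGD}}\ge\min\{1,\bar B_T/(\rho-\epsilon)\}$ for every $\delta$. Your bound is $O\bigl((\bar B_T/(\rho-\epsilon))^2\bigr)+\delta_{\mathrm{upd}}$, because $\xi^2\Lambda_T^{(2)}\le\xi^2\Lambda_T^2\le\bar B_T^2$. Both bounds satisfy the monotonicity claims (i)--(iii).

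\textbf{Two small corrections.}
\begin{enumerate}
\item The almost-sure loss bound cannot be obtained from expected descent, so your parenthetical alternative fails. It must be assumed, exactly as in Assumption~\ref{ass:appF_descent}(v).
\item With $\eta_t\le 1/L_{\mathrm{sm}}$, the telescoped budget is $2(\mathcal{L}(v_{\sigma_k})-\mathcal{L}_\star)+L_{\mathrm{sm}}\xi^2\sum_t\eta_t^2$. This is twice what you wrote, and the factor is absorbed into $C_{\mathrm{drift}}$.
\end{enumerate}
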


\subsection{Geometric Tail for Effective Sign Flips}

An outer-to-outer sign flip can occur only if the trajectory (i) exits the outer region, (ii) enters a neighborhood of the boundary, and (iii) re-enters the outer region from the opposite side. Assumption~\ref{ass:reentry_T} uniformly controls step (ii) over time, while Assumption~\ref{ass:bounded_update} prevents single update transitions from (i) to (iii). Taken together, these conditions yield a geometric-tail bound on the number of effective outer-to-outer sign flips.

\begin{theorem}[Sign Lock-in Theorem]
\label{thm:sign_lockin_T_init}
    Under  Assumptions~\ref{ass:bounded_update}--\ref{ass:reentry_T},
    define the effective outer-to-outer flip count up to time $T$ by
    \begin{equation*}
    K^{\mathrm{eff}}_T(\rho)
    \coloneqq \sum_{k\ge 1}\mathbf{1}\Bigl[\sigma_k\le T,\ \sign(w_{\sigma_k}) \neq \sign(w_{\sigma_{k-1}})\Bigr],
    \end{equation*}
    which is well-defined since $|w_{\sigma_k}|\ge \rho>0$ on $\{\sigma_k\le T\}$.
    Let $h_T:=\mab{P}[\tau_1\le T]\in[0,1]$.
    Then for all integers $k\ge 1$,
    \begin{equation*}
        \mab{P}[\tau_k\le T]\le h_T\,g_T^{k-1},
        \end{equation*}
        and consequently,
        \begin{equation*}
        \mab{P}\ab[K^{\mathrm{eff}}_T(\rho)\ge k]
        \le h_T g_T^{k-1}+\delta_{\mathrm{upd}}.
    \end{equation*}
\end{theorem}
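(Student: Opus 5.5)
The plan has two parts. First we prove the stopping-time bound $\mab{P}[\tau_k\le T]\le h_T g_T^{k-1}$ by induction on $k$, using the tower property at the stopping times $\sigma_{k-1}$. Then we transfer it to $K^{\mathrm{eff}}_T(\rho)$ through a pathwise inclusion on the good event $\mac{E}_\Delta$.

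\textbf{Step 1 (geometric bound for the stopping times).} The case $k=1$ is the definition of $h_T$. For the inductive step, note that $\tau_{k+1}>\sigma_k>\tau_k$, so the chain of inclusions
\[
\{\tau_{k+1}\le T\}\subseteq\{\sigma_k\le T\}\subseteq\{\tau_k\le T\}
\]
holds. Since $\{\sigma_k\le T\}\in\mac{F}_{\sigma_k}$, conditioning on $\mac{F}_{\sigma_k}$ and applying Assumption~\ref{ass:reentry_T} on $\{\sigma_k\le T\}$ gives
\[
\mab{P}[\tau_{k+1}\le T]=\mab{E}\bigl[\mathbf{1}[\sigma_k\le T]\,\mab{P}[\tau_{k+1}\le T\mid\mac{F}_{\sigma_k}]\bigr]\le g_T\,\mab{P}[\sigma_k\le T]\le g_T\,\mab{P}[\tau_k\le T].
\]
This closes the induction. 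The only care needed is to check that the $\sigma_k,\tau_k$ are genuine stopping times for the discrete filtration. Each is a first hitting time of a closed set after a previous stopping time, so this is routine.

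\textbf{Step 2 (every effective flip consumes a boundary visit).} We claim that on $\mac{E}_\Delta$, if $\sigma_k\le T$ and $\sign(w_{\sigma_k})\ne\sign(w_{\sigma_{k-1}})$, then the $k$-th excursion already counted, and in particular $\tau_k\le T$. The inequality $\tau_k\le T$ is automatic from $\tau_k<\sigma_k$. The substance is that the indexing of flips matches the indexing of boundary visits, so that at most one flip is counted per $k$. Hence $K^{\mathrm{eff}}_T(\rho)\ge k$ forces $\sigma_j\le T$ for at least $k$ distinct indices $j\ge1$, and so $\sigma_k\le T$, which implies $\tau_k\le T$. This is purely combinatorial, since the indicators are indexed by $k$ and $\sigma_j$ is increasing in $j$.

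The bounded-update assumption enters in checking that no flip escapes this counting. Between $\sigma_{k-1}$ and $\tau_k$ the process stays in $\{|w|>\epsilon\}$. With $\epsilon\ge\Delta$, a single step cannot jump from $w>\epsilon$ to $w<-\epsilon$, because that would require an increment larger than $2\epsilon\ge\Delta$. So the sign is constant on $[\sigma_{k-1},\tau_k)$ on $\mac{E}_\Delta$. This is what makes $K^{\mathrm{eff}}_T(\rho)$ a faithful count of outer-to-outer flips and justifies the theorem's interpretation. For the inequality itself, the weaker inclusion $\{K^{\mathrm{eff}}_T\ge k\}\subseteq\{\tau_k\le T\}$ suffices and in fact holds pathwise. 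To follow the stated form, we write
\[
\mab{P}[K^{\mathrm{eff}}_T(\rho)\ge k]\le\mab{P}[\{K^{\mathrm{eff}}_T\ge k\}\cap\mac{E}_\Delta]+\mab{P}[\mac{E}_\Delta^c]\le h_Tg_T^{k-1}+\delta_{\mathrm{upd}}.
\]
Here $\mab{P}[\mac{E}_\Delta^c]\le\delta_{\mathrm{upd}}$ follows from Assumption~\ref{ass:bounded_update} with $\theta=0$ after taking expectations.

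\textbf{Main obstacle.} The delicate point is Step 2's bookkeeping, namely making sure each counted flip is charged to a distinct boundary entry $\tau_k$. I would handle it by observing directly that the $k$-th indicator requires $\sigma_k\le T$, which requires $\tau_k\le T$. The role of $\mac{E}_\Delta$ and $\delta_{\mathrm{upd}}$ is then only to guarantee that the count is physically meaningful, and in the bound it is a harmless additive slack.
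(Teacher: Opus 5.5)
Your proof is correct. Step~1 is exactly the paper's induction: tower property at $\sigma_k$, Assumption~\ref{ass:reentry_T} on $\{\sigma_k\le T\}$, and $\{\sigma_k\le T\}\subseteq\{\tau_k\le T\}$.

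Step~2 takes a genuine shortcut relative to the paper. The paper proves $\{K^{\mathrm{eff}}_T(\rho)\ge k\}\cap\mathcal{E}_\Delta\subseteq\{\tau_k\le T\}$ via Lemma~\ref{lem:mustpass}. On $\mathcal{E}_\Delta$ it follows the first zero-crossing time $t^\star$ after $\sigma_{j-1}$ and shows $|w_{t^\star}|\le\Delta\le\epsilon$. From this it gets $\tau_j\le t^\star<\sigma_j\le T$ for each counted index $j$, and it finishes with monotonicity of $(\tau_j)$. The additive $\delta_{\mathrm{upd}}$ enters only because this lemma is stated on $\mathcal{E}_\Delta$.

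You observe instead that $\tau_k<\sigma_k$ already holds by the definition of $\sigma_k$, and that $(\sigma_j)$ is nondecreasing. Hence $\{K^{\mathrm{eff}}_T(\rho)\ge k\}\subseteq\{\sigma_k\le T\}\subseteq\{\tau_k\le T\}$ pathwise, with no bounded-update hypothesis. That argument is valid and gives the sharper bound $\mathbb{P}[K^{\mathrm{eff}}_T(\rho)\ge k]\le h_T g_T^{k-1}$. So the stated $\delta_{\mathrm{upd}}$ is, as you say, pure slack for this inequality.

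What the paper's lemma adds is interpretation, not strength. On $\mathcal{E}_\Delta$, the actual sign crossing happens inside $\mathsf{Bd}(\epsilon)$ and no earlier than $\tau_k$. That is essentially the content of your aside that the sign is constant on $[\sigma_{k-1},\tau_k)$.
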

\begin{remark}
    The sign lock-in theorem relies solely on Assumptions~\ref{ass:bounded_update} and~\ref{ass:reentry_T}, making it optimizer-agnostic.
 The uniform re-entry condition of Proposition~\ref{prop:reentry_informal}  is also not tied to SGD.
SGD variant optimizers follow the same arguments underlying Proposition~\ref{prop:appF_ass2_from_sgd}.
\end{remark}

The theorem formalizes the excursion picture: sign changes are initiated only by boundary hits,
and repeated effective outer-to-outer sign flips are exponentially unlikely.
To connect this excursion-based perspective to the most common empirical notion of sign persistence, we next demonstrate a complementary fact: if the sign at time $T$ differs from the initial sign, then the trajectory must have entered a small $\Delta$-neighborhood of the origin at least once, provided that one-step updates are bounded.

\begin{proposition}
    \label{prop:init_flip_rate_1d}
    Assume Assumption~\ref{ass:bounded_update}. Let $(w_t)_{t=0}^T$ be a real-valued process and
    $s_t \coloneqq \sign(w_t)\in\{\pm1\}$ with a fixed tie-break at $0$.
    Define the first hit time of the $\Delta$-band
    \begin{equation*}
     \tau_\Delta \coloneqq \inf\{t\in\{0,1,\dots,T\}: |w_t|\le \Delta \},
    \end{equation*}
    with the convention $\inf\emptyset=\infty$.
    We have the deterministic implication
    \begin{equation*}
        \mathbf{1}\{s_T\neq s_0\}\le\mathbf{1}\{\tau_\Delta\le T\}.
    \end{equation*}
    Consequently,
    \begin{equation*}
        \mab{P}[s_T\neq s_0]\le\mab{P}[\tau_\Delta\le T]+\delta_{\mathrm{upd}}.
    \end{equation*}
\end{proposition}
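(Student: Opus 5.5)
The plan is to show that the implication $\{s_T\neq s_0\}\subseteq\{\tau_\Delta\le T\}$ holds \emph{pathwise on the good event} $\mac{E}_\Delta=\mac{E}_\Delta(0)$ of Assumption~\ref{ass:bounded_update}. That is, it is deterministic for every trajectory whose one-step increments are all bounded by $\Delta$. The probabilistic bound then follows by splitting on $\mac{E}_\Delta$ and its complement. Off $\mac{E}_\Delta$ nothing can be said about a single path, so the statement's ``deterministic implication'' is to be read on $\mac{E}_\Delta$, or under $\delta_{\mathrm{upd}}=0$. This is exactly what produces the additive $\delta_{\mathrm{upd}}$.

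\textbf{Pathwise step.} Fix a path in $\mac{E}_\Delta$ with $s_T\neq s_0$; the case $T=0$ is vacuous. Since the sign sequence $s_0,\dots,s_T$ is not constant, there is a (first) index $t\in\{0,\dots,T-1\}$ with $s_{t+1}\neq s_t$. I will do a short case split.
\begin{itemize}
\item If $w_t=0$ or $w_{t+1}=0$, then that point already satisfies $|w|\le\Delta$. The fixed tie-break at $0$ causes no trouble, because zero lies in the band regardless of which sign is assigned to it.
\item Otherwise $w_t$ and $w_{t+1}$ are nonzero with strictly opposite signs. Then $|w_t|+|w_{t+1}|=|w_{t+1}-w_t|\le\Delta$, so $\min\{|w_t|,|w_{t+1}|\}\le\Delta/2\le\Delta$.
\end{itemize}
In either case some time $u\in\{t,t+1\}\subseteq\{0,\dots,T\}$ has $|w_u|\le\Delta$, hence $\tau_\Delta\le T$. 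This gives $\mathbf{1}\{s_T\neq s_0\}\mathbf{1}_{\mac{E}_\Delta}\le\mathbf{1}\{\tau_\Delta\le T\}$.

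\textbf{Probabilistic step.} Write
\begin{equation*}
\mab{P}[s_T\neq s_0]\le\mab{P}[\{s_T\neq s_0\}\cap\mac{E}_\Delta]+\mab{P}[\mac{E}_\Delta^c]\le\mab{P}[\tau_\Delta\le T]+\mab{P}[\mac{E}_\Delta^c].
\end{equation*}
Apply Assumption~\ref{ass:bounded_update} with the constant stopping time $\theta=0\le T-1$. This gives $\mab{P}[\mac{E}_\Delta\mid\mac{F}_0]\ge1-\delta_{\mathrm{upd}}$ a.s. Taking expectations yields $\mab{P}[\mac{E}_\Delta^c]\le\delta_{\mathrm{upd}}$, which completes the bound.

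There is no real obstacle here. The only points needing care are the bookkeeping around the tie-break at $0$, which the first case handles, and being explicit that the ``deterministic'' implication requires the increment bound. When $\delta_{\mathrm{upd}}=0$ it holds almost surely, and in general it holds on $\mac{E}_\Delta$. One could also remark that the argument actually yields the sharper band radius $\Delta/2$, though this is not needed.
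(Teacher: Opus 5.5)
Your proposal is correct and follows essentially the same route as the paper: you work pathwise on $\mathcal{E}_\Delta$, locate a step where the sign changes, and use $w_t w_{t+1}\le 0 \Rightarrow |w_t|+|w_{t+1}|=|w_{t+1}-w_t|\le\Delta$ to place one endpoint in the $\Delta$-band. You then add $\mathbb{P}[\mathcal{E}_\Delta^c]\le\delta_{\mathrm{upd}}$ from Assumption~\ref{ass:bounded_update} at $\theta=0$. Your caveat that the ``deterministic'' implication really holds only on $\mathcal{E}_\Delta$ (or almost surely when $\delta_{\mathrm{upd}}=0$) is accurate and matches what the paper's proof actually establishes.
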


\begin{tcolorbox}[title=Takeaway: main result of sign lock-in theory]
    Even if a weight sign flips, its effective flip count decays exponentially.
\end{tcolorbox}

\subsection{Empirical Validation of Sign Lock-In}\label{subsec:empirical-validation}
We test the core prediction of Theorem~\ref{thm:sign_lockin_T_init}: the effective outer-to-outer sign flip count
$K^{\mathrm{eff}}_T(\rho)$ should have a rapidly decaying tail.
Figure~\ref{fig:q1_lockin_charlm} reports (i) the histogram of $K^{\mathrm{eff}}_T(\rho)$ in baseline training, and
(ii) the tail probability $\mab{P}[K^{\mathrm{eff}}_T(\rho)\ge k ]$ on a semi-log scale.
The distribution is sharply concentrated near small values of $K^{\mathrm{eff}}_T(\rho)$, and the tail exhibits an
approximately linear trend on the log scale, consistent with geometric decay.
We overlay the fitted geometric form $\hat h\,\hat g^{\,k-1}$, where $(\hat h,\hat g)$ are estimated from the empirical
distribution under a zero-inflated geometric model.
As predicted by the theory, varying the learning rate changes the effective update scale $\Delta$ and hence the prefactor and decay rate of the flip-count distribution, while preserving its geometric form. Accordingly, we observe the same qualitative behavior across learning rates, confirming sign lock-in as a robust baseline phenomenon.
An analogous experiment on a vision task is also provided in Appendix~\ref{app:mnist-q1q2q3}.

\begin{figure}[t]
  \centering
  \begin{subfigure}{0.49\linewidth}
    \centering
    \includegraphics[width=\linewidth]{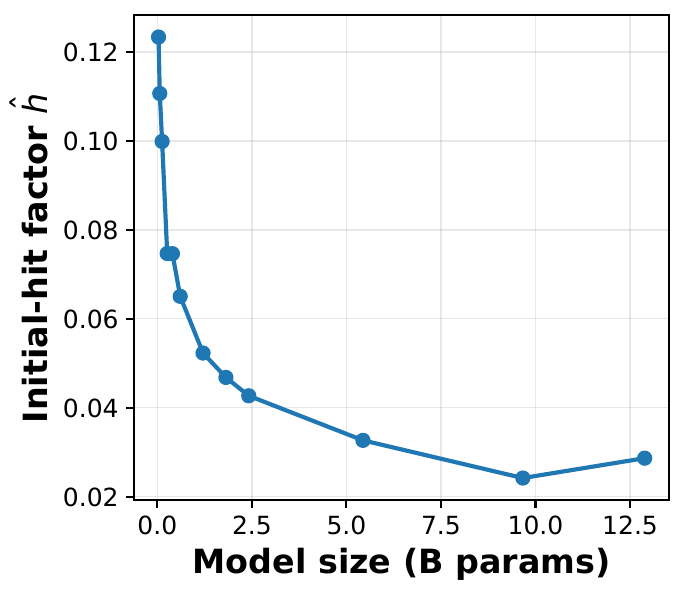}
    \caption{Initial-hit factor $\hat h$.}
  \end{subfigure}
  \hfill
  \begin{subfigure}{0.49\linewidth}
    \centering
    \includegraphics[width=\linewidth]{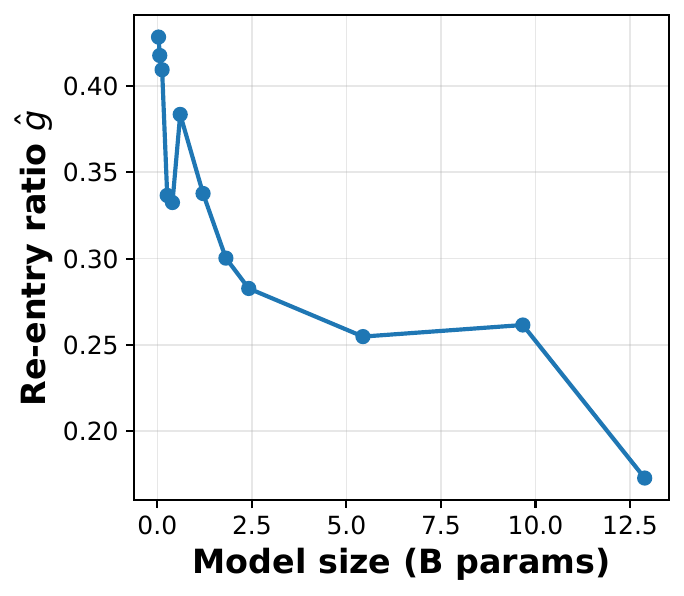}
    \caption{Re-entry ratio $\hat g$.}
  \end{subfigure}
  \caption{\textbf{Billion-scale sweep of the lock-in parameters $(\hat h,\hat g)$.}}
  \label{fig:param_scale_sweep_hg_a10080}
\end{figure}

\subsection{Practical Insights of Sign Lock-In Theory}
\paragraph{Modern deep learning models tend to show stronger lock-in.}
Appendix~\ref{app:cors} details the theoretical properties of the initial-hit factor $h$ and the re-entry ratio $g$, derived from sign lock-in theory, and provides experimental validation.
(i) The lock-in parameters vary substantially depending on the
learning rate and its schedule. Under a fixed training horizon
and peak step size, lock-in becomes progressively \emph{weaker}
in the order of inverse decay $\to$ cosine decay $\to$
exponential decay $\to$ constant learning rate.
(ii) Scale-invariance mechanisms, such as ReLU positive homogeneity and normalization layers, further strengthen lock-in by suppressing effective boundary re-entry.
(iii) Increasing the batch size or model size enhances lock-in due to reduced stochastic noise and width-induced stabilization effects.
\emph{Consequently, sign lock-in weakens when small models are trained with a constant learning rate and small batch size,
whereas models following modern standard architectures and training recipes, including LLM, tend to exhibit strong sign lock-in.}

\paragraph{Billion-scale validation.}
We sweep the size of the model from the $\sim\!30$M to over $10$B weight parameters and estimate the lock-in parameters $(\hat h, \hat g)$. Appendix~\ref{app:param_scale_sweep_a10080} details experiments.
As shown in Figure~\ref{fig:param_scale_sweep_hg_a10080}, both the initial-hit factor $\hat h$ and the re-entry ratio $\hat g$ decrease monotonically with scale.
In the largest models, $\hat h$ becomes very small and $\hat g$ approaches zero, indicating that sign flips are rarely initiated and almost never repeated.
These results demonstrate that sign lock-in is systematically strengthened with model size, consistent with the prediction of sign lock-in theory.

\begin{tcolorbox}[title=Takeaway: effectiveness of sign lock-in theory]
Sign lock-in theory robustly predicts the histogram of sign flips and demonstrates that lock-in effects are strengthened in modern neural networks.
\end{tcolorbox}

\section{Sign Lock-In Enhancement}
\label{sec:lockin_enhance_main}

The sign lock-in theory suggests that effective outer-to-outer sign flips are rare and exhibit a geometric tail.
This turns the empirical one-bit wall into a constructive opportunity: if the initial sign pattern is chosen to be compressible and training preserves it, then the trained model can reuse that sign structure rather than store an arbitrary learned sign matrix.
Our resulting procedure is therefore a \emph{from-scratch} training method.
It selects the sign structure before optimization begins and trains the model around that structure; it is not intended as a post-training compressor that takes an arbitrary pretrained model and rewrites its signs after the fact.
We therefore ask the following question:
\begin{quote}
    \emph{Can sign lock-in preserve a compressible initialization template while maintaining task quality?}
\end{quote}
To probe this question, we introduce a theory-guided approach that controls the two quantities governing boundary excursions in Theorem~\ref{thm:sign_lockin_T_init}: (i) the probability of reaching the sign boundary, the initial-hit factor $h_T$, and (ii) the probability of returning to the boundary after escaping, the re-entry ratio $g_T$.
The soft version of this approach tests whether a structured template can survive ordinary training with little quality loss.
The hard template-constrained version then tests the compression implication directly: if signs are re-generable, the bit budget can be spent on magnitudes.

\begin{figure*}[t]
  \centering
  \begin{subfigure}[t]{0.42\textwidth}
    \centering
    \[
    S_{\mathrm{rand}}=
    \begin{bmatrix}
     1& 1&-1& 1\\
    -1& 1& 1&-1\\
     1&-1& 1&-1\\
    -1&-1& 1& 1
    \end{bmatrix}
    \]
    \subcaption{A random sign pattern is not low-rank-friendly.}
  \end{subfigure}\hfill
  \begin{subfigure}[t]{0.54\textwidth}
    \centering
    \[
    S_{\mathrm{temp}}=
    \begin{bmatrix}
     1& 1&-1&-1\\
     1& 1&-1&-1\\
    -1&-1& 1& 1\\
    -1&-1& 1& 1
    \end{bmatrix}
    =
    \begin{bmatrix}
     1\\
     1\\
    -1\\
    -1
    \end{bmatrix}
    \begin{bmatrix}
     1&1&-1&-1
    \end{bmatrix}
    \]
    \subcaption{A deliberately chosen template can be rank one.}
  \end{subfigure}
  \caption{\textbf{Re-generable sign templates drive the effective sign cost to zero.}
  A random sign matrix such as $S_{\mathrm{rand}}$ has no compact description and costs about one bit per weight.
  In contrast, $S_{\mathrm{temp}}$ is the elementwise sign of a rank-$1$ outer product and is therefore \emph{re-generable} from a small specification (a random seed and the rank).
  Storing only this specification amortizes the per-weight sign cost to $\approx 0$ bits as the model grows, freeing the entire bit budget for the magnitudes.}
  \label{fig:lowrank_template_example}
\end{figure*}

Figure~\ref{fig:lowrank_template_example} illustrates the key point with the example above.
If training begins from ordinary random signs, sign lock-in preserves a Rademacher-like, high-rank sign matrix, which is precisely the one-bit wall.
Fixing such signs after training would not help.
The proposed method changes the initial condition: it starts from a structured, low-rank-friendly sign template such as $S_{\mathrm{temp}}$ and uses sign lock-in enhancement to preserve it.
Thus sign persistence becomes an advantage rather than a bottleneck.

\subsection{Low-Rank Sign Templates for Suppressing the One-Bit Wall}
Beyond describing a naturally emerging phenomenon, sign lock-in can be \emph{actively controlled} through artificial interventions.
We first choose a low-rank, compressible sign template as an initialization prior, and then use two lightweight mechanisms to keep trajectories away from the sign boundary.
Gap initialization reduces initial boundary hits, while outer-drift regularization reduces later re-entry.
Appendix~\ref{app:lockin_enhance} gives the corresponding theoretical support.
 The complete details of the end-to-end pipeline implementation are provided in Appendix~\ref{app:zero_cost_sign_template}. 

\paragraph{Low-rank sign template.}
We begin by specifying a compressible sign template before training starts.
For a layer $l$ with a weight matrix $W^{(l)}\in\mab{R}^{m\times n}$, we create a low-rank template by sampling two factor matrices
$G\in\mab{R}^{m\times r}$ and $H\in\mab{R}^{n\times r}$, where $r\ll \min(m,n)$, and then taking the elementwise sign of their product:
\begin{equation*}
    G_{ik} \overset{\mathrm{i.i.d.}}{\sim} \mac{N}(0,1),
     H_{jk} \overset{\mathrm{i.i.d.}}{\sim} \mac{N}(0,1),~T^{(l)} \coloneqq \operatorname{sign}\ab(G H^{\top}),
\end{equation*}
The rank parameter $r$ controls the intrinsic degrees of freedom of the template, making $T^{(l)}$ clearly specified and straightforward to reuse. 
We treat $T^{(l)}$ as the initial distribution template for the weights in layer $l$. Let $\mac{D}$ be any distribution supported on $\mab{R}_{>0}$ and draw magnitudes
$A^{(l)}_{ij}\overset{\mathrm{i.i.d.}}{\sim}\mac{D}$.
We initialize the weights as $W^{(l)} \coloneqq T^{(l)} \odot A^{(l)}$.
This design establishes \emph{a priori} a sign structure that is easy to store and reuse; in the experiments, we use $r=2$ as a representative case.
Because the template is chosen before optimization, the method targets new training runs and is distinct from post-training quantization or pruning methods that operate on already-trained weights.

\begin{figure}[t]
    \centering
    \includegraphics[width=\linewidth]{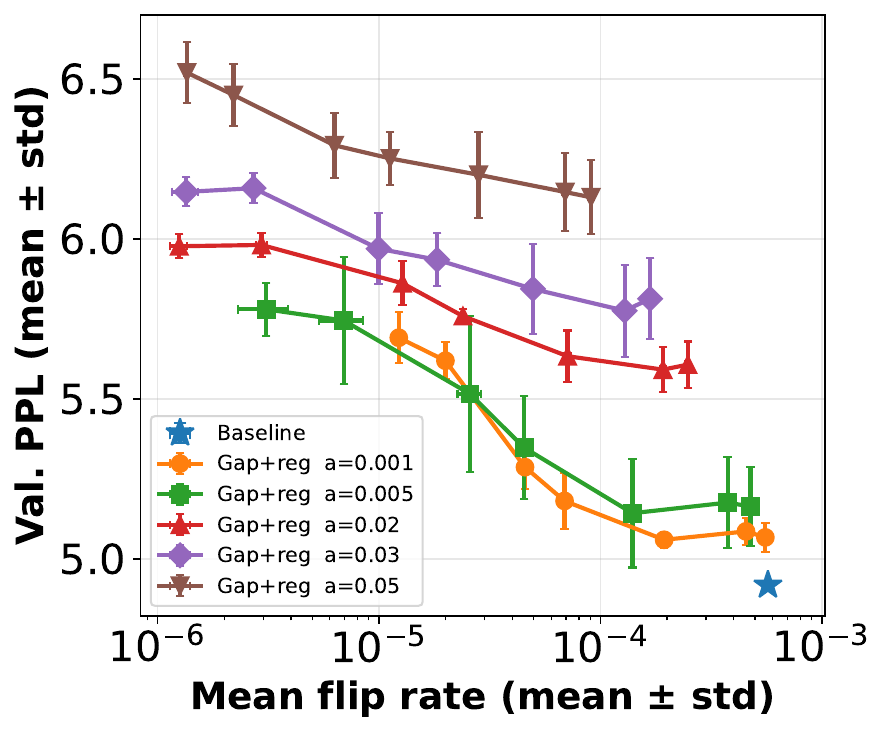}

    \caption{\textbf{Flip--quality trade-off with a compressible sign template ($r{=}2$).}
    Validation perplexity mean$\pm$std over three seeds vs. the mean per-step sign flip rate (\texttt{flip\_mean}).
    Each curve fixes the gap threshold $a_{\mathrm{init}}$ and sweeps the log-barrier weight $\lambda$ from left to right
    ($0.5, 0.3, 0.1, 0.05, 0.01, 0.001, 0.0001$).
    Stronger stabilization suppresses flips but can worsen perplexity, while intermediate $a_{\text{init}}$ and $\lambda$ achieve large flip reduction with little loss in validation quality.}
    \label{fig:q3_flipmean_vs_ppl_charlm}
\end{figure}

\paragraph{Gap Initialization.}
To reduce early sign flips caused by weights drifting near zero, we initialize each weight with a margin explicitly away from the origin.
Let $\sigma_{\mathrm{init}}>0$ denote the base initialization scale and define a gap threshold, $a_{\mathrm{init}} \coloneqq c_{\mathrm{gap}}\sigma_{\mathrm{init}}, c_{\mathrm{gap}}>0$,
where $c_{\mathrm{gap}}$ is a user-chosen constant controlling the gap size.
For each entry, we sample $z\sim \mac{N}(0,\sigma_{\mathrm{init}}^2)$ and reject the draw if $|z|<a_{\mathrm{init}}$, repeating until $|z|\ge a_{\mathrm{init}}$.
Let $Z^{(l)}\in\mab{R}^{m\times n}$ be the resulting matrix with entries that are independently and identically distributed according to this rejection-sampling procedure.
Equivalently, each $Z^{(l)}_{ij}$ follows a two-sided truncated Gaussian supported on
$\mab{R}\setminus[-a_{\mathrm{init}},a_{\mathrm{init}}]$. We initialize the layer-$l$ weights as $W^{(l)}_{0} \coloneqq Z^{(l)}$.
By construction, this initialization suppresses early excursions into the near-zero region, where sign changes are most likely.
\begin{figure*}[t]

    \centering
    \begin{subfigure}[t]{0.32\linewidth}
      \centering
      \includegraphics[width=\linewidth]{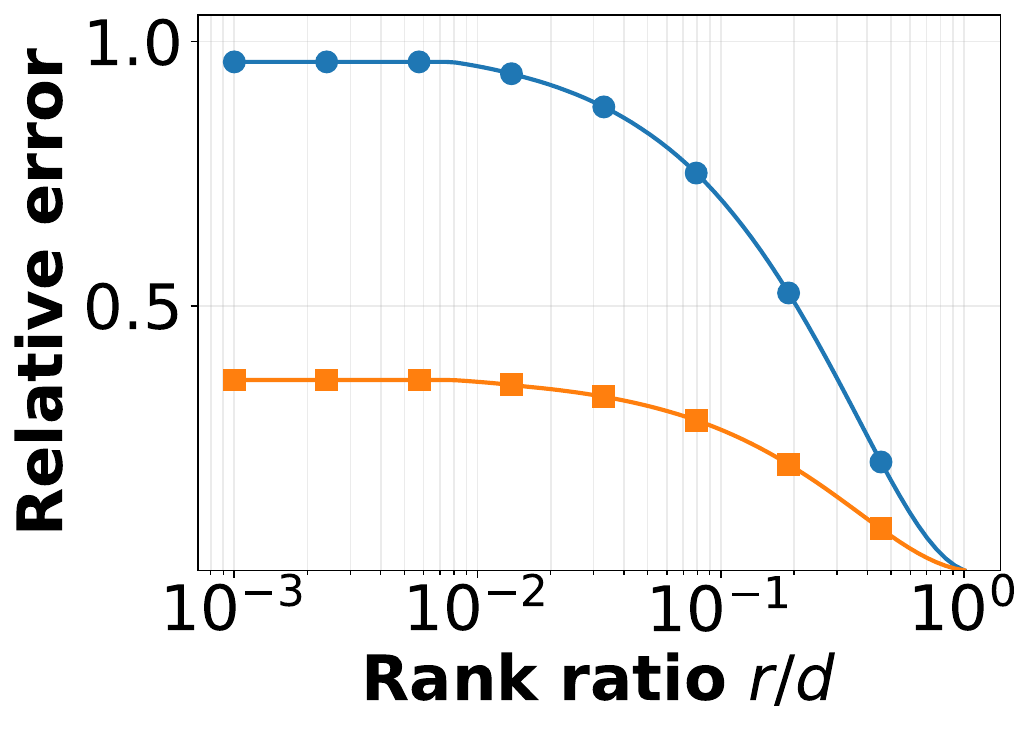}
      \subcaption{Baseline.}
      \label{fig:charlm_svd:a}
    \end{subfigure}\hfill
    \begin{subfigure}[t]{0.32\linewidth}
      \centering
      \includegraphics[width=\linewidth]{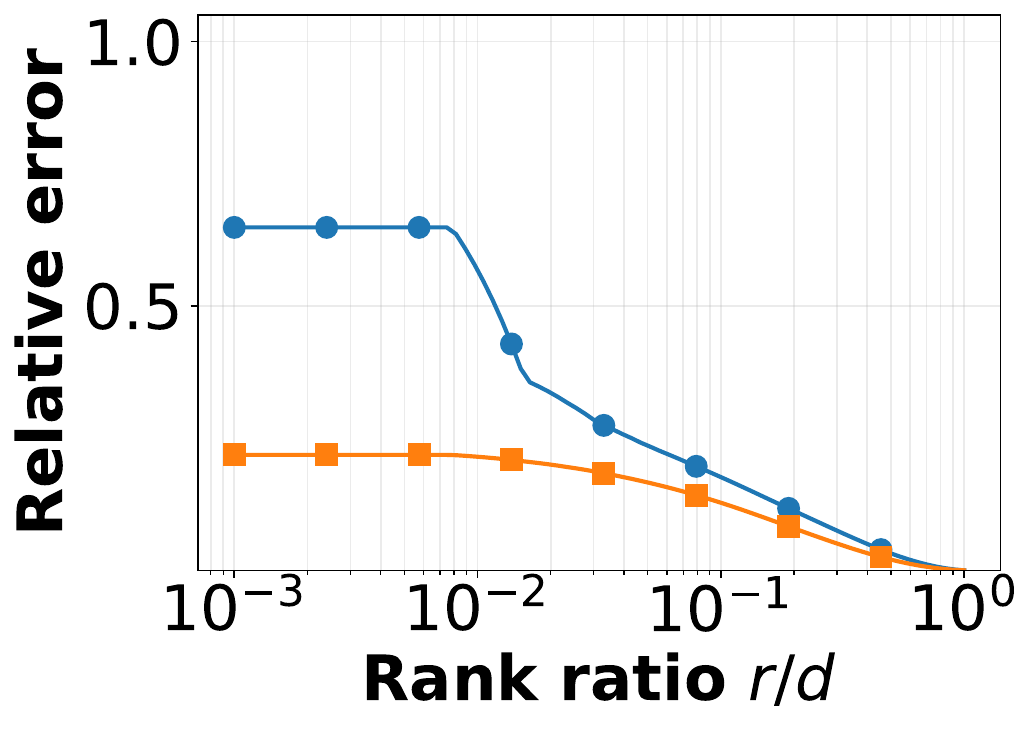}
      \subcaption{Gap initialization.}
      \label{fig:charlm_svd:b}
    \end{subfigure}\hfill
    \begin{subfigure}[t]{0.32\linewidth}
      \centering
      \includegraphics[width=\linewidth]{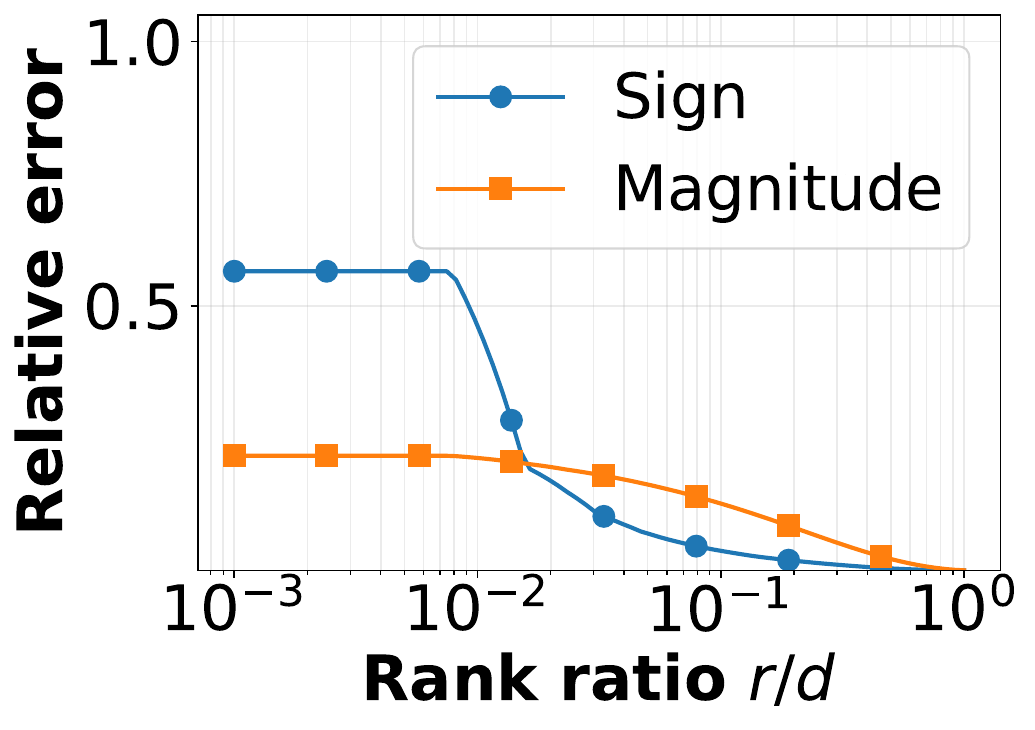}
      \subcaption{Gap init. + regularization.}
      \label{fig:charlm_svd:c}
    \end{subfigure}

    \caption{\textbf{Sign vs magnitude low-rank compressibility example.}
    Relative Frobenius error ${E}_r(M)=\lVert M-M_r\rVert_F/\lVert M\rVert_F$ as a function of rank ratio $r/d$ (log scale),
    for the sign matrix $S=\sign(W)$ and magnitude matrix $A=|W|$, evaluated on final trained weights under
    baseline, gap initialization only, and gap initialization with regularization.}
    \label{fig:charlm_svd}

\end{figure*}

\paragraph{Outer-drift regularization.}
Even with a gap at initialization, individual weights may later drift back to zero, where sign changes are most probable.
To discourage such re-entries during early optimization, we introduce a lightweight log-barrier that penalizes small magnitudes.
For a weight matrix $W\in\mab{R}^{m\times n}$, a gap threshold $a_{\mathrm{init}}>0$, and a numerical stabilizer $\epsilon_{\mathrm{lb}}>0$, we define the following:
\begin{multline*}
     R_{\mathrm{LB}}(W;a_{\mathrm{init}},\epsilon_{\mathrm{lb}}) \\
    \coloneqq 
    \frac{1}{mn}\sum_{i, j}
    \log\left(\max\left\{1,\frac{a_{\mathrm{init}}}{|W_{ij}|+\epsilon_{\mathrm{lb}}}\right\}\right).
    \label{eq:rlb_log_barrier_exact}
\end{multline*}
This penalty is $0$ whenever $|W_{ij}|$ is safely outside the near-zero band, i.e., when $|W_{ij}|+\epsilon_{\mathrm{lb}}\ge a_{\mathrm{init}}$, and it increases smoothly as $|W_{ij}|$ approaches $0$.
Let $\theta$ denote all model weight parameters, and let $\mac{L}_{\mathrm{task}}(\theta)$ be the task loss, e.g., negative log-likelihood. We apply the barrier to a selected set of layers $\mac{M}$, resulting in the time-dependent training objective at the optimization step $t$:
\begin{equation*}
    \mac{L}_{\mathrm{total}}(\theta;t)
    \coloneqq
    \mac{L}_{\mathrm{task}}(\theta)
    +
    \lambda(t)\sum_{l\in\mac{M}} R_{\mathrm{LB}}(W^{(l)};a_{\mathrm{init}},\epsilon_{\mathrm{lb}}),
    \label{eq:total_objective_exact}
\end{equation*}
where $\lambda(t)\ge 0$ controls the strength of the regularizer.
In practice, we keep $\lambda(t)$ constant during an initial warmup phase and then reduce it to $0$, so that the penalty primarily influences early dynamics. Overall, this regularizer biases optimization away from the near-zero region after weights have moved into the outer region $|W^{(l)}_{ij}|\gtrsim a_{\mathrm{init}}$, reducing repeated boundary excursions and helping to preserve the intended sign structure.

\subsection{Empirical Validation of Sign Lock-In Enhancement}
\label{sec:results}
We now validate the mechanism in three steps.
First, we test whether gap initialization and outer-drift regularization reduce sign flips without substantially degrading task quality.
Second, we check whether the resulting trained weights preserve the intended low-rank sign structure while keeping magnitudes compressible.
Third, we evaluate the end-to-end sub-bit compression consequence of using a re-generable sign template and magnitude-only SVD storage.
All template-based experiments train models from scratch with the template specified at initialization; they should not be interpreted as post-training conversion of an arbitrary pretrained model.
The first two tests use a Transformer trained for next-character prediction; detailed settings are provided in Appendix~\ref{app:charlm_q1q2q3_details}, with an additional vision task in Appendix~\ref{app:mnist-q1q2q3}.

\paragraph{Do initial gap and outer drift enhance lock-in?}
We empirically examine whether gap initialization and outer-drift regularization effectively enhance sign lock-in.
Figure~\ref{fig:q3_flipmean_vs_ppl_charlm} shows the trade-off between task quality and the mean sign-flip rate.
While the two mechanisms reinforce each other, a combination of a smaller gap and stronger outer-drift regularization
consistently lies on the Pareto frontier, suppressing sign flips to $\sim 10^{-3}$
with only about a one-point increase in perplexity.
These results agree fully with the theoretical prediction. We also report direct control effect of initial hit factor and re-entry ratio in Appendix~\ref{app:charlm_q1q2q3_details}.


\paragraph{Low-rank structure of sign and magnitude preserved?}
\label{subsec:results_magnitude_lowrank}
We next investigate whether sign lock-in enhancement preserves the low-rank structure of the magnitude, which is crucial for effective compression.
As shown in Figure~\ref{fig:charlm_svd}, the magnitude matrix maintains a low-rank structure comparable to the baseline, even under strong sign stabilization. In contrast, the sign matrix becomes substantially more amenable to low-rank approximation due to the preservation of its structured initialization. This result confirms that sign lock-in enhancement stabilizes sign patterns without compromising the compressibility of magnitudes.

\begin{figure*}[t]
  \centering
  \begin{subfigure}[t]{0.32\textwidth}
    \includegraphics[width=\linewidth]{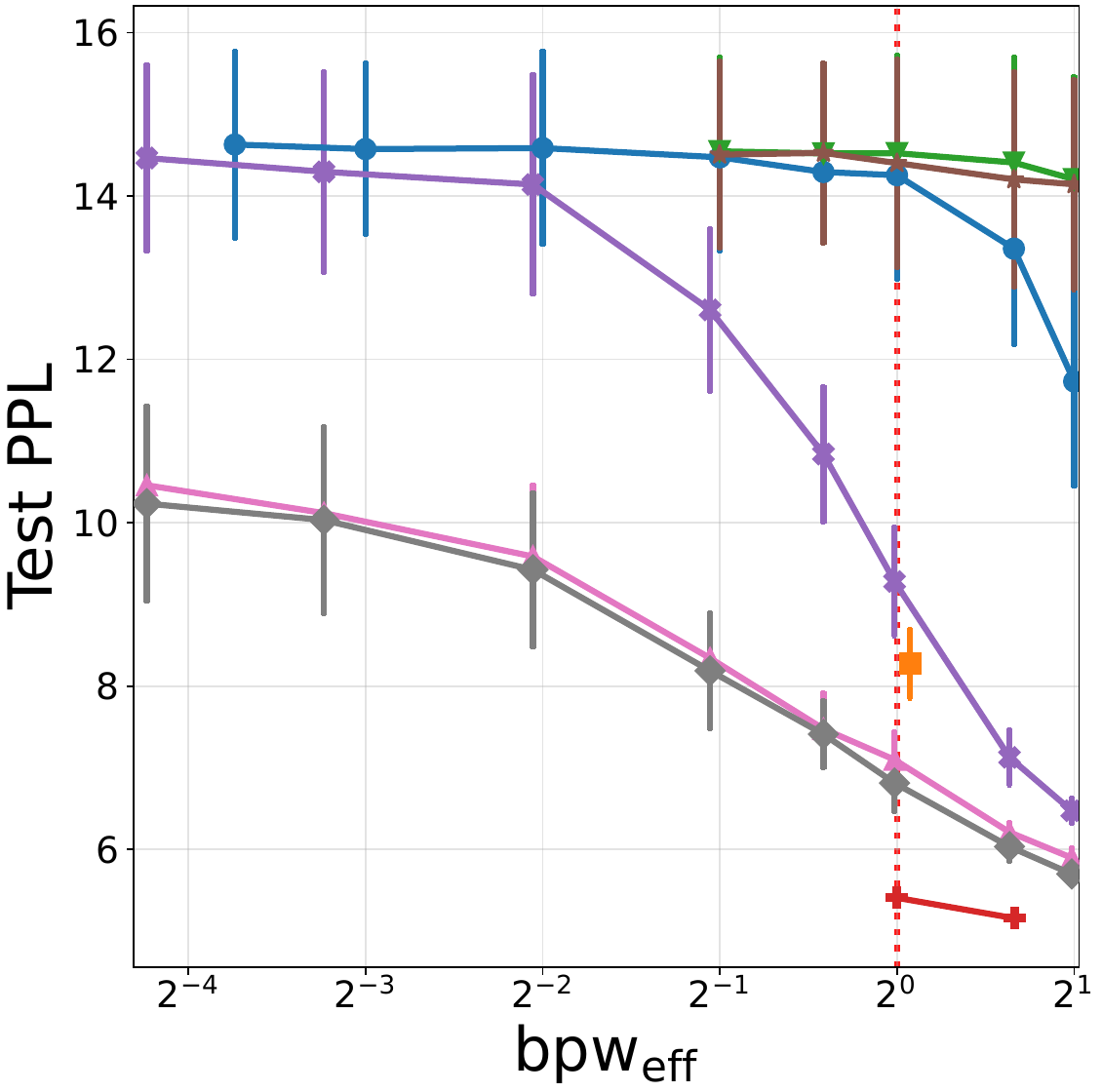}
    \caption{\texttt{charlm}}
    \label{fig:bpweff_main:a}
  \end{subfigure}\hfill
  \begin{subfigure}[t]{0.32\textwidth}
    \includegraphics[width=\linewidth]{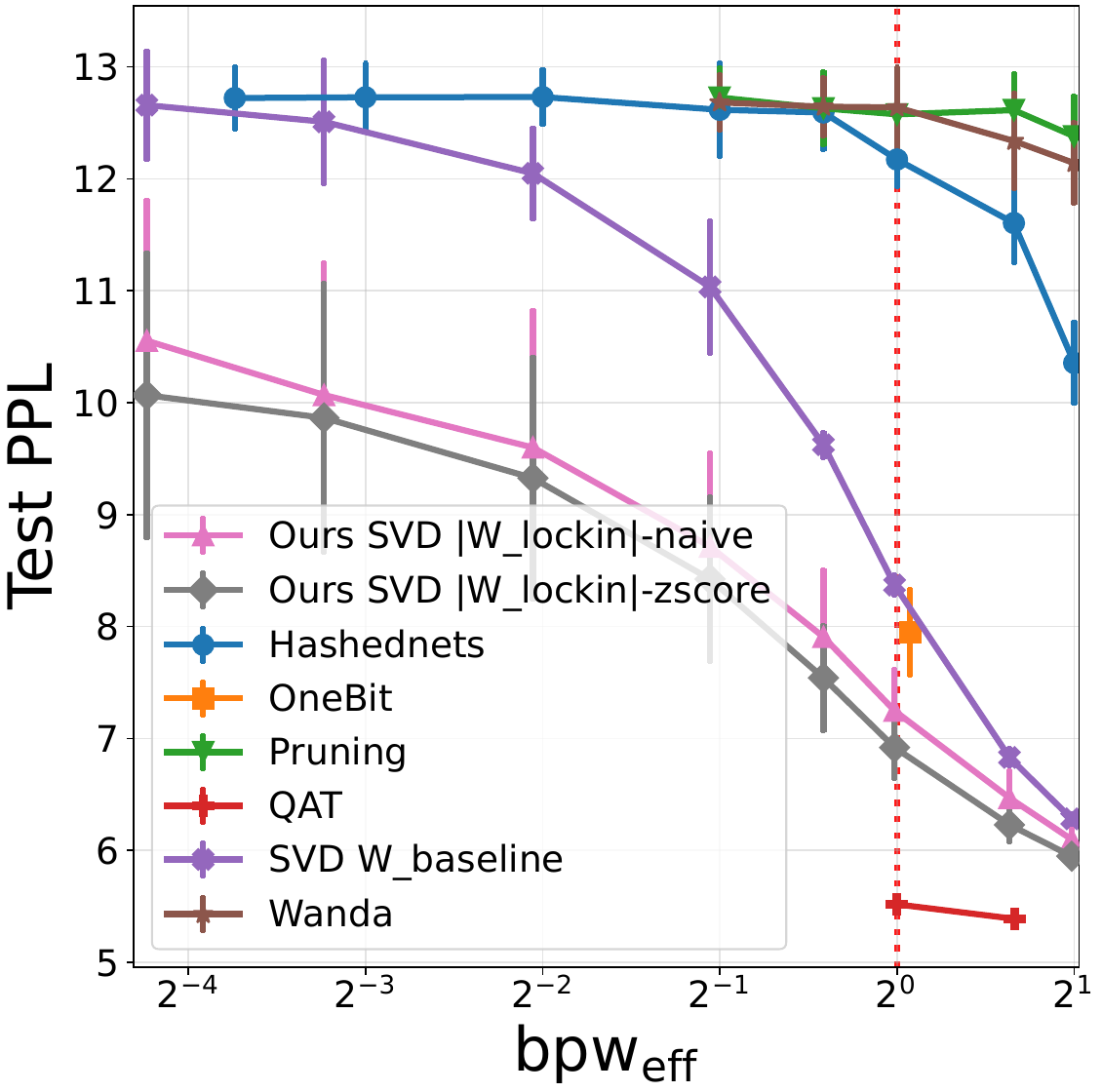}
    \caption{\texttt{text8\_char}}
    \label{fig:bpweff_main:b}
  \end{subfigure}\hfill
  \begin{subfigure}[t]{0.32\textwidth}
    \includegraphics[width=\linewidth]{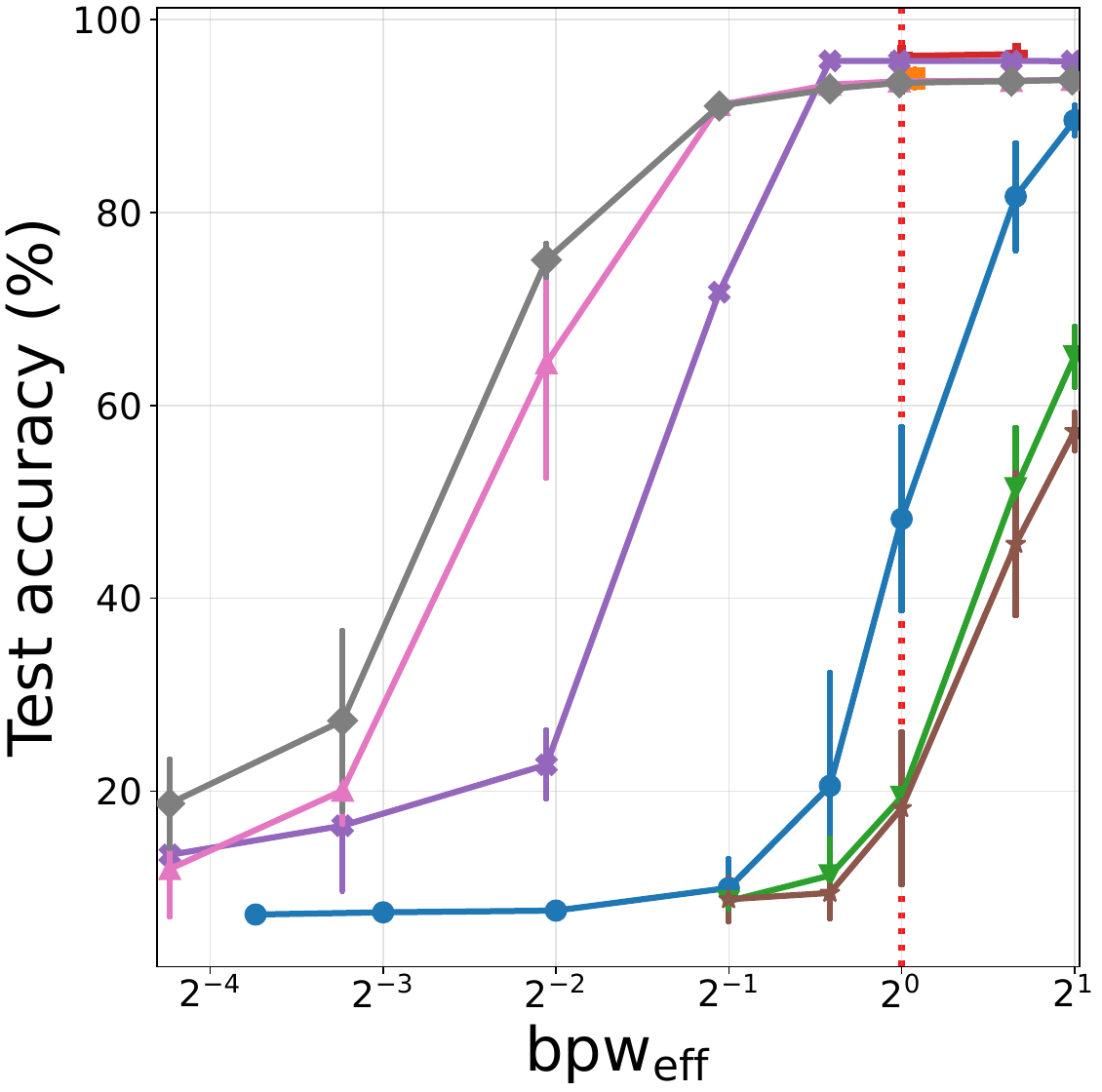}
    \caption{\texttt{dbpedia14}}
    \label{fig:bpweff_main:c}
  \end{subfigure}

  \caption{
  \textbf{Performance vs. effective bits per weight ($\mathrm{bpw}_{\mathrm{eff}}$) on benchmark tasks.}
  Markers indicate the mean over three seeds and error bars show one standard deviation.
  Lower is better for perplexity (CharLM, Text8-Char), while higher is better for accuracy (DBPedia14).
  }
  \label{fig:bpweff_main}
\end{figure*}

\paragraph{Does the low-rank sign template method improve sub-bit compression?}
We finally evaluate the end-to-end sub-bit compression consequence of using a low-rank sign template and magnitude-only SVD storage.
When signs are fixed by the template, the sign component incurs no storage cost, so the remaining budget can be used to compress the nonnegative magnitudes with truncated SVD and quantized factors.
Figure~\ref{fig:bpweff_main} reports the resulting performance--bit trade-off on three representative benchmark tasks.
In contrast, vanilla SVD on raw weights and explicit 1-bit baselines either degrade sharply near or stall at one bit per weight, directly visualizing the one-bit wall.
Across language modeling and classification tasks, SVD $|W_{\mathrm{lockin}}|$ is consistently stronger than applying the same SVD budget directly to raw weights in the sub-bit region, and it remains competitive against hashing, one-bit, and pruning-based baselines.
This supports the practical implication of sign lock-in: once a structured sign template is preserved, magnitudes become the main compressible object and the one-bit sign wall can be bypassed.
This comparison evaluates the benefit of choosing and preserving a low-rank sign template during training, not a post-training replacement of the sign matrix in pretrained checkpoints.
Additional distillation-based variants and bit-accounting details are provided in Appendix~\ref{sec:valid_zero_template}.

\section{Conclusion}
Learned signs are harder to compress than magnitudes, exhibit near-random spectral statistics, yet remain strongly aligned with initialization.
We explain these phenomena through a sign lock-in theory predicting a geometric flip-count tail, and validate it empirically.
Based on this mechanism, we propose gap initialization and outer-drift regularization to suppress boundary visits and sign flips.
Our results indicate that stabilizing sign structure is a practical prior for sub-bit compression, and that surpassing the one-bit wall requires explicit control and reuse of sign structure.
Leveraging these insights to develop a post-training method that extends sign lock-in to arbitrary pretrained checkpoints is an attractive direction for future work.

\section{Limitations}
Our study has several limitations. (i) The low-rank sign-template method is a from-scratch training approach: the template must be selected before optimization and then preserved during training. It is not a post-training compressor for arbitrary pretrained checkpoints, and extending sign-template control to that setting remains future work.
(ii) While sign lock-in typically emerges under natural training dynamics, the system can be driven into a \emph{sign floating mode}
under extraordinarily strong magnitude-side regularization that induces persistent attraction toward the sign boundary. We also analyze sign floating mode in Appendix~\ref{app:floating_mode}.
(iii) We focus on simple enforcement methods;
other strategies
remain unexplored.
(iv) We have conducted extensive experiments including a billion-scale validation, but broader empirical coverage is left to future work. 
(v) While we show that sign degrees of
freedom are rarely exploited by optimization, we do not analyze
the representational role of signs when treated as fixed
parameters, nor their potential contribution to expressivity.

 \section*{Acknowledgements}
 This work was partially supported by JST BOOST, Japan (Grant No. JPMJBY24D0).

\section*{Impact Statement}
This work aims to improve the scientific understanding and practical efficiency of sub-bit neural-network compression; the proposed method reduces model storage and memory traffic and can lower deployment cost and energy consumption.
At the same time, aggressive compression may degrade rare or boundary-case behavior, which is particularly important in safety-critical applications.



\begingroup
\hbadness=2000
\bibliography{ref}
\endgroup
\bibliographystyle{icml2026}

\newpage
\appendix
\onecolumn

\newpage


\section{Notation}
\label{app:notation}
We summarize the key symbols that appear  throughout the paper. 
Auxiliary variables that are introduced only within a particular proof, such as shifted or stopped processes and intermediate martingales, are defined locally in the corresponding appendix.

\begin{center}
\small
\renewcommand{\arraystretch}{1.25}
\begin{tabularx}{\linewidth}{@{}>{\raggedright\arraybackslash}p{0.30\linewidth}X@{}}
\toprule
\textbf{Symbol} & \textbf{Description} \\
\midrule

\multicolumn{2}{@{}p{\linewidth}@{}}{\textbf{Weights, signs, and low-rank structure}}\\
    $W^{(\ell)}\in\mab{R}^{m\times n}$ 
      & weight matrix at layer $\ell$ \\
    $S\coloneqq\sign(W)\in\{\pm1\}^{m\times n}$ 
      & sign matrix ties at $0$ mapped to $+1$ \\
    $A \coloneqq |W|\in\mab{R}_{\ge0}^{m\times n}$ 
      & magnitude matrix; $W=S\odot A$ \\
    $\mathbf{T}^{(\ell)}\in\{\pm1\}^{m\times n}$ 
      & re-generable sign template used for sign-template enforcement \\
    $E_r(M)$ 
      & relative rank-$r$ truncated-SVD error: $E_r(M)=\|M-M_r\|_F/\|M\|_F$ \\
    $d:=\min(m,n),\ \ q:=r/d$ 
      & effective dimension and rank ratio \\
\addlinespace
    \multicolumn{2}{@{}p{\linewidth}@{}}{\textbf{Rate--distortion quantities (Figure~\ref{fig:shannon_inverse} / Appendix~\ref{app:fig1})}}\\
    $\mac{R}_{\mathrm{RD}}$,\ $\mac{D}_{\mathrm{RD}}$ 
      & rate (bits/sign) and Hamming distortion $\mac{D}_{\mathrm{RD}}=\mab{P}[S\neq \hat S]$ \\
    $\mac{D}^{\mathrm{lb}}_{\mathrm{RD}}(\mac{R}_{\mathrm{RD}})$ 
      & Shannon lower bound on distortion (inverse view) \\
    $H_{\mathrm{RD}},\ \widehat{H}_{\mathrm{RD}}$ 
      & entropy-rate parameter and its empirical proxy (estimated from selected layers) \\
    $h_2(p),\ h_2^{-1}(y)$ 
      & binary entropy and its inverse on $[0,1/2]$ \\
    $L,\ N_\ell,\ \hat p_\ell$ 
      & number of selected layers, layer sizes, and empirical sign frequency used to form $\widehat{H}_{\mathrm{RD}}$ \\
\addlinespace
\multicolumn{2}{@{}p{\linewidth}@{}}{\textbf{Sign flips and stopping-time framework}}\\
    $\mathbf{1}\{\cdot\}$,\ $N$ 
      & indicator function and number of tracked scalar entries \\
    $\mathrm{flip}(t)$,\ $\mathrm{flip\_mean}$ 
      & mismatch-to-initialization ratio and mean step-wise flip rate \\
    
    $\mac{E}_\Delta,\ \Delta,\ \delta_{\mathrm{upd}}$ 
      & bounded-update event $|w_{t+1}-w_t|\le\Delta$ and its failure probability
        $\mab{P}[\mac{E}_\Delta^c]\le\delta_{\mathrm{upd}}$ \\
    $\rho$,\ $\epsilon$ 
      & outer threshold and boundary-neighborhood radius. We set $\epsilon=\max\{\epsilon_0,\Delta\}$. \\
    $\sigma_k,\ \tau_k$ 
      & $k$-th outer-entry time and boundary-hit time \\
    $K_T^{\mathrm{eff}}(\rho)$ 
      & effective (outer-to-outer) sign-flip count up to time $T$ \\
    $h_T,\ g_T$ 
      & lock-in parameters: $h_T=\mab{P}[\tau_1\le T]$ and an upper bound on re-entry probability \\
\addlinespace
\multicolumn{2}{@{}p{\linewidth}@{}}{\textbf{Quantization and bit accounting}}\\
$Q_b(x;\alpha)$,\ $b$,\ $\alpha$ 
  & $b$-bit symmetric uniform quantizer with scale $\alpha$ \\
$\mathrm{bpw}_{\mathrm{eff}},\ \mathrm{bpw}_{\mathrm{target}}$ 
  & effective bits-per-weight and target budget grid \\

\addlinespace
\multicolumn{2}{@{}p{\linewidth}@{}}{\textbf{Selected appendix-only bundles (defined where used)}}\\
$\mac{L},\, v_t,\, g_t,\, \eta_t,\, \xi^2$ 
  & objective / iterate / stochastic gradient / step size / noise proxy (App.~\ref{app:bn_reentry}) \\
$u_t,\, r_t,\, \widetilde w_t,\ (\Delta_{\mathrm{blk}},\delta_{\mathrm{blk}})$ 
  & BN+ReLU scale-invariance block notation and its bounded-update parameters (App.~\ref{app:bn_reentry}) \\
$\zeta,\omega,\epsilon_{\mathrm{zs}},D_{\mathrm{zs}},(\cdot)^{\mathrm{zs}}$ 
  & magnitude z-score preconditioning notation (App.~\ref{app:SVD_zscore}) \\
$\mac{M}_{\mathrm{float}},\, \rho_f,\, R_{\mathrm{ID}}(W;\rho_f),\, \mac{L}_{\mathrm{float}}$ 
  & floating-mode regularization objects (App.~\ref{app:floating_mode}) \\

\bottomrule
\end{tabularx}
\end{center}

To avoid confusion between time indices and discrete counters, we denote time decrements as $(t-1)$, e.g., $w_{(t-1)}$,
while counter decrements are indicated using subscripts, e.g., $\sigma_{k-1},\tau_{k-1}$.

\section{Additional Related Work}
\label{app:add_review}

Recent work on compressing pretrained models has evolved from few-bit quantization to sub-bit regimes.
One line of work represents weight matrices using a small number of binary bases with learned or compactly encoded coefficients, enabling effective storage at one or sub-bit levels and efficient reconstruction \citep{chen2024onebit, boza2025dbf, bulat2024qbb, lee2025littlebit, gu2025btcllm, ichikawa2025more}.
Pruning and sparsification reduce the number of stored parameters and are often combined with quantization \citep{Han2015LearningWeightsConnections,Frantar2023SparseGPT,Sun2024Wanda}.
However, these approaches largely treat the sign pattern as a given object to represent, rather than analyzing what makes signs compressible or incompressible.
In particular, there is limited research directly studying the statistical structure and training-time dynamics of weight signs, the underlying object that 1-bit bases must ultimately represent.
We address this gap by empirically characterizing sign incompressibility and persistence across architectures, providing a minimal stopping-time theory that explains sign stability and offers actionable methods to control sign evolution.

Ultra-low-bit training includes binary and ternary networks that constrain weights and/or activations to small discrete sets, typically relying on the straight-through estimator \cite{Bengio2013STE,Courbariaux2015BinaryConnect,Hubara2016BinarizedNeuralNetworks,Li2016TernaryWeightNetworks,Zhu2017TrainedTernaryQuantization}. 
For 1-bit inference, accuracy has improved through scaling strategies and architectural refinements, including XNOR-type formulations and specialized training recipes \cite{Rastegari2016XNORNet,Zhou2016DoReFaNet,Lin2017ABCNet,Liu2018BiRealNet,Qin2020IRNet,Liu2020ReActNet}. 
These studies discuss the optimization of sign-only networks, and several studies~\cite{Xiang2024OvSWOS,LIN2025105646,xu2021recurevivingdeadweights} report that typical binary networks do not reach a sign-flip rate of 50\%.
However, these studies are observation based and there is still no unified explanation of sign dynamics.
Our goal is also different from classical binary or ternary QAT such as DoReFa-Net and TTQ: those methods learn low-bit weights or activations but still store the resulting sign or ternary states explicitly, often together with auxiliary scales.
In contrast, our sign-template method targets the near- or sub-one-bit storage regime by choosing a compressible sign prior before training, preserving it through sign lock-in, and reallocating the remaining bit budget to magnitudes.

More broadly, quantization research includes early analyses of training with limited numerical precision \cite{Gupta2015LimitedPrecision}, practical inference pipelines that utilize only integer-arithmetic \cite{Jacob2018IntegerQuantization}, and methods for quantization-aware training (QAT) or post-training quantization (PTQ) that learn or parameterize clipping thresholds and step sizes \cite{Choi2018PACT,Esser2019LSQ}. For Transformers and large language models, PTQ has matured within the 8-bit regime \cite{Dettmers2022GPT3int8,Xiao2023SmoothQuant} and has recently advanced significantly for settings that use only 4-bit weights \cite{Frantar2023GPTQ,Lin2024AWQ}. Complementary compression mechanisms reduce the cost of storing magnitudes via codebooks, hashing, or vector quantization \cite{Chen2015HashedNets,Gong2014VectorQuantization,Ullrich2017SoftWeightSharing}. In parallel, communication-efficient optimization compresses gradients into signs or ternary values \cite{Seide20141bitSGD,Alistarh2017QSGD,Wen2017TernGrad,Lin2018DeepGradientCompression,Bernstein2018signSGD}, providing further evidence that sign information remains useful even under extreme discretization.

Pruning and sparsity methods range from early second-order criteria \cite{LeCun1990OptimalBrainDamage,Hassibi1993OptimalBrainSurgeon} and magnitude-based pruning \cite{Han2015LearningWeightsConnections} to more recent techniques tailored for large language models \cite{Sun2024Wanda,Frantar2023SparseGPT}. Structured compression via low-rank and tensorized parameterizations exploits correlations in weights and has a long history in deep learning \cite{Denil2013PredictingParameters,Denton2014ExploitingLinearStructure,Jaderberg2014SpeedingUpConvNetsLowRank,Sainath2013LowRankFactorizationDNN,Novikov2015TensorizingNeuralNetworks}. These lines of work complement our approach, which effectively reallocates bits to magnitude factors after rendering sign information free by construction. Theoretical perspectives on near-initialization training in wide networks, including the neural tangent kernel, over-parameterized convergence analyses, and ``lazy training,'' predict limited parameter movement and therefore offer a plausible explanation for sign persistence during training \cite{Jacot2018NeuralTangentKernel,Du2019GradientDescentGlobalMinima,AllenZhu2019ConvergenceDeepLearningOverparam,Lee2019WideNeuralNetworksLinearModels,Chizat2019Laziness}. Finally, our observation that sign matrices exhibit spectral behavior close to i.i.d.\ Rademacher baselines naturally connects to classical random matrix theory \cite{Marchenko1967DistributionEigenvalues,Wigner1955CharacteristicVectors} and to spectral viewpoints on deep networks \cite{Pennington2017ResurrectingSigmoid,Martin2018ImplicitSelfRegularization,SAKAI2022119}.

\section{Additional Experimental Details}
\label{app:add_experiments}
This appendix documents experimental settings and implementation details that support the empirical results in the main text.
Specifically:
\begin{itemize}\itemsep0.2em \topsep0.2em \parsep0em \partopsep0em
  \item \textbf{Appendix~\ref{app:empirical_repro}} reports the hardware and software environment used across experiments.
  \item \textbf{Appendix~\ref{app:fig1}} describes the experimental settings for estimating the entropy-rate proxy.
  \item \textbf{Appendix~\ref{app:sec2_details}} details the protocol used for the randomness tests (Figure~\ref{fig:empirical_motivation}(a--c)).
  \item \textbf{Appendix~\ref{app:charlm_q1q2q3_details}} summarizes the full experimental details for the language-task sign lock-in experiments.
  \item \textbf{Appendix~\ref{app:param_scale_sweep_a10080}} reports the billion-scale LLM validation setup.
\end{itemize}

\subsection{Hardware and Software}
\label{app:empirical_repro}

To ensure reproducibility, we set a global random seed in both \texttt{NumPy} and \texttt{PyTorch} and report the hardware and software configurations used in our experiments. All runs were conducted on a single \texttt{NVIDIA A100} GPU. Our software environment included Python \texttt{ 3.12.12}, \texttt{PyTorch 2.9.0}, \texttt{torchvision 0.24.0}, \texttt{timm 1.0.22}, \texttt{transformers 4.57.3}, and \texttt{SciPy 1.16.3}.

\subsection{Experimental Settings for Estimating Entropy Proxy}
\label{app:fig1}

We consider three pretrained models: TinyLlama-1.1B-Chat \citep{zhang2024tinyllama}, ResNet18 (torchvision; \texttt{resnet18} with \texttt{weights="DEFAULT"}), and MLP-Mixer-B16 (timm; \texttt{mixer\_b16\_224} with \texttt{pretrained=True}). From each model, we extract two-dimensional internal weight tensors and convert them into matrices suitable for unified analysis. For linear layers, we use the native parameter matrix $W\in\mab{R}^{m\times n}$. For convolutional layers, we flatten each Conv2d kernel into a matrix of shape 
$W\in\mab{R}^{C_{\mathrm{out}}\times (C_{\mathrm{in}}\,k_h\,k_w)}$. For Transformer models, we exclude tensors with parameter names that contain \texttt{embed} or \texttt{lm\_head} to avoid including embedding tables and output heads that may exhibit distinct statistical structures.

To focus computation on the most informative layers, we prioritize large matrices by sorting all extracted tensors according to $d=\min(m,n)$ in descending order and selecting up to $L=6$ matrices per model. Let $N_l$ denote the number of entries in the selected matrix for layer $l$. Whenever we compute a statistic per layer, we aggregate it across layers using an entry-weighted average,
\begin{equation*}
    \widehat H_{\mathrm{RD}}
    :=\frac{\sum_{l=1}^L N_l\,\widehat H_{\mathrm{RD},l}}{\sum_{l=1}^L N_l},
\end{equation*}
so that larger matrices contribute proportionally to the overall estimate.

Given each weight matrix $W$, we form the sign matrix $S=\mathrm{sign}(W)$ elementwise, adopting the convention $\mathrm{sign}(0)=+1$ so that $S\in\{\pm1\}^{m\times n}$. To capture local dependencies beyond the marginal frequency of positive and negative signs, we estimate the entropy of small contiguous sign patches. For each selected sign matrix $S^{(l)}\in\{\pm1\}^{m\times n}$, we sample contiguous $3\times 3$ patches $P\in \{\pm1 \}^{3\times 3}$ by choosing a top-left corner $(i,j)$ uniformly at random, subject to $1\le i\le m-2$ and $1\le j\le n-2$, and then taking $P=S^{(l)}[i:i+2,\,j:j+2]$. We map each patch to a 9-bit pattern index by converting $\pm1$ to ${0,1}$ and packing the resulting bits, yielding a categorical variable over $2^9=512$ possible patterns. Let $\hat p^{(l)}(u)$ denote the empirical frequency of pattern $u$ among the sampled patches in layer $l$. We compute the plug-in Shannon entropy of the patch distribution,
\begin{equation*}
    \widehat H_{\mathrm{patch},l} \coloneqq -\sum_{u=1}^{512} \hat{p}^{(l)}(u) \log_2 \hat{p}^{(l)}(u),
\end{equation*}
and define the corresponding entropy-rate proxy by normalizing per site, $\widehat{H}_{\mathrm{RD},l} \coloneqq \nicefrac{\widehat H_{\mathrm{patch},l}}{9}$.
Under an i.i.d.\ Rademacher field, this quantity approaches $1$ (up to finite-sample effects), whereas it decreases when local structure induces predictable, low-entropy patch patterns. We report the entry-weighted aggregate $\widehat H_{\mathrm{RD}}$ in Figure~\ref{fig:shannon_inverse}. For scalability, when a full enumeration of patch locations is computationally expensive, we estimate $\hat p^{(l)}(u)$ using a uniform random subsample of patch locations within each layer.

\subsection{Experimental Details for Randomness Test}
\label{app:sec2_details}

This appendix describes the experimental protocol used to produce Figure~\ref{fig:empirical_motivation}(a--c), including the sources of the pretrained models, the procedures for selecting and preprocessing weight matrices, and the hyperparameters utilized in each experiment.

\subsubsection{Pretrained Models and Sources}
\label{app:empirical_models}

We study representative pretrained architectures, including MLPs, convolutional networks, and Transformers, to probe whether the observed phenomena are consistent across model families. Specifically, we utilize MLP-Mixer-B16 with pretrained weights from \texttt{timm}, ResNet18 with pretrained weights from \texttt{torchvision}, and TinyLlama-1.1B-Chat from \texttt{HuggingFace Transformers}. To ensure numerical stability and consistent linear-algebra behavior across libraries, we cast all extracted matrices to \texttt{float32} before performing SVD computations.

\subsubsection{Weight-Matrix Extraction and Layer Filtering}
\label{app:empirical_layer_extraction}

For each model, we iterate over the modules and extract two-dimensional internal weight tensors, converting each into a matrix $W\in\mab{R}^{m\times n}$. For linear layers, we use the native weight matrix directly. For Conv2d layers, we flatten the convolutional kernel into a matrix with $m$ representing the number of output channels and $n$ denoting the product of the number of input channels, the kernel height, and the kernel width. To focus on internal transformation matrices in Transformer models, we exclude tensors with parameter names containing \texttt{embed} or \texttt{lm\_head}, thereby omitting token embeddings and the output head, which may exhibit different statistical regularities.

To control runtime while emphasizing the most informative tensors, we prioritize larger matrices by sorting all candidates according to $d=\min(m,n)$ in descending order. We then limit the number of selected matrices per model based on the requirements of each panel: for Figure~\ref{fig:empirical_motivation}(a), we use up to $40$ matrices per model to obtain smooth average curves, whereas for Figure~\ref{fig:empirical_motivation}(b), we use up to $6$ matrices per model because each layer contributes additional submatrix sampling and spectral computations.

\subsubsection{Sign--Magnitude Decomposition and the Sign Convention}
\label{app:empirical_sign_convention}

Given a weight matrix $W\in\mab{R}^{m\times n}$, we perform a decomposition.
\begin{equation*}
S := \sign(W), ~~ A := |W|,
~~\text{so that}~~ W = S\odot A.
\end{equation*}
Throughout the paper, we use the convention $\sign(0)=+1$, ensuring that $S\in\{\pm 1\}^{m\times n}$ is always maintained.
Even if $\operatorname{sign}(0)=+1$, we have $W = S \odot A = 0$ when $A=0$, so this tie-break convention does not affect the reported statistics.

\subsubsection{Figure~\ref{fig:empirical_motivation}(a): SVD Compressibility Error vs. Rank Ratio}
\label{app:panelA_details}

For a matrix $M\in\{W,S,A\}$, let $\sigma^{\mathrm{SVD}}_1(M)\ge\cdots\ge\sigma^{\mathrm{SVD}}_d(M)\ge 0$ denote its singular values
($d=\min(m,n)$). The optimal rank-$r$ approximation error in the Frobenius norm can be expressed as
\begin{equation*}
{E}_r(M)
:=\frac{\|M-M_r\|_F}{\|M\|_F}
=\sqrt{\frac{\sum_{i>r}\sigma^{\mathrm{SVD}}_i(M)^2}{\sum_{i\ge 1}\sigma^{\mathrm{SVD}}_i(M)^2}}.
\end{equation*}
We report ${E}_r(W)$, ${E}_r(S)$, and ${E}_r(A)$ as functions of the \textbf{rank ratio}
$q:=r/d\in(0,1]$.
In our plots, we vary $q$ over a log-spaced grid implementation.
\begin{equation*}
    q\in\Bigl\{\frac{1}{2048},\frac{2}{2048},\frac{4}{2048},\ldots,\frac{1024}{2048}\Bigr\},
    ~~ r=\left\lfloor qd\right\rceil,~~ r\in[1,d],
\end{equation*}
and average ${E}_r(\cdot)$ over selected layers within each model.

\paragraph{Exact vs.\ randomized SVD.}
When $d\le 2048$, we compute the singular values precisely. For larger matrices, we utilize a randomized low-rank SVD routine (e.g., \texttt{torch.svd\_lowrank}) to estimate the tail energy necessary for evaluating $E_r(M)$ efficiently at the ranks in our sweep.

\subsubsection{Figure~\ref{fig:empirical_motivation}(b): Spectral Fit to a Rademacher Baseline (KS test)}
\label{app:panelB_details}

Figure~\ref{fig:empirical_motivation}(b) tests whether sign matrices are spectrally consistent with an i.i.d.\ Rademacher random matrix.
For each selected sign matrix $S\in\{\pm 1\}^{m\times n}$, we set $d=\min(m,n)$ and choose $s=\min(256,d)$.
If $s<32$ for a layer, we exclude it from Figure~\ref{fig:empirical_motivation}(b).

\paragraph{Submatrix sampling.}
For each layer, we sample $N_{\mathrm{sub}}=20$ contiguous submatrices
$S_{\mathrm{sub}}\in\mab{R}^{s\times s}$ by drawing uniform offsets
$(i_0,j_0)$ and taking the block slice $S[i_0:i_0+s,\; j_0:j_0+s]$.

\paragraph{Normalization and pooling.}
For each sampled submatrix, we compute the singular values and normalize them by $\sqrt{s}$ to eliminate trivial scaling:
\begin{equation*}
\tilde{\sigma}^{\mathrm{SVD}}_i := \sigma^{\mathrm{SVD}}_i(S_{\mathrm{sub}})/\sqrt{s}.
\end{equation*}
We pool $\tilde{\sigma}^{\mathrm{SVD}}_i$ across sampled submatrices and layers to form an empirical CDF $F_{\sign}$.
We construct the baseline CDF $F_{\mathrm{rand}}$ by repeating the same procedure on i.i.d.\ Rademacher matrices
$R\in\{\pm 1\}^{s\times s}$ with equal probability.

\paragraph{Two-sample KS statistic.}
We report the two-sample Kolmogorov--Smirnov statistic
\begin{equation*}
D := \sup_x\bigl|F_{\sign}(x)-F_{\mathrm{rand}}(x)\bigr|,
\end{equation*}
and optionally the corresponding $p$-value from the KS test implementation.

\subsubsection{Figure~\ref{fig:empirical_motivation}(c): Transformer LM Init-Sign Drift on a Synthetic Language Task}
\label{app:panelC_details}

Figure~\ref{fig:empirical_motivation}(c) measures how much the sign pattern deviates from initialization during scratch training of a
multi-layer Transformer language model (LM) on a simple next-token prediction task.

\paragraph{Task (synthetic next-token prediction).}
We generate a synthetic token corpus with vocabulary size $V=256$ and sequence length $L=128$.
Training uses teacher forcing: given input tokens $(x_1,\ldots,x_L)$, the target is $(x_2,\ldots,x_{L+1})$.
We use $4096$ sequences and mini-batches of size $64$.

\paragraph{Model.}
We employ a causal Transformer language model with $N_{\text{layer}}=12$ Transformer blocks, a model width of $d_{\mathrm{model}}=256$, $h=4$ attention heads, a feed-forward hidden size of $d_{\mathrm{ff}}=1024$, and a dropout set to $0.0$. Each block features explicit linear projections for $q/k/v/o$ and a two-layer MLP with GELU nonlinearity.

\paragraph{Optimization.}
We train for $5000$ steps with Adam, learning rate $3\times 10^{-4}$, and cross-entropy loss.
We log metrics every $50$ steps.

\paragraph{Sign-flip (mismatch) ratio vs.\ initialization.}
For a tracked parameter tensor $W(t)$ during training step $t$, define
\begin{equation*}
\mathrm{flip}(t)
\coloneqq 1-\frac{1}{N}\sum_{i=1}^{N}\mathbf{1}\{\sign(w_i(t))=\sign(w_i(0))\},
\end{equation*}
where $N$ denotes the count of entries in the tensor. We report $\mathrm{flip}(t)$ for the token embedding weights, for each Transformer block after averaging the linear weight tensors within that block, for the LM head weights, and for a pooled average over all tracked tensors; the corresponding curves are shown in Figure~\ref{fig:empirical_motivation}(c).

\subsection{Experimental Details for Language Task}
\label{app:charlm_q1q2q3_details}
\subsubsection{Dataset and Model}
This appendix summarizes the settings used for the CharLM sign lock-in experiments.
\paragraph{Data.}
Tiny Shakespeare; contiguous 90/10 train/validation split; character vocabulary from the corpus.

\paragraph{Task, batching, and evaluation.}
Next-character prediction with teacher forcing on random contiguous blocks.
We use sequence length $L=64$ and batch size $B=64$.
Validation loss is averaged over 20 randomly sampled validation mini-batches; PPL $=\exp(\text{val\_loss})$ (loss in nats).

\paragraph{Model.}
TinyCharLM: causal Transformer with $d_{\text{model}}=128$, $n_{\text{layers}}=2$, $n_{\text{heads}}=4$, $d_{\text{ff}}=256$,
maximum context length $256$, and no dropout.

\paragraph{Optimization and regularization.}
AdamW; baseline initialization scale $\sigma_{\text{init}}=0.02$. When $\lambda>0$, add the log-barrier regularizer as in Section~\ref{sec:lockin_enhance_main}  (held constant for the first half of training, then linearly decayed to $0$).
\paragraph{Initial weights.}
\begin{itemize}
\item \textbf{Baseline:} GPT-like initialization.
\item \textbf{Proposed method:} apply the low-rank sign template (Section~\ref{sec:sign_templates}, Eq.~\eqref{eq:template_lowrank})
and gap initialization to \emph{all} matrix-shaped parameters, with template rank $r=2$
and a fixed global seed 1234.
\end{itemize}

  \begin{figure}
    \centering
    \begin{minipage}[t]{0.485\textwidth}
      \centering
      \includegraphics[width=\linewidth]{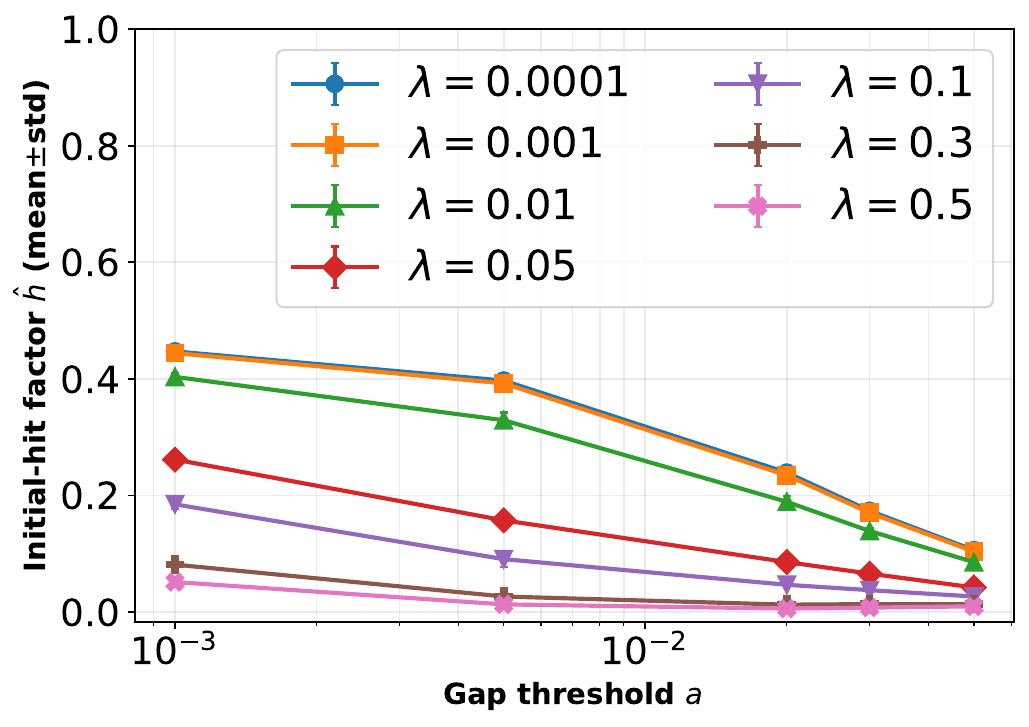}
    \end{minipage}\hfill
    \begin{minipage}[t]{0.485\textwidth}
      \centering
      \includegraphics[width=\linewidth]{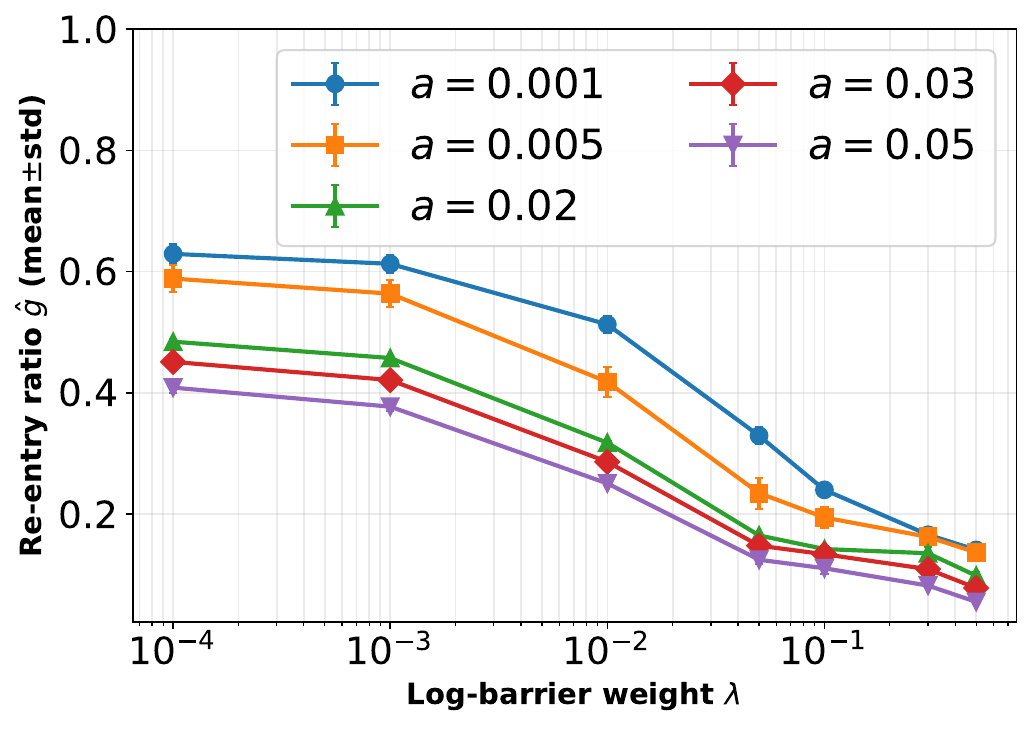}
    \end{minipage}

    \captionof{figure}{\textbf{Estimated lock-in parameters over gap-init and regularization.}
    \textbf{Left:} $\hat h$ (initial-hit factor) as a function of the gap threshold $a_{\text{init}}$ for different log-barrier weights $\lambda$.
    \textbf{Right:} $\hat g$ (re-entry ratio) as a function of $\lambda$ (log scale) for different $a_{\text{init}}$.
    Points show mean$\pm$std over three seeds; see Appendix~\ref{app:charlm_q1q2q3_details} for the full hyperparameters (including $T$, $\rho$, and $\epsilon$).}
    \label{fig:q2_pq_sweep_charlm}
  \end{figure}
\paragraph{Training horizons and learning rates.}
All results use three seeds (0/1/2).
\begin{itemize}
\item \textbf{Sign lock-in validation} $T=2000$ steps. Tail plot sweeps learning rate in
$\{10^{-4},\,3\!\cdot\!10^{-4},\,5\!\cdot\!10^{-4},\,10^{-3}\}$; the histogram uses $3\!\cdot\!10^{-4}$.
\item \textbf{Sign lock-in enhancement:} $T=12000$ steps, learning rate $3\!\cdot\!10^{-4}$, and
$a_{\text{init}}\in\{0.001,\,0.005,\,0.02,\,0.03,\,0.05\}$,
$\lambda\in\{10^{-4},\,10^{-3},\,10^{-2},\,0.05,\,0.1,\,0.3,\,0.5\}$.
\end{itemize}
\subsubsection{Analysis Tools}
\label{app:exp_tool}
\paragraph{Lock-in statistics.}
We track all matrix-shaped parameters (dim$\ge 2$).
For $K_T^{\mathrm{eff}}(\rho)$, we use $\rho=10^{-3}$ and $\epsilon=10^{-4}$.

\paragraph{Zero-inflated geometric fit.}
We model the empirical distribution of $K:=K^{\mathrm{eff}}_T(\rho)$ by a zero-inflated geometric distribution:
\begin{align*}
\mab{P}[K=0] &= 1-h,~~
\mab{P}[K=k] = h(1-g)g^{k-1}~~ (k\ge 1).
\end{align*}
The maximum-likelihood estimates are $\hat h = \mab{P}[K>0]$ and
$\hat g = 1 - 1/\mab{E}[K\mid K>0]$ (with $\hat g=0$ when $\mab{E}[K\mid K>0]\le 1$).
We interpret $\hat h$ as the \emph{initial-hit} factor and $\hat g$ as the \emph{re-entry} propensity.

\paragraph{Step-wise flip-rate proxy \texttt{flip\_mean}.}
We report the step-wise flip-rate estimation \texttt{flip\_mean}:
\begin{align*}
\mathrm{flip\_mean}
:= \frac{1}{T}\sum_{t=0}^{T-1}\left(\frac{1}{N}\sum_{i=1}^{N}\mathbf{1}\{\sign(w_i(t+1))\neq \sign(w_i(t))\}\right).
\end{align*}
Here $w_i(t)$ is the $i$-th tracked scalar at step $t$, and $N$ is the number of tracked scalars.

\paragraph{SVD method.}
This setting is the same as Appendix~\ref{app:sec2_details}.
\paragraph{Example of weight trajectory.}
To illustrate the underlying mechanism, Figure~\ref{fig:q3_trajectories_charlm} shows representative 1D weight
trajectories under baseline, gap-dominant, and strongly regularized regimes, together with the outer/boundary bands.
The regularized regime keeps trajectories away from the boundary neighborhood, reducing repeated re-entry events.

\begin{figure*}[t]
  \centering
  \includegraphics[width=0.94\textwidth]{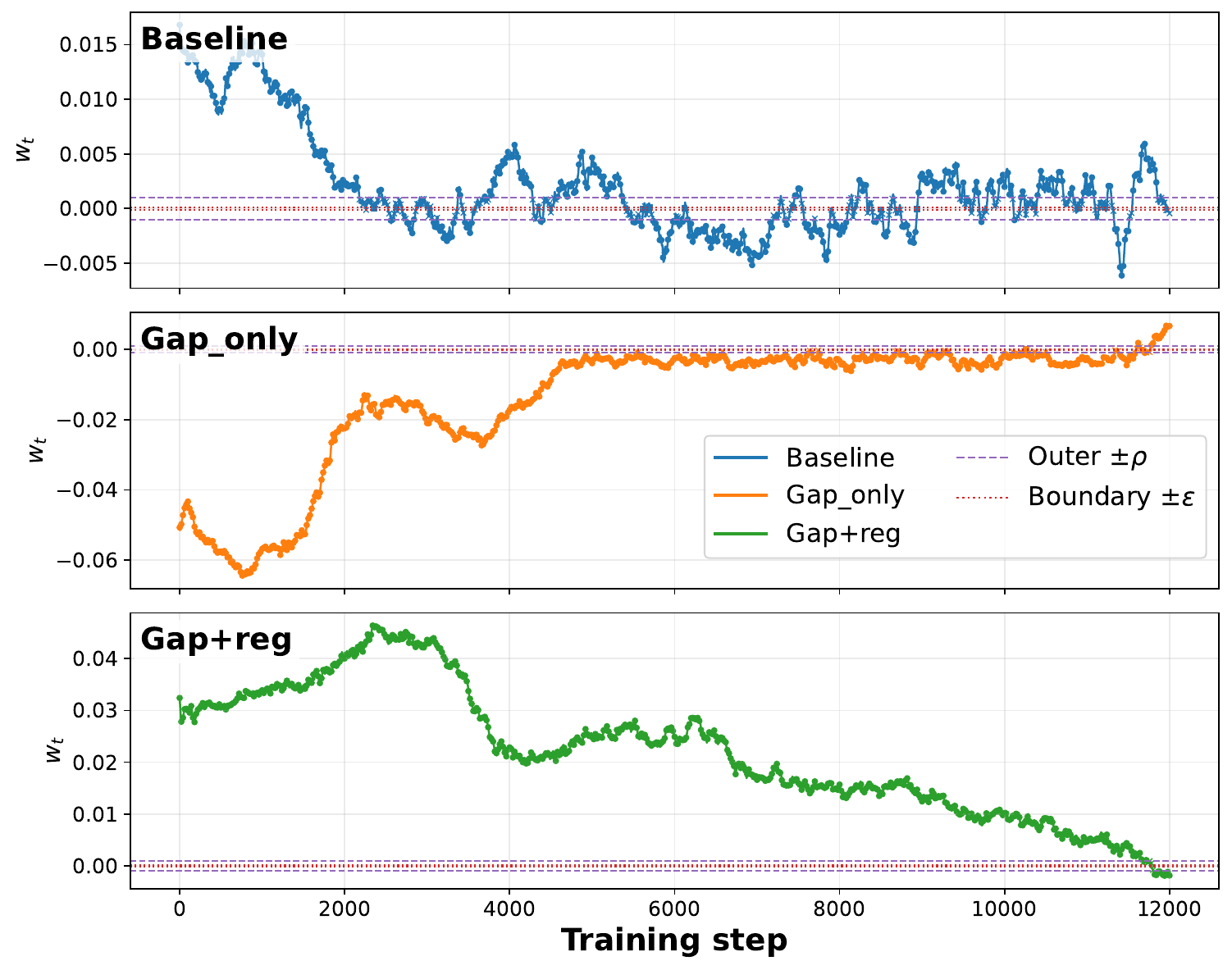}
  \caption{\textbf{Representative one-dimensional weight trajectories (down sampled for printing).}
  We plot a representative 1D weight trajectory (down-sampled)  $w_t$ under baseline ($a_{\text{init}}{=}0,\lambda{=}0$), a gap-dominant setting, and a
  strongly regularized setting, together with the outer/boundary bands used to define $(\sigma_k,\tau_k)$ and
  $K^{\mathrm{eff}}_T(\rho)$. Regularization keeps trajectories away from the boundary neighborhood, suppressing re-entry.}
  \label{fig:q3_trajectories_charlm}
\end{figure*}
\subsubsection{Additional Figures for the Language Task}
\paragraph{Lock-in parameter control.}
We quantify how the lock-in parameters vary with enhancement: gap initialization (threshold $a_{\text{init}}$) and log-barrier outer-drift regularization (weight $\lambda$).
For each $(a_{\text{init}},\lambda)$, we fit a zero-inflated geometric model to the empirical per-weight distribution of
$K^{\mathrm{eff}}_T(\rho)$ and extract $(\hat h,\hat g)$.
Figure~\ref{fig:q2_pq_sweep_charlm} shows the fitted $(\hat h,\hat g)$ over the sweep.
As predicted by \hyperref[cor:gap_q]{Proposition~\ref*{cor:gap_q}}, increasing the gap threshold $a_{\text{init}}$ reduces $\hat h$, indicating fewer initial boundary hits.
Moreover, increasing the log-barrier weight $\lambda$ reduces $\hat g$ (\hyperref[cor:outer_drift_p]{Proposition~\ref*{cor:outer_drift_p}}), consistent with suppressed re-entry
into the boundary neighborhood once weights have moved into the outer region.
Together, these results support the interpretation that gap initialization controls $h_T$ while outer-drift
regularization controls $g_T$.

\subsection{Billion-Scale LLM Validation}
\label{app:param_scale_sweep_a10080}

\begin{table}[t]
\centering
\caption{\textbf{Parameter-scale sweep.} We report the configurations of Transformers}
\label{tab:param_sweep_hg_a10080}
\begin{tabular}{lrrrr}
\toprule
Model & Params (B) & $d_{\mathrm{model}}$ & Layers & Heads \\
\midrule
~31M  & 0.032  & 512  & 10 &  8 \\
~64M  & 0.064  & 768  &  9 & 12 \\
~126M & 0.126  & 1024 & 10 & 16 \\
~260M & 0.260  & 1344 & 12 & 21 \\
~399M & 0.399  & 1664 & 12 & 26 \\
~604M & 0.604  & 2048 & 12 & 32 \\
~1.2B & 1.209  & 2048 & 24 & 32 \\
~1.8B & 1.813  & 2048 & 36 & 32 \\
~2.4B & 2.417  & 2048 & 48 & 32 \\
~5.4B & 5.437  & 3072 & 48 & 48\\
~9.7B & 9.665  & 4096 & 48 & 64 \\
~12.9B & 12.887 & 4096 & 64 & 64 \\
\bottomrule
\end{tabular}
\end{table}

This appendix reports a parameter-scale sweep of the lock-in parameters $(\hat h,\hat g)$,
estimated via the zero-inflated geometric fit described in Appendix~\ref{app:exp_tool}.

\paragraph{Data.}
Tiny Shakespeare; contiguous 90/10 train/validation split; character vocabulary from the corpus.

\paragraph{Task and optimizer.}
Next-character prediction with teacher forcing on random contiguous blocks.
We use sequence length $L=64$ and training micro-batch size $B=1$.
Each configuration is trained for $T=1000$ optimizer steps.

\paragraph{Model.}
Char-level causal Transformer LM with learned token embeddings and learned absolute positional
embeddings of length $L=64$ (maximum context length $64$), and no dropout.
Each Transformer block uses pre-LayerNorm, SDPA attention (PyTorch
\texttt{scaled\_dot\_product\_attention} with \texttt{is\_causal=True}, dropout probability $0$),
and a 2-layer MLP with GELU.
For each configuration we set:
(i) $d_{\mathrm{ff}} = 4\,d_{\mathrm{model}}$,
(ii) the head dimension fixed to $64$ by using $n_{\mathrm{heads}} = d_{\mathrm{model}}/64$ (when divisible).
To reduce activation memory, we enable per-block gradient checkpointing.
Weights are stored in bf16; LayerNorm parameters and computations are kept in fp32 for stability.
The model configurations are reported in Table~\ref{tab:param_sweep_hg_a10080}.

\paragraph{Optimization and regularization.}
We optimize the cross-entropy loss using AdamW with an 8-bit optimizer state
(\texttt{bitsandbytes} \texttt{AdamW8bit}) to reduce memory.
We use a constant learning rate $\eta = 3\times 10^{-4}$ for $T=1000$ steps,
weight decay $0.01$, and global-norm gradient clipping at $1.0$.
We use bf16 autocast for the forward pass.
Unless otherwise specified by the library defaults, AdamW uses $\beta_1=0.9$, $\beta_2=0.999$,
and $\epsilon=10^{-8}$.
We do not use dropout or additional regularizers in this sweep.

\paragraph{Initialization.}
All Linear and Embedding weights are initialized i.i.d.\ from $\mac{N}(0,\sigma_{\mathrm{init}}^2)$
with $\sigma_{\mathrm{init}}=0.02$; positional embeddings are initialized to zero.

\paragraph{Lock-in statistics.}
We track matrix-shaped parameters (dim$\ge 2$), using the tie-break convention $\mathrm{sign}(0)=+1$.
To keep memory bounded at large scale, for each tracked tensor we sample $2048$ scalar coordinates
uniformly at random (fixed seed) and compute the effective flip count
$K := K_{\mathrm{eff},T}(\rho)$ on these coordinates with $\rho=10^{-3}$ and $\epsilon=10^{-4}$.
We fit the zero-inflated geometric model and report the MLE
$\hat h=\mab{P}[K>0]$ and $\hat g = 1 - 1/\mab{E}[K\mid K>0]$ (with $\hat g=0$ when
$\mab{E}[K\mid K>0]\le 1$).

\paragraph{Evaluation.}
Figure~\ref{fig:param_scale_sweep_hg_a10080} shows that both estimated initial-hit factor $\hat h$ and  re-entry ratio $\hat g$ decrease
monotonically with model scale.
As the parameter count grows from tens of millions to $\sim$10B parameters,
the estimated initial-hit factor $\hat h$ drops by nearly an order of magnitude,
indicating that an increasingly small fraction of weights ever approach the
near-zero boundary.
At the same time, the re-entry ratio $\hat g$ also decreases and remains close to
zero for the largest models, suggesting that once a weight exits the boundary
neighborhood, repeated re-entries are exceedingly rare.
 These trends imply that sign lock-in strengthens with scale:
larger models exhibit fewer initial boundary hits and substantially lower
probability of repeated sign flips.
This observation is consistent with the width-scaling predictions of the
sign lock-in theory, and suggests that extreme-scale models
naturally operate deeper in the lock-in regime.

\newpage
\section{Proofs and Extensions of Sign Lock-In Theory}

\begin{figure}[t]
  \centering
  \includegraphics[width=\linewidth]{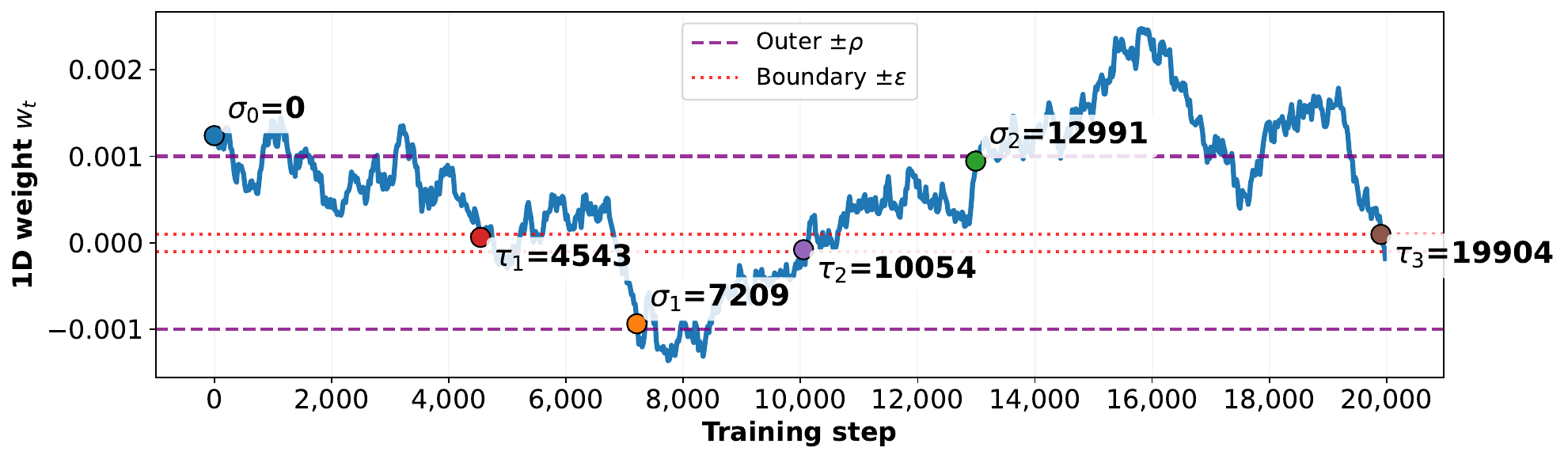}
  \caption{\textbf{Sign lock-in trajectory and stopping times.}
  We illustrate the one-dimensional weight trajectory (down-sampled)  $w_t$ together with the outer region and the sign-boundary neighborhood used in the sign lock-in theory.
  The outer region is defined as $\{|w_t|\ge \rho\}$ and the boundary neighborhood as $\{|w_t|\le \epsilon\}$.
  Starting from an outer-entry time $\sigma_0$ (outer region, positive sign), the trajectory hits the boundary neighborhood at $\tau_1$ and exits to the opposite outer side at $\sigma_1$ (negative sign),
  then returns to the boundary at $\tau_2$ and re-enters the original outer side at $\sigma_2$ (positive sign).
  This outer-to-outer sign evolution is counted by the effective flip count $K^{\mathrm{eff}}_T(\rho)$.}
  \label{fig:sign_lockin_trajectory}
\end{figure}

This appendix provides a self-contained theoretical analysis underlying the sign lock-in phenomenon
introduced in Section~\ref{sec:lockin}.
For completeness, we restate the stopping-time construction and clarify how each part of this appendix
connects to the main theoretical statements.

Specifically:
\begin{itemize}
    \item Appendix~\ref{app:proofs_lockin} provides a complete proof of
Theorem~\ref{thm:sign_lockin_T_init} based solely on
Assumptions~\ref{ass:bounded_update} and~\ref{ass:reentry_T}.
\item Appendix~\ref{app:sign_foundation} details initial hit factor and sign flips results under the bounded update condition.
\item Appendix~\ref{app:bn_reentry} derives a schedule-aware SGD-based sufficient condition for
Assumption~\ref{ass:reentry_T} related to re-entry ratio, we note that this is a representative proof of SGD family optimizers.
\item Appendix~\ref{app:cors} analyzes implications for learning-rate schedules, batch size, and model width.
\item Appendix~\ref{app:lockin_enhance} studies gap initialization and outer-drift regularization.
\item Appendix~\ref{app:floating_mode} discusses a contrasting sign floating regime in the extreme experimental setting.
\end{itemize}

\paragraph{Stopping times and regions (restated).}
We briefly restate the stopping-time construction used throughout this appendix,
so that the arguments below can be read independently of the main text.
Let $(w_t)_{t \ge 0}$ be a real-valued adapted process, and fix constants
$0 < \epsilon < \rho$ and a finite horizon $T$.

We define the \emph{outer region} by $|w_t| \ge \rho$ and the
\emph{boundary neighborhood} by $|w_t| \le \epsilon$.
Starting from the first time the trajectory enters the outer region, we define
a sequence of stopping times recursively as follows.
Let
\[
\sigma_0 := \inf \{ t \ge 0 : |w_t| \ge \rho \},
\]
and for $k \ge 1$,
\[
\tau_k := \inf \{ t > \sigma_{k-1} : |w_t| \le \epsilon \},
\qquad
\sigma_k := \inf \{ t > \tau_k : |w_t| \ge \rho \}.
\]
These stopping times decompose the trajectory into excursions between the
outer region and the boundary neighborhood and form the basis of all proofs
in Appendix~\ref{app:proofs_lockin}.
The image of the region and stopping time setting is illustrated in
Figure~\ref{fig:sign_lockin_trajectory}.
Throughout the sign lock-in theory and its proofs, we assume the following bounded-update condition.

\paragraph{Assumption~\ref{ass:bounded_update} (restated).}
    Fix $T\in\mab N$.
    There exist deterministic constants $\Delta>0$ and $\delta_{\mathrm{upd}}\in[0,1)$ such that, for every stopping time
    $\theta$ satisfying $\theta\le T-1$,
    \begin{equation}\label{ass:bounded_update_app}
        \mab{P}\left[\mac{E}_\Delta(\theta)| \mac{F}_\theta\right] \ge 1-\delta_{\mathrm{upd}}
        ~~\mathrm{a.s.},
    \end{equation}
    where the \emph{good event} $\mac E_\Delta(\theta)$ is
    \begin{equation*}
        \mac{E}_\Delta(\theta)
        \coloneqq
        \left\{\max_{\theta\le t\le T-1}|w_{t+1}-w_t|\le \Delta\right\}.
    \end{equation*}
    In particular, on $\mac{E}_\Delta \coloneqq \mac{E}_{\Delta}(0)$ we have $|w_{t+1}-w_t|\le \Delta$ for all $0 \le t \le T-1$, and the special case $\delta_{\mathrm{upd}}=0$ yields an almost-sure uniform increment bound.

\subsection{Proofs for Sign Lock-In Theory}
In the proof of Theorem~\ref{thm:sign_lockin_T_init}, we assume the following Re-entry bound condition.
We show in Appendix~\ref{app:bn_reentry} that this assumption is justified under standard SGD settings.
Moreover, although we do not provide an explicit proof, similar arguments apply to SGD-family optimizers such as Adam.
Therefore, this assumption is well justified under typical training settings.

\paragraph{Roadmap of the proof.}
To keep Appendix~D easy to follow, we first show in Lemma~\ref{lem:mustpass}
that an outer-to-outer sign change necessarily passes through the boundary neighborhood.
Next, Lemma~\ref{lem:flip_implies_tau} lifts this pathwise fact to an event-level statement:
frequent effective flips imply frequent boundary hits.
Finally, Theorem~\ref{thm:sign_lockin_T_init} combines Assumption~\ref{ass:reentry_T}
with this implication to obtain a geometric tail bound.
Overall, the logic is flip $\Rightarrow$ boundary hit $\Rightarrow$ geometric decay.

\paragraph{Assumption~\ref{ass:reentry_T} (restated).}
    There exists $g_T\in(0,1)$ such that for all $k\ge 0$,
    \begin{equation*}
        \mab{P}[\tau_{k+1}\le T |\mac{F}_{\sigma_k}] \le g_T
        ~~\mathrm{a.s.}~\mathrm{on}~\{\sigma_{k} \le T\}.
    \end{equation*}

\label{app:proofs_lockin}

\begin{lemma}[Outer-to-outer sign flip forces a boundary visit]\label{lem:mustpass}
On the good event $\mac E_\Delta$ from Assumption~\ref{ass:bounded_update}, on
$\{\sigma_{k-1}\le T,\ \sigma_k\le T\}$,
if $\sign(w_{\sigma_k})\neq \sign(w_{\sigma_{k-1}})$ then $\tau_k<\sigma_k$
and in particular $\tau_k\le T$.
\end{lemma}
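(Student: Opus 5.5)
The plan is a purely pathwise, discrete intermediate-value argument on the good event $\mathcal{E}_\Delta$. First, note that $\sigma_k\le T$ with $k\ge1$ forces $\tau_k<\sigma_k$ directly from Definition~\ref{def:stopping}, since $\sigma_k$ is defined as an infimum over $t>\tau_k$; if $\tau_k=\infty$ then $\sigma_k=\infty$. So $\tau_k<\sigma_k\le T$ and in particular $\tau_k\le T$ is immediate whenever $\sigma_k\le T$. The substantive content the lemma is meant to record is that the sign change genuinely happens \emph{through} the boundary band. Concretely, the flip must occur in a segment of the path that either lies inside $\mathsf{Bd}(\epsilon)$ or is preceded by a visit to it. I will prove this stronger form so that later steps (Lemma~\ref{lem:flip_implies_tau}) can use it.

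Fix $\omega\in\mathcal{E}_\Delta\cap\{\sigma_{k-1}\le T,\sigma_k\le T\}$ with $\operatorname{sign}(w_{\sigma_k})\neq\operatorname{sign}(w_{\sigma_{k-1}})$, say $w_{\sigma_{k-1}}\ge\rho$ and $w_{\sigma_k}\le-\rho$. I will look at the integer times $t\in[\sigma_{k-1},\sigma_k]$ and let $t^\ast$ be the last time with $w_t>0$. Then $w_{t^\ast}>0\ge w_{t^\ast+1}$, and because $|w_{t^\ast+1}-w_{t^\ast}|\le\Delta$ on $\mathcal{E}_\Delta$, we get $\min(|w_{t^\ast}|,|w_{t^\ast+1}|)\le\Delta/2\le\epsilon$, using $\epsilon=\max\{\epsilon_0,\Delta\}$ from Definition~\ref{def:regions}. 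Indeed $w_{t^\ast}+|w_{t^\ast+1}|\le\Delta$, so both are at most $\Delta\le\epsilon$. Hence some time $s\in(\sigma_{k-1},\sigma_k]$ has $|w_s|\le\epsilon$. Here $s>\sigma_{k-1}$ because $|w_{\sigma_{k-1}}|\ge\rho>\epsilon$, and $s<\sigma_k$ since $|w_{\sigma_k}|\ge\rho>\epsilon$. By the infimum definition $\tau_k\le s<\sigma_k\le T$.

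The main subtlety I expect is the ordering of stopping times. One must check that the boundary visit found at $s$ is not before $\sigma_{k-1}$, and that $\tau_k$, defined as the first boundary hit after $\sigma_{k-1}$, is indeed the relevant one. Both follow from $\rho>\epsilon$ and the recursion. The other point is to make sure the strict inequality $\epsilon<\rho$ (assumed via $\rho>\Delta$ together with $\epsilon_0<\rho$) is what excludes endpoint coincidences. The case $w_{\sigma_{k-1}}\le-\rho$ is symmetric. The tie-break at $0$ is irrelevant because $|w_{\sigma_j}|\ge\rho>0$.
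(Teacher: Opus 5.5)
Your proof is correct. Its second half is essentially the paper's discrete intermediate-value argument, with a small variation. The paper takes the first time $t^\star>\sigma_{k-1}$ with $w_{t^\star}\le 0$. It excludes $t^\star=\sigma_k$ separately, since that would need a jump of size at least $\rho>\Delta$, and then gets $|w_{t^\star}|\le\Delta\le\epsilon$. You instead take the last positive time in $[\sigma_{k-1},\sigma_k]$, and get strict interiority directly from $|w_s|\le\Delta\le\epsilon<\rho$, which is slightly tidier.

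The genuinely different ingredient is your opening remark, which the paper does not make. Because $\sigma_k$ is an infimum over $t>\tau_k$, and $\tau_k$ an infimum over $t>\sigma_{k-1}$, on $\{\sigma_k<\infty\}$ one automatically has $\sigma_{k-1}<\tau_k<\sigma_k$. So the conclusion as stated holds on $\{\sigma_k\le T\}$ without $\mathcal{E}_\Delta$, without the sign change, and without the hypothesis $\sigma_{k-1}\le T$.

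This observation buys something downstream. It gives $\{K^{\mathrm{eff}}_T(\rho)\ge k\}\subseteq\{\sigma_k\le T\}\subseteq\{\tau_k\le T\}$ outright. Hence the additive $\delta_{\mathrm{upd}}$ in Lemma~\ref{lem:flip_implies_tau}, and in the tail bound of Theorem~\ref{thm:sign_lockin_T_init}, is not actually needed. It also means your stated motivation for the ``stronger form'' is a bit off: Lemma~\ref{lem:flip_implies_tau} only uses $\tau_j\le T$.

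What the bounded-update argument (yours and the paper's) really contributes is a pathwise fact: the sign can only change inside $\mathsf{Bd}(\epsilon)$. On $[\sigma_{k-1},\tau_k)$ one has $|w_t|>\epsilon\ge\Delta$, so no zero crossing is possible there. This is what justifies reading $K^{\mathrm{eff}}_T(\rho)$ as counting every outer-to-outer sign change, rather than missing one-step jumps across the band.
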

\begin{proof}
Work on the good event $\mac E_\Delta$ (Assumption~\ref{ass:bounded_update}).
Assume $\sign(w_{\sigma_k})\neq \sign(w_{\sigma_{k-1}})$ on $\{\sigma_{k-1}\le T,\ \sigma_k\le T\}$.
Then $w_{\sigma_{k-1}}$ and $w_{\sigma_k}$ have opposite signs with
$|w_{\sigma_{k-1}}|\ge \rho$ and $|w_{\sigma_k}|\ge \rho$.
Without loss of generality, take $w_{\sigma_{k-1}}\ge \rho>0$ and $w_{\sigma_k}\le -\rho<0$.
Let
\begin{equation*}
t^\star := \inf\{t>\sigma_{k-1}: w_t\le 0\}.
\end{equation*}
Since $w_{\sigma_{k-1}}>0$ and $w_{\sigma_k}<0$, we have $t^\star\le \sigma_k\le T$.
That is, $t^\star$ is the first post-$\sigma_{k-1}$ contact with the sign boundary.

Moreover, $t^\star<\sigma_k$: if $t^\star=\sigma_k$, then $w_{(\sigma_k-1)}>0$ and $w_{\sigma_k}\le -\rho$,
so $|w_{\sigma_k}-w_{(\sigma_k-1)}|\ge \rho$.
But on $\mac E_\Delta$ we have $|w_{\sigma_k}-w_{(\sigma_k-1)}|\le \Delta<\rho$
(as $\epsilon=\max\{\epsilon_0,\Delta\}<\rho$), a contradiction.

By definition, $w_{(t^\star-1)}>0$ and $w_{t^\star}\le 0$, hence
\begin{equation*}
|w_{t^\star}|=-w_{t^\star}\le w_{(t^\star-1)}-w_{t^\star}=|w_{t^\star}-w_{(t^\star-1)}|
\le \Delta \le \epsilon
~~\text{(on $\mac E_\Delta$)}.
\end{equation*}
Therefore $|w_{t^\star}|\le \epsilon$, and by the definition of $\tau_k$ we get
$\tau_k\le t^\star<\sigma_k$. In particular, $\tau_k\le T$.
\end{proof}

\begin{lemma}[Many effective flips imply many boundary hits]\label{lem:flip_implies_tau}
    For all $k\ge 1$,
    \begin{equation*}
    \{K^{\mathrm{eff}}_T(\rho)\ge k\}
    \subseteq
    \{\tau_k\le T\}\ \cup\ \mac E_\Delta^{\,c}.
    \end{equation*}
    Consequently,
    \begin{equation*}\label{eq:flip_tau_with_failure}
    \mab{P}[K^{\mathrm{eff}}_T(\rho)\ge k]
    \le
    \mab{P}[\tau_k\le T]+\delta_{\mathrm{upd}}.
    \end{equation*}
\end{lemma}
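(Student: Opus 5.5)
The plan is to prove the set inclusion pathwise and then take probabilities, using Lemma~\ref{lem:mustpass} as the only substantive ingredient. Fix $k\ge1$ and an outcome $\omega\in\{K^{\mathrm{eff}}_T(\rho)\ge k\}\cap\mac E_\Delta$; it suffices to show $\tau_k(\omega)\le T$. The first step is to record the monotonicity of the stopping times: by Definition~\ref{def:stopping}, $\sigma_0<\tau_1<\sigma_1<\tau_2<\cdots$ whenever these are finite. Indeed, each time is defined as an infimum over $t$ strictly greater than the previous one, and $\inf\emptyset=\infty$ propagates, so once some $\sigma_j=\infty$ all later ones are $\infty$. Consequently the set of indices $j\ge1$ with $\sigma_j\le T$ is an initial segment $\{1,\dots,J\}$.

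Since $K^{\mathrm{eff}}_T(\rho)$ counts indices $j\ge1$ with $\sigma_j\le T$ and a sign change between $w_{\sigma_{j-1}}$ and $w_{\sigma_j}$, having $K^{\mathrm{eff}}_T(\rho)\ge k$ yields at least $k$ such indices, all lying in $\{1,\dots,J\}$. Let $j^\ast$ be the $k$-th smallest of them. Then $j^\ast\ge k$, $\sigma_{j^\ast}\le T$, and also $\sigma_{j^\ast-1}\le\sigma_{j^\ast}\le T$. Applying Lemma~\ref{lem:mustpass} at index $j^\ast$ on $\mac E_\Delta$ gives $\tau_{j^\ast}<\sigma_{j^\ast}\le T$. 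Actually monotonicity alone already suffices here: $\tau_k\le\tau_{j^\ast}$ holds when $k\le j^\ast$, and also $\tau_k<\sigma_k\le\sigma_{j^\ast}\le T$ directly, because $\sigma_k$ finite forces $\tau_k<\sigma_k$. I will present the argument via Lemma~\ref{lem:mustpass} for the index where a flip occurs, then use the chain $\tau_k\le\tau_{j^\ast}$, to match the paper's roadmap (flip $\Rightarrow$ boundary hit). This establishes $\{K^{\mathrm{eff}}_T(\rho)\ge k\}\cap\mac E_\Delta\subseteq\{\tau_k\le T\}$, which is equivalent to the stated inclusion.

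For the probability bound, I will take probabilities and apply the union bound, $\mab P[K^{\mathrm{eff}}_T(\rho)\ge k]\le\mab P[\tau_k\le T]+\mab P[\mac E_\Delta^c]$. Then I will invoke Assumption~\ref{ass:bounded_update} with the deterministic stopping time $\theta=0$, which is valid as $T\ge1$. Taking expectations of the conditional bound gives $\mab P[\mac E_\Delta]\ge1-\delta_{\mathrm{upd}}$, hence $\mab P[\mac E_\Delta^c]\le\delta_{\mathrm{upd}}$.

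The only point needing care is the bookkeeping of the index: the $k$-th flip may occur at an index larger than $k$, so the inclusion must go through monotonicity of $(\tau_j)$ rather than a direct identification. Measurability of $K^{\mathrm{eff}}_T(\rho)$ is routine, since it is a countable sum of indicators of events built from stopping times, and the sum is effectively finite because $\sigma_j\ge j$.
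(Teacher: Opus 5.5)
Your proposal is correct and follows the paper's proof essentially step for step. Both take the $k$-th flip index $j_k\ge k$, apply Lemma~\ref{lem:mustpass} there on $\mathcal{E}_\Delta$, pass to $\tau_k\le\tau_{j_k}\le T$ by monotonicity, and finish with a union bound using $\mathbb{P}(\mathcal{E}_\Delta^{c})\le\delta_{\mathrm{upd}}$ from Assumption~\ref{ass:bounded_update} at $\theta=0$.

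Your side remark is also right and worth stating explicitly. Since $\sigma_k<\infty$ forces $\tau_k<\sigma_k$ by Definition~\ref{def:stopping}, one has $\{K^{\mathrm{eff}}_T(\rho)\ge k\}\subseteq\{\sigma_k\le T\}\subseteq\{\tau_k\le T\}$ pathwise. So neither $\mathcal{E}_\Delta$ nor Lemma~\ref{lem:mustpass} is logically needed here, and the $\delta_{\mathrm{upd}}$ term could be dropped from this bound.
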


\begin{proof}
On the good event $\mac E_\Delta$, Lemma~\ref{lem:mustpass} holds.
If $K^{\mathrm{eff}}_T(\rho)\ge k$, then there exist $k$ indices $1\le j_1<\cdots< j_k$ such that
$\sigma_{j_m}\le T$ and $\sign(w_{\sigma_{j_m}})\neq \sign(w_{\sigma_{j_m-1}})$ for each $m$.
In other words, we can explicitly list $k$ distinct outer-to-outer sign-change episodes before time $T$.
Fix such an index $j=j_m$.
Since $(\sigma_l)$ is nondecreasing, we have $\sigma_{j-1}\le \sigma_j\le T$,
so on $\mac E_\Delta$ Lemma~\ref{lem:mustpass} applies and yields $\tau_j\le T$.
Because $(\tau_l)$ is nondecreasing and $j_k\ge k$, we have $\tau_k\le \tau_{j_k}\le T$.

Thus we have shown $\{K^{\mathrm{eff}}_T(\rho)\ge k\}\cap \mac E_\Delta \subseteq \{\tau_k\le T\}$,
equivalently $\{K^{\mathrm{eff}}_T(\rho)\ge k\}\subseteq \{\tau_k\le T\}\cup \mac E_\Delta^{\,c}$.
Taking probabilities and using $\mab P(\mac E_\Delta^{\,c})\le \delta_{\mathrm{upd}}$
gives Lemma~\ref{lem:flip_implies_tau}.
\end{proof}

\begin{minipage}{0.96\linewidth}
\vspace{0.5em}
\textbf{Theorem~\ref{thm:sign_lockin_T_init} (restated).}
    Under  Assumptions~\ref{ass:bounded_update}--\ref{ass:reentry_T}.
    Define the effective outer-to-outer flip count up to time $T$ by
    \begin{equation}
    \label{eq:keff_def_in_thm}
    K^{\mathrm{eff}}_T(\rho)
    \coloneqq \sum_{k\ge 1}\mathbf{1}\Bigl[\sigma_k\le T,\ \sign(w_{\sigma_k}) \neq \sign(w_{\sigma_{k-1}})\Bigr],
    \end{equation}
    which is well-defined since $|w_{\sigma_k}|\ge \rho>0$ on $\{\sigma_k\le T\}$.
    Let $h_T:=\mab{P}[\tau_1\le T]\in[0,1]$.
    Then for all integers $k\ge 1$,
    \begin{equation}\label{eq:tau_k_bound_qT}
        \mab{P}[\tau_k\le T]\le h_T\,g_T^{k-1},
        \end{equation}
        and consequently,
        \begin{equation}
        \label{eq:Keff_bound_qT}
        \mab{P}\ab[K^{\mathrm{eff}}_T(\rho)\ge k]
        \le h_T g_T^{k-1}+\delta_{\mathrm{upd}}.
    \end{equation}

\vspace{0.5em}
\end{minipage}

\begin{proof}
We prove Eq.~\eqref{eq:tau_k_bound_qT} by induction.
For $k=1$, Eq.~\eqref{eq:tau_k_bound_qT} reduces to $\mab{P}[\tau_1\le T]=h_T$.

Assume $\mab{P}[\tau_k\le T]\le h_T\,g_T^{k-1}$ for some $k\ge 1$.
Then, by the tower property and Assumption~\ref{ass:reentry_T}, we first note that $\tau_{k+1}=\inf\{t>\sigma_k:|w_t|\le\epsilon\}$ implies $\{\tau_{k+1}\le T\}\subseteq\{\sigma_k\le T\}$. Therefore
\begin{align*}
\mab{P}[\tau_{k+1}\le T]
&= \mab{E}\!\left[\mab{P}[\tau_{k+1}\le T\mid \mac{F}_{\sigma_k}]\,\mathbf{1}_{\{\sigma_k\le T\}}\right] \\
&\le \mab{E}\!\left[g_T\,\mathbf{1}_{\{\sigma_k\le T\}}\right]
= g_T\,\mab{P}[\sigma_k\le T]
\le g_T\,\mab{P}[\tau_k\le T] \\
&\le g_T\cdot h_T\,g_T^{k-1}
= h_T\,g_T^{k}.
\end{align*}
Here we used $\mab{P}[\sigma_k\le T]\le \mab{P}[\tau_k\le T]$, which follows
from the stopping-time order $\tau_k<\sigma_k$ whenever $\sigma_k<\infty$.

The bound for $K^{\mathrm{eff}}_T(\rho)$ follows from Lemma~\ref{lem:flip_implies_tau},
which introduces the additional failure probability term $\delta_{\mathrm{upd}}$ under Assumption~\ref{ass:bounded_update}.
\end{proof}

\subsection{Initial Hit Factor and Deterministic Foundations of Sign Flips}
\label{app:sign_foundation}
This subsection isolates a deterministic prerequisite for sign changes that follows solely from the bounded-update event in Assumption~\ref{ass:bounded_update}.
In particular, Proposition~\ref{prop:init_flip_rate_1d} shows that any deviation of the final sign from the initialization sign forces the trajectory to enter the $\Delta$-band at least once, so sign drift can be upper bounded by a boundary-hit probability.
We then lift this implication to the vector setting via Proposition~\ref{cor:init_flip_rate_app}, which bounds the initialization-to-final mismatch rate $p_T^{\mathrm{init}}$ by the average of per-coordinate $\Delta$-band hit probabilities, up to the failure term $\delta_{\mathrm{upd}}$ from Assumption~\ref{ass:bounded_update}.
Finally, Proposition~\ref{prop:h_init_general} turns this boundary-hit viewpoint into an initialization-dominated control of the initial-hit factor $h_T$ using the boundary radius in Definition~\ref{def:regions}.
These deterministic foundations complement the excursion and re-entry analysis that controls repeated effective flips, and they will be used later when we design interventions that suppress boundary visits.

\begin{minipage}{0.96\linewidth}
\vspace{0.5em}
\textbf{Proposition~\ref{prop:init_flip_rate_1d} (restated).}
\vspace{0.4em}

Assume the bounded-update condition.
Let $(w_t)_{t=0}^T$ be a real-valued process and define $s_t := \mathrm{sign}(w_t)$
with a fixed tie-breaking rule at zero.
Define the first hitting time of the $\Delta$-band by
\[
\tau_\Delta := \inf \{ t \in \{0,1,\dots,T\} : |w_t| \le \Delta \},
\]
with the convention $\inf \varnothing = \infty$.
Then the following deterministic implication holds:
\[
\mathbf{1}\{ s_T \neq s_0 \} \;\le\; \mathbf{1}\{ \tau_\Delta \le T \}.
\]
Consequently,
\[
\mathbb{P}( s_T \neq s_0 ) \;\le\; \mathbb{P}( \tau_\Delta \le T ) + \delta_{\mathrm{upd}}.
\]

\vspace{0.5em}
\end{minipage}

\begin{proof}
Work on the event $\mac{E}_\Delta$ and assume that $s_T\neq s_0$.
Define the first instance where the sign differs from initialization:
\begin{equation*}
    t^\star \coloneqq \min\{t\in\{1,2,\dots,T\}: s_t\neq s_0\}.
\end{equation*}
Then $s_{t^\star-1}=s_0$ and $s_{t^\star}\neq s_0$, hence $s_{t^\star}\neq s_{t^\star-1}$.
With a fixed tie-break at $0$, $s_{t^\star}\neq s_{t^\star-1}$ implies
$w_{t^\star}w_{t^\star-1}\le 0$ (opposite signs or one is $0$). Therefore,
\begin{equation*}
    |w_{t^\star}-w_{t^\star-1}| = |w_{t^\star}|+|w_{t^\star-1}|.
\end{equation*}
On $\mac{E}_\Delta$, we also have $|w_{t^\star}-w_{t^\star-1}|\le \Delta$, hence
\begin{equation*}
    |w_{t^\star}|+|w_{t^\star-1}|\le \Delta,
\end{equation*}
which implies $\min\{|w_{t^\star}|,|w_{t^\star-1}|\}\le \Delta$.
Thus there exists $t\in\{t^\star-1,t^\star\}$ such that $|w_t|\le \Delta$, so $\tau_\Delta\le t^\star\le T$.
This proves $\mathbf{1}\{s_T\neq s_0\}\le \mathbf{1}\{\tau_\Delta\le T\}$ on $\mac{E}_\Delta$. Finally,
\begin{equation*}
    \mab{P}[s_T\neq s_0]
    \le
    \mab{P}[\tau_\Delta\le T]+\mab{P}[\mac{E}_\Delta^c]
    \le
    \mab{P}[\tau_\Delta\le T]+\delta_{\mathrm{upd}},
\end{equation*}
where the last inequality employs Assumption~\ref{ass:bounded_update} at $\theta=0$.
\end{proof}

\begin{proposition}[Initialization-to-final sign drift bound]
\label{cor:init_flip_rate_app}
Assume Assumption~\ref{ass:bounded_update}. Let $w_t\in\mab{R}^N$ be the parameter vector and define
$s_t(i)\coloneqq\sign(w_t(i))\in\{\pm1\}$ entrywise with a fixed tie-break at $0$.
Let
\begin{equation*}
    \label{eq:pT_init_def_app}
    p_T^{\mathrm{init}}
    \coloneqq\frac{1}{N}\sum_{i=1}^N \mathbf{1}\left[s_T(i)\neq s_0(i)\right]\in[0,1].
\end{equation*}
Define the per-coordinate first hit time of the $\Delta$-band by
\begin{equation*}
    \label{eq:tau1i_app}
    \tau_{\Delta,i}\coloneqq\inf\{\,t\in\{0,1,\dots,T\}:\ |w_t(i)|\le \Delta\,\}\in\{0,1,\dots,T\}\cup\{\infty\},
\end{equation*}
with $\inf\emptyset=\infty$. Then:
\begin{enumerate}
    \item[(A)] On the event $\mac{E}_\Delta\coloneqq\{\max_{0\le t\le T-1}\|w_{t+1}-w_t\|_\infty\le \Delta\}$,
    \begin{equation*}
        \label{eq:pointwise_domination_app}
        \mathbf{1}\left[s_T(i)\neq s_0(i)\right]\le \mathbf{1}\left[\tau_{\Delta,i}\le T\right]
        ~~\text{for all }i\in[N].
    \end{equation*}
    Consequently,
    \begin{equation*}\label{eq:p_dom_app}
    p_T^{\mathrm{init}}
    \le
    \frac{1}{N}\sum_{i=1}^N \mathbf{1}\left[\tau_{\Delta,i}\le T\right]
    ~~\text{on }\mac{E}_\Delta.
    \end{equation*}
    \item[\textnormal{(B)}] Let $\bar h_T\coloneqq\frac{1}{N}\sum_{i=1}^N\mab{P}[\tau_{\Delta,i}\le T]$. Then
    \begin{equation*}\label{eq:EpT_bound_app}
    \mab{E}[p_T^{\mathrm{init}}]\le \bar h_T+\delta_{\mathrm{upd}}.
    \end{equation*}
\end{enumerate}
\end{proposition}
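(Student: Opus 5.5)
The plan is to reduce both parts to the one-dimensional argument of Proposition~\ref{prop:init_flip_rate_1d}, applied coordinatewise. For part (A), fix a realization in $\mac{E}_\Delta=\{\max_{0\le t\le T-1}\|w_{t+1}-w_t\|_\infty\le\Delta\}$ and a coordinate $i\in[N]$. Since $\|\cdot\|_\infty$ dominates each coordinate, the scalar path $(w_t(i))_{t=0}^T$ satisfies $|w_{t+1}(i)-w_t(i)|\le\Delta$ for all $t$. I will then repeat the deterministic part of the proof of Proposition~\ref{prop:init_flip_rate_1d} verbatim for this path, without invoking any probability. If $s_T(i)\neq s_0(i)$, let $t^\star$ be the first time with $s_{t^\star}(i)\neq s_0(i)$. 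The fixed tie-break forces $w_{t^\star}(i)\,w_{t^\star-1}(i)\le 0$, hence $|w_{t^\star}(i)|+|w_{t^\star-1}(i)|=|w_{t^\star}(i)-w_{t^\star-1}(i)|\le\Delta$. So one of the two times lies in the $\Delta$-band and $\tau_{\Delta,i}\le T$.

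The subtle point is the tie-break. It must satisfy $\sign(0)=+1$ (or a fixed value), so that a sign change between consecutive steps forces opposite closed half-lines. With this, the pointwise domination holds for every $i$. Averaging over $i$ then gives the bound on $p_T^{\mathrm{init}}$ on $\mac{E}_\Delta$.

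For part (B), I will split on $\mac{E}_\Delta$. Since $p_T^{\mathrm{init}}\in[0,1]$,
\[
p_T^{\mathrm{init}}\le p_T^{\mathrm{init}}\mathbf{1}_{\mac{E}_\Delta}+\mathbf{1}_{\mac{E}_\Delta^c}\le \frac1N\sum_{i=1}^N\mathbf{1}[\tau_{\Delta,i}\le T]+\mathbf{1}_{\mac{E}_\Delta^c}.
\]
Taking expectations, using linearity, and using $\mab{P}[\mac{E}_\Delta^c]\le\delta_{\mathrm{upd}}$ yields $\mab{E}[p_T^{\mathrm{init}}]\le\bar h_T+\delta_{\mathrm{upd}}$.

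The only real obstacle is justifying $\mab{P}[\mac{E}_\Delta^c]\le\delta_{\mathrm{upd}}$ for the vector event. Assumption~\ref{ass:bounded_update} is stated for a scalar process. I will read it, as the proposition implicitly does, as applying to the vector iterate with $|\cdot|$ replaced by $\|\cdot\|_\infty$, and then use its instance at $\theta=0$ after taking expectation of the conditional bound. I will state this reading explicitly. Without it, a per-coordinate version would only yield $N\delta_{\mathrm{upd}}$ through a union bound, or would require conditioning coordinatewise and summing the per-coordinate failure probabilities, which gives the same averaged bound $\frac1N\sum_i(\mab{P}[\tau_{\Delta,i}\le T]+\delta_{\mathrm{upd}})$.
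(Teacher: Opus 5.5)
Your proposal is correct and follows the paper's proof essentially step for step. Part (A) repeats the first-sign-change argument of Proposition~\ref{prop:init_flip_rate_1d} coordinatewise on $\mathcal{E}_\Delta$, and part (B) takes expectations after splitting on $\mathcal{E}_\Delta$ and using $\mathbb{P}[\mathcal{E}_\Delta^c]\le\delta_{\mathrm{upd}}$; your explicit bound $p_T^{\mathrm{init}}\le p_T^{\mathrm{init}}\mathbf{1}_{\mathcal{E}_\Delta}+\mathbf{1}_{\mathcal{E}_\Delta^c}$, and your note that Assumption~\ref{ass:bounded_update} must be read for the $\|\cdot\|_\infty$ vector event (or else that averaging the per-coordinate bounds of Proposition~\ref{prop:init_flip_rate_1d} still gives (B)), make precise a step the paper uses without comment.
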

\begin{proof}
(A)
Fix an index $i\in[N]$ and work on the event $\mac E_\Delta$.
Assume that $s_T(i)\neq s_0(i)$.
Define the first time at which the $i$-th coordinate changes its sign relative
to initialization:
\[
t^\star:=\min\{t\in\{1,2,\dots,T\}:\ s_t(i)\neq s_0(i)\}.
\]
Then $s_{t^\star-1}(i)=s_0(i)$ and $s_{t^\star}(i)\neq s_0(i)$, so
$s_{t^\star}(i)\neq s_{t^\star-1}(i)$.
With the fixed tie-break at $0$, this implies
$w_{t^\star}(i)\,w_{t^\star-1}(i)\le 0$, and hence
\[
|w_{t^\star}(i)-w_{t^\star-1}(i)|
=|w_{t^\star}(i)|+|w_{t^\star-1}(i)|.
\]
On the event $\mac E_\Delta$, we have
$|w_{t^\star}(i)-w_{t^\star-1}(i)|\le\Delta$, which implies
\[
|w_{t^\star}(i)|+|w_{t^\star-1}(i)|\le\Delta.
\]
Therefore $\min\{|w_{t^\star}(i)|,\ |w_{t^\star-1}(i)|\}\le\Delta$, and hence
there exists $t\in\{t^\star-1,t^\star\}$ such that $|w_t(i)|\le\Delta$.
By definition of $\tau_{\Delta,i}$, this yields $\tau_{\Delta,i}\le t^\star\le T$.
Thus,
\[
\mathbf{1}\{s_T(i)\neq s_0(i)\}\le \mathbf{1}\{\tau_{\Delta,i}\le T\}
\qquad\text{on }E_\Delta.
\]
Averaging over $i$ gives the stated bound for $p_T^{\mathrm{init}}$ on $\mac E_\Delta$.

\medskip
(B)
By part~(A),
\[
p_T^{\mathrm{init}}
\le \frac{1}{N}\sum_{i=1}^N \mathbf{1}\{\tau_{\Delta,i}\le T\}
\qquad\text{on }E_\Delta.
\]
Taking expectations and using $\mathbb{P}(E_\Delta^c)\le\delta_{\mathrm{upd}}$
from Assumption~\ref{ass:bounded_update}, we obtain
\[
\mathbb{E}[p_T^{\mathrm{init}}]
\le \frac{1}{N}\sum_{i=1}^N \mathbb{P}[\tau_{\Delta,i}\le T]
+ \mathbb{P}(E_\Delta^c)
\le \bar h_T+\delta_{\mathrm{upd}}.
\]
This completes the proof.
\end{proof}

\begin{proposition}[Initialization-dominated bound for the initial-hit factor]
\label{prop:h_init_general}
Assume Assumption~\ref{ass:bounded_update}.
Let $\epsilon$ be the boundary radius in Definition~\ref{def:regions} and define
\begin{equation*}
b_T := \epsilon + T\Delta .
\end{equation*}
Then the initial-hit factor $h_T := \mab{P}[\tau_1 \le T]$ satisfies
\begin{equation}
\label{eq:hT_init_bound}
h_T \le \mab{P}\bigl[|w_0| \le b_T\bigr] + \delta_{\mathrm{upd}} .
\end{equation}
\end{proposition}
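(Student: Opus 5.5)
The plan is a pathwise argument on the good event $\mathcal{E}_\Delta$ from Assumption~\ref{ass:bounded_update}, followed by a union bound with its complement. Concretely, I will show the inclusion
\[
\{\tau_1\le T\}\cap\mathcal{E}_\Delta \subseteq \{|w_0|\le b_T\}.
\]
Taking probabilities and using $\mathbb{P}[\mathcal{E}_\Delta^c]\le\delta_{\mathrm{upd}}$ (Assumption~\ref{ass:bounded_update} at $\theta=0$) then gives $h_T\le\mathbb{P}[|w_0|\le b_T]+\delta_{\mathrm{upd}}$, exactly as in the last step of the proof of Proposition~\ref{prop:init_flip_rate_1d}.

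For the inclusion, fix an outcome in $\{\tau_1\le T\}\cap\mathcal{E}_\Delta$. By Definition~\ref{def:stopping}, $\tau_1<\infty$ forces $\sigma_0<\tau_1\le T$ and $|w_{\tau_1}|\le\epsilon$. On $\mathcal{E}_\Delta$ every increment satisfies $|w_{t+1}-w_t|\le\Delta$ for $0\le t\le T-1$. Telescoping from time $0$ to $\tau_1$ and applying the triangle inequality gives
\[
|w_0|\le|w_{\tau_1}|+\sum_{t=0}^{\tau_1-1}|w_{t+1}-w_t|\le \epsilon+\tau_1\Delta\le\epsilon+T\Delta=b_T.
\]
This is the reverse triangle inequality $\bigl||w_0|-|w_{\tau_1}|\bigr|\le|w_0-w_{\tau_1}|$ combined with the $\Delta$-Lipschitz-in-time bound. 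It establishes the inclusion.

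The only point needing care is bookkeeping rather than a real obstacle. First, $\tau_1$ ranges over $\{1,\dots,T\}$, so the number of increments is at most $T$. Second, the bounded-update event is used only on the deterministic window $[0,T-1]$, so no conditioning on $\mathcal{F}_{\sigma_0}$ is needed. I would also remark that the bound is loose because it ignores the passage through $\sigma_0$. This means the constant $T\Delta$ could be replaced by $\tau_1\Delta$ pathwise, but the stated $b_T$ is what is needed.
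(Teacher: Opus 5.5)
Your proposal is correct and follows essentially the same route as the paper's proof. You both work pathwise on the good event $\mathcal{E}_\Delta$, bound $|w_0|\le |w_{\tau_1}|+\sum_{t<\tau_1}|w_{t+1}-w_t|\le \epsilon+T\Delta$ by the triangle inequality, and conclude by a union bound with $\mathbb{P}[\mathcal{E}_\Delta^c]\le\delta_{\mathrm{upd}}$.
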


\begin{proof}
Let $\mac E_\Delta$ be the bounded-update good event from
Assumption~\ref{ass:bounded_update}.
On $\mac E_\Delta$, if $\tau_1 \le T$, then there exists $t \le T$
such that $|w_t| \le \epsilon$.
By the triangle inequality and bounded updates,
\begin{equation*}
|w_0|
\le |w_t| + \sum_{s=0}^{t-1}|w_{s+1}-w_s|
\le \epsilon + T\Delta
= b_T .
\end{equation*}
Hence $\{\tau_1 \le T\} \cap \mac E_\Delta \subseteq \{|w_0| \le b_T\}$,
which implies Eq.~\eqref{eq:hT_init_bound}.
\end{proof}

\subsection{An SGD-based Sufficient Condition for Assumption~\ref{ass:reentry_T} That Controls Re-entry Ratio}
\label{app:bn_reentry}
This appendix provides a typical \textbf{sufficient condition} for
Assumption~\ref{ass:reentry_T} by combining:
(i) bounded increments (Assumption~\ref{ass:bounded_update}),
(ii) an \emph{expected} descent-lemma argument for scheduled SGD that yields
cumulative inward drift control in expectation,
and (iii) a variance-aware martingale concentration argument (Freedman's inequality).
\textbf{Importantly, the proof below does not rely on any SGD-specific property beyond bounded increments and a descent-type energy budget, and thus serves as a template argument for a broad class of SGD-family optimizers as discussed in Remark~\ref{rem:momentum_adam_interface}.}

\paragraph{Proof sketch (roadmap for Proposition~\ref{prop:appF_ass2_from_sgd}).}
Proposition~\ref{prop:appF_ass2_from_sgd} provides a schedule-dependent sufficient condition for
the re-entry control in Assumption~\ref{ass:reentry_T} under scheduled SGD (Assumption~\ref{ass:appF_descent})
together with bounded increments (Assumption~\ref{ass:bounded_update}).
\textbf{The crucial point is that, under standard training settings, even if a local inward drift toward the sign boundary exists, it is budget-limited by the finite gradient-energy available for task optimization via telescoping expected descent over a bounded loss range.}
One can violate this condition by introducing an artificial and sufficiently strong inward drift, in which case the training dynamics undergo a qualitative transition into the sign floating mode~\ref{app:floating_mode}.
The logical dependencies among the auxiliary lemmas are:
\[
\text{Lemma~\ref{lem:appF_expected_descent_step}}
\;\Rightarrow\;
\text{Lemma~\ref{lem:appF_l2_grad}}
\;\Rightarrow\;
\bigl(\text{Lemma~\ref{lem:appF_cum_drift}}\ \&\ \text{Lemma~\ref{lem:H4a}}\bigr)
\;\Rightarrow\;
\text{Proposition~\ref{prop:appF_ass2_from_sgd}}.
\]

\begin{itemize}
\item \textbf{Expected descent $\Rightarrow$ gradient-energy budget.}
Lemma~\ref{lem:appF_expected_descent_step} establishes the one-step expected descent inequality
(Eq.~\eqref{eq:appF_expected_descent_step}).
Summing and telescoping this inequality over a (possibly random) time window yields
Lemma~\ref{lem:appF_l2_grad}, which upper bounds the conditional gradient energy
$\sum_t \eta_t\|\nabla \mac L(v_t)\|^2$ by schedule-dependent cumulative quantities and the loss range.

\item \textbf{Gradient-energy budget $\Rightarrow$ drift budget.}
Lemma~\ref{lem:appF_cum_drift} converts the gradient-energy control in Lemma~\ref{lem:appF_l2_grad}
into a bound on the cumulative \emph{inward} predictable drift of the oriented coordinate process.
In parallel, Lemma~\ref{lem:H4a} provides a closely related (and proof-wise parallel) bound for the
drift-budget proxy $\tilde U$, which is tailored for a subsequent Markov-inequality step.

\item \textbf{Drift-good reduction on an excursion.}
In the proof of Proposition~\ref{prop:appF_ass2_from_sgd}, we condition on $\mac F_{\sigma_k}$
(Definition~\ref{def:stopping}) and orient the excursion via $z_t=s_k w_t$.
We then decompose the excursion dynamics into predictable drift $d_t$ and martingale noise.
We introduce a drift-good event $\mathsf{G}_{\mathrm{drift}}$ ensuring that the total inward drift is at most
$B_T^{\mathrm{SGD}}$ (defined in Eq.~\eqref{eq:appF_BT_sgd});
Lemma~\ref{lem:H4a} combined with Markov's inequality yields
$\mab P(\mathsf{G}_{\mathrm{drift}}^c\mid \mac F_{\sigma_k})\le \delta$.

\item \textbf{Noise control via Freedman + union bound.}
On $\mathsf{G}_{\mathrm{drift}}$, reaching the boundary $\{\tau_{k+1}\le T\}$ can only happen if the martingale noise attains
a negative excursion of size roughly $(\rho-\epsilon)-B_T^{\mathrm{SGD}}$.
We enforce bounded increments through Assumption~\ref{ass:bounded_update} by stopping at the first update-violation time,
and we control the predictable quadratic variation using Eq.~\eqref{eq:appF_varXt_fixed}.
Freedman's inequality then yields the exponential term appearing in $g_T^{\mathrm{SGD}}$
(Eq.~\eqref{eq:appF_BT_sgd}).
Finally, a union bound over (i) update-violation, (ii) drift-bad, and (iii) martingale-deviation events completes the bound
in Eq.~\eqref{eq:appF_gT_descent}.
\end{itemize}

\noindent
Finally, Remark~\ref{rem:momentum_adam_interface} explains how the two inputs---bounded updates
(Assumption~\ref{ass:bounded_update}) and re-entry control (Assumption~\ref{ass:reentry_T})---interface with
momentum/Adam-type recursions via Eqs.~\eqref{eq:momentum_interface}--\eqref{eq:momentum_interface_bounded_p}.

\paragraph{Setup.}
Recall the coordinate process $w_t:=e_i^\top v_t$ and the stopping times
$\sigma_k$ (outer-entry) and $\tau_{k+1}$ (boundary-hit) from
Definition~\ref{def:stopping}.
Fix $k\ge 0$ and condition on $\mac F_{\sigma_k}$.
Let $s_k:=\sign(w_{\sigma_k})\in\{\pm1\}$ (well-defined on $\{\sigma_k\le T\}$ since $|w_{\sigma_k}|\ge \rho$),
and define the oriented coordinate
\begin{equation*}
z_t:= s_k\, w_t ~~ (t\ge \sigma_k).
\end{equation*}
On the excursion interval $\{\sigma_k\le t<\tau_{k+1}\}$ we have $z_t\ge \epsilon$ and $z_{\sigma_k}\ge \rho$.
Moreover, for the favorable event $\mac E_\Delta(\sigma_k)$ from Assumption~\ref{ass:bounded_update},
the increments satisfy
\begin{equation*}\label{eq:appF_bounded_increments}
|z_{t+1}-z_t| = |w_{t+1}-w_t|\le \Delta
~~\forall t\in\{\sigma_k,\ldots,T-1\}.
\end{equation*}
\emph{Remark.} Assumption~\ref{ass:bounded_update} can be enforced, for example, by deterministic gradient clipping:
if $\|g_t\|\le G$ almost surely and the step-size schedule satisfies $\eta_t\le \eta_{\max}$ for all $t$,
then one can take $\Delta=\eta_{\max} G$ and $\delta_{\mathrm{upd}}=0$.
More generally, if such a bound holds with high probability uniformly over the horizon, then
$\delta_{\mathrm{upd}}$ captures the corresponding failure probability.

\paragraph{An expected descent assumption for scheduled SGD.}
We use a standard smoothness-based expected descent inequality for the (population) objective $\mac L$ as below.

\begin{assumption}[Smooth objective and scheduled SGD expected descent]\label{ass:appF_descent}
There exists a differentiable function $\mac L:\mab R^d\to\mab R$ and constants
$L_{\mathrm{sm}}>0$ and $\mac L_\star\in\mab R$ such that:
\begin{enumerate}
\item[(i)] (\emph{$L_{\mathrm{sm}}$-smoothness}) $\mac L$ has $L_{\mathrm{sm}}$-Lipschitz gradient.
\item[(ii)] (\emph{Lower bound}) $\mac L(v)\ge \mac L_\star$ for all $v$.
\item[(iii)] (\emph{Scheduled SGD step}) $v_{t+1}=v_t-\eta_t g_t$ where $(g_t)$ is adapted to $(\mac F_t)$ and
\begin{equation*}
\mab E[g_t\mid \mac F_t]=\nabla \mac L(v_t),
~~
\mab E[\|g_t-\nabla \mac L(v_t)\|^2\mid \mac F_t]\le \xi^2
~~\text{for some }\xi\ge 0.
\end{equation*}
\item[(iv)] (\emph{Step-size schedule}) $(\eta_t)_{t\ge 0}$ is deterministic (or predictable) with
$0\le \eta_t\le 1/L_{\mathrm{sm}}$ for all $t$.
\item[(v)] (\emph{Bounded loss on the horizon}) there exists $\mac L_{\max}\in\mab R$ such that
$\mac L(v_t)\le \mac L_{\max}$ for all $t\le T$.
\end{enumerate}
We also define $\Delta\mac L_{\max}:=\mac L_{\max}-\mac L_\star$.
\end{assumption}

\begin{lemma}[One-step expected descent scheduled SGD]\label{lem:appF_expected_descent_step}
Assume Assumption~\ref{ass:appF_descent}. Then for all $t\ge 0$,
\begin{equation}\label{eq:appF_expected_descent_step}
\mab E[\mac L(v_{t+1})\mid \mac F_t]
\le
\mac L(v_t)
-\frac{\eta_t}{2}\,\|\nabla \mac L(v_t)\|^2
+\frac{L_{\mathrm{sm}}\eta_t^2}{2}\,\xi^2 .
\end{equation}
\end{lemma}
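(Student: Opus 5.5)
The plan is the standard descent-lemma argument: Taylor-bound the loss along the step, then take conditional expectations using unbiasedness and the variance bound. First, from $L_{\mathrm{sm}}$-smoothness (Assumption~\ref{ass:appF_descent}(i)) I will use the quadratic upper bound
\begin{equation*}
\mac L(v_{t+1}) \le \mac L(v_t) + \langle \nabla\mac L(v_t), v_{t+1}-v_t\rangle + \frac{L_{\mathrm{sm}}}{2}\|v_{t+1}-v_t\|^2 .
\end{equation*}
Substituting the scheduled SGD step $v_{t+1}-v_t=-\eta_t g_t$ from (iii) gives
\begin{equation*}
\mac L(v_{t+1}) \le \mac L(v_t) - \eta_t\langle \nabla\mac L(v_t), g_t\rangle + \frac{L_{\mathrm{sm}}\eta_t^2}{2}\|g_t\|^2 .
\end{equation*}
This is a pathwise inequality, and conditional expectation given $\mac F_t$ can be taken termwise.

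Second, I will condition on $\mac F_t$. The inputs $v_t$ and $\eta_t$ are $\mac F_t$-measurable: $v_t$ is determined by the recursion, and $\eta_t$ is deterministic or predictable by (iv). Unbiasedness then turns the linear term into $-\eta_t\|\nabla\mac L(v_t)\|^2$. For the quadratic term I will use the bias--variance split: writing $g_t=\nabla\mac L(v_t)+(g_t-\nabla\mac L(v_t))$, the cross term has zero conditional mean, so
\begin{equation*}
\mab E[\|g_t\|^2\mid\mac F_t] = \|\nabla\mac L(v_t)\|^2 + \mab E[\|g_t-\nabla\mac L(v_t)\|^2\mid\mac F_t] \le \|\nabla\mac L(v_t)\|^2+\xi^2 .
\end{equation*}
Combining these gives
\begin{equation*}
\mab E[\mac L(v_{t+1})\mid\mac F_t] \le \mac L(v_t) - \eta_t\Bigl(1-\frac{L_{\mathrm{sm}}\eta_t}{2}\Bigr)\|\nabla\mac L(v_t)\|^2 + \frac{L_{\mathrm{sm}}\eta_t^2}{2}\xi^2 .
\end{equation*}

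Third, the step-size constraint $\eta_t\le 1/L_{\mathrm{sm}}$ from (iv) gives $1-L_{\mathrm{sm}}\eta_t/2\ge 1/2$, which yields Eq.~\eqref{eq:appF_expected_descent_step}.

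The only delicate point is integrability. The conditional expectations must be well defined, and the loss is only assumed bounded above along the trajectory by (v) and below by (ii). I would note that (v) and (ii) together make $\mac L(v_{t+1})$ bounded, and that the conditional second moment of $g_t$ is finite by the variance bound (since $\nabla\mac L(v_t)$ is $\mac F_t$-measurable). This justifies the termwise manipulations, if necessary interpreting them as inequalities between generalized conditional expectations. Otherwise the argument is routine.
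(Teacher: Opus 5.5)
Your proposal is correct and follows the paper's proof step for step. Both use the smoothness quadratic upper bound with $v_{t+1}-v_t=-\eta_t g_t$, then take conditional expectations with unbiasedness and $\mathbb{E}[\|g_t\|^2\mid\mathcal{F}_t]\le\|\nabla\mathcal{L}(v_t)\|^2+\xi^2$ (you derive this via the bias--variance split, the paper just asserts it), and conclude from $1-L_{\mathrm{sm}}\eta_t/2\ge 1/2$. Your integrability remark goes beyond the paper, which is silent on it; note only that (v) bounds $\mathcal{L}(v_{t+1})$ just for $t+1\le T$, so for larger $t$ it is the lower bound (ii), i.e.\ nonnegativity of $\mathcal{L}-\mathcal{L}_\star$, that makes the generalized conditional expectation well defined.
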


\begin{proof}
By $L_{\mathrm{sm}}$-smoothness, for any $u$ we have
$\mac L(u)\le \mac L(v_t)+\langle \nabla\mac L(v_t),u-v_t\rangle +\frac{L_{\mathrm{sm}}}{2}\|u-v_t\|^2$.
Plugging $u=v_{t+1}=v_t-\eta_t g_t$ gives
\begin{equation*}
\mac L(v_{t+1})
\le
\mac L(v_t)
-\eta_t\langle \nabla\mac L(v_t), g_t\rangle
+\frac{L_{\mathrm{sm}}\eta_t^2}{2}\,\|g_t\|^2 .
\end{equation*}
Taking $\mab E[\cdot\mid \mac F_t]$, using unbiasedness
$\mab E[g_t\mid \mac F_t]=\nabla \mac L(v_t)$, and
$\mab E[\|g_t\|^2\mid \mac F_t]\le \|\nabla \mac L(v_t)\|^2+\xi^2$,
we obtain
\begin{equation*}
\mab E[\mac L(v_{t+1})\mid \mac F_t]
\le
\mac L(v_t)
-\eta_t\|\nabla\mac L(v_t)\|^2
+\frac{L_{\mathrm{sm}}\eta_t^2}{2}\bigl(\|\nabla\mac{L}(v_t)\|^2+\xi^2\bigr).
\end{equation*}
Finally, since $\eta_t\le 1/L_{\mathrm{sm}}$ we have
$1-\frac{L_{\mathrm{sm}}\eta_t}{2}\ge \frac12$, yielding Lemma~\ref{lem:appF_expected_descent_step}.
\end{proof}

\begin{lemma}[Conditional weighted squared-gradient bound]\label{lem:appF_l2_grad}
Assume Assumption~\ref{ass:appF_descent}. Fix any stopping time $\theta\le T$ and any integer $n\ge 1$
with $\theta+n\le T$. Then
\begin{equation*}\label{eq:appF_sum_sq_grad}
\mab E\!\left[\sum_{t=\theta}^{\theta+n-1}\eta_t\,\|\nabla \mac L(v_t)\|^2 \,\middle|\, \mac F_\theta\right]
\le
2\,\Delta\mac L_{\max}
+ L_{\mathrm{sm}}\xi^2\sum_{t=\theta}^{\theta+n-1}\eta_t^2 .
\end{equation*}
\end{lemma}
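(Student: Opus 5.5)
The plan is to sum the one-step expected descent inequality of Lemma~\ref{lem:appF_expected_descent_step} over the window $\{\theta,\dots,\theta+n-1\}$, telescope, and take the conditional expectation given $\mac F_\theta$. The only care needed is that the window starts at a random time $\theta$, so the tower property has to be applied with respect to $\mac F_\theta$ rather than at deterministic times.

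\textbf{Step 1 (rearrange the one-step bound).} Rewrite Eq.~\eqref{eq:appF_expected_descent_step} as
\[
\frac{\eta_t}{2}\|\nabla\mac L(v_t)\|^2 \le \mac L(v_t)-\mab E[\mac L(v_{t+1})\mid\mac F_t]+\frac{L_{\mathrm{sm}}\eta_t^2}{2}\xi^2 .
\]
To localize at $\theta$, write the window sum as $\sum_{j=0}^{n-1}(\cdot)_{\theta+j}$. Each time $\theta+j$ is a stopping time with $\theta+j\le T$, and $\mac F_\theta\subseteq\mac F_{\theta+j}$. The one-step inequality also holds at the stopping time $\theta+j$: decompose over the events $\{\theta+j=s\}\in\mac F_s$ and apply the deterministic-time version on each piece. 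This gives
\[
\mab E[\mac L(v_{\theta+j+1})\mid\mac F_{\theta+j}] \le \mac L(v_{\theta+j}) - \tfrac{\eta_{\theta+j}}{2}\|\nabla\mac L(v_{\theta+j})\|^2 + \tfrac{L_{\mathrm{sm}}\eta_{\theta+j}^2}{2}\xi^2 .
\]
Predictability of $\eta$ ensures that $\eta_{\theta+j}$ is $\mac F_{\theta+j}$-measurable.

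\textbf{Step 2 (condition and telescope).} Take $\mab E[\cdot\mid\mac F_\theta]$ of the inequality in Step~1 and use the tower property $\mab E[\mab E[\cdot\mid\mac F_{\theta+j}]\mid\mac F_\theta]=\mab E[\cdot\mid\mac F_\theta]$. Summing over $j=0,\dots,n-1$, the loss terms telescope to
\[
\mab E[\mac L(v_\theta)-\mac L(v_{\theta+n})\mid\mac F_\theta]\le \mac L_{\max}-\mac L_\star=\Delta\mac L_{\max}.
\]
Here the upper bound on $\mac L(v_\theta)$ comes from Assumption~\ref{ass:appF_descent}(v), since $\theta\le T$, and the lower bound on $\mac L(v_{\theta+n})$ comes from (ii). Multiplying through by $2$ then yields the claimed bound.

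\textbf{Main obstacle: integrability and measurability.} The sums are nonnegative, so conditional expectations exist in the extended sense. Boundedness of $\mac L(v_t)$ over the horizon, from (ii) and (v), makes every loss term integrable, and this is what justifies the telescoping. One point to state explicitly: if $\eta$ is merely predictable, $\sum\eta_t^2$ is random. In that case the right-hand side should be read as $\mab E[\,\sum\eta_t^2\mid\mac F_\theta]$, which reduces exactly to the stated form when the schedule is deterministic.
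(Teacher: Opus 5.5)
Your proof is correct and follows the paper's argument. Like the paper, you apply the one-step descent inequality (Lemma~\ref{lem:appF_expected_descent_step}), condition on $\mac F_\theta$, telescope over the window, bound $\mac L(v_\theta)\le \mac L_{\max}$ and $\mac L(v_{\theta+n})\ge \mac L_\star$, and multiply by $2$. Two of your additions are sound refinements of details the paper's two-line proof leaves implicit: you localize the one-step inequality at the stopping times $\theta+j$, and you note that $\sum_t \eta_t^2$ must sit inside $\mab E[\cdot\mid\mac F_\theta]$ when $\eta$ is only predictable (the paper effectively treats the schedule as deterministic).
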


\begin{proof}
Apply Eq.~\eqref{eq:appF_expected_descent_step} and take conditional expectations given $\mac F_\theta$.
Summing from $t=\theta$ to $\theta+n-1$ and telescoping yields
\begin{equation*}
\mab E[\mac L(v_{\theta+n})\mid \mac F_\theta]
\le
\mac L(v_\theta)
-\frac{1}{2}\,
\mab E\!\left[\sum_{t=\theta}^{\theta+n-1}\eta_t\,\|\nabla \mac L(v_t)\|^2 \,\middle|\, \mac F_\theta\right]
+\frac{L_{\mathrm{sm}}\xi^2}{2}\sum_{t=\theta}^{\theta+n-1}\eta_t^2 .
\end{equation*}
Using $\mac L(v_{\theta+n})\ge \mac L_\star$ and $\mac L(v_\theta)\le \mac L_{\max}$ (since $\theta\le T$)
gives Lemma~\ref{lem:appF_l2_grad}.
\end{proof}

\begin{lemma}[Cumulative inward drift control in expectation]\label{lem:appF_cum_drift}
Assume Assumption~\ref{ass:appF_descent}. Fix any $k\ge 0$ and define, for $t\ge \sigma_k$,
\begin{equation*}
d_t := \mab E\!\left[z_{t+1}-z_t\mid \mac F_t\right].
\end{equation*}
Then for any integer $n\ge 1$ with $\sigma_k+n\le T$,
\begin{equation*}\label{eq:appF_cum_drift_bound}
\mab E\!\left[-\sum_{t=\sigma_k}^{\sigma_k+n-1} d_t \,\middle|\, \mac F_{\sigma_k}\right]
\;\le\;
\bar B_n,
\end{equation*}
where (writing $\Lambda_{k,n}:=\sum_{t=\sigma_k}^{\sigma_k+n-1}\eta_t$ and
$\Lambda^{(2)}_{k,n}:=\sum_{t=\sigma_k}^{\sigma_k+n-1}\eta_t^2$)
\begin{equation*}
\bar B_n
:=
\sqrt{\Lambda_{k,n}\,S_{k,n}},
~~
S_{k,n}:=
2\,\Delta\mac L_{\max}
+\xi^2 \Lambda_{k,n}
+L_{\mathrm{sm}}\xi^2 \Lambda^{(2)}_{k,n}.
\end{equation*}
In particular, letting $\Lambda_T:=\sum_{t=0}^{T-1}\eta_t$ and $\Lambda_T^{(2)}:=\sum_{t=0}^{T-1}\eta_t^2$, we have
$\bar B_n\le \bar B_T$ for all $n\le T-\sigma_k$, where
\begin{equation*}
\bar B_T:=\sqrt{\Lambda_T\Bigl(2\,\Delta\mac L_{\max}+\xi^2 \Lambda_T+L_{\mathrm{sm}}\xi^2 \Lambda_T^{(2)}\Bigr)}.
\end{equation*}
\end{lemma}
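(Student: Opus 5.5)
We relate the oriented coordinate drift $d_t$ to the gradient and then spend the gradient-energy budget of Lemma~\ref{lem:appF_l2_grad} via Cauchy--Schwarz.

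\textbf{Step 1: express the drift.} Since $z_t=s_k w_t=s_k e_i^\top v_t$, and $s_k$ is $\mac F_{\sigma_k}$-measurable (hence $\mac F_t$-measurable for $t\ge\sigma_k$), Assumption~\ref{ass:appF_descent}(iii) gives
\[
d_t=\mab E[z_{t+1}-z_t\mid\mac F_t]=-\eta_t s_k\,e_i^\top\nabla\mac L(v_t),
\]
so $-d_t\le |d_t|\le \eta_t\|\nabla\mac L(v_t)\|$. To make the constant match $S_{k,n}$, which carries an extra $\xi^2\Lambda_{k,n}$, I would use the cruder bound $|d_t|\le \eta_t\,\mab E[\|g_t\|\mid\mac F_t]$. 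Then $(\mab E[\|g_t\|\mid\mac F_t])^2\le \mab E[\|g_t\|^2\mid\mac F_t]\le \|\nabla\mac L(v_t)\|^2+\xi^2$, i.e.
\[
-d_t\le \eta_t\sqrt{\|\nabla\mac L(v_t)\|^2+\xi^2}.
\]
This robustness also covers the variant where the drift is defined through the realized increment.

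\textbf{Step 2: Cauchy--Schwarz in time, then conditional Jensen.} Writing $\eta_t=\sqrt{\eta_t}\cdot\sqrt{\eta_t}$,
\[
-\sum_{t=\sigma_k}^{\sigma_k+n-1} d_t\le \Bigl(\sum\eta_t\Bigr)^{1/2}\Bigl(\sum \eta_t(\|\nabla\mac L(v_t)\|^2+\xi^2)\Bigr)^{1/2}.
\]
Take $\mab E[\cdot\mid\mac F_{\sigma_k}]$ and use conditional Jensen for the square root, pulling out $\Lambda_{k,n}$ (deterministic or predictable). This reduces the problem to bounding $\mab E[\sum\eta_t\|\nabla\mac L(v_t)\|^2\mid\mac F_{\sigma_k}]+\xi^2\Lambda_{k,n}$. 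If the schedule is only predictable, one slightly careful point remains: put $\Lambda_{k,n}$ inside the Cauchy--Schwarz applied to the conditional expectation, i.e. $\mab E[\sqrt{XY}]\le\sqrt{\mab E X\,\mab E Y}$.

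\textbf{Step 3: invoke the budget.} Lemma~\ref{lem:appF_l2_grad} with $\theta=\sigma_k$ (a stopping time with $\sigma_k+n\le T$) bounds the first term by $2\Delta\mac L_{\max}+L_{\mathrm{sm}}\xi^2\Lambda^{(2)}_{k,n}$. Summing gives exactly $S_{k,n}$, hence the bound $\bar B_n$.

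\textbf{Step 4: monotonicity claim.} Every summand is nonnegative, so $\Lambda_{k,n}\le\Lambda_T$ and $\Lambda^{(2)}_{k,n}\le\Lambda^{(2)}_T$ whenever the window lies in $[0,T-1]$. Hence $\bar B_n\le\bar B_T$.

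\textbf{Main obstacle.} The delicate point is the measurability and conditioning bookkeeping. Lemma~\ref{lem:appF_l2_grad} must be applied on $\{\sigma_k\le T-n\}$ with a random starting index. If needed, I would handle this by working with the stopped, indicator-truncated sum $\sum_{t<T}\mathbf 1\{\sigma_k\le t<\sigma_k+n\}$, which makes the telescoping legitimate.
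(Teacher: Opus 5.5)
Your proposal is correct and follows essentially the same route as the paper: bound $-d_t$ by $\eta_t\,\mathbb{E}[\|g_t\|\mid\mathcal{F}_t]\le\eta_t\sqrt{\|\nabla\mathcal{L}(v_t)\|^2+\xi^2}$, apply Cauchy--Schwarz over the window with weights $\eta_t$, invoke Lemma~\ref{lem:appF_l2_grad} at $\theta=\sigma_k$ to obtain $S_{k,n}$, and conclude using monotonicity of the nonnegative partial sums. Your side remark is also correct: the sharper bound $|d_t|\le\eta_t\|\nabla\mathcal{L}(v_t)\|$ would remove the $\xi^2\Lambda_{k,n}$ term, and the paper simply keeps the cruder bound.
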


\begin{proof}
By the scheduled SGD update, $w_{t+1}-w_t=-\eta_t\, e_i^\top g_t$, hence
$z_{t+1}-z_t = s_k(w_{t+1}-w_t) = -\eta_t\, s_k\, e_i^\top g_t$.
Therefore
\begin{equation*}
-d_t
= \eta_t\,\mab E\!\left[s_k\, e_i^\top g_t\mid \mac F_t\right]
\le \eta_t\,\mab E[\|g_t\|\mid \mac F_t]
\le \eta_t\,\sqrt{\mab E[\|g_t\|^2\mid \mac F_t]},
\end{equation*}
where we used $|s_k|=1$, $|e_i^\top x|\le \|x\|$, and Jensen.

Summing from $t=\sigma_k$ to $\sigma_k+n-1$ and applying Cauchy--Schwarz conditionally on $\mac F_{\sigma_k}$,
\begin{equation*}
\mab E\!\left[-\sum_{t=\sigma_k}^{\sigma_k+n-1} d_t \,\middle|\, \mac F_{\sigma_k}\right]
\le
\sqrt{\Lambda_{k,n}}\,
\sqrt{
\mab E\!\left[\sum_{t=\sigma_k}^{\sigma_k+n-1}\eta_t\,\mab E[\|g_t\|^2\mid \mac F_t]
\,\middle|\, \mac F_{\sigma_k}\right]
}.
\end{equation*}
Moreover, using
$\mab E[\|g_t\|^2\mid \mac F_t]\le \|\nabla \mac L(v_t)\|^2+\xi^2$
and Lemma~\ref{lem:appF_l2_grad} with $\theta=\sigma_k$ yields
\begin{equation*}
\mab E\!\left[\sum_{t=\sigma_k}^{\sigma_k+n-1}\eta_t\,\mab E[\|g_t\|^2\mid \mac F_t]
\,\middle|\, \mac F_{\sigma_k}\right]
\le
\Bigl(2\,\Delta\mac L_{\max}
+L_{\mathrm{sm}}\xi^2 \Lambda^{(2)}_{k,n}\Bigr)
+\xi^2 \Lambda_{k,n}
=
S_{k,n}.
\end{equation*}
Combining proves Lemma~\ref{lem:appF_cum_drift}. The final inequality $\bar B_n\le \bar B_T$ follows from
$\Lambda_{k,n}\le \Lambda_T$ and $\Lambda^{(2)}_{k,n}\le \Lambda_T^{(2)}$.
\end{proof}

\begin{lemma}[Conditional bound for the drift-budget proxy $\tilde U$]\label{lem:H4a}
Assume Assumption~\ref{ass:appF_descent}. Fix any stopping time $\theta \le T$ and any integer $n\ge 1$ with $\theta+n \le T$.
Define
\begin{equation*}
\tilde u_t := \eta_t\,\mab{E}\big[\|g_t\| \mid \mac{F}_t\big],
~~
\tilde U_{\theta,n} := \sum_{t=\theta}^{\theta+n-1} \tilde u_t .
\end{equation*}
Let
\begin{equation*}
\Lambda_{\theta,n} := \sum_{t=\theta}^{\theta+n-1} \eta_t,
~~
\Lambda^{(2)}_{\theta,n} := \sum_{t=\theta}^{\theta+n-1} \eta_t^2,
~~
\Delta \mac L_{\max} := \mac L_{\max}-\mac L_\star.
\end{equation*}
Then
\begin{equation*}
\mab{E}\big[\tilde U_{\theta,n}\mid \mac{F}_\theta\big]
\;\le\;
\bar B_{\theta,n}
:=
\sqrt{\Lambda_{\theta,n}\,S_{\theta,n}},
\end{equation*}
where
\begin{equation*}
S_{\theta,n}
:=
2\Delta \mac L_{\max}
+ \xi^2 \Lambda_{\theta,n}
+ L_{\rm sm}\,\xi^2\,\Lambda^{(2)}_{\theta,n}.
\end{equation*}
In particular, since $\eta_t\ge 0$ for all $t$ (Assumption~\ref{ass:appF_descent}(iv)),
for any stopping time $\theta\le T$ and any $n\ge 1$ with $\theta+n\le T$,
the random partial sums satisfy almost surely
\begin{equation*}
\Lambda_{\theta,n} \le \Lambda_T
~~\text{and}~~
\Lambda^{(2)}_{\theta,n} \le \Lambda_T^{(2)},
\end{equation*}
and hence $S_{\theta,n}\le S_T$ and $\bar B_{\theta,n}\le \bar B_T$ almost surely, where
\begin{equation*}
S_T := 2\Delta \mac L_{\max}+\xi^2\Lambda_T+L_{\rm sm}\xi^2\Lambda_T^{(2)},
~~
\bar B_T := \sqrt{\Lambda_T S_T}.
\end{equation*}
Consequently,
\begin{equation*}
\mab{E}\big[\tilde U_{\theta,n}\mid \mac{F}_\theta\big] \le \bar B_T .
\end{equation*}
\end{lemma}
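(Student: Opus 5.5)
The argument runs parallel to the proof of Lemma~\ref{lem:appF_cum_drift}, with the oriented drift $-d_t$ replaced by its upper bound $\tilde u_t$. Throughout, $\theta\le T$ is a stopping time and $n\ge1$ satisfies $\theta+n\le T$.

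\textbf{Step 1: Cauchy--Schwarz on the sum.} By conditional Jensen, $\tilde u_t=\eta_t\,\mab E[\|g_t\|\mid\mac F_t]\le \eta_t\sqrt{\mab E[\|g_t\|^2\mid \mac F_t]}$. Write $\tilde u_t=\sqrt{\eta_t}\cdot\sqrt{\eta_t}\,\mab E[\|g_t\|\mid\mac F_t]$ and apply Cauchy--Schwarz to the sum:
\[
\tilde U_{\theta,n}\le \sqrt{\Lambda_{\theta,n}}\,\sqrt{\textstyle\sum_{t=\theta}^{\theta+n-1}\eta_t\,\mab E[\|g_t\|^2\mid\mac F_t]}.
\]

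\textbf{Step 2: conditional expectation.} Take $\mab E[\cdot\mid\mac F_\theta]$ and apply conditional Cauchy--Schwarz a second time. This needs a little care because $\Lambda_{\theta,n}$ is $\mac F_\theta$-measurable. The reason is that $\eta_t$ is deterministic or predictable, and the summation range is determined by $\theta$ and the fixed $n$. (If $\eta_t$ is merely predictable rather than $\mac F_\theta$-measurable, one can instead bound $\Lambda_{\theta,n}$ by the deterministic $\Lambda_T$ from the start.) The result is
\[
\mab E[\tilde U_{\theta,n}\mid\mac F_\theta]\le\sqrt{\Lambda_{\theta,n}}\,\sqrt{\mab E\bigl[\textstyle\sum_t\eta_t\mab E[\|g_t\|^2\mid\mac F_t]\bigm|\mac F_\theta\bigr]}.
\]

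\textbf{Step 3: bound the gradient energy.} Use the variance decomposition $\mab E[\|g_t\|^2\mid\mac F_t]\le\|\nabla\mac L(v_t)\|^2+\xi^2$. Apply Lemma~\ref{lem:appF_l2_grad} to the gradient part; the tower property removes the inner conditioning because $\mac F_\theta\subseteq\mac F_t$ for $t\ge\theta$. Summing the noise part gives $\xi^2\Lambda_{\theta,n}$. Together the inner expectation is at most $S_{\theta,n}$, which yields $\bar B_{\theta,n}$.

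\textbf{Step 4: uniform bound.} Since $\eta_t\ge0$ and $[\theta,\theta+n-1]\subseteq[0,T-1]$, the partial sums satisfy $\Lambda_{\theta,n}\le\Lambda_T$ and $\Lambda^{(2)}_{\theta,n}\le\Lambda^{(2)}_T$ almost surely. Hence $S_{\theta,n}\le S_T$, and by monotonicity of the square root, $\bar B_{\theta,n}\le\bar B_T$.

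The only delicate point is Step~2, namely justifying the conditional Cauchy--Schwarz with a random window. Using the deterministic bound $\Lambda_T$ sidesteps any measurability issue.
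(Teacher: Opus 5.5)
Your proof is correct and uses the same ingredients as the paper's proof: conditional Jensen, a weighted Cauchy--Schwarz over the window, the bound $\mathbb{E}[\|g_t\|^2\mid\mathcal{F}_t]\le\|\nabla\mathcal{L}(v_t)\|^2+\xi^2$, Lemma~\ref{lem:appF_l2_grad}, and monotonicity of the partial sums. It differs only in order: the paper first uses the tower property to write $\sum_t\eta_t\,\mathbb{E}[\|g_t\|\mid\mathcal{F}_\theta]$ and then applies Jensen and Cauchy--Schwarz to these conditional moments, whereas you apply Cauchy--Schwarz pathwise and then Jensen to the concave square root, exactly as in the proof of Lemma~\ref{lem:appF_cum_drift}.

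One caveat: your parenthetical fallback for merely predictable $\eta_t$ does not work as written, because $\Lambda_T$ is then itself random rather than deterministic. The paper, however, tacitly makes the same deterministic (or $\mathcal{F}_\theta$-measurable) schedule assumption when it pulls $\eta_t$ out of $\mathbb{E}[\cdot\mid\mathcal{F}_\theta]$, both here and in Lemma~\ref{lem:appF_l2_grad}, so your argument loses nothing relative to the paper's.
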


\begin{proof}
By the tower property,
\begin{equation*}
\mab{E}\big[\tilde U_{\theta,n}\mid \mac{F}_\theta\big]
=
\sum_{t=\theta}^{\theta+n-1}
\eta_t\,\mab{E}\big[\|g_t\|\mid \mac{F}_\theta\big].
\end{equation*}
Using Jensen's inequality
$\mab{E}[\|g_t\|\mid\mac{F}_\theta]
\le \sqrt{\mab{E}[\|g_t\|^2\mid\mac{F}_\theta]}$
and conditional Cauchy--Schwarz,
\begin{equation*}
\mab{E}\big[\tilde U_{\theta,n}\mid \mac{F}_\theta\big]
\le
\sqrt{\Lambda_{\theta,n}}
\sqrt{
\sum_{t=\theta}^{\theta+n-1}
\eta_t\,\mab{E}\big[\|g_t\|^2\mid\mac{F}_\theta\big]
}.
\end{equation*}
Moreover, $\mab{E}[\|g_t\|^2\mid\mac{F}_t]
\le \|\nabla \mac L(v_t)\|^2 + \xi^2$ by Assumption~\ref{ass:appF_descent}(iii), hence
\begin{equation*}
\mab{E}\big[\|g_t\|^2\mid\mac{F}_\theta\big]
=
\mab{E}\!\left[\mab{E}\big[\|g_t\|^2\mid\mac{F}_t\big]\middle|\mac{F}_\theta\right]
\le
\mab{E}\big[\|\nabla \mac L(v_t)\|^2\mid\mac{F}_\theta\big] + \xi^2.
\end{equation*}
Therefore,
\begin{equation*}
\sum_{t=\theta}^{\theta+n-1}
\eta_t\,\mab{E}\big[\|g_t\|^2\mid\mac{F}_\theta\big]
\le
\mab{E}\!\left[\sum_{t=\theta}^{\theta+n-1}\eta_t\|\nabla \mac L(v_t)\|^2 \middle|\mac{F}_\theta\right]
+ \xi^2\Lambda_{\theta,n}.
\end{equation*}
Applying Lemma~\ref{lem:appF_l2_grad} gives
\begin{equation*}
\mab{E}\!\left[\sum_{t=\theta}^{\theta+n-1}\eta_t\|\nabla \mac L(v_t)\|^2 \middle|\mac{F}_\theta\right]
\le
2\Delta \mac L_{\max} + L_{\rm sm}\xi^2\Lambda^{(2)}_{\theta,n}.
\end{equation*}
Combining the last three displays yields
\begin{equation*}
\mab{E}\big[\tilde U_{\theta,n}\mid \mac{F}_\theta\big]
\le
\sqrt{\Lambda_{\theta,n}\Big(2\Delta \mac L_{\max} + \xi^2\Lambda_{\theta,n}
+ L_{\rm sm}\xi^2\Lambda^{(2)}_{\theta,n}\Big)}
=
\bar B_{\theta,n}.
\end{equation*}
It remains to justify the uniform bound by $\bar B_T$.
Since $\eta_t\ge 0$ for all $t$, for each outcome $\omega$ we have
\begin{equation*}
\Lambda_{\theta,n}(\omega)
= \sum_{t=0}^{T-1} \eta_t\,\mathbf{1}\{\theta(\omega)\le t\le \theta(\omega)+n-1\}
\le \sum_{t=0}^{T-1}\eta_t
= \Lambda_T,
\end{equation*}
and similarly,
\begin{equation*}
\Lambda^{(2)}_{\theta,n}(\omega)
= \sum_{t=0}^{T-1} \eta_t^2\,\mathbf{1}\{\theta(\omega)\le t\le \theta(\omega)+n-1\}
\le \sum_{t=0}^{T-1}\eta_t^2
= \Lambda_T^{(2)}.
\end{equation*}
Therefore $S_{\theta,n}(\omega)\le S_T$ and hence
\begin{equation*}
\bar B_{\theta,n}(\omega)=\sqrt{\Lambda_{\theta,n}(\omega)\,S_{\theta,n}(\omega)}
\le \sqrt{\Lambda_T S_T}=\bar B_T
\quad\text{for all }\omega,
\end{equation*}
which proves $\mab{E}[\tilde U_{\theta,n}\mid \mac{F}_\theta]\le \bar B_T$.
\end{proof}

\begin{proposition}[Re-entry bound in SGD]
\label{prop:appF_ass2_from_sgd}
Assume Assumptions~\ref{ass:bounded_update} and~\ref{ass:appF_descent}.
Recall $\Delta\mac L_{\max}:=\mac L_{\max}-\mac L_\star$ and define the cumulative schedule quantities
\begin{equation*}
\Lambda_T := \sum_{t=0}^{T-1}\eta_t,
~~
\Lambda_T^{(2)} := \sum_{t=0}^{T-1}\eta_t^2 .
\end{equation*}
Fix $\delta\in(0,1)$ and define
\begin{equation}\label{eq:appF_BT_sgd}
\begin{aligned}
\bar B_T
&:= \sqrt{\Lambda_T\Bigl(
2\,\Delta\mac L_{\max}
+\xi^2 \Lambda_T
+L_{\mathrm{sm}}\xi^2 \Lambda_T^{(2)}
\Bigr)},\\
B_T^{\mathrm{SGD}}
&:= \frac{1}{\delta}\,\bar B_T
= \frac{1}{\delta}\sqrt{\Lambda_T\Bigl(
2\,\Delta\mac L_{\max}
+\xi^2 \Lambda_T
+L_{\mathrm{sm}}\xi^2 \Lambda_T^{(2)}
\Bigr)},\\
V_T^{\mathrm{SGD}}
&:= \xi^2\sum_{t=0}^{T-1}\eta_t^2
= \xi^2 \Lambda_T^{(2)},\\
a_T
&:= \bigl((\rho-\epsilon)-B_T^{\mathrm{SGD}}\bigr)_+,\\
g_T^{\mathrm{SGD}}
&:= \delta_{\mathrm{upd}}+\delta
+\exp\!\left(
-\frac{a_T^2}{2\left(V_T^{\mathrm{SGD}}+\frac{2\Delta}{3}a_T\right)}
\right).
\end{aligned}
\end{equation}
Then for every $k\ge 0$,
\begin{equation}\label{eq:appF_gT_descent}
\mab{P}[\tau_{k+1}\le T \mid \mac{F}_{\sigma_k}]
\le g_T^{\mathrm{SGD}}
~~\text{on }\{\sigma_k\le T\}.
\end{equation}
In particular, if $g_T^{\mathrm{SGD}}<1$ (e.g., $\delta_{\mathrm{upd}}$ and $\delta$ are small and $(\rho-\epsilon)>B_T^{\mathrm{SGD}}$),
then Assumption~\ref{ass:reentry_T} holds.
\end{proposition}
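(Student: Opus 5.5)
We fix $k\ge 0$, work on $\{\sigma_k\le T\}$ conditionally on $\mac F_{\sigma_k}$, and use the oriented coordinate $z_t=s_k w_t$ from the Setup. Then $z_{\sigma_k}\ge\rho$, and $\{\tau_{k+1}\le T\}$ is the event that $z$ falls to $\le\epsilon$ at some $t\in(\sigma_k,T]$, a drop of at least $\rho-\epsilon$. We decompose
\[
z_t-z_{\sigma_k}=\sum_{s=\sigma_k}^{t-1}d_s+M_t,\qquad M_t:=\sum_{s=\sigma_k}^{t-1}\bigl(z_{s+1}-z_s-d_s\bigr),
\]
so that $M$ is an $(\mac F_t)$-martingale started at $0$ at time $\sigma_k$. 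A boundary hit then requires that the inward drift plus the negative martingale excursion together exceed $\rho-\epsilon$. We split $\{\tau_{k+1}\le T\}$ into three pieces and bound each conditionally: (i) the update-violation event $\mac E_\Delta(\sigma_k)^c$; (ii) the drift-bad event $\mathsf G_{\mathrm{drift}}^c:=\{\tilde U_{\sigma_k,T-\sigma_k}>B_T^{\mathrm{SGD}}\}$; (iii) on the intersection of the good events, the event $\{\min_t M_t\le-a_T\}$.

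For (i), apply Assumption~\ref{ass:bounded_update} with $\theta=\sigma_k\wedge(T-1)$, which gives conditional probability at most $\delta_{\mathrm{upd}}$. If $\sigma_k=T$, then $\tau_{k+1}>T$ trivially. For (ii), note $-d_t\le\tilde u_t$, as in the first display of the proof of Lemma~\ref{lem:appF_cum_drift}. Hence $-\sum d_t\le\tilde U_{\sigma_k,n}$ for every $n\le T-\sigma_k$, and it suffices to control the full window $n=T-\sigma_k$, since $\tilde u_t\ge0$ makes the partial sums monotone. Lemma~\ref{lem:H4a} with $\theta=\sigma_k$ gives $\mab E[\tilde U\mid\mac F_{\sigma_k}]\le\bar B_T$. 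Conditional Markov then gives $\mab P(\mathsf G_{\mathrm{drift}}^c\mid\mac F_{\sigma_k})\le\bar B_T/B_T^{\mathrm{SGD}}=\delta$. On $\mathsf G_{\mathrm{drift}}\cap\{\tau_{k+1}\le T\}$, at $t=\tau_{k+1}$ we have $M_t\le\epsilon-\rho+B_T^{\mathrm{SGD}}$. So if $a_T>0$, then $M$ must reach $-a_T$. If $a_T=0$, the exponential term equals $1$ and the bound is trivial.

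Step (iii) is the main obstacle. The plan is to apply Freedman's inequality to the martingale $-M$ stopped at $\kappa:=$ the first time an increment exceeds $\Delta$. On $\mac E_\Delta(\sigma_k)$ this stopping does nothing, so the event in (iii) is contained in the corresponding event for the stopped process. Freedman needs two inputs.

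\textbf{Bounded martingale increments.} This needs $|z_{s+1}-z_s-d_s|\le 2\Delta$, or $\Delta$ after a more careful argument. This is the delicate point: stopping at a violation time is not predictable, so I would instead truncate the increment at $\pm\Delta$, that is, use $\bar z_{s+1}-z_s:=\mathrm{clip}_\Delta(z_{s+1}-z_s)$, and recenter. This yields increments bounded by $2\Delta$ and agrees with the original process on $\mac E_\Delta(\sigma_k)$. The resulting constant factor in the $\frac{2\Delta}{3}a_T$ term should be absorbed, or else the bound stated with $\Delta$ should be justified directly via the one-sided form of Freedman, which only needs $-(\text{increment})\le\Delta$ up to the centering. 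I would first try the one-sided version.

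\textbf{Predictable quadratic variation.} This is bounded by
\[
\sum\mab E[(z_{s+1}-z_s-d_s)^2\mid\mac F_s]=\sum\eta_s^2\,\mathrm{Var}(e_i^\top g_s\mid\mac F_s)\le\xi^2\Lambda_T^{(2)}=V_T^{\mathrm{SGD}},
\]
using Assumption~\ref{ass:appF_descent}(iii). Freedman then gives
\[
\mab P\Bigl(\min_t M_t\le-a_T\Bigm|\mac F_{\sigma_k}\Bigr)\le\exp\!\Bigl(-\tfrac{a_T^2}{2(V_T^{\mathrm{SGD}}+\frac{2\Delta}{3}a_T)}\Bigr).
\]

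A union bound over (i)–(iii) yields Eq.~\eqref{eq:appF_gT_descent}. When the resulting bound is below $1$, Assumption~\ref{ass:reentry_T} holds with $g_T=g_T^{\mathrm{SGD}}$.
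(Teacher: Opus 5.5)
Your argument has the same architecture as the paper's proof. You orient by $s_k$, peel off update violations with Assumption~\ref{ass:bounded_update} at $\sigma_k$, bound the inward drift via $-d_t\le\tilde u_t$ plus Lemma~\ref{lem:H4a} and conditional Markov (cost $\delta$), apply Freedman with predictable quadratic variation at most $V_T^{\mathrm{SGD}}$, and union-bound.

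The one place you diverge is how bounded martingale increments are obtained, and your suspicion there is correct. The paper multiplies $X_{\sigma_k+m}$ by the indicator of $\{\nu>\sigma_k+m\}$, where $\nu$ is the first violation time, and asserts that this event lies in $\mathcal F_{\sigma_k+m}$. But the event constrains the increment $w_{\sigma_k+m+1}-w_{\sigma_k+m}$. So once the gradient noise is revealed at the next step, it is only $\mathcal F_{\sigma_k+m+1}$-measurable. The paper's stopped increments therefore need not be martingale differences. Its bound $|d_{\sigma_k+m}|\le\Delta$ on that event is also unjustified, since the conditional expectation averages over violating outcomes. Clipping is the right repair.

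To make the repair airtight, carry the clipping through the compensator. Set $\bar d_s:=\mathbb E[\mathrm{clip}_\Delta(z_{s+1}-z_s)\mid\mathcal F_s]$. On $\mathcal E_\Delta(\sigma_k)$ it is the path $z$ that is unchanged, not your martingale $M$. The relevant martingale there is $\bar M_t=z_t-z_{\sigma_k}-\sum_{s=\sigma_k}^{t-1}\bar d_s$. Two one-line checks close the argument:
\begin{itemize}
\item $-\bar d_s\le\mathbb E[|z_{s+1}-z_s|\mid\mathcal F_s]\le\tilde u_s$, so the same event $\mathsf G_{\mathrm{drift}}$ still forces $\bar M_{\tau_{k+1}}\le -a_T$.
\item Clipping is $1$-Lipschitz, so $\mathrm{Var}(\mathrm{clip}_\Delta(z_{s+1}-z_s)\mid\mathcal F_s)\le\mathrm{Var}(z_{s+1}-z_s\mid\mathcal F_s)\le\eta_s^2\xi^2$.
\end{itemize}

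Finally, no constant needs absorbing and no one-sided sharpening is required. Since $|\bar d_s|\le\Delta$, the increments of $-\bar M$ are at most $2\Delta$. Freedman's Bernstein form with increment bound $R=2\Delta$ gives exactly the $\tfrac{R}{3}a_T=\tfrac{2\Delta}{3}a_T$ term in $g_T^{\mathrm{SGD}}$. A one-sided bound of $\Delta$ would not be available anyway, because it would need $\bar d_s\le 0$, which is not assumed.
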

\begin{proof}
Fix $k$ and condition on $\mac F_{\sigma_k}$.
If $\sigma_k = T$, then $\tau_{k+1} > T$ by definition and the trivial result
\[
\mab{P}[\tau_{k+1}\le T \mid \mac{F}_{\sigma_k}] = 0 \le g_T^{\mathrm{SGD}},
\]
holds. Hence assume $\sigma_k \le T-1$.

\paragraph{Shifted filtration and shifted stopping times.}
Define the shifted filtration
\begin{equation*}
\mac H_m := \mac F_{\sigma_k+m}~~ (m\ge 0).
\end{equation*}
We use $\mac H_m$ to avoid confusion with the global notation $G_t:=\mac F_{\sigma_k+t}$.
Define shifted stopping times (w.r.t.\ $(\mac H_m)_{m\ge 0}$)
\begin{equation*}
\tau' := \tau_{k+1}-\sigma_k, ~~ \nu' := \nu-\sigma_k,
\end{equation*}
where $\nu$ is defined below. Note that $\tau'\in\{1,2,\dots\}\cup\{\infty\}$ and $\nu'\in\{0,1,2,\dots\}\cup\{\infty\}$.

Define the (random but $\mac F_{\sigma_k}$-measurable) horizon
\begin{equation*}
T^{(k)} := T-\sigma_k .
\end{equation*}
Under the conditioning on $\mac F_{\sigma_k}$, $T^{(k)}$ is deterministic.

\paragraph{First update-violation time.}
Define the \emph{first update-violation time}
\begin{equation*}
\nu:=\inf\{t\ge \sigma_k:\ |w_{t+1}-w_t|>\Delta\},
\end{equation*}
with the convention $\inf\emptyset=\infty$.
Note that $\{\nu\le T-1\}=\mac E_\Delta(\sigma_k)^c$.
By Assumption~\ref{ass:bounded_update} applied at the stopping time $\theta=\sigma_k$,
\begin{equation*}\label{eq:appF_nu_fail_fixed}
\mab P[\nu\le T-1\mid \mac F_{\sigma_k}]\le \delta_{\mathrm{upd}}.
\end{equation*}

\paragraph{Martingale differences and conditional variance bound.}
Define the predictable drift increments $d_t$ as in Lemma~\ref{lem:appF_cum_drift} and the martingale differences
\begin{equation*}
X_t := (z_{t+1}-z_t)-d_t .
\end{equation*}
Using $z_{t+1}-z_t=-\eta_t s_k e_i^\top g_t$ and
$d_t=\mab E[z_{t+1}-z_t\mid \mac F_t]=-\eta_t s_k e_i^\top \nabla\mac L(v_t)$, we have
\begin{equation*}
X_t = -\eta_t s_k e_i^\top\bigl(g_t-\nabla\mac L(v_t)\bigr),
\end{equation*}
and therefore
\begin{equation}\label{eq:appF_varXt_fixed}
\mathrm{Var}(X_t\mid \mac F_t)\le \mab E[X_t^2\mid \mac F_t]
\le \eta_t^2\,\mab E[\|g_t-\nabla\mac L(v_t)\|^2\mid \mac F_t]
\le \eta_t^2\xi^2.
\end{equation}

\paragraph{Drift-good event.}
Let $n := \min\{T-\sigma_k,\tau_{k+1}-\sigma_k\}$ and define
$\tilde u_t := \eta_t \mab{E}[\|g_t\|\mid \mac{F}_t]$ and
$\tilde U_m := \sum_{t=\sigma_k}^{\sigma_k+m-1}\tilde u_t$ (with $\tilde U_0:=0$).
Consider the drift-good event
\begin{equation*}
G_{\rm drift} := \{\tilde U_n \le B_T^{\rm SGD}\}.
\end{equation*}
Since $\tilde U_n \le \tilde U_{T-\sigma_k}$ and $\tilde U_{T-\sigma_k}\ge 0$, Markov's inequality yields
\begin{equation*}
\mab{P}(G_{\rm drift}^c\mid \mac{F}_{\sigma_k})
=
\mab{P}(\tilde U_n > B_T^{\rm SGD}\mid \mac{F}_{\sigma_k})
\le
\frac{\mab{E}[\tilde U_{T-\sigma_k}\mid \mac{F}_{\sigma_k}]}{B_T^{\rm SGD}}.
\end{equation*}
By Lemma~\ref{lem:H4a} (applied with $\theta=\sigma_k$ and horizon $T-\sigma_k$),
\begin{equation*}
\mab{E}[\tilde U_{T-\sigma_k}\mid \mac{F}_{\sigma_k}] \le \bar B_T.
\end{equation*}
Therefore, using $B_T^{\rm SGD}=\bar B_T/\delta$, we conclude
\begin{equation*}
\mab{P}(G_{\rm drift}^c\mid \mac{F}_{\sigma_k}) \le \delta.
\end{equation*}

On $\mathsf{G}_{\mathrm{drift}}$, for any $m\le n$ we have
$\sum_{t=\sigma_k}^{\sigma_k+m-1} d_t \ge -\tilde U_m \ge -\tilde U_n \ge -B_T^{\mathrm{SGD}}$.
Thus,
\begin{equation*}
z_{\sigma_k+m}
= z_{\sigma_k} + \sum_{t=\sigma_k}^{\sigma_k+m-1} d_t
          + \sum_{t=\sigma_k}^{\sigma_k+m-1} X_t
\ge \rho - B_T^{\mathrm{SGD}} + \sum_{t=\sigma_k}^{\sigma_k+m-1} X_t,
\end{equation*}
where we used $z_{\sigma_k}\ge \rho$. Consequently, on
$\{\min_{0\le m\le n} z_{\sigma_k+m} \le \epsilon\}\cap \mathsf{G}_{\mathrm{drift}}$ we must have
\begin{equation*}
\min_{0\le m\le n}\ \sum_{t=\sigma_k}^{\sigma_k+m-1} X_t
\le -(\rho-\epsilon)+B_T^{\mathrm{SGD}}.
\end{equation*}

\paragraph{Stopped martingale and Freedman under shifted filtration.}
Let
\begin{equation*}
x:=\bigl((\rho-\epsilon)-B_T^{\mathrm{SGD}}\bigr)_+,
\end{equation*}
so $x=a_T$ in Proposition~\ref{prop:appF_ass2_from_sgd}.
Define shifted increments $Y_m := X_{\sigma_k+m}$ for $m\ge 0$.
Then $(Y_m)_{m\ge 0}$ is a martingale difference sequence w.r.t.\ $(\mac H_m)_{m\ge 0}$.

Define the \emph{stopped} increments and stopped martingale
\begin{equation*}
\bar Y_m := Y_m\,\mathbf 1\{m<\nu'\},
~~
\bar M_m := \sum_{j=0}^{m-1}\bar Y_j \ \ (m\ge 0),
~~
\bar M_0:=0.
\end{equation*}
Then $(\bar M_m)_{m\ge 0}$ is a martingale w.r.t.\ $(\mac H_m)$.

\emph{Increment bound (holds a.s.\ without conditioning on $\{\nu>T-1\}$):}
Since $\nu$ is a stopping time, $\{m<\nu'\}\in \mac H_m$.
On $\{m<\nu'\}$, we have $|w_{\sigma_k+m+1}-w_{\sigma_k+m}|\le \Delta$, hence
$|z_{\sigma_k+m+1}-z_{\sigma_k+m}|\le \Delta$ and thus
\begin{equation*}
|d_{\sigma_k+m}|
=\bigl|\mab E[z_{\sigma_k+m+1}-z_{\sigma_k+m}\mid \mac H_m]\bigr|
\le \mab E[\,|z_{\sigma_k+m+1}-z_{\sigma_k+m}|\,\mid \mac H_m]
\le \Delta.
\end{equation*}
Therefore on $\{m<\nu'\}$,
$|Y_m|=|X_{\sigma_k+m}|\le |z_{\sigma_k+m+1}-z_{\sigma_k+m}|+|d_{\sigma_k+m}|\le 2\Delta$.
Since $\bar Y_m=0$ on $\{m\ge \nu'\}$, we obtain the uniform a.s.\ bound
\begin{equation*}
|\bar Y_m|\le 2\Delta ~~ \text{for all } m\ge 0 \ \text{a.s.}
\end{equation*}

\emph{Conditional variance bound:}
Using Eq.~\eqref{eq:appF_varXt_fixed} and $\mathrm{Var}(\bar Y_m\mid \mac H_m)\le \mathrm{Var}(Y_m\mid \mac H_m)$, we have
\begin{equation*}
\mathrm{Var}(\bar Y_m\mid \mac H_m)\le \xi^2 \eta_{\sigma_k+m}^2.
\end{equation*}
Define the predictable quadratic variation up to $m$:
\begin{equation*}
\bar V_m := \sum_{j=0}^{m-1}\mathrm{Var}(\bar Y_j\mid \mac H_j).
\end{equation*}
Then for all $m\le T^{(k)}$,
\begin{equation*}
\bar V_m \le \xi^2 \sum_{t=\sigma_k}^{\sigma_k+m-1}\eta_t^2
\le \xi^2\sum_{t=0}^{T-1}\eta_t^2
= V_T^{\mathrm{SGD}}.
\end{equation*}

From the drift-good reduction above, on $\mathsf{G}_{\mathrm{drift}}$ and $\{\tau_{k+1}\le T\}$ we have
\begin{equation*}
\min_{0\le m\le n}\ \sum_{t=\sigma_k}^{\sigma_k+m-1} X_t \le -x.
\end{equation*}
If additionally $\{\nu>T-1\}$ holds, then $\nu'>T^{(k)}$ so $m<\nu'$ for all $m\le T^{(k)}$, hence
$\bar M_m=\sum_{j=0}^{m-1}Y_j=\sum_{t=\sigma_k}^{\sigma_k+m-1}X_t$ for all $m\le T^{(k)}$.
Therefore,
\begin{equation*}
\{\tau_{k+1}\le T\}\cap\{\nu>T-1\}\cap \mathsf{G}_{\mathrm{drift}}
\subseteq
\left\{\min_{0\le m\le T^{(k)}}\bar M_m \le -x\right\}.
\end{equation*}

\emph{Freedman's inequality (maximal form) applied to $(\bar M_m)$:}
Equivalently, apply Freedman's inequality to the martingale $(-\bar M_m)$ to bound the lower-tail event.
With increment bound $2\Delta$ and variance bound $V_T^{\mathrm{SGD}}$, we obtain
\begin{equation*}
\mab P\!\left[\min_{0\le m\le T^{(k)}}\bar M_m \le -x \,\middle|\, \mac F_{\sigma_k}\right]
\le
\exp\!\left(
-\frac{x^2}{
2\Bigl(V_T^{\mathrm{SGD}} + \frac{2\Delta}{3}x\Bigr)}
\right).
\end{equation*}
Consequently,
\begin{equation*}
\mab P[\tau_{k+1}\le T,\ \nu>T-1,\ \mathsf{G}_{\mathrm{drift}}\mid \mac F_{\sigma_k}]
\le
\exp\!\left(
-\frac{x^2}{
2\Bigl(V_T^{\mathrm{SGD}} + \frac{2\Delta}{3}x\Bigr)}
\right).
\end{equation*}

\paragraph{Union bound completion.}
Finally,
\begin{align*}
\mab P[\tau_{k+1}\le T\mid \mac F_{\sigma_k}]
&\le
\mab P[\nu\le T-1\mid \mac F_{\sigma_k}]
+\mab P[\mathsf{G}_{\mathrm{drift}}^c\mid \mac F_{\sigma_k}]
+\mab P[\tau_{k+1}\le T,\ \nu>T-1,\ \mathsf{G}_{\mathrm{drift}}\mid \mac F_{\sigma_k}]\\
&\le
\delta_{\mathrm{upd}}
+\delta
+\exp\!\left(
-\frac{x^2}{
2\Bigl(V_T^{\mathrm{SGD}} + \frac{2\Delta}{3}x\Bigr)}
\right),
\end{align*}
which is exactly Proposition~\ref{prop:appF_ass2_from_sgd}.
\end{proof}
\begin{remark}[How the proofs interface with momentum / AdamW]
\label{rem:momentum_adam_interface}
Theorem~\ref{thm:sign_lockin_T_init} is invoked through only two ingredients:
(i) the bounded-update condition (Assumption~\ref{ass:bounded_update}), and
(ii) the Re-entry control (Assumption~\ref{ass:reentry_T}).
In contrast, Proposition~\ref{prop:appF_ass2_from_sgd} is merely a \emph{scheduled-SGD sufficient condition} for verifying (ii).
Therefore, any optimizer for which one can verify (i)--(ii) yields the same geometric-tail conclusion.
As a generic template, consider a preconditioned momentum recursion
\begin{equation}
p_{t+1}=\beta^{\mathrm{mom}}_t\,p_t-\eta_t D_t g_t,~~ v_{t+1}=v_t+p_{t+1},
\label{eq:momentum_interface}
\end{equation}
with $p_0=0$ and a diagonal $\mac F_t$-measurable preconditioner $D_t$.
Since $v_{t+1}-v_t=p_{t+1}$, Assumption~\ref{ass:bounded_update} holds on any good event on which
\begin{equation}
\|p_{t+1}\|_\infty \le \Delta ~~ \text{for all } t\le T-1,
\label{eq:momentum_interface_bounded_p}
\end{equation}
(with failure probability given by the complement of the good event).
A convenient sufficient condition for Eq.~\eqref{eq:momentum_interface_bounded_p} is, for example, the conjunction of
$\|\eta_t D_t g_t\|_\infty \le (1-\beta)\Delta$ for all $t\le T-1$ and
$\sup_{t\le T-1}|\beta^{\mathrm{mom}}_t|\le \beta<1$ (so that Eq.~\eqref{eq:momentum_interface_bounded_p} follows by a
simple induction starting from $p_0=0$).
For Adam/AdamW, one may instantiate Eq.~\eqref{eq:momentum_interface} with
$D_t=\mathrm{diag}\!\bigl((\sqrt{\hat s_t}+\epsilon_{\mathrm{adam}})^{-1}\bigr)$ and $g_t=\hat m_t$.
Thus it suffices (along the trajectory / on a good event) to control $\|\hat m_t\|_\infty$,
to ensure a uniform lower bound $\sqrt{\hat s_t}+\epsilon_{\mathrm{adam}}\ge c>0$,
and to verify Eq.~\eqref{eq:momentum_interface_bounded_p} for the resulting effective step $p_{t+1}$.
For AdamW, the decoupled weight-decay term can be absorbed into the same bounded-update event
(e.g., by including it in the definition of $p_{t+1}$ and bounding its $l_\infty$-magnitude).
\end{remark}

\subsection{Practical Implication of Sign Lock-In Theory}
\label{app:cors}

\noindent
In this section, we translate our theoretical results into design principles for modern training pipelines.
The key observation is that, by Theorem~\ref{thm:sign_lockin_T_init}, the statistics of effective sign flips are characterized by the initial-hit factor $h_T$ and the re-entry ratio $g_T$.
For the initial-hit factor $h_T$, Appendix~\ref{app:sign_foundation} and Proposition~\ref{prop:h_init_general} provide an initialization-dominated upper bound, showing that lock-in becomes stronger as the probability of reaching the boundary band decreases.
For the re-entry ratio $g_T$, satisfying Assumption~\ref{ass:reentry_T} is essential, and Proposition~\ref{prop:appF_ass2_from_sgd} justifies this assumption by giving a sufficient condition that upper bounds $g_T$ in terms of SGD and schedule-dependent cumulative quantities.
Below, through these two controlling factors, we systematically summarize how learning-rate schedules, batch size, width, and scale-invariant mechanisms shift lock-in in either direction.

\begin{itemize}
\item Appendix~\ref{app:lr_ordering} compares learning-rate schedules by instantiating the $g_T$ upper bound from Proposition~\ref{prop:appF_ass2_from_sgd} with the corresponding schedule-dependent cumulative quantities, and summarizes which schedules tend to yield stronger lock-in.
\item Appendix~\ref{app:practical_implication_batch_model_size} explains how increasing the minibatch size and model size suppresses stochastic gradient noise, thereby reducing re-entry behavior and decreasing $g_T$, which strengthens lock-in.
\item Appendix~\ref{app:practical_implication_scaleinv} summarizes how scale-invariant mechanisms stabilize effective updates and thereby support the conditions in Assumption~\ref{ass:bounded_update} and Assumption~\ref{ass:reentry_T}, resulting in stronger lock-in.
\item Appendix~\ref{sec:D4_4} presents auxiliary consequences of sign dynamics, including an initial-hit bound that does not require an outer start assumption and the resulting front-loaded nature of sign changes.
  \item Appendix~\ref{sec:D4_5} provides numerical validation of these practical insights through the learning-rate schedule, batch-size, and width sweeps, together with fitted trends $(\hat h,\hat g)$.
\end{itemize}

\subsubsection{Ordering of Learning-Rate Schedules Under Fixed Computing Resource}
\label{app:lr_ordering}

We compare four learning-rate schedules under the same training horizon $T$
and the same peak step size $\eta_{\max}$:
a constant learning rate,
warmup with cosine decay,
warmup with exponential decay,
and warmup with inverse decay.
Our goal is to order the resulting re-entry bounds using the schedule-aware
sufficient condition of Proposition~\ref{prop:appF_ass2_from_sgd}
(Appendix~\ref{app:bn_reentry}),
which verifies Assumption~\ref{ass:reentry_T} and hence yields the geometric-tail
conclusion of Theorem~\ref{thm:sign_lockin_T_init}.

\paragraph{Reminder (where $g_T$ enters).}
Assumption~\ref{ass:reentry_T} postulates the existence of a re-entry bound
$g_T$ such that
\begin{equation*}
\mab{P}[\tau_{k+1}\le T \mid \mac{F}_{\sigma_k}] \le g_T .
\end{equation*}
Theorem~\ref{thm:sign_lockin_T_init} then converts this bound into a geometric-tail
estimate for both $\mab{P}(\tau_k\le T)$ and
$\mab{P}(K_T^{\mathrm{eff}}(\rho)\ge k)$.
Appendix~\ref{app:bn_reentry} provides a schedule-aware sufficient condition:
Proposition~\ref{prop:appF_ass2_from_sgd} upper-bounds the re-entry probability
by an explicit quantity $g_T^{\mathrm{SGD}}$, which depends on the learning-rate
schedule only through the cumulative quantities
\begin{equation*}
\Lambda_T = \sum_{t<T}\eta_t,
~~
\Lambda_T^{(2)} = \sum_{t<T}\eta_t^2 .
\end{equation*}

\paragraph{Schedule-specific notation.}
To distinguish schedules, we write
\begin{equation*}
g_{T,\mathrm{const}},\quad
g_{T,\mathrm{cos}},\quad
g_{T,\mathrm{exp}},\quad
g_{T,\mathrm{inv}}
\end{equation*}
for the re-entry bounds associated with
a constant learning rate,
warmup with cosine decay,
warmup with exponential decay,
and warmup with inverse decay, respectively.
When referring to the sufficient bound in
Proposition~\ref{prop:appF_ass2_from_sgd},
we analogously write
$g_{T,\mathrm{const}}^{\mathrm{SGD}}$,
$g_{T,\mathrm{cos}}^{\mathrm{SGD}}$,
$g_{T,\mathrm{exp}}^{\mathrm{SGD}}$,
and
$g_{T,\mathrm{inv}}^{\mathrm{SGD}}$.

\begin{lemma}[Monotonicity of the SGD re-entry bound]\label{appI:lem:I1_monotone}
Fix the constants
$\rho,\epsilon,\Delta,\delta,\delta_{\mathrm{upd}},L_{\mathrm{sm}},\xi,\Delta\mac{L}_{\max}$
as in Proposition~\ref{prop:appF_ass2_from_sgd}.
For a schedule $\{\eta_t\}_{t=0}^{T-1}$, define
\begin{equation*}
\Lambda^{(1)}_{T}:=\sum_{t=0}^{T-1}\eta_t,
~~
\Lambda^{(2)}_{T}:=\sum_{t=0}^{T-1}\eta_t^2 .
\end{equation*}
If two schedules satisfy
\begin{equation*}
\Lambda^{(1)}_{T}\le \widetilde{\Lambda}^{(1)}_{T},
~~
\Lambda^{(2)}_{T}\le \widetilde{\Lambda}^{(2)}_{T},
\end{equation*}
then their schedule-specific bounds satisfy
\begin{equation*}
g_T^{\mathrm{SGD}} \le \widetilde{g}_T^{\mathrm{SGD}} .
\end{equation*}
Moreover, if at least one inequality is strict and the effective margin
$a_T=\bigl((\rho-\epsilon)-\widetilde{B}_T^{\mathrm{SGD}}\bigr)_+$
for the larger schedule is strictly positive, then the inequality is strict.
\end{lemma}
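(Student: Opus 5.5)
The plan is to treat $g_T^{\mathrm{SGD}}$ in Eq.~\eqref{eq:appF_BT_sgd} as a composition of monotone maps of $(\Lambda^{(1)}_T,\Lambda^{(2)}_T)$, with all other constants held fixed. We then push the two hypotheses through each layer in turn.

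\textbf{Step 1: the drift budget.} The first step is to show that $B_T^{\mathrm{SGD}}$ is nondecreasing in both arguments. We have $\bar B_T=\sqrt{\Lambda^{(1)}_T S_T}$ with
$S_T=2\Delta\mac L_{\max}+\xi^2\Lambda^{(1)}_T+L_{\mathrm{sm}}\xi^2\Lambda^{(2)}_T$. Both factors are nonnegative, since $\Delta\mac L_{\max}\ge 0$ by Assumption~\ref{ass:appF_descent} and all coefficients are nonnegative, and both are nondecreasing in $(\Lambda^{(1)}_T,\Lambda^{(2)}_T)$. So their product and its square root are monotone. Hence $B_T^{\mathrm{SGD}}=\bar B_T/\delta\le \widetilde B_T^{\mathrm{SGD}}$, and the margin satisfies $a_T=((\rho-\epsilon)-B_T^{\mathrm{SGD}})_+\ge \widetilde a_T$, because $x\mapsto x_+$ is nondecreasing. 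Separately, $V_T^{\mathrm{SGD}}=\xi^2\Lambda^{(2)}_T\le \widetilde V_T^{\mathrm{SGD}}$.

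\textbf{Step 2: the exponential term.} Define $\phi(a,V):=a^2/\bigl(2(V+\tfrac{2\Delta}{3}a)\bigr)$ for $a\ge0$, $V\ge0$, using the convention $\phi(0,V)=0$ (this also covers $V=0$). Then $\phi$ is nonincreasing in $V$. Writing $\phi=a/\bigl(2(V/a+2\Delta/3)\bigr)$ for $a>0$, the numerator increases in $a$ while the denominator decreases in $a$, so $\phi$ is nondecreasing in $a$. Combining this with Step~1 gives $\phi(a_T,V_T)\ge\phi(\widetilde a_T,\widetilde V_T)$. Therefore $\exp(-\phi)$ is no larger for the smaller schedule, and adding the common constants $\delta_{\mathrm{upd}}+\delta$ yields $g_T^{\mathrm{SGD}}\le\widetilde g_T^{\mathrm{SGD}}$.

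\textbf{Step 3: strictness.} This is the only delicate point. Assume $\widetilde a_T>0$ and that at least one of the hypotheses is strict. If $\Lambda^{(2)}_T<\widetilde\Lambda^{(2)}_T$, then $V_T<\widetilde V_T$ strictly, provided $\xi>0$. If instead $\Lambda^{(1)}_T<\widetilde\Lambda^{(1)}_T$, then $\bar B_T<\widetilde{\bar B}_T$ strictly, provided $\widetilde S_T>0$. This holds when $\xi>0$ or $\Delta\mac L_{\max}>0$, and in that case $a_T>\widetilde a_T>0$. On the region $a>0$, $\phi$ is strictly decreasing in $V$ and strictly increasing in $a$, so one strict input gives a strict inequality in $\phi$ and hence in $g$.

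The main obstacle is the degenerate regime, e.g.\ $\xi=0$ combined with a strict inequality only in $\Lambda^{(2)}_T$, where the bound truly does not change. I would handle it by stating the mild nondegeneracy ($\xi>0$ and $\Delta\mac L_{\max}>0$) as implicit in ``strict''. Everything else reduces to routine sign checks of partial derivatives.
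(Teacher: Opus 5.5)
Your proposal is correct and follows the same route as the paper. Both arguments push the hypotheses through the same chain: $V_T^{\mathrm{SGD}}$ and $B_T^{\mathrm{SGD}}$ are nondecreasing in $(\Lambda^{(1)}_T,\Lambda^{(2)}_T)$, so $a_T$ is nonincreasing, so the exponential term is nondecreasing. You are more careful than the paper in two places: the paper never checks that the exponent is increasing in $a_T$, which appears in both its numerator and denominator, and its strictness clause silently needs a nondegeneracy condition such as $\xi>0$. That condition alone covers both strict cases, which makes your requirement $\Delta\mathcal{L}_{\max}>0$ unnecessary; the degenerate $\xi=0$ regime you flag is exactly where the paper's strictness claim fails.
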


\begin{proof}
By Proposition~\ref{prop:appF_ass2_from_sgd},
\begin{equation*}
g_T^{\mathrm{SGD}}
=
\delta_{\mathrm{upd}}+\delta
+\exp\!\left(
-\frac{a_T^2}{2\left(V_T^{\mathrm{SGD}}+\frac{2\Delta}{3}a_T\right)}
\right),
\end{equation*}
where
\begin{equation*}
V_T^{\mathrm{SGD}}=\xi^2\Lambda^{(2)}_{T},
~~
B_T^{\mathrm{SGD}}
=\frac{1}{\delta}\sqrt{
\Lambda^{(1)}_{T}
\Bigl(2\Delta\mac{L}_{\max}+\xi^2\Lambda^{(1)}_{T}+L_{\mathrm{sm}}\xi^2\Lambda^{(2)}_{T}\Bigr)
}.
\end{equation*}
Both $V_T^{\mathrm{SGD}}$ and $B_T^{\mathrm{SGD}}$ are nondecreasing in
$\bigl(\Lambda^{(1)}_{T},\Lambda^{(2)}_{T}\bigr)$, hence
$a_T=\bigl((\rho-\epsilon)-B_T^{\mathrm{SGD}}\bigr)_+$
is nonincreasing in those quantities.
Therefore increasing either $\Lambda^{(1)}_{T}$ or $\Lambda^{(2)}_{T}$
can only weaken the exponent (make it less negative), so
$g_T^{\mathrm{SGD}}$ is nondecreasing in
$\bigl(\Lambda^{(1)}_{T},\Lambda^{(2)}_{T}\bigr)$.
This proves the first claim.
The strictness statement follows when at least one inequality is strict and
the larger schedule has $a_T>0$, so the exponential term changes strictly.
\end{proof}

\begin{proposition}[Learning-rate dependence of the re-entry ratio]
\label{appI:cor:lr_dependence_g}
Assume the hypotheses of Proposition~\ref{prop:appF_ass2_from_sgd} and fix the constants
$\rho,\epsilon,\Delta,\delta,\delta_{\mathrm{upd}},L_{\mathrm{sm}},\xi,\Delta\mac{L}_{\max}$
as in Lemma~\ref{appI:lem:I1_monotone}.
Let $\{\eta_t\}_{t=0}^{T-1}$ and $\{\widetilde\eta_t\}_{t=0}^{T-1}$ be two learning-rate schedules, and define
\begin{equation*}
\Lambda^{(1)}_{T}:=\sum_{t=0}^{T-1}\eta_t,
\quad
\Lambda^{(2)}_{T}:=\sum_{t=0}^{T-1}\eta_t^2,
\qquad
\widetilde\Lambda^{(1)}_{T}:=\sum_{t=0}^{T-1}\widetilde\eta_t,
\quad
\widetilde\Lambda^{(2)}_{T}:=\sum_{t=0}^{T-1}\widetilde\eta_t^2.
\end{equation*}
If
\begin{equation*}
\Lambda^{(1)}_{T}\le \widetilde\Lambda^{(1)}_{T},
\qquad
\Lambda^{(2)}_{T}\le \widetilde\Lambda^{(2)}_{T},
\end{equation*}
then the corresponding schedule-aware re-entry bounds satisfy
\begin{equation*}
g_T^{\mathrm{SGD}} \le \widetilde g_T^{\mathrm{SGD}}.
\end{equation*}
Moreover, if at least one of the two inequalities above is strict and the effective margin
associated with the larger schedule is nontrivial,
\begin{equation*}
\widetilde a_T := \bigl((\rho-\epsilon)-\widetilde B_T^{\mathrm{SGD}}\bigr)_+ > 0,
\end{equation*}
then the inequality is strict, i.e.,
\begin{equation*}
g_T^{\mathrm{SGD}} < \widetilde g_T^{\mathrm{SGD}}.
\end{equation*}

Furthermore, suppose that the learning-rate schedules are related by a multiplicative rescaling
$\widetilde\eta_t = c\,\eta_t$ with a constant $c>1$.
Then the cumulative quantities satisfy
$\widetilde\Lambda^{(1)}_{T}=c\,\Lambda^{(1)}_{T}$ and
$\widetilde\Lambda^{(2)}_{T}=c^{2}\,\Lambda^{(2)}_{T}$,
and consequently the corresponding re-entry bounds obey
\begin{equation*}
\widetilde g_T^{\mathrm{SGD}} \ge g_T^{\mathrm{SGD}}.
\end{equation*}
If, in addition, the effective margin for the rescaled schedule satisfies
$\widetilde a_T>0$, the inequality is strict.
\end{proposition}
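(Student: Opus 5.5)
The first part of Proposition~\ref{appI:cor:lr_dependence_g} is a direct restatement of Lemma~\ref{appI:lem:I1_monotone}. I will invoke that lemma with the two schedules $\{\eta_t\}$ and $\{\widetilde\eta_t\}$. The fixed constants $\rho,\epsilon,\Delta,\delta,\delta_{\mathrm{upd}},L_{\mathrm{sm}},\xi,\Delta\mac L_{\max}$ are the same for both. So the only schedule dependence in $g_T^{\mathrm{SGD}}$ of Eq.~\eqref{eq:appF_BT_sgd} enters through $(\Lambda^{(1)}_T,\Lambda^{(2)}_T)$. The non-strict and strict inequalities then follow verbatim.

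For completeness, I will briefly re-verify the monotonicity chain, since the strict case needs care.
\begin{itemize}
\item $V_T^{\mathrm{SGD}}=\xi^2\Lambda^{(2)}_T$ is nondecreasing.
\item $B_T^{\mathrm{SGD}}=\delta^{-1}\sqrt{\Lambda^{(1)}_T(2\Delta\mac L_{\max}+\xi^2\Lambda^{(1)}_T+L_{\mathrm{sm}}\xi^2\Lambda^{(2)}_T)}$ is nondecreasing, so $a_T$ is nonincreasing.
\item The map $\phi(a,V)=a^2/\bigl(2(V+\tfrac{2\Delta}{3}a)\bigr)$ is nondecreasing in $a\ge 0$ and nonincreasing in $V$. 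Indeed, $\partial_a\phi\propto a(2V+\tfrac{2\Delta}{3}a)\ge 0$.
\end{itemize}
Hence $\exp(-\phi)$ is nondecreasing in both cumulative sums.

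For strictness, suppose $\widetilde a_T>0$. Then $a_T\ge\widetilde a_T>0$, so $\phi$ is strictly monotone in both arguments in the relevant range. If $\Lambda^{(2)}_T<\widetilde\Lambda^{(2)}_T$, then $V$ changes strictly. If instead $\Lambda^{(1)}_T<\widetilde\Lambda^{(1)}_T$, then $B$ changes strictly; this needs $\Delta\mac L_{\max}>0$ or $\xi>0$, and I will note this nondegeneracy as implicit. Since $a_T>0$, the positive part is inactive, so $a$ changes strictly as well. This is the only delicate step: making sure a strict change in a cumulative sum propagates through the positive part $(\cdot)_+$ and the square root. The positivity of $\widetilde a_T$ handles the former, and positivity of the prefactor handles the latter.

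For the rescaling claim, I set $\widetilde\eta_t=c\eta_t$ with $c>1$ and compute $\widetilde\Lambda^{(1)}_T=c\Lambda^{(1)}_T\ge\Lambda^{(1)}_T$ and $\widetilde\Lambda^{(2)}_T=c^2\Lambda^{(2)}_T\ge\Lambda^{(2)}_T$, using $\eta_t\ge0$. The previous part then gives $\widetilde g_T^{\mathrm{SGD}}\ge g_T^{\mathrm{SGD}}$.

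For strictness under rescaling, observe that if $\Lambda^{(1)}_T>0$, the first inequality is strict. If $\Lambda^{(1)}_T=0$, then both schedules vanish, so $\widetilde B_T^{\mathrm{SGD}}=0$ and $V=0$, and both bounds coincide. I will therefore state the strict case under a nonzero schedule, which is the intended setting.

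One caveat: the rescaled schedule must still satisfy $c\eta_t\le 1/L_{\mathrm{sm}}$ for Proposition~\ref{prop:appF_ass2_from_sgd} to certify it as a re-entry bound. The comparison of the closed-form quantities themselves holds regardless.
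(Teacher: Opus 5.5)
Your proposal follows the paper's proof, which simply invokes Lemma~\ref{appI:lem:I1_monotone} after noting that $g_T^{\mathrm{SGD}}$ depends on the schedule only through $(\Lambda^{(1)}_T,\Lambda^{(2)}_T)$; your caveats (a nonzero schedule, $\Delta\mathcal{L}_{\max}>0$ or $\xi>0$, and $c\,\eta_t\le 1/L_{\mathrm{sm}}$) are genuine conditions that the paper's strictness claims silently rely on. One small fix: in the strict case $\Lambda^{(2)}_T<\widetilde\Lambda^{(2)}_T$ with $\Lambda^{(1)}_T=\widetilde\Lambda^{(1)}_T$ you also need $\xi>0$, because for $\xi=0$ neither $V_T^{\mathrm{SGD}}$ nor $B_T^{\mathrm{SGD}}$ changes and the two bounds coincide.
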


\begin{proof}
The claim follows directly from Lemma~\ref{appI:lem:I1_monotone}.
By Proposition~\ref{prop:appF_ass2_from_sgd}, the explicit bound
$g_T^{\mathrm{SGD}}$ depends on the learning-rate schedule only through the cumulative quantities
$\Lambda^{(1)}_{T}$ and $\Lambda^{(2)}_{T}$, and is nondecreasing in each of them.
The strictness statement follows from the strict monotonicity part of
Lemma~\ref{appI:lem:I1_monotone} whenever the effective margin is positive.
\end{proof}

\begin{lemma}[Constant learning rate is maximal]\label{appI:lem:I2_const_max}
Fix $T$ and $\eta_{\max}>0$.
Among all schedules satisfying $0\le \eta_t\le \eta_{\max}$ for all $t$,
the constant learning rate $\eta_t\equiv\eta_{\max}$ maximizes both
$\Lambda^{(1)}_{T}$ and $\Lambda^{(2)}_{T}$.
\end{lemma}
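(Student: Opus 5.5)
The argument is a termwise comparison inside the two sums. Fix any admissible schedule $\{\eta_t\}_{t=0}^{T-1}$ with $0\le \eta_t\le \eta_{\max}$ for every $t$.

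For the first cumulative quantity, the bound $\eta_t\le\eta_{\max}$ sums directly to
\begin{equation*}
\Lambda^{(1)}_T=\sum_{t=0}^{T-1}\eta_t\le T\eta_{\max}.
\end{equation*}
For the second, the map $x\mapsto x^2$ is nondecreasing on $[0,\infty)$. Since $\eta_t\ge 0$, this gives $\eta_t^2\le\eta_{\max}^2$ termwise, and summing yields $\Lambda^{(2)}_T\le T\eta_{\max}^2$.

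The constant schedule $\eta_t\equiv\eta_{\max}$ is itself admissible. It attains both values exactly: $\Lambda^{(1)}_T=T\eta_{\max}$ and $\Lambda^{(2)}_T=T\eta_{\max}^2$. So it maximizes both quantities at the same time.

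The one point that needs care is the nonnegativity constraint $\eta_t\ge 0$, which is imposed in Assumption~\ref{ass:appF_descent}(iv). Without it, a negative $\eta_t$ with $|\eta_t|>\eta_{\max}$ could make $\eta_t^2$ exceed $\eta_{\max}^2$, and the second bound would fail. With the constraint, the squaring step is immediate.

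It is also worth noting why this matters downstream. By Lemma~\ref{appI:lem:I1_monotone}, the constant schedule then has the largest bound $g_T^{\mathrm{SGD}}$ among all schedules with the same peak step size. This is the ordering the subsequent comparison of learning-rate schedules relies on.
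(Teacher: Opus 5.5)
Your proof is correct and matches the paper's argument: you bound $\eta_t\le\eta_{\max}$ and $\eta_t^2\le\eta_{\max}^2$ termwise, sum over $t<T$, and note that the admissible constant schedule attains both bounds with equality. The nonnegativity you flag is already part of the lemma's own hypothesis $0\le\eta_t\le\eta_{\max}$, so you do not need to appeal to Assumption~\ref{ass:appF_descent}(iv).
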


\begin{proof}
For any schedule with $0\le \eta_t\le \eta_{\max}$,
\begin{equation*}
\Lambda^{(1)}_{T}=\sum_{t=0}^{T-1}\eta_t \le \sum_{t=0}^{T-1}\eta_{\max}=T\eta_{\max},
~~
\Lambda^{(2)}_{T}=\sum_{t=0}^{T-1}\eta_t^2 \le \sum_{t=0}^{T-1}\eta_{\max}^2=T\eta_{\max}^2,
\end{equation*}
with equality for the constant schedule.
\end{proof}

\begin{lemma}[Cosine decay yields linear squared-step accumulation (exact form)]
\label{appI:lem:I3_cos_L2_exact}
Let $T_{\mathrm{wu}}$ denote the warmup length and $N:=T-T_{\mathrm{wu}}$. Assume $N\ge 2$.
For warmup with cosine decay,
\begin{equation*}
\eta_{T_{\mathrm{wu}}+k}
=
\eta_{\max}\frac{1+\cos(\pi k/N)}{2},
~~ k=0,\dots,N-1,
\end{equation*}
we have the exact identity
\begin{equation*}
\sum_{k=0}^{N-1}\eta_{T_{\mathrm{wu}}+k}^2
=
\frac38\,\eta_{\max}^2\,N+\frac12\,\eta_{\max}^2.
\end{equation*}
\end{lemma}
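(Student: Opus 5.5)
The plan is a direct computation: expand the square of the cosine schedule, reduce everything to two elementary trigonometric sums, and evaluate each in closed form. Factoring out $\eta_{\max}^2$, it suffices to show
\begin{equation*}
\sum_{k=0}^{N-1}\Bigl(\frac{1+\cos(\pi k/N)}{2}\Bigr)^2=\frac38 N+\frac12 .
\end{equation*}

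First, expand the square:
\begin{equation*}
\frac14\sum_{k=0}^{N-1}\bigl(1+2\cos(\pi k/N)+\cos^2(\pi k/N)\bigr).
\end{equation*}
Next, apply the half-angle identity $\cos^2\theta=\tfrac12(1+\cos 2\theta)$. The summand then becomes
\begin{equation*}
\tfrac32+2\cos(\pi k/N)+\tfrac12\cos(2\pi k/N).
\end{equation*}
Summing the constant part over $k$ contributes $\tfrac32 N$.

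The two oscillatory sums are evaluated separately.

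For $\sum_{k=0}^{N-1}\cos(2\pi k/N)$, this is the real part of the sum of all $N$-th roots of unity. Since $N\ge 2$, the primitive root is $\neq 1$, so the sum vanishes. This is the only place the hypothesis $N\ge 2$ is used.

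For $\sum_{k=0}^{N-1}\cos(\pi k/N)$, use the reflection $k\mapsto N-k$. It gives $\cos(\pi(N-k)/N)=-\cos(\pi k/N)$, so the extended sum $\sum_{k=0}^{N}\cos(\pi k/N)$ is antisymmetric under the reflection and hence equals $0$. Removing the $k=N$ term, which is $\cos\pi=-1$, gives $\sum_{k=0}^{N-1}\cos(\pi k/N)=1$. Alternatively, a geometric-series computation of $\mathrm{Re}\sum_k e^{i\pi k/N}=\mathrm{Re}\,\frac{2}{1-e^{i\pi/N}}=1$ confirms this.

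Assembling the pieces gives
\begin{equation*}
\frac14\Bigl(\tfrac32 N+2\cdot 1+\tfrac12\cdot 0\Bigr)=\frac38 N+\frac12 .
\end{equation*}
Multiplying back by $\eta_{\max}^2$ yields the claimed identity.

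There is no real obstacle. The only point needing care is the boundary term in the first-harmonic sum, i.e.\ the convention that the schedule runs over $k=0,\dots,N-1$ and excludes $k=N$. This is what produces the additive $\tfrac12\eta_{\max}^2$ correction to the leading $\tfrac38\eta_{\max}^2 N$ term.
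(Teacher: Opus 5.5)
Your proposal is correct and follows the paper's proof. It uses the same expansion $\bigl(\tfrac{1+\cos x}{2}\bigr)^2=\tfrac38+\tfrac12\cos x+\tfrac18\cos 2x$, the same roots-of-unity argument (needing $N\ge 2$) for $\sum_{k=0}^{N-1}\cos(2\pi k/N)=0$, and the same value $\sum_{k=0}^{N-1}\cos(\pi k/N)=1$; the only difference is cosmetic, since you get that last sum from the reflection $k\mapsto N-k$ (or a geometric series), whereas the paper substitutes $\theta=\pi/N$ into the closed form $\sum_{k=0}^{N-1}\cos(k\theta)=\frac{\sin(N\theta/2)\cos((N-1)\theta/2)}{\sin(\theta/2)}$.
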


\begin{proof}
Let $\vartheta_k:=\pi k/N$ and write
\begin{equation*}
\eta_{T_{\mathrm{wu}}+k}^2
=
\eta_{\max}^2\left(\frac{1+\cos\vartheta_k}{2}\right)^2.
\end{equation*}
Using $\cos^2 x=\frac{1+\cos(2x)}{2}$,
\begin{equation*}
\left(\frac{1+\cos x}{2}\right)^2
=
\frac38+\frac12\cos x+\frac18\cos(2x).
\end{equation*}
Hence
\begin{equation*}
\sum_{k=0}^{N-1}\eta_{T_{\mathrm{wu}}+k}^2
=
\eta_{\max}^2\left(
\frac38N+\frac12\sum_{k=0}^{N-1}\cos\vartheta_k
+\frac18\sum_{k=0}^{N-1}\cos(2\vartheta_k)
\right).
\end{equation*}
We evaluate the trigonometric sums exactly.
First, (using $N\ge 2$ so that $e^{i2\pi/N}\neq 1$),
\begin{equation*}
\sum_{k=0}^{N-1}\cos\!\left(\frac{2\pi k}{N}\right)
=\Re\sum_{k=0}^{N-1}e^{i2\pi k/N}
=\Re\left(\frac{1-(e^{i2\pi/N})^{N}}{1-e^{i2\pi/N}}\right)=0.
\end{equation*}
Second, using the standard closed form
$\sum_{k=0}^{N-1}\cos(k\theta)=\frac{\sin(N\theta/2)\cos((N-1)\theta/2)}{\sin(\theta/2)}$
with $\theta=\pi/N$, we obtain
\begin{equation*}
\sum_{k=0}^{N-1}\cos\!\left(\frac{\pi k}{N}\right)
=
\frac{\sin(\pi/2)\cos((N-1)\pi/(2N))}{\sin(\pi/(2N))}
=
\frac{\cos(\pi/2-\pi/(2N))}{\sin(\pi/(2N))}
=
1.
\end{equation*}
Substituting gives
\begin{equation*}
\sum_{k=0}^{N-1}\eta_{T_{\mathrm{wu}}+k}^2
=
\eta_{\max}^2\left(\frac38N+\frac12\cdot 1+\frac18\cdot 0\right)
=
\frac38\,\eta_{\max}^2\,N+\frac12\,\eta_{\max}^2.
\end{equation*}
\end{proof}

\begin{lemma}[Exponential versus cosine decay]
\label{appI:lem:I4_cos_vs_exp_A}
With the same $T_{\mathrm{wu}}$ and $N:=T-T_{\mathrm{wu}}$,
consider warmup with exponential decay
$\eta_{T_{\mathrm{wu}}+k}=\eta_{\max}\gamma^k$ ($0<\gamma<1$).
If both
\begin{equation*}
\frac{N+1}{2}\le \frac{1-\gamma^{N}}{1-\gamma}
~~\text{and}~~
\frac38N+\frac12\le \frac{1-\gamma^{2N}}{1-\gamma^2},
\end{equation*}
then
\begin{equation*}
g_{T,\mathrm{cos}}^{\mathrm{SGD}}
\le
g_{T,\mathrm{exp}}^{\mathrm{SGD}} .
\end{equation*}
\end{lemma}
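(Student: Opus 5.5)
The plan is to reduce the comparison to Lemma~\ref{appI:lem:I1_monotone}: by Proposition~\ref{prop:appF_ass2_from_sgd}, $g_T^{\mathrm{SGD}}$ depends on the schedule only through $\Lambda^{(1)}_T$ and $\Lambda^{(2)}_T$, and is nondecreasing in both. It therefore suffices to show
\[
\Lambda^{(1)}_{T,\mathrm{cos}}\le \Lambda^{(1)}_{T,\mathrm{exp}}
\quad\text{and}\quad
\Lambda^{(2)}_{T,\mathrm{cos}}\le \Lambda^{(2)}_{T,\mathrm{exp}}.
\]
Both schedules share the same warmup segment $t<T_{\mathrm{wu}}$, with the same $T_{\mathrm{wu}}$ and $\eta_{\max}$. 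Their warmup contributions to each cumulative sum are therefore identical and cancel. Only the decay phase of length $N$ needs to be compared.

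For the squared sums, Lemma~\ref{appI:lem:I3_cos_L2_exact} gives the cosine decay-phase contribution exactly:
\[
\eta_{\max}^2\Bigl(\tfrac38 N+\tfrac12\Bigr).
\]
The exponential decay-phase contribution is the geometric sum
\[
\eta_{\max}^2\sum_{k=0}^{N-1}\gamma^{2k}=\eta_{\max}^2\,\frac{1-\gamma^{2N}}{1-\gamma^2}.
\]
The second hypothesis is precisely the inequality between these two quantities after dividing by $\eta_{\max}^2$.

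For the linear sums, I first compute the cosine decay-phase sum exactly in the same way as in the proof of Lemma~\ref{appI:lem:I3_cos_L2_exact}:
\[
\sum_{k=0}^{N-1}\eta_{\max}\,\frac{1+\cos(\pi k/N)}{2}
=\frac{\eta_{\max}}{2}\Bigl(N+\sum_{k=0}^{N-1}\cos(\pi k/N)\Bigr)
=\eta_{\max}\,\frac{N+1}{2}.
\]
This uses the identity $\sum_{k=0}^{N-1}\cos(\pi k/N)=1$, already established there for $N\ge 2$. The exponential decay-phase sum is
\[
\eta_{\max}\,\frac{1-\gamma^N}{1-\gamma},
\]
so the first hypothesis gives the comparison of $\Lambda^{(1)}$.

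Adding back the common warmup contributions yields both required inequalities. Lemma~\ref{appI:lem:I1_monotone}, applied with the cosine schedule as the smaller one and the exponential schedule as the larger one, then gives $g^{\mathrm{SGD}}_{T,\mathrm{cos}}\le g^{\mathrm{SGD}}_{T,\mathrm{exp}}$.

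The only point needing care is that the warmup phases really coincide, including $\eta_{T_{\mathrm{wu}}}=\eta_{\max}$ for both schedules. This holds by construction, since the two schedules differ only in their decay rule, which makes the reduction to the decay phase legitimate. There is no serious obstacle; the one genuine computation is the linear cosine sum, and it reuses the trigonometric identity already proved.
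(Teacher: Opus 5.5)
Your proposal is correct and follows the paper's proof essentially verbatim. You cancel the common warmup and compute the cosine tail sums exactly as $\eta_{\max}(N+1)/2$ and $\eta_{\max}^2(\tfrac38 N+\tfrac12)$, using the trigonometric identities from Lemma~\ref{appI:lem:I3_cos_L2_exact}; the two hypotheses are then exactly the comparisons with the geometric tail sums, and Lemma~\ref{appI:lem:I1_monotone} gives the conclusion.
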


\begin{proof}
Since the warmup phase is common, it suffices to compare the post-warmup tails.
Define the tail cumulative sums
\begin{equation*}
\Lambda^{(1)}_{\mathrm{cos}}
:=\sum_{k=0}^{N-1}\eta_{\max}\frac{1+\cos(\pi k/N)}{2},
~~
\Lambda^{(2)}_{\mathrm{cos}}
:=\sum_{k=0}^{N-1}\left(\eta_{\max}\frac{1+\cos(\pi k/N)}{2}\right)^2,
\end{equation*}
and similarly for the exponential schedule,
\begin{equation*}
\Lambda^{(1)}_{\mathrm{exp}}
:=\sum_{k=0}^{N-1}\eta_{\max}\gamma^k,
~~
\Lambda^{(2)}_{\mathrm{exp}}
:=\sum_{k=0}^{N-1}\eta_{\max}^2\gamma^{2k}.
\end{equation*}
We compute these exactly.
For cosine, using $\sum_{k=0}^{N-1}\cos(\pi k/N)=1$ (as in Lemma~\ref{appI:lem:I3_cos_L2_exact}),
\begin{equation*}
\Lambda^{(1)}_{\mathrm{cos}}
=
\eta_{\max}\cdot\frac12\left(N+\sum_{k=0}^{N-1}\cos(\pi k/N)\right)
=
\eta_{\max}\cdot\frac{N+1}{2}.
\end{equation*}
For cosine squared sum, Lemma~\ref{appI:lem:I3_cos_L2_exact} gives
\begin{equation*}
\Lambda^{(2)}_{\mathrm{cos}}
=
\frac38\,\eta_{\max}^2\,N+\frac12\,\eta_{\max}^2.
\end{equation*}
For exponential,
\begin{equation*}
\Lambda^{(1)}_{\mathrm{exp}}
=
\eta_{\max}\sum_{k=0}^{N-1}\gamma^k
=
\eta_{\max}\frac{1-\gamma^{N}}{1-\gamma},
~~
\Lambda^{(2)}_{\mathrm{exp}}
=
\eta_{\max}^2\sum_{k=0}^{N-1}\gamma^{2k}
=
\eta_{\max}^2\frac{1-\gamma^{2N}}{1-\gamma^2}.
\end{equation*}
Thus the two stated conditions are exactly
$\Lambda^{(1)}_{\mathrm{cos}}\le \Lambda^{(1)}_{\mathrm{exp}}$ and
$\Lambda^{(2)}_{\mathrm{cos}}\le \Lambda^{(2)}_{\mathrm{exp}}$.
Adding the identical warmup contributions preserves these inequalities for
$\Lambda^{(1)}_{T}$ and $\Lambda^{(2)}_{T}$ over the full horizon.
Therefore Lemma~\ref{appI:lem:I1_monotone} implies
$g_{T,\mathrm{cos}}^{\mathrm{SGD}}\le g_{T,\mathrm{exp}}^{\mathrm{SGD}}$.
\end{proof}

\begin{lemma}[Inverse decay is asymptotically minimal]\label{appI:lem:I5_inv_min}
With the same $T_{\mathrm{wu}}$ and $N$,
consider warmup with inverse decay
$\eta_{T_{\mathrm{wu}}+k}=\eta_{\max}(1+k)^{-p}$ with $p\ge\frac12$.
Then
\begin{equation*}
\sum_{k=0}^{N-1}\eta_{T_{\mathrm{wu}}+k}^2
=
\Theta(\eta_{\max}^2\log N)
\quad\text{for }p=\tfrac12,
~~
O(\eta_{\max}^2)
\quad\text{for }p>\tfrac12.
\end{equation*}
In particular, for sufficiently large $N$, the strict inequality below holds
whenever the corresponding bound is nontrivial, e.g., whenever the effective
margin
\begin{equation*}
a_{T,\mathrm{cos}}
:=\bigl((\rho-\epsilon)-B_{T,\mathrm{cos}}^{\mathrm{SGD}}\bigr)_+
\end{equation*}
is strictly positive under the conditions of Proposition~\ref{prop:appF_ass2_from_sgd}, so that  the strictness clause in Lemma~\ref{appI:lem:I1_monotone} applies,
\begin{equation*}
g_{T,\mathrm{inv}}^{\mathrm{SGD}}
<
g_{T,\mathrm{cos}}^{\mathrm{SGD}} .
\end{equation*}
\end{lemma}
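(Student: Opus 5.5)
The plan is to prove the two asymptotic estimates for the tail sum $\sum_{k=0}^{N-1}\eta_{\max}^2(1+k)^{-2p}$ by comparing it with an integral. Then the strict comparison with cosine follows by verifying the two cumulative inequalities needed in Proposition~\ref{appI:cor:lr_dependence_g} (equivalently Lemma~\ref{appI:lem:I1_monotone}).

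\textbf{Squared-step asymptotics.} Substituting $j=k+1$, the tail equals $\eta_{\max}^2\sum_{j=1}^{N} j^{-2p}$.
\begin{itemize}
\item For $p=\tfrac12$, this is the harmonic number $H_N$. The standard bounds $\log(N+1)\le H_N\le 1+\log N$, obtained by comparing with $\int_1^{N+1}dx/x$ and $1+\int_1^N dx/x$, give $\Theta(\eta_{\max}^2\log N)$.
\item For $p>\tfrac12$, the sum is bounded by $\zeta(2p)\le 1+\int_1^\infty x^{-2p}dx = \tfrac{2p}{2p-1}$, uniformly in $N$. Hence it is $O(\eta_{\max}^2)$.
\end{itemize}

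\textbf{First-moment comparison.} I also need $\Lambda^{(1)}$. By the same integral comparison, $\eta_{\max}\sum_{j=1}^N j^{-p}$ is at most $\eta_{\max}(1+\tfrac{N^{1-p}-1}{1-p})$ for $p<1$. It is $O(\eta_{\max}\log N)$ for $p=1$ and $O(\eta_{\max})$ for $p>1$. In every case $p\ge\tfrac12$ it is $O(\eta_{\max}\sqrt N)$, i.e.\ $o(N)$. Lemma~\ref{appI:lem:I4_cos_vs_exp_A} computed the cosine tail exactly as $\eta_{\max}(N+1)/2$, linear in $N$.

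\textbf{Strict inequality for large $N$.} With a common warmup phase, the full-horizon sums differ only in their tails. The inverse-decay tails are $o(N)$ in both the first and second moments. By Lemma~\ref{appI:lem:I3_cos_L2_exact}, the cosine tails are $\eta_{\max}(N+1)/2$ and $\tfrac38\eta_{\max}^2N+\tfrac12\eta_{\max}^2$. So for $N$ large enough, $\Lambda^{(1)}_{\mathrm{inv}}<\Lambda^{(1)}_{\mathrm{cos}}$ and $\Lambda^{(2)}_{\mathrm{inv}}<\Lambda^{(2)}_{\mathrm{cos}}$, both strictly. Lemma~\ref{appI:lem:I1_monotone} then gives $g^{\mathrm{SGD}}_{T,\mathrm{inv}}\le g^{\mathrm{SGD}}_{T,\mathrm{cos}}$. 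Its strictness clause, applied with the cosine schedule as the larger one and under the hypothesis $a_{T,\mathrm{cos}}>0$, upgrades this to strict inequality.

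\textbf{Main obstacle.} The asymptotics are routine; the delicate step is the strictness clause. Under $a_{T,\mathrm{cos}}>0$, strictly smaller $\Lambda^{(1)},\Lambda^{(2)}$ make $B_T^{\mathrm{SGD}}$ and $V_T^{\mathrm{SGD}}$ strictly smaller. Hence $a_T$ is strictly larger, because $a_{T,\mathrm{cos}}>0$ means the positive-part clipping is inactive. The ratio $a^2/(2(V+\tfrac{2\Delta}{3}a))$ is strictly increasing in $a>0$ and strictly decreasing in $V$, which I would check by differentiating in $a$. This makes the exponential term, and therefore $g^{\mathrm{SGD}}$, strictly smaller for the inverse schedule.

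A second point needs care: the exponent involves $\Lambda^{(1)}$ through $B_T^{\mathrm{SGD}}$, so the $\Lambda^{(2)}$ asymptotics stated in the lemma are not enough on their own. This is why I add the first-moment comparison explicitly.
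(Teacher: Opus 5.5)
Your proposal is correct and follows the paper's proof essentially step for step. You rewrite the tail as $\eta_{\max}^2\sum_{j=1}^N j^{-2p}$, show that both inverse-decay tails are sublinear in $N$, compare them with the exact cosine tails $\eta_{\max}(N+1)/2$ and $\tfrac38\eta_{\max}^2N+\tfrac12\eta_{\max}^2$, use the common warmup, and invoke the strictness clause of Lemma~\ref{appI:lem:I1_monotone}; your explicit integral-comparison constants and uniform $O(\eta_{\max}\sqrt N)$ first-moment bound are just a tidier version of the paper's case analysis. One small caveat concerns your self-check of that clause: $V_T^{\mathrm{SGD}}$ is strictly smaller only when $\xi>0$, and $B_T^{\mathrm{SGD}}$ is strictly smaller only when $\xi>0$ or $\Delta\mathcal{L}_{\max}>0$, so in the fully degenerate case $\xi=\Delta\mathcal{L}_{\max}=0$ the two bounds coincide and the strict inequality fails --- a corner that the paper's Lemma~\ref{appI:lem:I1_monotone} also overlooks.
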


\begin{proof}
We first prove the squared-step accumulation. Write
\begin{equation*}
\sum_{k=0}^{N-1}\eta_{T_{\mathrm{wu}}+k}^2
=
\eta_{\max}^2\sum_{k=0}^{N-1}(1+k)^{-2p}
=
\eta_{\max}^2\sum_{j=1}^{N}j^{-2p}.
\end{equation*}
If $p=\tfrac12$, then $2p=1$ and the harmonic sum yields
$\sum_{j=1}^{N}j^{-1}=\Theta(\log N)$.
If $p>\tfrac12$, then $2p>1$ so $\sum_{j=1}^{\infty}j^{-2p}<\infty$ and hence
$\sum_{j=1}^{N}j^{-2p}=O(1)$.
This proves the first display.

Next, we compare the inverse schedule to cosine to obtain a strict bound on
$g_T^{\mathrm{SGD}}$ for large $N$.
As in Lemma~\ref{appI:lem:I4_cos_vs_exp_A}, warmup is common, so it suffices to
compare tail sums.
For cosine, Lemma~\ref{appI:lem:I3_cos_L2_exact} gives
\begin{equation*}
\Lambda^{(2)}_{\mathrm{cos}}
=
\frac38\,\eta_{\max}^2\,N+\frac12\,\eta_{\max}^2
=
\Theta(\eta_{\max}^2 N).
\end{equation*}
For inverse, the first part shows
\begin{equation*}
\Lambda^{(2)}_{\mathrm{inv}}
=
\eta_{\max}^2\sum_{j=1}^{N}j^{-2p}
=
\Theta(\eta_{\max}^2\log N)\ \ (p=\tfrac12),
\quad
O(\eta_{\max}^2)\ \ (p>\tfrac12),
\end{equation*}
so $\Lambda^{(2)}_{\mathrm{inv}}<\Lambda^{(2)}_{\mathrm{cos}}$ for all sufficiently large $N$.

We also compare the first-moment tail sums:
\begin{equation*}
\Lambda^{(1)}_{\mathrm{cos}}
=
\eta_{\max}\frac{N+1}{2}
=
\Theta(\eta_{\max}N),
\end{equation*}
whereas for inverse decay,
\begin{equation*}
\Lambda^{(1)}_{\mathrm{inv}}
=
\eta_{\max}\sum_{j=1}^{N}j^{-p}
=
\begin{cases}
\Theta(\eta_{\max}\sqrt{N}), & p=\tfrac12,\\
O(\eta_{\max}N^{1-p}), & \tfrac12<p<1,\\
O(\eta_{\max}\log N), & p=1,\\
O(\eta_{\max}), & p>1,
\end{cases}
\end{equation*}
which is $o(\eta_{\max}N)$ for every $p\ge\tfrac12$.
Hence $\Lambda^{(1)}_{\mathrm{inv}}<\Lambda^{(1)}_{\mathrm{cos}}$ for sufficiently large $N$.

Adding the identical warmup parts preserves strict inequalities for the full-horizon
$\Lambda^{(1)}_T$ and $\Lambda^{(2)}_T$.
Applying Lemma~\ref{appI:lem:I1_monotone} (with strictness, once $a_T>0$) yields
$g_{T,\mathrm{inv}}^{\mathrm{SGD}}<g_{T,\mathrm{cos}}^{\mathrm{SGD}}$
for sufficiently large $N$.
\end{proof}

\begin{proposition}
[Ordering of schedule-aware re-entry guarantees]
\label{appO:cor:ordering}
Assume the hypotheses of Proposition~\ref{prop:appF_ass2_from_sgd}.
Fix $T$ and $\eta_{\max}$ and consider the four schedules
(constant, warmup+cosine, warmup+exponential, warmup+inverse)
with a common warmup length $T_{\mathrm{wu}}$.
Then the schedule-specific re-entry bounds satisfy
\begin{equation*}
g_{T,\mathrm{inv}}^{\mathrm{SGD}}
<
g_{T,\mathrm{cos}}^{\mathrm{SGD}}
\;\lesssim\;
g_{T,\mathrm{exp}}^{\mathrm{SGD}}
<
g_{T,\mathrm{const}}^{\mathrm{SGD}},
\end{equation*}
where the middle relation holds under the conditions of
Lemma~\ref{appI:lem:I4_cos_vs_exp_A}.
Moreover, each strict inequality ``$<$'' above is to be understood under the strictness
regime of Lemma~\ref{appI:lem:I1_monotone}: it holds whenever the effective margin
\begin{equation*}
a_{T,\mathrm{sch}}
:=\bigl((\rho-\epsilon)-B_{T,\mathrm{sch}}^{\mathrm{SGD}}\bigr)_+
\end{equation*}
for the \emph{larger} schedule in the corresponding comparison (as defined via
Proposition~\ref{prop:appF_ass2_from_sgd}) is strictly positive.
Consequently, Assumption~\ref{ass:reentry_T} holds with
$g_T=g_{T,\mathrm{sch}}^{\mathrm{SGD}}$
for each schedule $\mathrm{sch}\in\{\mathrm{const},\mathrm{cos},\mathrm{exp},\mathrm{inv}\}$,
and the geometric-tail conclusions of Theorem~\ref{thm:sign_lockin_T_init}
apply with the corresponding bounds.
\end{proposition}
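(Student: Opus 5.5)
The ordering is a chain of pairwise comparisons, each obtained by checking that one schedule has both cumulative quantities $\Lambda^{(1)}_T$ and $\Lambda^{(2)}_T$ no larger than the other's. The monotonicity of Lemma~\ref{appI:lem:I1_monotone} (equivalently Proposition~\ref{appI:cor:lr_dependence_g}) then converts each comparison into an inequality between the corresponding $g_T^{\mathrm{SGD}}$. Throughout, $\rho,\epsilon,\Delta,\delta,\delta_{\mathrm{upd}},L_{\mathrm{sm}},\xi,\Delta\mac L_{\max}$ are held fixed. Every schedule obeys $0\le\eta_t\le\eta_{\max}$, which keeps us inside the hypotheses of Proposition~\ref{prop:appF_ass2_from_sgd}.

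\textbf{Rightmost comparison.} By Lemma~\ref{appI:lem:I2_const_max}, the constant schedule maximizes both $\Lambda^{(1)}_T$ and $\Lambda^{(2)}_T$, so $g_{T,\mathrm{exp}}^{\mathrm{SGD}}\le g_{T,\mathrm{const}}^{\mathrm{SGD}}$. Strictness needs at least one strict inequality. Since $0<\gamma<1$, we have $\eta_{T_{\mathrm{wu}}+k}=\eta_{\max}\gamma^k<\eta_{\max}$ for every $k\ge 1$. Provided $N\ge 2$ (so the decay phase contains such a $k$), this gives $\Lambda^{(1)}_{\mathrm{exp}}<T\eta_{\max}$. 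The strictness clause of Lemma~\ref{appI:lem:I1_monotone} then applies whenever $a_{T,\mathrm{const}}>0$.

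\textbf{Middle comparison.} This is exactly Lemma~\ref{appI:lem:I4_cos_vs_exp_A}, applied under its two stated conditions on $\gamma$ and $N$. That is why the statement uses $\lesssim$: it is conditional, and no strictness is claimed.

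\textbf{Leftmost comparison.} This is Lemma~\ref{appI:lem:I5_inv_min}: for $N$ large enough, $\Lambda^{(1)}_{\mathrm{inv}}<\Lambda^{(1)}_{\mathrm{cos}}$ and $\Lambda^{(2)}_{\mathrm{inv}}<\Lambda^{(2)}_{\mathrm{cos}}$, so the inequality is strict once $a_{T,\mathrm{cos}}>0$. The warmup segments are identical across schedules and simply add the same amount to each side, as in those lemmas.

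\textbf{Final clause.} Proposition~\ref{prop:appF_ass2_from_sgd} gives $\mab P[\tau_{k+1}\le T\mid\mac F_{\sigma_k}]\le g^{\mathrm{SGD}}_{T,\mathrm{sch}}$ for each schedule. Assumption~\ref{ass:reentry_T} additionally requires $g_T\in(0,1)$. I would state this explicitly as the nontrivial regime, i.e., $g^{\mathrm{SGD}}_{T,\mathrm{sch}}<1$, which holds when $\delta_{\mathrm{upd}}+\delta$ is small and $a_{T,\mathrm{sch}}$ is large. Positivity is automatic once $\delta>0$. With this, Theorem~\ref{thm:sign_lockin_T_init} applies to each schedule.

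\textbf{Main obstacle.} The delicate part is bookkeeping rather than analysis. The strictness claims hold only in the regime where the margin $a_T$ of the larger schedule is positive, and the inverse-versus-cosine strictness holds only for $N$ sufficiently large. The middle link cannot be chained into a strict inequality. I would therefore read the displayed chain as three separate comparisons, each carrying its own hypotheses, rather than as a single transitive statement.
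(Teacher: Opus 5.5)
Your proposal is correct and follows the paper's proof essentially step for step: three separate pairwise comparisons via Lemma~\ref{appI:lem:I1_monotone}, fed by Lemmas~\ref{appI:lem:I2_const_max}, \ref{appI:lem:I4_cos_vs_exp_A}, and \ref{appI:lem:I5_inv_min}, then the transfer to Assumption~\ref{ass:reentry_T} through Proposition~\ref{prop:appF_ass2_from_sgd}. Your two additions tighten the same argument rather than change its route: you exhibit $\gamma^k<1$ as the witness of a strict cumulative inequality for exponential versus constant, and you note that Assumption~\ref{ass:reentry_T} requires $g^{\mathrm{SGD}}_{T,\mathrm{sch}}<1$, which the paper's final paragraph leaves implicit.
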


\begin{proof}
We compare the bounds $g_{T,\mathrm{sch}}^{\mathrm{SGD}}$ through the cumulative quantities
$\Lambda_T^{(1)}$ and $\Lambda_T^{(2)}$ that enter the explicit expression in
Proposition~\ref{prop:appF_ass2_from_sgd}. The required monotonicity is provided by
Lemma~\ref{appI:lem:I1_monotone}.

\paragraph{Constant schedule is maximal.}
By Lemma~\ref{appI:lem:I2_const_max}, among all schedules satisfying the common peak constraint
$0\le \eta_t\le \eta_{\max}$, the constant schedule maximizes both $\Lambda_T^{(1)}$ and
$\Lambda_T^{(2)}$. Therefore, Lemma~\ref{appI:lem:I1_monotone} yields
\begin{equation*}
g_{T,\mathrm{sch}}^{\mathrm{SGD}}
\le
g_{T,\mathrm{const}}^{\mathrm{SGD}}
~~\text{for any schedule }\mathrm{sch}.
\end{equation*}
Moreover, whenever at least one of the two cumulative inequalities is strict and the strictness
condition in Lemma~\ref{appI:lem:I1_monotone} holds for the larger schedule (in particular,
$a_{T,\mathrm{const}}>0$), we obtain the strict inequality
$g_{T,\mathrm{sch}}^{\mathrm{SGD}}<g_{T,\mathrm{const}}^{\mathrm{SGD}}$.

\paragraph{Inverse decay is asymptotically minimal relative to cosine decay.}
By Lemma~\ref{appI:lem:I5_inv_min}, for sufficiently large $N:=T-T_{\mathrm{wu}}$, the inverse-decay
schedule yields strictly smaller cumulative quantities $\Lambda_T^{(1)}$ and $\Lambda_T^{(2)}$
than the cosine-decay schedule. Hence, applying Lemma~\ref{appI:lem:I1_monotone} and its strictness
clause (e.g., under $a_{T,\mathrm{cos}}>0$), we conclude
\begin{equation*}
g_{T,\mathrm{inv}}^{\mathrm{SGD}}
<
g_{T,\mathrm{cos}}^{\mathrm{SGD}}.
\end{equation*}

\paragraph{Cosine versus exponential decay.}
Under the conditions of Lemma~\ref{appI:lem:I4_cos_vs_exp_A}, we have
$\Lambda_{T,\mathrm{cos}}^{(1)}\le \Lambda_{T,\mathrm{exp}}^{(1)}$ and
$\Lambda_{T,\mathrm{cos}}^{(2)}\le \Lambda_{T,\mathrm{exp}}^{(2)}$.
Therefore Lemma~\ref{appI:lem:I1_monotone} gives
\begin{equation*}
g_{T,\mathrm{cos}}^{\mathrm{SGD}}
\le
g_{T,\mathrm{exp}}^{\mathrm{SGD}},
\end{equation*}
which is recorded in the statement via the conventional notation ``$\lesssim$''.

\paragraph{Transfer to Assumption~\ref{ass:reentry_T} and geometric tails.}
For each schedule $\mathrm{sch}$, Proposition~\ref{prop:appF_ass2_from_sgd} provides an explicit
bound $g_{T,\mathrm{sch}}^{\mathrm{SGD}}$ such that for all $k\ge 0$,
\begin{equation*}
\mab{P}\!\left(\tau_{k+1}\le T \,\middle|\, \mac{F}_{\sigma_k}\right)
\le
g_{T,\mathrm{sch}}^{\mathrm{SGD}}
\quad\text{almost surely on }\{\sigma_k\le T\}.
\end{equation*}
Hence Assumption~\ref{ass:reentry_T} holds with $g_T=g_{T,\mathrm{sch}}^{\mathrm{SGD}}$, and the
geometric-tail conclusions follow by invoking Theorem~\ref{thm:sign_lockin_T_init}.
\end{proof}

\subsubsection{Batch Size and Model Size Dependency}
\label{app:practical_implication_batch_model_size}
\begin{proposition}
[Larger minibatch yields a smaller SGD re-entry bound]
\label{cor:batchsize_g_monotone}
Assume the hypotheses of Proposition~\ref{prop:appF_ass2_from_sgd}.
Suppose the conditional noise proxy satisfies $\xi^2=\xi^2(\mathsf{B})$ for minibatch size $\mathsf{B}$,
where $\xi^2(\mathsf{B})$ is nonincreasing in $\mathsf{B}$.
Let $g_T^{\mathrm{SGD}}(\mathsf{B})$ denote the bound in Eq.~\eqref{eq:appF_BT_sgd}
computed with $\xi^2=\xi^2(\mathsf{B})$.
Then for any $\tilde{\mathsf{B}}\ge \mathsf{B}$,
\begin{equation*}
g_T^{\mathrm{SGD}}(\tilde{\mathsf{B}})\le g_T^{\mathrm{SGD}}(\mathsf{B}).
\end{equation*}
Consequently, if Assumption~\ref{ass:reentry_T} holds with $g_T=g_T^{\mathrm{SGD}}(\mathsf{B})$,
Theorem~\ref{thm:sign_lockin_T_init} implies
\begin{equation*}
\mab P\!\bigl(K_T^{\mathrm{eff}}(\rho)\ge k\bigr)
\le
h_T\,\bigl(g_T^{\mathrm{SGD}}(\mathsf{B})\bigr)^{k-1}
+\delta_{\mathrm{upd}}
~~ (k\ge 1).
\end{equation*}
\end{proposition}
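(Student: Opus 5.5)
The plan is a monotonicity argument on the explicit formula for $g_T^{\mathrm{SGD}}$ in Eq.~\eqref{eq:appF_BT_sgd}, treating everything except $\xi^2$ as fixed: $\rho,\epsilon,\Delta,\delta,\delta_{\mathrm{upd}},L_{\mathrm{sm}},\Delta\mac L_{\max}$ and the schedule, hence $\Lambda_T$ and $\Lambda_T^{(2)}$. This mirrors the proof of Lemma~\ref{appI:lem:I1_monotone}, with $\xi^2$ in place of the cumulative step sums.

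First I would check that the two schedule- and noise-dependent ingredients are nondecreasing in $\xi^2$. The variance term $V_T^{\mathrm{SGD}}=\xi^2\Lambda_T^{(2)}$ is linear with nonnegative slope. The drift budget
\[
B_T^{\mathrm{SGD}}=\tfrac1\delta\sqrt{\Lambda_T\bigl(2\Delta\mac L_{\max}+\xi^2(\Lambda_T+L_{\mathrm{sm}}\Lambda_T^{(2)})\bigr)}
\]
is the square root of an affine, nondecreasing function of $\xi^2$. It follows that $a_T=((\rho-\epsilon)-B_T^{\mathrm{SGD}})_+$ is nonincreasing in $\xi^2$.

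Next I would show that the exponential term $\exp\bigl(-a^2/(2(V+\tfrac{2\Delta}{3}a))\bigr)$ is nonincreasing in $a\ge0$ and nondecreasing in $V\ge0$. For $a>0$, write the exponent as $a^2/(2(V+ca))=\frac{a}{2(V/a+c)}$ with $c=2\Delta/3$; this is increasing in $a$ and decreasing in $V$. At $a=0$ the term equals $1$, which is its maximum value, so continuity handles the boundary case (when $V=0=a$, read the exponent as $0$). Composing these facts, $\xi^2\mapsto g_T^{\mathrm{SGD}}$ is nondecreasing.

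Since $\xi^2(\mathsf B)$ is nonincreasing in $\mathsf B$, the case $\tilde{\mathsf B}\ge\mathsf B$ gives $\xi^2(\tilde{\mathsf B})\le\xi^2(\mathsf B)$, and therefore $g_T^{\mathrm{SGD}}(\tilde{\mathsf B})\le g_T^{\mathrm{SGD}}(\mathsf B)$. For the final display, I would take Assumption~\ref{ass:reentry_T} with $g_T=g_T^{\mathrm{SGD}}(\mathsf B)$, which Proposition~\ref{prop:appF_ass2_from_sgd} justifies whenever this quantity is below $1$, and invoke Theorem~\ref{thm:sign_lockin_T_init} directly.

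The only delicate step is the joint monotonicity of the Freedman exponent in $(a,V)$, in particular the degenerate point $a_T=0$. I would handle it by the case split above rather than by differentiating.
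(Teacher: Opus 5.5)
Your proposal is correct and follows the paper's own proof. Both arguments use monotonicity of $V_T^{\mathrm{SGD}}$ and $B_T^{\mathrm{SGD}}$ in $\xi^2$, hence of $a_T$ and of the exponential term, combined with $\xi^2(\mathsf B)$ being nonincreasing. Your explicit check that $a^2/\bigl(2(V+\tfrac{2\Delta}{3}a)\bigr)$ is increasing in $a$ and decreasing in $V$, including the degenerate case $a_T=0$, fills in a step the paper only asserts.
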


\begin{proof}
In Eq.~\eqref{eq:appF_BT_sgd}, both
$V_T^{\mathrm{SGD}}=\xi^2\Lambda_T^{(2)}$ and
$B_T^{\mathrm{SGD}}=\delta^{-1}\bar B_T$ are nondecreasing in $\xi^2$,
hence $a_T=((\rho-\epsilon)-B_T^{\mathrm{SGD}})_+$ is nonincreasing in $\xi^2$.
Therefore the exponential term in Eq.~\eqref{eq:appF_BT_sgd} is nondecreasing in $\xi^2$,
so $g_T^{\mathrm{SGD}}$ is nondecreasing in $\xi^2$.
Since $\xi^2(\mathsf{B})$ is nonincreasing in $\mathsf{B}$, the claim follows.
\end{proof}

\begin{proposition}
[Batch--schedule exchange condition for preserving the re-entry bound]
\label{cor:batch_schedule_exchange}
Assume the hypotheses of Proposition~\ref{prop:appF_ass2_from_sgd}.
Consider two training protocols indexed by $j\in\{1,2\}$ with (possibly different)
minibatch sizes $\mathsf{B}_j$, noise proxies $\xi_j^2$, horizons $T_j$, and schedules $\{\eta_{j,t}\}_{t=0}^{T_j-1}$.
For each protocol, define
\begin{equation*}
\Lambda_{T_j}^{(j)}:=\sum_{t=0}^{T_j-1}\eta_{j,t},
~~
(\Lambda_{T_j}^{(2)})^{(j)}:=\sum_{t=0}^{T_j-1}\eta_{j,t}^2,
\end{equation*}
and define $\bar B_{T_j}^{(j)}$ and $V_{T_j}^{(j)}$ by
\begin{equation*}
\bar B_{T_j}^{(j)}
:=
\sqrt{\Lambda_{T_j}^{(j)}\Bigl(
2\Delta\mac L_{\max}
+\xi_j^2\,\Lambda_{T_j}^{(j)}
+L_{\mathrm{sm}}\xi_j^2\,(\Lambda_{T_j}^{(2)})^{(j)}
\Bigr)},
~~
V_{T_j}^{(j)}:=\xi_j^2\,(\Lambda_{T_j}^{(2)})^{(j)}.
\end{equation*}
Let $g_{T_j}^{\mathrm{SGD},(j)}$ denote the Proposition~\ref{prop:appF_ass2_from_sgd} bound
computed from $(\bar B_{T_j}^{(j)},V_{T_j}^{(j)})$ (equivalently from Eq.~\eqref{eq:appF_BT_sgd}).
If
\begin{equation*}
\bar B_{T_2}^{(2)}\le \bar B_{T_1}^{(1)}
~~\text{and}~~
V_{T_2}^{(2)}\le V_{T_1}^{(1)},
\end{equation*}
then
\begin{equation*}
g_{T_2}^{\mathrm{SGD},(2)} \le g_{T_1}^{\mathrm{SGD},(1)}.
\end{equation*}
In particular, when $\mathsf{B}_2\ge \mathsf{B}_1$ reduces the noise proxy (e.g., $\xi_2^2\le \xi_1^2$),
one may trade a larger squared-step accumulation $(\Lambda_{T_2}^{(2)})^{(2)}$ against the smaller $\xi_2^2$
as long as the two budget inequalities above hold.
\end{proposition}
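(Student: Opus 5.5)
The plan is to reduce the statement to a monotonicity property of the explicit expression for $g_T^{\mathrm{SGD}}$ in Eq.~\eqref{eq:appF_BT_sgd}, viewed as a function of the pair $(\bar B_T, V_T^{\mathrm{SGD}})$ with all remaining constants held fixed. These constants are $\rho,\epsilon,\Delta,\delta,\delta_{\mathrm{upd}},L_{\mathrm{sm}},\Delta\mac L_{\max}$, and they are shared by both protocols. The key observation is that, once $\bar B$ and $V$ are given, the bound depends on the schedule, the horizon and the noise proxy of protocol $j$ only through these two numbers. The remaining parameters $(\mathsf{B}_j,\xi_j^2,T_j,\{\eta_{j,t}\})$ therefore enter only via $\bar B_{T_j}^{(j)}$ and $V_{T_j}^{(j)}$, so the comparison becomes a two-variable monotonicity check.

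The first step is to write $B^{(j)}=\bar B_{T_j}^{(j)}/\delta$ and $a^{(j)}=\bigl((\rho-\epsilon)-B^{(j)}\bigr)_+$. From $\bar B_{T_2}^{(2)}\le\bar B_{T_1}^{(1)}$ we get $B^{(2)}\le B^{(1)}$, and hence $a^{(2)}\ge a^{(1)}$, because $x\mapsto (c-x)_+$ is nonincreasing. The second step is to study
\[
\phi(a,V):=\frac{a^2}{2\bigl(V+\tfrac{2\Delta}{3}a\bigr)}
\]
on $a\ge 0$, $V\ge 0$. For fixed $V$, rewrite it as $\phi=\frac{a}{2(V/a+2\Delta/3)}$ for $a>0$: the numerator increases in $a$ and the denominator decreases, so $\phi$ is nondecreasing in $a$. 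For fixed $a$, $\phi$ is clearly nonincreasing in $V$. With the convention $\phi(0,V)=0$, which includes the degenerate case $V=0$, $a=0$, both properties hold on the whole domain.

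Chaining the two steps gives $\phi(a^{(2)},V^{(2)})\ge\phi(a^{(1)},V^{(2)})\ge\phi(a^{(1)},V^{(1)})$. Since $\exp(-\phi)$ is decreasing in $\phi$ and the additive terms $\delta_{\mathrm{upd}}+\delta$ are common to both protocols, this yields $g_{T_2}^{\mathrm{SGD},(2)}\le g_{T_1}^{\mathrm{SGD},(1)}$.

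For the \emph{in particular} clause, I would note that $\bar B$ is increasing in $\xi^2$ and $V=\xi^2\Lambda^{(2)}$. A smaller $\xi_2^2$ therefore creates slack in both budget inequalities, and that slack can absorb a larger $(\Lambda_{T_2}^{(2)})^{(2)}$, so the clause is a direct reading of the hypotheses. The only delicate point is the boundary behaviour of $\phi$ at $a=0$, and possibly at $V=0$, which the convention above handles. Otherwise the argument is routine and follows the pattern of Lemma~\ref{appI:lem:I1_monotone} and Proposition~\ref{cor:batchsize_g_monotone}.
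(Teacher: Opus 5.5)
Your proposal is correct and follows essentially the same route as the paper: you reduce the claim to monotonicity of $g_T^{\mathrm{SGD}}$ in the pair $(\bar B_T, V_T^{\mathrm{SGD}})$, using that $a_T$ is nonincreasing in $\bar B_T$ and that the exponent is nonincreasing in $V$. Your explicit check that $a^2/\bigl(2(V+\tfrac{2\Delta}{3}a)\bigr)$ is nondecreasing in $a$, together with the $a=0$ convention, fills in a step the paper's proof leaves implicit.
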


\begin{proof}
From Eq.~\eqref{eq:appF_BT_sgd}, $B_T^{\mathrm{SGD}}=\delta^{-1}\bar B_T$ and $V_T^{\mathrm{SGD}}=V_T$.
If $\bar B$ decreases then $B_T^{\mathrm{SGD}}$ decreases and hence $a_T=((\rho-\epsilon)-B_T^{\mathrm{SGD}})_+$ increases.
If $V$ decreases as well, the exponent
$-\frac{a_T^2}{2(V+\frac{2\Delta}{3}a_T)}$ becomes no larger, so the exponential term decreases.
Therefore $g_T^{\mathrm{SGD}}=\delta_{\mathrm{upd}}+\delta+\exp(\cdots)$ decreases, proving the claim.
\end{proof}

\begin{proposition}[Similarity width expansion yields smaller initial hit]
\label{cor:simexp_hg_monotone}
Fix a finite horizon $T$, an outer threshold $\rho>0$, and a base boundary radius
$\epsilon_0\in(0,\rho)$ as in Definition~\ref{def:regions}.
Consider a family of scalar coordinate processes
$\{(w_t^{(\mathsf m)})_{t\ge 0}\}_{\mathsf m\in\mab N}$
indexed by a width parameter $\mathsf m$.
Assume that for every $\mathsf m$,
Assumption~\ref{ass:bounded_update} holds for $(w_t^{(\mathsf m)})$
with update bound $\Delta=\Delta(\mathsf m)>0$ and the same failure probability
$\delta_{\mathrm{upd}}$.
Define the boundary radius and the initial-hit radius by
\begin{equation*}
\epsilon(\mathsf m):=\max\{\epsilon_0,\Delta(\mathsf m)\},
~~
b_T(\mathsf m):=\epsilon(\mathsf m)+T\Delta(\mathsf m).
\end{equation*}
Assume that for any $\tilde{\mathsf m}\ge \mathsf m$,
\begin{align}
\Delta(\tilde{\mathsf m}) &\le \Delta(\mathsf m), \label{eq:simexp_Delta_mono}\\
\mab P\!\left[|w_0^{(\tilde{\mathsf m})}|\le x\right]
&\le
\mab P\!\left[|w_0^{(\mathsf m)}|\le x\right]
\quad\text{for all }x\ge 0. \label{eq:simexp_init_stoch}
\end{align}
Let
\begin{equation*}
\bar h_T(\mathsf m)
:=
\mab P\!\left[|w_0^{(\mathsf m)}|\le b_T(\mathsf m)\right]
+\delta_{\mathrm{upd}} .
\end{equation*}
Then $h_T^{(\mathsf m)}:=\mab P[\tau_1^{(\mathsf m)}\le T]\le \bar h_T(\mathsf m)$
(by Proposition~\ref{prop:h_init_general} applied to $(w_t^{(\mathsf m)})$),
and moreover $\bar h_T(\tilde{\mathsf m})\le \bar h_T(\mathsf m)$.
\end{proposition}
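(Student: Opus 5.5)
The plan has two parts, matching the two claims in Proposition~\ref{cor:simexp_hg_monotone}.

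\textbf{The bound $h_T^{(\mathsf m)}\le \bar h_T(\mathsf m)$.} This is immediate. For each fixed width $\mathsf m$, the process $(w_t^{(\mathsf m)})$ satisfies Assumption~\ref{ass:bounded_update} with constants $(\Delta(\mathsf m),\delta_{\mathrm{upd}})$. Its boundary radius $\epsilon(\mathsf m)=\max\{\epsilon_0,\Delta(\mathsf m)\}$ is exactly the one in Definition~\ref{def:regions}. So Proposition~\ref{prop:h_init_general} applies verbatim with $b_T=b_T(\mathsf m)$ and gives
\[
h_T^{(\mathsf m)}\le \mab P\bigl[|w_0^{(\mathsf m)}|\le b_T(\mathsf m)\bigr]+\delta_{\mathrm{upd}}=\bar h_T(\mathsf m).
\]
The only thing to note is that Definition~\ref{def:regions} requires $\rho>\Delta(\mathsf m)$, so that $\epsilon(\mathsf m)<\rho$. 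I will treat this as implicit in the hypothesis that Definition~\ref{def:regions} is in force for every $\mathsf m$.

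\textbf{Monotonicity $\bar h_T(\tilde{\mathsf m})\le\bar h_T(\mathsf m)$ for $\tilde{\mathsf m}\ge\mathsf m$.} I would chain two inequalities.

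First, show the radius is monotone: $b_T(\tilde{\mathsf m})\le b_T(\mathsf m)$. By \eqref{eq:simexp_Delta_mono}, $\Delta(\tilde{\mathsf m})\le\Delta(\mathsf m)$. Since $x\mapsto\max\{\epsilon_0,x\}$ is nondecreasing, $\epsilon(\tilde{\mathsf m})\le\epsilon(\mathsf m)$. Since $T\ge0$, also $T\Delta(\tilde{\mathsf m})\le T\Delta(\mathsf m)$, and adding the two gives the claim.

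Then chain the probabilities:
\[
\mab P\bigl[|w_0^{(\tilde{\mathsf m})}|\le b_T(\tilde{\mathsf m})\bigr]
\le \mab P\bigl[|w_0^{(\tilde{\mathsf m})}|\le b_T(\mathsf m)\bigr]
\le \mab P\bigl[|w_0^{(\mathsf m)}|\le b_T(\mathsf m)\bigr].
\]
The first step uses inclusion of events, i.e.\ monotonicity of the CDF of $|w_0^{(\tilde{\mathsf m})}|$. The second step uses the stochastic-dominance hypothesis \eqref{eq:simexp_init_stoch} at $x=b_T(\mathsf m)\ge0$. Adding the common $\delta_{\mathrm{upd}}$ to both sides yields $\bar h_T(\tilde{\mathsf m})\le\bar h_T(\mathsf m)$.

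\textbf{Main point needing care.} No step is genuinely hard. The one place to be careful is the order in the chain: shrink the radius first, within the same width $\tilde{\mathsf m}$, and only then switch distributions at the fixed point $x=b_T(\mathsf m)$. This way \eqref{eq:simexp_init_stoch} is invoked at a single deterministic $x$, and no interaction between the two monotonicities has to be controlled.
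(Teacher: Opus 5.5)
Your proposal is correct and follows essentially the same route as the paper: apply Proposition~\ref{prop:h_init_general} separately for each width, show $b_T(\tilde{\mathsf m})\le b_T(\mathsf m)$, and then, as the paper does, first shrink the threshold at width $\tilde{\mathsf m}$ and afterwards use the stochastic ordering \eqref{eq:simexp_init_stoch} at the fixed point $x=b_T(\mathsf m)$. Your remark that $\rho>\Delta(\mathsf m)$ is needed for every $\mathsf m$, so that Definition~\ref{def:regions} applies, is a point the paper leaves implicit.
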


\begin{proof}
The upper bound $h_T^{(\mathsf m)}\le \bar h_T(\mathsf m)$ was already stated above, so it remains to prove the monotonicity claim.
Fix $\tilde{\mathsf m}\ge \mathsf m$.
From Eq.~\eqref{eq:simexp_Delta_mono} and $\epsilon(\mathsf m)=\max\{\epsilon_0,\Delta(\mathsf m)\}$,
we have $\epsilon(\tilde{\mathsf m})\le \epsilon(\mathsf m)$ and hence
\begin{equation*}
b_T(\tilde{\mathsf m})
=\epsilon(\tilde{\mathsf m})+T\Delta(\tilde{\mathsf m})
\le
\epsilon(\mathsf m)+T\Delta(\mathsf m)
=
b_T(\mathsf m).
\end{equation*}
Apply Proposition~\ref{prop:h_init_general} to the process $(w_t^{(\tilde{\mathsf m})})$
(with its constants $\epsilon(\tilde{\mathsf m}),\Delta(\tilde{\mathsf m}),\delta_{\mathrm{upd}}$):
\begin{equation*}
h_T^{(\tilde{\mathsf m})}
=\mab P(\tau_1^{(\tilde{\mathsf m})}\le T)
\le
\mab P\!\left(|w_0^{(\tilde{\mathsf m})}|\le b_T(\tilde{\mathsf m})\right)
+\delta_{\mathrm{upd}}
=
\bar h_T(\tilde{\mathsf m}).
\end{equation*}
To compare $\bar h_T(\tilde{\mathsf m})$ and $\bar h_T(\mathsf m)$, use the monotonicity in the threshold
and the stochastic ordering Eq.~\eqref{eq:simexp_init_stoch}:
\begin{equation*}
\mab P\!\left[|w_0^{(\tilde{\mathsf m})}|\le b_T(\tilde{\mathsf m})\right]
\le
\mab P\!\left[|w_0^{(\tilde{\mathsf m})}|\le b_T(\mathsf m)\right]
\le
\mab P\!\left[|w_0^{(\mathsf m)}|\le b_T(\mathsf m)\right].
\end{equation*}
Adding $\delta_{\mathrm{upd}}$ to both sides yields
$\bar h_T(\tilde{\mathsf m})\le \bar h_T(\mathsf m)$.
\end{proof}

\subsubsection{Scale-Invariance Enhancement of Sign Lock-In}
\label{app:practical_implication_scaleinv}
In architectures with BatchNorm and ReLU, the loss typically exhibits an (approximate) positive
scale invariance along certain parameter blocks: rescaling a block by a positive factor can be
compensated elsewhere without changing the network function, a consequence of BN-induced scale
invariance together with the positive homogeneity of ReLU.
In such settings, it is natural to analyze excursions in a fixed gauge, i.e., in a normalized
coordinate that factors out the redundant scale.

Concretely, let $u_t\in\mab R^m$ be a parameter block containing the tracked coordinate
$w_t=e_i^\top v_t$ as one of its entries, and define its block scale $r_t:=\|u_t\|_2$ as well as the
normalized coordinate $\widetilde w_t:=w_t/r_t$.
Intuitively, if the block scale is bounded away from zero throughout an excursion, then the
normalized coordinate has smaller effective one-step increments.
This reduces the overshoot-driven part of the Freedman exponent in
Proposition~\ref{prop:appF_ass2_from_sgd} by replacing the raw increment bound $\Delta$ with a
smaller normalized bound $\widetilde\Delta$.
We formalize the required scale control as an additional setup event, and then show how the
re-entry bound tightens in this scale-invariant case.
\paragraph{Additional setup (scale control).}
Fix a stopping time $\theta\le T$ (in our application, $\theta=\sigma_k$).
We introduce a block-update good event $\mac E_{\mathrm{blk}}(\theta)$ on which the block
$u_t$ evolves in a controlled manner:
there exist constants $\Delta_{\mathrm{blk}}>0$ and $\delta_{\mathrm{blk}}\in[0,1)$ such that
\begin{equation*}
\|u_{t+1}-u_t\|_2\le \Delta_{\mathrm{blk}}
~~\text{for all } t\in\{\theta,\ldots,T-1\},
\end{equation*}
and
$\mab P(\mac E_{\mathrm{blk}}(\theta)^c\mid\mac F_\theta)\le \delta_{\mathrm{blk}}$.
This setup can be enforced, for instance, by deterministic block-wise clipping of the update
(or any mechanism yielding a uniform bound on $\|u_{t+1}-u_t\|_2$).
The next lemma shows that under this additional setup the block scale stays bounded away from
zero over a finite horizon, and that the normalized coordinate $\widetilde w_t=w_t/r_t$ admits
a smaller increment bound.

\begin{lemma}[Block-scale lower bound and normalized increment bound]\label{lem:appF_scale_lb}
Let $u_t\in\mab R^m$ be a parameter block that contains the tracked coordinate
$w_t=e_i^\top v_t$ as one of its entries, and define the block scale
\begin{equation*}
r_t:=\|u_t\|_2.
\end{equation*}
Fix a stopping time $\theta\le T$ and assume that there exist constants
$\Delta_{\mathrm{blk}}>0$ and $\delta_{\mathrm{blk}}\in[0,1)$ such that on the good event
$\mac E_{\mathrm{blk}}(\theta)$ we have the uniform block update bound
\begin{equation*}
\|u_{t+1}-u_t\|_2\le \Delta_{\mathrm{blk}}
~~\text{for all } t\in\{\theta,\ldots,T-1\},
\end{equation*}
and $\mab P(\mac E_{\mathrm{blk}}(\theta)^c\mid \mac F_\theta)\le \delta_{\mathrm{blk}}$.

Then on $\mac E_{\mathrm{blk}}(\theta)$, for all $t\in\{\theta,\ldots,T\}$,
\begin{equation*}
r_\theta-(t-\theta)\Delta_{\mathrm{blk}}\ \le\ r_t\ \le\ r_\theta+(t-\theta)\Delta_{\mathrm{blk}}.
\end{equation*}
In particular, if $r_\theta>T\Delta_{\mathrm{blk}}$ then $r_t\ge r_{\min}:=r_\theta-T\Delta_{\mathrm{blk}}>0$
for all $t\le T$.

Moreover, define the normalized coordinate $\widetilde w_t:=w_t/r_t$.
On $\mac E_{\mathrm{blk}}(\theta)\cap\{r_t\ge r_{\min}\ \forall t\le T\}$, we have
\begin{equation*}
|\widetilde w_{t+1}-\widetilde w_t|
\le \frac{2\Delta_{\mathrm{blk}}}{r_{\min}}
~~\text{for all } t\in\{\theta,\ldots,T-1\}.
\end{equation*}
\end{lemma}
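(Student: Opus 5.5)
The plan is to argue pathwise on $\mac E_{\mathrm{blk}}(\theta)$. The lemma has three parts: a two-sided bound on $r_t$, a positive lower bound $r_{\min}$, and an increment bound for $\widetilde w_t$. All three follow from the triangle inequality and the per-step block bound $\|u_{t+1}-u_t\|_2\le\Delta_{\mathrm{blk}}$. No probabilistic argument is needed beyond restricting to the good event; the failure probability $\delta_{\mathrm{blk}}$ only enters later, when the lemma is used.

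\textbf{Scale bounds.} By the reverse triangle inequality,
\[
|r_{t+1}-r_t|=\bigl|\|u_{t+1}\|_2-\|u_t\|_2\bigr|\le\|u_{t+1}-u_t\|_2\le\Delta_{\mathrm{blk}}
\]
for every $t\in\{\theta,\dots,T-1\}$. Telescoping from $\theta$ to $t$ gives $|r_t-r_\theta|\le(t-\theta)\Delta_{\mathrm{blk}}$, which is the two-sided bound. Since $t-\theta\le T$, the lower bound gives $r_t\ge r_\theta-T\Delta_{\mathrm{blk}}=r_{\min}$, and this is positive whenever $r_\theta>T\Delta_{\mathrm{blk}}$. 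I will note that the statement asserts $r_t\ge r_{\min}$ for all $t\le T$, while the argument only controls $t\ge\theta$. I would either read it as $t\in\{\theta,\dots,T\}$ or keep the separate event $\{r_t\ge r_{\min}\ \forall t\le T\}$ in the last claim, which is what the statement does there.

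\textbf{Normalized increment.} Write
\[
\widetilde w_{t+1}-\widetilde w_t=\frac{w_{t+1}-w_t}{r_{t+1}}+w_t\Bigl(\frac1{r_{t+1}}-\frac1{r_t}\Bigr).
\]
For the first term, $w$ is a coordinate of $u$, so $|w_{t+1}-w_t|\le\|u_{t+1}-u_t\|_2\le\Delta_{\mathrm{blk}}$, and $r_{t+1}\ge r_{\min}$; hence it is at most $\Delta_{\mathrm{blk}}/r_{\min}$. For the second term,
\[
\Bigl|w_t\Bigl(\frac1{r_{t+1}}-\frac1{r_t}\Bigr)\Bigr|=\frac{|w_t|}{r_t}\cdot\frac{|r_t-r_{t+1}|}{r_{t+1}}\le 1\cdot\frac{\Delta_{\mathrm{blk}}}{r_{\min}},
\]
using $|w_t|\le\|u_t\|_2=r_t$. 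Adding the two terms gives $2\Delta_{\mathrm{blk}}/r_{\min}$.

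\textbf{Expected obstacle.} The only delicate step is the second term. One must pair $|w_t|$ with $r_t$, so that their ratio is at most $1$, and put the lower bound $r_{\min}$ only on $r_{t+1}$. Bounding naively by $|w_t|/r_{\min}^2$ would give a worse constant. Everything else is routine.
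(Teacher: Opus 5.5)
Your proposal is correct and follows the paper's proof essentially line by line. Like the paper, you use the reverse triangle inequality and telescoping for the two-sided bound on $r_t$, and the same split of $\widetilde w_{t+1}-\widetilde w_t$ with $|w_t|/r_t\le 1$ and $r_{t+1}\ge r_{\min}$ to reach $2\Delta_{\mathrm{blk}}/r_{\min}$. Your caveat is also well taken: the block bound only controls $t\in\{\theta,\dots,T\}$, so the ``for all $t\le T$'' in the $r_{\min}$ claim should be read on that range, which is also all the paper's proof actually establishes.
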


\begin{proof}
On $\mac E_{\mathrm{blk}}(\theta)$, the triangle inequality yields
$|r_{t+1}-r_t|=|\|u_{t+1}\|_2-\|u_t\|_2|\le \|u_{t+1}-u_t\|_2\le \Delta_{\mathrm{blk}}$,
so telescoping gives the two-sided bound on $(r_t)$.

For the normalized increment, write
\begin{equation*}
\widetilde w_{t+1}-\widetilde w_t
= \frac{w_{t+1}}{r_{t+1}}-\frac{w_t}{r_t}
= \frac{w_{t+1}-w_t}{r_{t+1}}
+ w_t\!\left(\frac{1}{r_{t+1}}-\frac{1}{r_t}\right).
\end{equation*}
On $\mac E_{\mathrm{blk}}(\theta)$ we have
$|w_{t+1}-w_t|\le \|u_{t+1}-u_t\|_2\le \Delta_{\mathrm{blk}}$
and
$|r_{t+1}-r_t|\le \|u_{t+1}-u_t\|_2\le \Delta_{\mathrm{blk}}$.
Moreover, since $|w_t|\le \|u_t\|_2=r_t$, we have $|\widetilde w_t|=|w_t|/r_t\le 1$.
Therefore, on $\mac E_{\mathrm{blk}}(\theta)\cap\{r_s\ge r_{\min}\ \forall s\le T\}$ (so that $r_{t+1}\ge r_{\min}$),
\begin{align*}
|\widetilde w_{t+1}-\widetilde w_t|
&\le \frac{|w_{t+1}-w_t|}{r_{t+1}}
+ |w_t|\left|\frac{1}{r_{t+1}}-\frac{1}{r_t}\right| \\
&= \frac{|w_{t+1}-w_t|}{r_{t+1}}
+ \left|\frac{w_t}{r_t}\right|\frac{|r_{t+1}-r_t|}{r_{t+1}} \\
&\le \frac{\Delta_{\mathrm{blk}}}{r_{\min}}
+ 1\cdot\frac{\Delta_{\mathrm{blk}}}{r_{\min}}
= \frac{2\Delta_{\mathrm{blk}}}{r_{\min}}.
\end{align*}
This proves the normalized increment bound with
$\widetilde\Delta := 2\Delta_{\mathrm{blk}}/r_{\min}$.
\end{proof}

\begin{proposition}[Scale control yields a tighter increment term]\label{cor:appF_bn_relu}
Assume the setting of Proposition~\ref{prop:appF_ass2_from_sgd}.
In addition, suppose there exists a parameter block $u_t$ containing the tracked coordinate $w_t$
such that (motivated by BN-induced scale invariance together with ReLU positive homogeneity)
its scale stays bounded away from zero on the excursion with high probability:
for $\theta=\sigma_k$, Lemma~\ref{lem:appF_scale_lb} holds with constants
$\Delta_{\mathrm{blk}}$, $\delta_{\mathrm{blk}}$, and $r_{\min}:=r_{\sigma_k}-T\Delta_{\mathrm{blk}}>0$.

Define the normalized coordinate $\widetilde w_t:=w_t/r_t$ and the corresponding oriented process
$\widetilde z_t:=s_k\widetilde w_t$.
On the event $\mac E_{\mathrm{blk}}(\sigma_k)\cap\{r_t\ge r_{\min}\ \forall t\le T\}$,
Lemma~\ref{lem:appF_scale_lb} implies the normalized increment bound
$|\widetilde w_{t+1}-\widetilde w_t|\le \widetilde\Delta$ with
\begin{equation*}
\widetilde\Delta:=\frac{2\Delta_{\mathrm{blk}}}{r_{\min}}.
\end{equation*}
Consequently, the proof of Proposition~\ref{prop:appF_ass2_from_sgd} yields the same form of
re-entry bound Eq.~\eqref{eq:appF_gT_descent} but with the increment term $\Delta$ replaced by
$\widetilde\Delta$, at an additional failure cost $\delta_{\mathrm{blk}}$:
\begin{equation*}
\mab{P}[\tau_{k+1}\le T \mid \mac{F}_{\sigma_k}]
\le g_T^{\mathrm{Iv}}
~~\text{on }\{\sigma_k\le T\},
\end{equation*}
where
\begin{equation*}
g_T^{\mathrm{Iv}}
:= (\delta_{\mathrm{upd}}+\delta_{\mathrm{blk}})+\delta
+\exp\!\left(
-\frac{a_T^2}{2\left(V_T^{\mathrm{SGD}}+\frac{2\widetilde\Delta}{3}a_T\right)}
\right),
\end{equation*}
and $a_T$ and $V_T^{\mathrm{SGD}}$ are exactly as defined in Eq.~\eqref{eq:appF_BT_sgd}.
\end{proposition}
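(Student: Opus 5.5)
The plan is to rerun the proof of Proposition~\ref{prop:appF_ass2_from_sgd} for the normalized oriented process $\widetilde z_t=s_k\widetilde w_t$. Only the martingale increment bound changes; the drift-good event, the variance budget $V_T^{\mathrm{SGD}}$ and the margin $a_T$ are imported unchanged. First, condition on $\mathcal{F}_{\sigma_k}$ and dispose of the trivial case $\sigma_k=T$ exactly as before. Then add a fourth bad event to the union bound: the complement of $\mathcal{E}_{\mathrm{blk}}(\sigma_k)\cap\{r_t\ge r_{\min}\ \forall t\le T\}$. By Lemma~\ref{lem:appF_scale_lb}, since $r_{\min}=r_{\sigma_k}-T\Delta_{\mathrm{blk}}>0$, the second set contains $\mathcal{E}_{\mathrm{blk}}(\sigma_k)$. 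Hence this event has conditional probability at most $\delta_{\mathrm{blk}}$, which produces the extra $\delta_{\mathrm{blk}}$ term in $g_T^{\mathrm{Iv}}$.

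Second, introduce a first block-violation time $\nu_{\mathrm{blk}}:=\inf\{t\ge\sigma_k:\|u_{t+1}-u_t\|_2>\Delta_{\mathrm{blk}}\}$. Stop the martingale at $\nu\wedge\nu_{\mathrm{blk}}$, exactly as the original proof stops at $\nu$. Before this time, Lemma~\ref{lem:appF_scale_lb} gives $|\widetilde z_{t+1}-\widetilde z_t|\le\widetilde\Delta$. The conditional-expectation argument used for $d_t$ then gives $|\widetilde d_t|\le\widetilde\Delta$, where $\widetilde d_t:=\mathbb{E}[\widetilde z_{t+1}-\widetilde z_t\mid\mathcal{F}_t]$. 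One point needs care. The condition $r_{t+1}\ge r_{\min}$ is not $\mathcal{F}_t$-measurable, so I would bound $r_{t+1}\ge r_t-\Delta_{\mathrm{blk}}$ directly on the stopped event. This needs the lemma's lower bound with the step counted, i.e.\ one replaces $T$ by $T-\sigma_k$ or absorbs one extra step. Freedman's inequality is then applied with increment bound $2\widetilde\Delta$ in place of $2\Delta$, which yields the term $\frac{2\widetilde\Delta}{3}a_T$.

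The third step, which I expect to be the main obstacle, is to justify that the drift budget $B_T^{\mathrm{SGD}}$ and the variance budget $V_T^{\mathrm{SGD}}$ still apply to $\widetilde z$. The proposition asserts that $a_T$ and $V_T^{\mathrm{SGD}}$ are unchanged. My first attempt would interpret the excursion (the stopping times $\sigma_k,\tau_{k+1}$ and radii $\rho,\epsilon$) in the normalized gauge, and read ``the proof of Proposition~\ref{prop:appF_ass2_from_sgd} yields'' as taking the drift-good and variance inputs as given for the normalized process. This is natural under exact scale invariance: there the gradient is orthogonal to $u_t$, so the normalized drift is $-\eta_t s_k e_i^\top\nabla\mathcal{L}/r_t$ plus second-order terms. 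Since $r_t\ge r_{\min}$, one could otherwise bound the drift and noise of $\widetilde z$ by those of $z$ divided by $r_{\min}$. However, that changes the budgets by factors of $1/r_{\min}$, which would contradict the stated unchanged $a_T$ and $V_T^{\mathrm{SGD}}$.

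So in the writeup I would make the gauge convention explicit. Under it, the proof is the original three-way union bound, now over update violation, block violation, drift-bad, and martingale deviation. The only substantive changes are the increment constant $\widetilde\Delta$ supplied by Lemma~\ref{lem:appF_scale_lb} and the added failure probability $\delta_{\mathrm{blk}}$.
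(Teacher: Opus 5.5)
This is essentially the paper's proof. The paper also reruns Proposition~\ref{prop:appF_ass2_from_sgd} on the intersection with $\mathcal{E}_{\mathrm{blk}}(\sigma_k)\cap\{r_t\ge r_{\min}\}$, charges $\delta_{\mathrm{blk}}$ to the union bound, and feeds the increment bound $2\widetilde\Delta$ into Freedman's inequality.

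The paper settles the point you flag in your third step only by asserting that the drift and variance parts are ``handled exactly as in'' that proposition, so your explicit gauge convention is no weaker than its argument. Note, though, that the $1/r_{\min}$-type rescaling does not contradict the stated constants in general. For instance, when $\delta_{\mathrm{blk}}=0$ and $r_{\min}\ge 2$, the estimate $|\widetilde z_{t+1}-\widetilde z_t|\le 2\eta_t\|g_t\|/r_{\min}\le \eta_t\|g_t\|$ (and its analogue with $g_t-\nabla\mathcal{L}(v_t)$ for the noise) shows that the original budgets $B_T^{\mathrm{SGD}}$ and $V_T^{\mathrm{SGD}}$ remain valid for $\widetilde z$. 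That gives the stated $a_T$ and $V_T^{\mathrm{SGD}}$ outright in that regime.
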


\begin{proof}
Repeat the proof of Proposition~\ref{prop:appF_ass2_from_sgd}, but intersect the ``good'' event
$\mac E_\Delta(\sigma_k)$ with $\mac E_{\mathrm{blk}}(\sigma_k)\cap\{r_t\ge r_{\min}\ \forall t\le T\}$.
By Lemma~\ref{lem:appF_scale_lb}, on this intersection we have the normalized increment bound
$|\widetilde z_{t+1}-\widetilde z_t|\le \widetilde\Delta$ (and hence $|X_t|\le 2\widetilde\Delta$ in the Freedman step),
while the drift and variance parts are handled exactly as in Proposition~\ref{prop:appF_ass2_from_sgd}.
The additional failure probability contributes $\delta_{\mathrm{blk}}$ by a union bound.
\end{proof}
\subsubsection{Sign dynamics}
\label{sec:D4_4}
This section records auxiliary consequences for sign dynamics implied by the sign lock-in theory.
In particular, we characterize the temporal distribution of effective sign changes and show that sign flips are inherently front-loaded, occurring predominantly in the early phase of training.
The resulting stabilization of signs explains why later optimization mainly affects magnitudes rather than signs.
\begin{lemma}[Initial-hit bound without assuming an outer start]
\label{appO:cor:init_hit_ub}
Assume the hypotheses of Proposition~\ref{prop:appF_ass2_from_sgd}.
Let $h_T:=\mab{P}(\tau_1\le T)$ be the initial-hit factor,
where $\sigma_0,\tau_1$ are defined in Definition~\ref{def:stopping}.
For each learning-rate schedule
$\mathrm{sch}\in\{\mathrm{const},\mathrm{cos},\mathrm{exp},\mathrm{inv}\}$,
let $g_{T,\mathrm{sch}}^{\mathrm{SGD}}$ denote the Proposition~\ref{prop:appF_ass2_from_sgd}
bound computed with that schedule.

Then, without assuming $\sigma_0=0$,
\begin{equation}
\label{eq:appO_hT_ub_from_gT}
h_T \;\le\; g_{T,\mathrm{sch}}^{\mathrm{SGD}}.
\end{equation}
Consequently, the schedule ordering in Proposition~\ref{appO:cor:ordering}
carries over to the initial-hit \emph{upper bounds}.
\end{lemma}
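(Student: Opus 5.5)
The plan is to re-run the proof of Proposition~\ref{prop:appF_ass2_from_sgd} for the \emph{first} excursion, the one that starts at $\sigma_0$. The only new point is that we cannot assume the process starts in the outer region at time $0$. We handle this with a case split on $\sigma_0$, using the inclusion $\{\tau_1\le T\}\subseteq\{\sigma_0\le T\}$, which holds because $\tau_1=\inf\{t>\sigma_0:|w_t|\le\epsilon\}$. Then
\begin{equation*}
h_T=\mab{P}[\tau_1\le T]=\mab{E}\bigl[\mab{P}[\tau_1\le T\mid\mac F_{\sigma_0}]\,\mathbf 1_{\{\sigma_0\le T\}}\bigr].
\end{equation*}
It therefore suffices to show that $\mab{P}[\tau_1\le T\mid\mac F_{\sigma_0}]\le g_{T,\mathrm{sch}}^{\mathrm{SGD}}$ holds a.s.\ on $\{\sigma_0\le T\}$. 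Once that is shown, $h_T\le g_{T,\mathrm{sch}}^{\mathrm{SGD}}\,\mab{P}[\sigma_0\le T]\le g_{T,\mathrm{sch}}^{\mathrm{SGD}}$.

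The conditional bound is exactly Eq.~\eqref{eq:appF_gT_descent} with $k=0$. Its proof in Proposition~\ref{prop:appF_ass2_from_sgd} uses only four ingredients:
\begin{itemize}
\item $\sigma_0$ is a stopping time, and $|w_{\sigma_0}|\ge\rho$ on $\{\sigma_0\le T\}$, so the oriented process $z_t=s_0w_t$ starts at height at least $\rho$;
\item Assumption~\ref{ass:bounded_update} holds at $\theta=\sigma_0$;
\item the drift budget of Lemma~\ref{lem:H4a} holds at $\theta=\sigma_0$; this lemma is stated for an arbitrary stopping time, and its bound $\bar B_T$ is uniform in $\theta$;
\item Freedman's inequality applies to the stopped martingale over the shifted horizon $T-\sigma_0\le T$.
\end{itemize}
None of these needs $\sigma_0=0$, since the random shift is absorbed by the shifted filtration $\mac H_m=\mac F_{\sigma_0+m}$. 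The degenerate case $\sigma_0=T$ gives $\tau_1>T$, so the conditional probability is $0$. I will also check that the quantities $\Lambda_T$, $\Lambda_T^{(2)}$ and $V_T^{\mathrm{SGD}}$ only grow when the window is extended to the full horizon, so the schedule-specific constant $g_{T,\mathrm{sch}}^{\mathrm{SGD}}$ dominates.

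For the final sentence of the lemma, the bound $h_T\le g_{T,\mathrm{sch}}^{\mathrm{SGD}}$ holds separately for each schedule. Proposition~\ref{appO:cor:ordering} orders the right-hand sides, so the same chain of inequalities holds for the resulting upper bounds on $h_T$. The statement only claims an ordering of these bounds, not of $h_T$ itself, so nothing more is needed.

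I expect the main obstacle to be verifying that Lemma~\ref{lem:H4a} and the variance bound remain valid when the window starts at a random time: $n$ is random, and the sums involve $\eta_t$ at shifted indices. I will handle this as the paper does, by bounding each windowed sum pathwise by its full-horizon counterpart, which uses $\eta_t\ge0$. I will then apply Markov's inequality conditionally on $\mac F_{\sigma_0}$, using that $T-\sigma_0$ is $\mac F_{\sigma_0}$-measurable.
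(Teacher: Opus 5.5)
Your proposal is correct and matches the paper's proof: you use the inclusion $\{\tau_1\le T\}\subseteq\{\sigma_0\le T\}$, the tower property over $\mathcal{F}_{\sigma_0}$, Proposition~\ref{prop:appF_ass2_from_sgd} with $k=0$, and then carry the ordering of Proposition~\ref{appO:cor:ordering} over to the bounds. Re-verifying that proposition's ingredients is harmless but unnecessary: it is already stated for every $k\ge 0$ (including $k=0$) on $\{\sigma_k\le T\}$ at the random time $\sigma_k$, and that is exactly what removes the need for $\sigma_0=0$.
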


\begin{proof}
By Definition~\ref{def:stopping}, $\tau_1:=\inf\{t>\sigma_0:\ |w_t|\le \epsilon\}$, hence
$\{\tau_1\le T\}\subseteq\{\sigma_0\le T\}$.
Therefore, by the tower property,
\begin{equation*}
h_T
=\mab{P}[\tau_1\le T]
=\mab{E}\!\left[\mab{P}[\tau_1\le T\mid \mac{F}_{\sigma_0}]\,\mathbf{1}\{\sigma_0\le T\}\right].
\end{equation*}
Applying Proposition~\ref{prop:appF_ass2_from_sgd} with $k=0$ yields
$\mab{P}[\tau_1\le T\mid \mac{F}_{\sigma_0}]\le g_{T,\mathrm{sch}}^{\mathrm{SGD}}$
on $\{\sigma_0\le T\}$.
Thus,
\begin{equation*}
h_T
\le
\mab{E}\!\left[g_{T,\mathrm{sch}}^{\mathrm{SGD}}\,\mathbf{1}\{\sigma_0\le T\}\right]
\le g_{T,\mathrm{sch}}^{\mathrm{SGD}},
\end{equation*}
which proves Eq.~\eqref{eq:appO_hT_ub_from_gT}.
The final statement follows by combining with Proposition~\ref{appO:cor:ordering}.
\end{proof}
\begin{lemma}[Early initiation of sign flips]
    \label{cor:I.early_initiation}
    Fix a finite horizon $T$ and consider the stopping times
    $\tau_k$ and $\sigma_k$ defined in Section~\ref{sec:lockin}.
    Under Theorem~\ref{thm:sign_lockin_T_init}'s hypotheses, any effective outer-to-outer
    sign flip up to time $T$ must be preceded by an early boundary interaction:
    specifically, for any $k \ge 1$,
    \begin{equation*}
    \{\sigma_k \le T,\ \mathrm{sign}(w_{\sigma_k}) \neq \mathrm{sign}(w_{\sigma_{k-1}})\} \cap \mac{E}_{\Delta}
    \subseteq
    \{\tau_1 \le \tau_k \le T\}.
    \end{equation*}
    Moreover, the probability that a new effective sign-flip sequence
    is initiated late in training is exponentially suppressed:
    for all $k \ge 1$,
    \begin{equation*}
    \mab{P}[\tau_k \le T]
    \le
    h_T\, g_T^{\,k-1},
    \end{equation*}
    where $h_T := \mab{P}[\tau_1 \le T]$ is the initial-hit factor
    and $g_T \in (0,1)$ is the Re-entry bound from Assumption~\ref{ass:reentry_T}.
\end{lemma}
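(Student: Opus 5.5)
The lemma has two parts, and both reduce to results already proved. The inclusion is pathwise and comes from Lemma~\ref{lem:mustpass}. The probability bound is exactly Eq.~\eqref{eq:tau_k_bound_qT} of Theorem~\ref{thm:sign_lockin_T_init}.

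\emph{The inclusion.} Fix $k\ge 1$ and an outcome in $\{\sigma_k \le T,\ \mathrm{sign}(w_{\sigma_k}) \neq \mathrm{sign}(w_{\sigma_{k-1}})\} \cap \mac{E}_{\Delta}$.
\begin{itemize}
\item The stopping times are nondecreasing, $\sigma_{k-1}<\tau_k<\sigma_k$ whenever finite. So $\sigma_k\le T$ forces $\sigma_{k-1}\le T$.
\item Lemma~\ref{lem:mustpass} then applies on $\mac E_\Delta\cap\{\sigma_{k-1}\le T,\sigma_k\le T\}$. It gives $\tau_k<\sigma_k\le T$.
\item Monotonicity of $(\tau_j)_{j\ge1}$ along the recursion $\tau_1<\sigma_1<\tau_2<\cdots$ gives $\tau_1\le\tau_k$.
\end{itemize}
Combining these yields $\tau_1\le\tau_k\le T$, which is the claimed inclusion.

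The step that needs care is the monotonicity claim, because of the convention $\inf\emptyset=\infty$. I would check it by a short induction on the definitions in Definition~\ref{def:stopping}: each $\tau_{j+1}$ is an infimum over $t>\sigma_j$, and each $\sigma_j$ over $t>\tau_j$. So finiteness of a later time forces finiteness of all earlier ones, with strict ordering.

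\emph{The probability bound.} This is verbatim Eq.~\eqref{eq:tau_k_bound_qT} of Theorem~\ref{thm:sign_lockin_T_init}, proved there by induction using the tower property and Assumption~\ref{ass:reentry_T}. I would simply cite it.

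I would add one interpretive remark: since $\{\tau_k\le T\}$ is the event that the $k$-th boundary excursion starts before $T$, the bound says that late initiations, requiring many prior re-entries, are geometrically suppressed. No genuine obstacle is expected; the only bookkeeping is that the $\mac E_\Delta$ intersection is needed solely for the pathwise part.
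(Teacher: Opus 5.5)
Your proposal is correct and matches the paper's proof. Both get the inclusion from the ordering of the stopping times plus monotonicity of $(\tau_j)$, and both cite the tail bound verbatim from Eq.~\eqref{eq:tau_k_bound_qT} of Theorem~\ref{thm:sign_lockin_T_init}. The only difference is that you obtain $\tau_k<\sigma_k$ through Lemma~\ref{lem:mustpass}, whereas the paper reads it directly off $\sigma_k=\inf\{t>\tau_k:|w_t|\ge\rho\}$, as your own induction on Definition~\ref{def:stopping} already does; so neither $\mathcal{E}_\Delta$ nor the sign change is actually needed for the inclusion, contrary to your closing remark.
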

\begin{proof}
We separate the claim into two parts.

\paragraph{A sign change at $\sigma_k$ implies an early boundary interaction.}
Fix $k\ge 1$ and consider the event
\[
\{\sigma_k \le T,\ \mathrm{sign}(w_{\sigma_k}) \neq \mathrm{sign}(w_{\sigma_{k-1}})\}\cap \mac E_{\Delta}.
\]
By Definition~\ref{def:stopping}, an \emph{effective outer-to-outer sign flip}
is counted only when the trajectory
(i) starts from an outer state at time $\sigma_{k-1}$,
(ii) reaches the boundary band at time $\tau_k$, and then
(iii) exits to the opposite outer side at time $\sigma_k$.
Hence, on the above event, the boundary-hit time $\tau_k$ must exist and satisfy
\[
\tau_k \le \sigma_k \le T.
\]
Also, since $(\tau_j)_j$ is nondecreasing by construction of successive excursions,
the first boundary-hit time satisfies $\tau_1\le \tau_k$.
Combining these relations gives
\[
\tau_1 \le \tau_k \le T,
\]
which proves
\[
\{\sigma_k \le T,\ \mathrm{sign}(w_{\sigma_k}) \neq \mathrm{sign}(w_{\sigma_{k-1}})\} \cap \mac E_{\Delta}
\subseteq
\{\tau_1 \le \tau_k \le T\}.
\]

\paragraph{Geometric suppression of late initiations.}
Theorem~\ref{thm:sign_lockin_T_init} states that under the same hypotheses,
for every $k\ge 1$,
\[
\mab{P}[\tau_k\le T] \le h_T g_T^{k-1},
\]
where $h_T=\mab P(\tau_1\le T)$ and $g_T\in(0,1)$ is the re-entry factor.
This is exactly the second displayed inequality in the lemma.

Therefore both statements hold.
\end{proof}
\begin{remark}[Front-loaded structure of sign changes]
\label{rem:I.front_loaded}
Lemma~\ref{cor:I.early_initiation} shows that sign flips are
\emph{front-loaded} in the training trajectory.
Importantly, this statement does not rely on specifying a concrete
``early-time'' window or on particular learning-rate schedules.
Rather, it follows from the stopping-time structure and the geometric
decay of boundary-hit probabilities:
a parameter that flips its sign at any point during training must have
already interacted with the boundary neighborhood at an earlier stage.
Conversely, parameters that do not approach the boundary early are
unlikely to exhibit sign flips later, regardless of the remaining
training duration.
\end{remark}

\subsubsection{Numerical validation of Practical Insights}
\label{sec:D4_5}
We follow the CharLM setup in Appendix~\ref{app:charlm_q1q2q3_details}, and only change the learning-rate schedule.
Data: Tiny Shakespeare with a contiguous 90/10 train/validation split; next-character prediction on random contiguous
blocks with sequence length $L=64$ and batch size $B=64$.
Model: TinyCharLM (causal Transformer) with $d_{\mathrm{model}}=128$, $n_{\mathrm{layers}}=2$, $n_{\mathrm{heads}}=4$,
$d_{\mathrm{ff}}=256$, context length $256$, and no dropout.
Optimization: AdamW with initialization scale $\sigma_{\mathrm{init}}=0.02$; all runs use three seeds (0/1/2) and the same
training horizon $T$ and peak step size $\eta_{\max}$.

Schedules: constant learning rate $\eta_t=\eta_{\max}$; and warmup of length $T_{\mathrm{wu}}$ followed by (i) cosine decay,
(ii) exponential decay with factor $\gamma$, and (iii) inverse decay with exponent $p$ (as defined in this section).
We compute $K:=K^{\mathrm{eff}}_{T}(\rho)$ from tracked weights, fit the zero-inflated geometric model to obtain
$\hat h$ and $\hat g$, and report mean$\pm$std across seeds. 

We next validate the batchsize dependency.
We fix the optimization setup above and only change the training minibatch size $B\in\{1,2,4,8,16,32,64,128,256,512,1024\}$, keeping the same
training horizon $T$ and peak step size $\eta_{\max}$.
Figure~\ref{fig:batch_line_g} shows that the estimated re-entry parameter $\hat g$ decreases monotonically as $B$ increases
(mean$\pm$std across seeds), supporting the batchsize dependency in Proposition~\ref{cor:batchsize_g_monotone}.

Finally, we empirically validate the width dependence predicted by
Corollary~\ref{cor:simexp_hg_monotone}.
Using the CharLM setup described in Appendix~\ref{app:charlm_q1q2q3_details}, we sweep the model width
$d_{\mathrm{model}} \in \{32,64,128,256,512\}$ while keeping all other
training hyperparameters fixed, including the initialization scale
$\sigma_{\mathrm{init}}$.
For each configuration, we track the effective flip count
$K_{\mathrm{eff},T}(\rho)$ and fit the zero-inflated geometric model
(Appendix~\ref{app:exp_tool}) to estimate the lock-in parameters $(\hat h,\hat g)$.
Figure~\ref{fig:width_sweep_h} shows that both the initial-hit factor $\hat h$
and the re-entry ratio $\hat g$ decrease monotonically as the width increases.
The reduction in $\hat h$ indicates that fewer weights ever reach the
near-zero boundary, consistent with Proposition~\ref{cor:simexp_hg_monotone}.

\begin{figure}[ht]
  \centering

  \begin{minipage}[t]{0.49\linewidth}
    \centering
    \includegraphics[width=\linewidth]{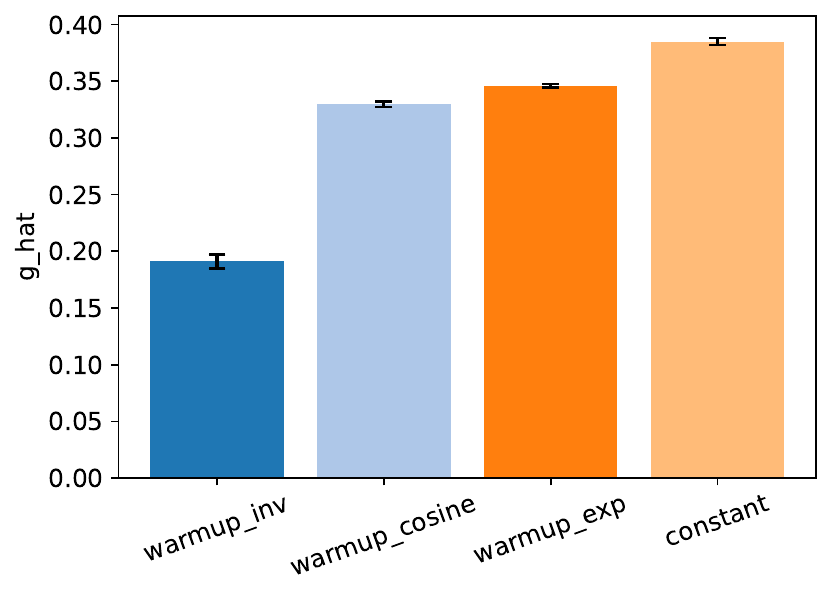}
    \captionof{figure}{Estimated $\hat g$ (mean$\pm$std) across schedules.}
    \label{fig:sch_bar_g}
  \end{minipage}\hfill
  \begin{minipage}[t]{0.49\linewidth}
    \centering
    \includegraphics[width=\linewidth]{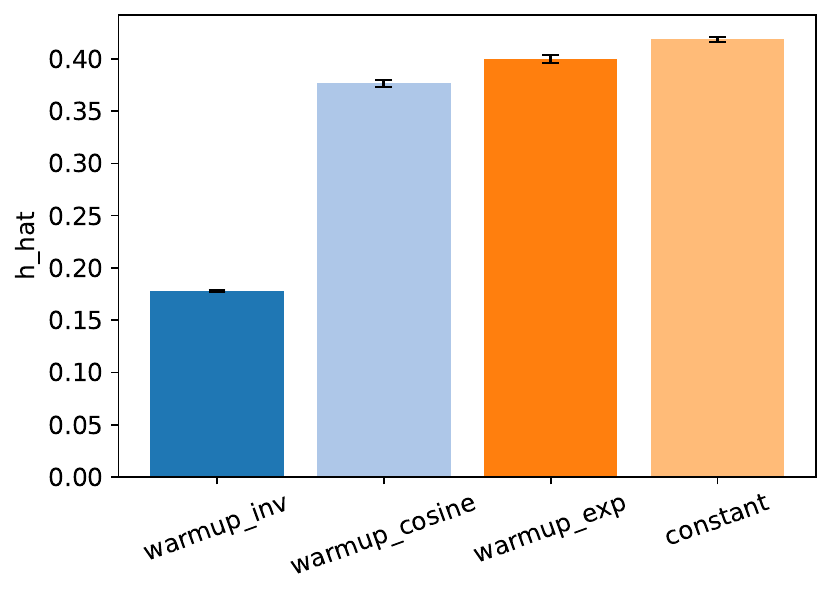}
    \captionof{figure}{Estimated $\hat h$ (mean$\pm$std) across schedules.}
    \label{fig:sch_bar_h}
  \end{minipage}

  \vspace{0.8em} 

  \begin{minipage}[t]{0.49\linewidth}
    \centering
    \includegraphics[width=0.95\linewidth]{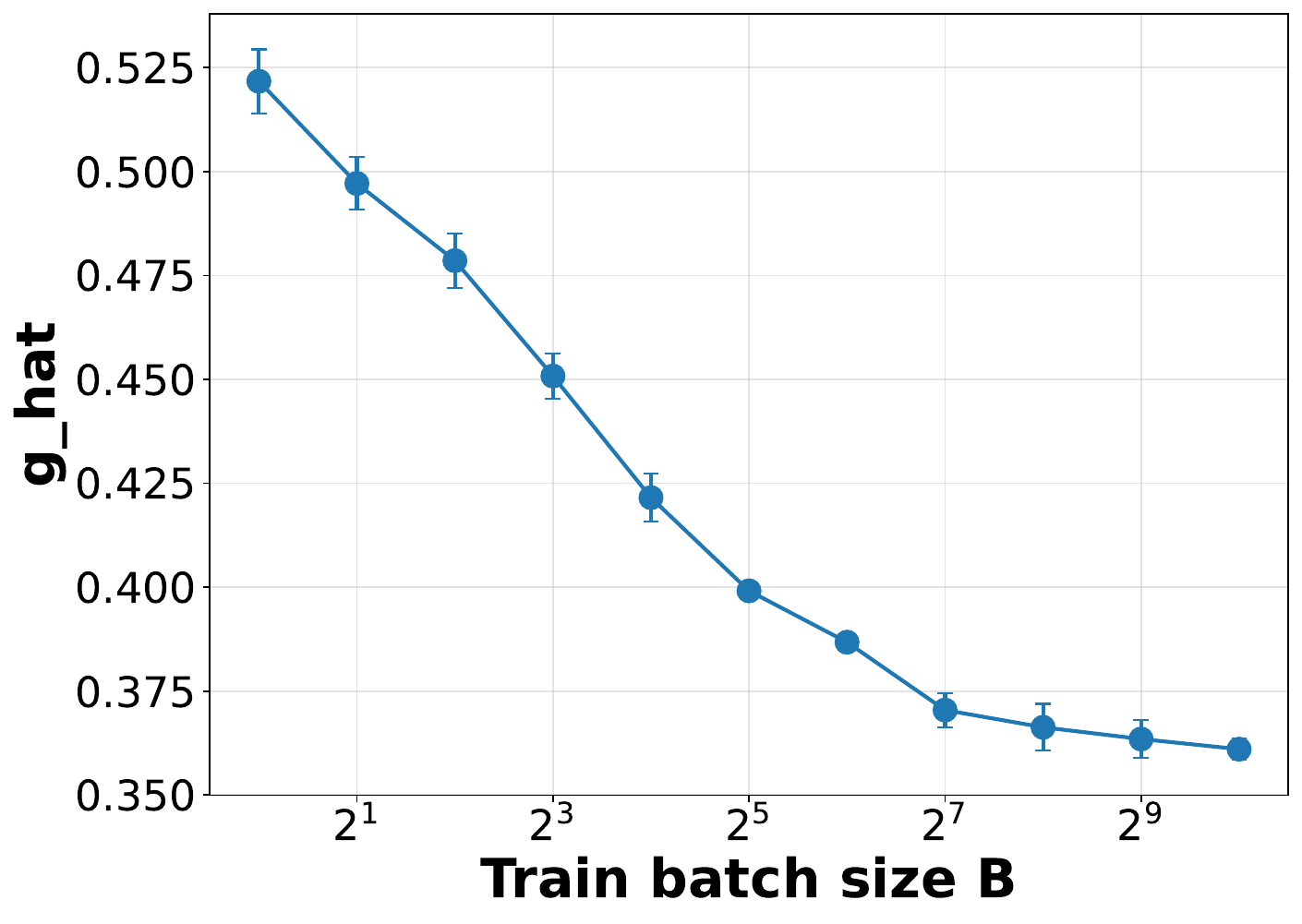}
    \captionof{figure}{Estimated $\hat g$ (mean$\pm$std across seeds) vs. train batch size $B$.}
    \label{fig:batch_line_g}
  \end{minipage}\hfill
  \begin{minipage}[t]{0.49\linewidth}
    \centering
    \includegraphics[width=0.95\linewidth]{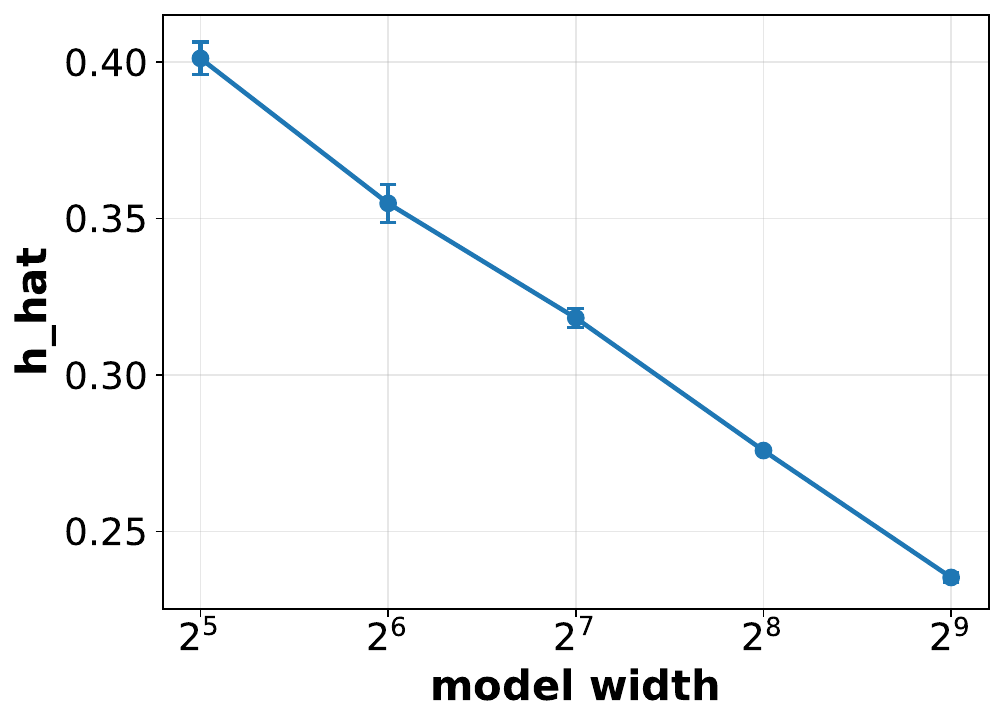}
    \captionof{figure}{Estimated initial-hit factor $\hat h$ vs. model width $d_{\mathrm{model}}$.}
    \label{fig:width_sweep_h}
  \end{minipage}

\end{figure}

\newpage

\subsection{Theory for Sign Lock-In Enhancement }
\label{app:lockin_enhance}
Having established the deterministic and stochastic foundations of sign lock-in,
we now turn to theory-driven mechanisms that actively enhance this effect.
The results in Appendix~\ref{app:sign_foundation} show that the initial-hit factor $h_T$ is largely
dominated by initialization under bounded updates, while Appendix~\ref{app:bn_reentry} demonstrates
how stochastic optimization dynamics control the re-entry ratio $g_T$.
This Appendix builds on these insights to formalize practical interventions that
reduce boundary visits and re-entry, thereby strengthening sign lock-in beyond
its naturally emerging regime.

\begin{proposition}[Gap initialization suppresses the initial-hit factor]
\label{cor:gap_q}
Assume the setting of Proposition~\ref{prop:h_init_general}.
Under the gap initialization described in Section~\ref{sec:lockin_enhance_main},
let $Z\sim\mac N(0,\sigma_{\mathrm{init}}^2)$ and set
$|w_0| = |Z|$ conditioned on $|Z| \ge a_{\mathrm{init}}$.
Then
\begin{equation*}
h_T \le \mab P\bigl[|Z|\le b_T \mid |Z|\ge a_{\mathrm{init}}\bigr] + \delta_{\mathrm{upd}}
= \frac{\mab P[a_{\mathrm{init}}\le |Z|\le b_T]}{\mab P[|Z|\ge a_{\mathrm{init}}]} + \delta_{\mathrm{upd}} .
\end{equation*}
Here $b_T := \epsilon + T\Delta$.
In particular, if $b_T < a_{\mathrm{init}}$, then $h_T \le \delta_{\mathrm{upd}}$.
\end{proposition}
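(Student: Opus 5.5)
This is a direct specialization of Proposition~\ref{prop:h_init_general}, so the plan is to invoke that result and then compute the resulting probability under the gap-initialization law. Proposition~\ref{prop:h_init_general} holds for any process satisfying Assumption~\ref{ass:bounded_update}, and it gives
\[
h_T \le \mab P\bigl[|w_0|\le b_T\bigr]+\delta_{\mathrm{upd}}, \qquad b_T=\epsilon+T\Delta .
\]
Its proof uses only the triangle inequality along the path on $\mac E_\Delta$ and the bound $\mab P(\mac E_\Delta^c)\le\delta_{\mathrm{upd}}$. It makes no assumption on the law of $w_0$, so it applies verbatim under gap initialization.

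Next I substitute the gap-initialization law. By construction (rejection sampling), $|w_0|$ has the law of $|Z|$ conditioned on $\{|Z|\ge a_{\mathrm{init}}\}$, where $Z\sim\mac N(0,\sigma_{\mathrm{init}}^2)$. The event $\{|Z|\ge a_{\mathrm{init}}\}$ has positive probability, so elementary conditional probability gives
\[
\mab P\bigl[|w_0|\le b_T\bigr]=\mab P\bigl[|Z|\le b_T \mid |Z|\ge a_{\mathrm{init}}\bigr]=\frac{\mab P\bigl[a_{\mathrm{init}}\le |Z|\le b_T\bigr]}{\mab P\bigl[|Z|\ge a_{\mathrm{init}}\bigr]} .
\]
Plugging this into the bound from Proposition~\ref{prop:h_init_general} yields the displayed inequality.

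For the final claim, suppose $b_T<a_{\mathrm{init}}$. Then the event $\{a_{\mathrm{init}}\le |Z|\le b_T\}$ is empty, so the ratio vanishes and $h_T\le\delta_{\mathrm{upd}}$.

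The only point needing care is a measurability and consistency check: the bounded-update constants $\Delta$ and $\delta_{\mathrm{upd}}$, and hence $\epsilon$ and $b_T$, must be deterministic and must not depend on the realized initialization. Assumption~\ref{ass:bounded_update} guarantees this, since those constants are fixed before the draw of $w_0$. With that settled, there is no real obstacle and the argument is a two-line specialization.
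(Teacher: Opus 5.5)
Your proposal is correct and takes the same route as the paper. You apply Proposition~\ref{prop:h_init_general} to get $h_T\le\mathbb{P}[|w_0|\le b_T]+\delta_{\mathrm{upd}}$, rewrite that probability as the conditional probability under the truncated-Gaussian law of $|w_0|$, and note that $\{a_{\mathrm{init}}\le |Z|\le b_T\}$ is empty when $b_T<a_{\mathrm{init}}$.
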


\begin{proof}
By Proposition~\ref{prop:h_init_general}, $h_T \le \mab P[|w_0|\le b_T]+\delta_{\mathrm{upd}}$ with $b_T=\epsilon+T\Delta$.
Under gap initialization, $|w_0|$ is distributed as $|Z|$ conditioned on $|Z|\ge a_{\mathrm{init}}$.
Therefore,
\begin{equation*}
h_T
=\mab P[\tau_1\le T]
\le
\mab P[|w_0|\le b_T] + \delta_{\mathrm{upd}}
=
\mab P\bigl[|Z|\le b_T \,\big|\, |Z|\ge a_{\mathrm{init}}\bigr]
+\delta_{\mathrm{upd}},
\end{equation*}
which proves Proposition~\ref{cor:gap_q}.
If $b_T<a_{\mathrm{init}}$, the conditional probability term vanishes,
and hence $h_T\le\delta_{\mathrm{upd}}$.
\end{proof}

\begin{proposition}
[Outer-drift implies an explicit Re-entry bound]
\label{cor:outer_drift_p}
Assume Assumption~\ref{ass:bounded_update} and suppose additionally that there exists a constant $\mu>0$ such that
\begin{equation}
\label{eq:outer_drift_assumption}
\mab{E}\big[\,|w_{t+1}|-|w_t| \,\big|\, \mac{F}_t\big] \ge \mu
~~\text{almost surely on }\{|w_t|>\epsilon\}.
\end{equation}
Then for every $k\ge 0$, on the event $\{\sigma_k\le T\}$,
\begin{equation}
\label{eq:pT_outer_drift}
\mab{P}\big[\tau_{k+1}\le T \,\big|\, \mac{F}_{\sigma_k}\big]
\le
\delta_{\mathrm{upd}}
+
\exp\!\Big(-\frac{2\mu(\rho-\epsilon)}{\Delta^2}\Big).
\end{equation}
Consequently, Assumption~\ref{ass:reentry_T} holds with the (time-uniform) choice
\begin{equation*}
\label{eq:pT_OD_def}
g_T^{\mathrm{OD}}
:=
\delta_{\mathrm{upd}}
+
\exp\!\Big(-\frac{2\mu(\rho-\epsilon)}{\Delta^2}\Big),
\end{equation*}
whenever $g_T^{\mathrm{OD}}<1$.

\paragraph{Interpretation for the log-barrier.}
In the presence of the log-barrier term in Section~\ref{sec:lockin_enhance_main}, a sufficient condition for
Eq.~\eqref{eq:outer_drift_assumption} is that the regularizer-induced outward push dominates any
(inward) bias from the task update near the boundary. One convenient way to express this is via
a lower bound $\mu \approx \gamma\lambda/(\rho+\epsilon_{\mathrm{lb}}) - \beta$ (notation as in the main text),
whenever the barrier is active throughout the band $(\epsilon,\rho]$.
\end{proposition}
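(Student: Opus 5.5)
The plan is the classical exponential-supermartingale (gambler's-ruin) argument applied to the oriented magnitude $Y_t:=|w_t|$ on the excursion that starts at $\sigma_k$. Condition on $\mac F_{\sigma_k}$ on $\{\sigma_k\le T\}$; if $\sigma_k=T$ the bound is trivial, as in the proof of Proposition~\ref{prop:appF_ass2_from_sgd}. Two facts drive the argument. First, $Y_{\sigma_k}\ge\rho$. Second, for $\sigma_k\le t<\tau_{k+1}$ we have $Y_t>\epsilon$, so hypothesis Eq.~\eqref{eq:outer_drift_assumption} gives $\mab E[Y_{t+1}-Y_t\mid\mac F_t]\ge\mu$. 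On the bounded-update event we also have $|Y_{t+1}-Y_t|\le|w_{t+1}-w_t|\le\Delta$ by the reverse triangle inequality.

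\textbf{Step 1 (one-step supermartingale inequality).} Set $\lambda:=2\mu/\Delta^2$ and $M_t:=\exp(-\lambda Y_t)$. Write $D_t:=Y_{t+1}-Y_t$; on the good event $D_t$ takes values in $[-\Delta,\Delta]$, an interval of length $2\Delta$. Hoeffding's lemma then gives
\[
\mab E[e^{-\lambda D_t}\mid\mac F_t]\le \exp\!\bigl(-\lambda\,\mab E[D_t\mid\mac F_t]+\lambda^2(2\Delta)^2/8\bigr)\le\exp\!\bigl(-\lambda\mu+\lambda^2\Delta^2/2\bigr).
\]
With $\lambda=2\mu/\Delta^2$ the exponent is exactly $0$. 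Hence $M_{t\wedge\tau_{k+1}}$, for $t\ge\sigma_k$, is a nonnegative supermartingale with respect to the shifted filtration $\mac H_m=\mac F_{\sigma_k+m}$.

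\textbf{Step 2 (optional stopping).} Stop at $\kappa:=\tau_{k+1}\wedge T$, a bounded stopping time. Optional stopping gives $\mab E[M_\kappa\mid\mac F_{\sigma_k}]\le M_{\sigma_k}\le e^{-\lambda\rho}$. On $\{\tau_{k+1}\le T\}$ we have $Y_\kappa\le\epsilon$, so $M_\kappa\ge e^{-\lambda\epsilon}$. By Markov's inequality,
\[
\mab P[\tau_{k+1}\le T\mid\mac F_{\sigma_k}]\,e^{-\lambda\epsilon}\le e^{-\lambda\rho},
\]
which is $\exp(-2\mu(\rho-\epsilon)/\Delta^2)$. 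The bounded-update failure is then absorbed by a union bound with $\mab P[\mac E_\Delta(\sigma_k)^c\mid\mac F_{\sigma_k}]\le\delta_{\mathrm{upd}}$, which comes from Assumption~\ref{ass:bounded_update} at $\theta=\sigma_k$. This yields Eq.~\eqref{eq:pT_outer_drift}. The statement about Assumption~\ref{ass:reentry_T} follows immediately, because the bound does not depend on $k$.

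\textbf{Main obstacle: making the bounded-update event usable in Step 1.} Hoeffding's lemma needs $D_t\in[-\Delta,\Delta]$ almost surely conditionally on $\mac F_t$. The good event $\mac E_\Delta(\sigma_k)$, however, is not predictable. The first violation time $\nu$ (as in the proof of Proposition~\ref{prop:appF_ass2_from_sgd}) makes $\{t<\nu\}$ depend on the increment at time $t$ itself, so $\{t<\nu\}$ is not $\mac F_t$-measurable.

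\emph{First attempt.} Read the drift hypothesis and the bounded-update hypothesis jointly: the conditional law of $D_t$ on $\{|w_t|>\epsilon\}$ is supported in $[-\Delta,\Delta]$ up to the failure event. Concretely, I would work with the modified process that freezes $Y$ at the first violation. Replace $D_t$ by $D_t\mathbf 1\{|D_t|\le\Delta\}$ and add $\Delta$-sized slack to $\mu$ on the failure part, or equivalently assume the drift inequality for the clipped increment.

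\emph{Fallback.} If that bookkeeping does not close, state the result under the natural predictable version of the increment bound, i.e.\ $|w_{t+1}-w_t|\le\Delta$ a.s.\ on a predictable good set. The failure mass $\delta_{\mathrm{upd}}$ then enters only through the final union bound, exactly as above.
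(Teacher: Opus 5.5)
Your route is the paper's. It also freezes the process at the first violation time and uses $\exp(-\kappa|w|)$ with $\kappa=2\mu/\Delta^2$ (your $\lambda$), then Hoeffding's lemma, optional stopping, and a union bound with $\delta_{\mathrm{upd}}$.

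\textbf{The obstacle you flag is genuine, and the paper's proof does not overcome it either.} The paper applies Hoeffding conditionally on $\mathcal{G}_t$ ``for $t<\tau'\wedge\eta'$''. But $\{t<\eta'\}$ requires $|w_{\sigma_k+t+1}-w_{\sigma_k+t}|\le\Delta$, so it is not $\mathcal{G}_t$-measurable. Meanwhile the drift hypothesis controls only the mean of the \emph{untruncated} increment.

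\textbf{Your first attempt cannot close the gap.} After freezing, the active increment is $D_t\mathbf{1}\{|w_{t+1}-w_t|\le\Delta\}$. Its conditional mean is smaller than $\mathbb{E}[D_t\mid\mathcal{F}_t]$ by $\mathbb{E}[D_t\mathbf{1}\{|w_{t+1}-w_t|>\Delta\}\mid\mathcal{F}_t]$. Nothing bounds $D_t$ on the failure event, so no $\Delta$-sized slack suffices.

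\textbf{In fact the proposition is false as stated.} Take $\epsilon=\Delta$, $\rho=11\Delta$, $\mu=\Delta$, $T=10$, $w_0=\rho$. On $\{w_t>\epsilon\}$, let $w_{t+1}=w_t-\Delta$ with probability $1-p$ and $w_{t+1}=w_t+2\Delta/p$ with probability $p$, independently; freeze on $\{w_t\le\epsilon\}$. Then:
\begin{itemize}
\item $\mathbb{E}[|w_{t+1}|-|w_t|\mid\mathcal{F}_t]=(1+p)\Delta\ge\mu$ on $\{|w_t|>\epsilon\}$;
\item the bounded-update assumption holds with $\delta_{\mathrm{upd}}=10p$;
\item $\mathbb{P}[\tau_1\le T\mid\mathcal{F}_{\sigma_0}]\ge(1-p)^{10}$, whereas the claimed bound is $10p+e^{-20}$.
\end{itemize}
Correspondingly, the paper's stopped process satisfies $\mathbb{E}[M_{t+1}\mid\mathcal{G}_t]=M_t\bigl((1-p)e^{2}+p\bigr)>M_t$, so it is not a supermartingale.

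\textbf{Hence your fallback is a necessary strengthening of the hypothesis, not a convenience.} Either of two hypotheses suffices.
\begin{enumerate}
\item If $\delta_{\mathrm{upd}}=0$ (e.g., deterministic clipping), increments lie in $[-\Delta,\Delta]$ almost surely. Your Steps~1--2 then prove the bound as written.
\item Otherwise, assume the drift for the clipped increment: $\mathbb{E}[(|w_{t+1}|-|w_t|)\mathbf{1}\{|w_{t+1}-w_t|\le\Delta\}\mid\mathcal{F}_t]\ge\mu$ on $\{|w_t|>\epsilon\}$. Run your argument on the process whose step-$t$ increment is this clipped increment times the $\mathcal{F}_t$-measurable indicator of $\{\sigma_k\le t<\tau_{k+1}\}$ and of no violation at steps $\sigma_k,\dots,t-1$. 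This process has increments in $[-\Delta,\Delta]$ with conditional mean at least $\mu$ where active. On $\mathcal{E}_\Delta(\sigma_k)$ it coincides with $|w_t|$ up to $\tau_{k+1}\wedge T$. Your union bound with $\mathbb{P}[\mathcal{E}_\Delta(\sigma_k)^c\mid\mathcal{F}_{\sigma_k}]\le\delta_{\mathrm{upd}}$ then yields exactly $g_T^{\mathrm{OD}}$.
\end{enumerate}
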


\begin{proof}
Fix $k\ge 0$ and work on $\{\sigma_k\le T\}$.
If $\sigma_k=T$ then $\tau_{k+1}>T$ by definition, so Eq.~\eqref{eq:pT_outer_drift} is trivial.
Hence assume $\sigma_k\le T-1$.

Define the \emph{first update-violation time}
\begin{equation*}
\eta:=\inf\{t\ge \sigma_k:\ |w_{t+1}-w_t|>\Delta\},
\end{equation*}
with $\inf\emptyset=\infty$.
By Assumption~\ref{ass:bounded_update} applied at $\theta=\sigma_k$,
$\mab P(\eta\le T-1\mid \mac F_{\sigma_k})\le \delta_{\mathrm{upd}}$.

Define the shifted filtration $\mac{G}_t:=\mac{F}_{\sigma_k+t}$ and the shifted process
$r'_t:=|w_{\sigma_k+t}|$ for $t\ge 0$.
Let
\begin{equation*}
\tau' := \inf\{t\ge 1:\ r'_t\le \epsilon\},
~~
\eta' := \inf\{t\ge 0:\ \sigma_k+t\ge \eta\}.
\end{equation*}
Then $\tau_{k+1}=\sigma_k+\tau'$ and $\{\eta>T-1\}$ is equivalent to $\{\eta'>T-\sigma_k-1\}$.

On the event $\{t<\eta'\}$ we have $|w_{\sigma_k+t+1}-w_{\sigma_k+t}|\le \Delta$ and since $x\mapsto |x|$ is $1$-Lipschitz,
\begin{equation}
\label{eq:r_increment_bound}
|r'_{t+1}-r'_t|
\le |w_{\sigma_k+t+1}-w_{\sigma_k+t}|
\le \Delta .
\end{equation}
Moreover, by Eq.~\eqref{eq:outer_drift_assumption}, for all $t<\tau'$ we have
$\mab{E}[r'_{t+1}-r'_t\mid\mac{G}_t]\ge \mu$.

Let $\kappa:=2\mu/\Delta^2$ and define
\begin{equation*}
M_t := \exp\big(-\kappa\, r'_{t\wedge\tau'\wedge \eta'}\big).
\end{equation*}
We claim that $(M_t)_{t\ge 0}$ is a supermartingale with respect to $(\mac{G}_t)$.
Indeed, for $t<\tau'\wedge\eta'$,
set $X_t:=-(r'_{t+1}-r'_t)$.
Then $X_t\in[-\Delta,\Delta]$ by Eq.~\eqref{eq:r_increment_bound} and $\mab{E}[X_t\mid\mac{G}_t]\le -\mu$.
Hoeffding's lemma gives
\begin{equation*}
\mab{E}\!\left[e^{\kappa X_t}\mid \mac{G}_t\right]
\le
\exp\!\left(\kappa\,\mab{E}[X_t\mid\mac{G}_t] + \frac{\kappa^2\Delta^2}{2}\right)
\le
\exp\!\left(-\kappa\mu + \frac{\kappa^2\Delta^2}{2}\right)
=1,
\end{equation*}
where the last equality uses $\kappa=2\mu/\Delta^2$.
Therefore $\mab{E}[M_{t+1}\mid\mac{G}_t]\le M_t$ for $t<\tau'\wedge\eta'$, and for $t\ge \tau'\wedge\eta'$ we have $M_{t+1}=M_t$,
so $(M_t)$ is a supermartingale.

Fix an integer $n\ge 1$.
By optional stopping applied to the bounded stopping time $(\tau'\wedge\eta')\wedge n$,
\begin{equation*}
\mab{E}[M_{(\tau'\wedge\eta')\wedge n}\mid \mac{G}_0]\le M_0=\exp(-\kappa r'_0).
\end{equation*}
On the event $\{\tau'\le n,\ \eta'>n\}$ we have $r'_{\tau'}\le \epsilon$, hence
$M_{(\tau'\wedge\eta')\wedge n}=M_{\tau'}=\exp(-\kappa r'_{\tau'})\ge \exp(-\kappa\epsilon)$.
Thus,
\begin{equation*}
\exp(-\kappa\epsilon)\,\mab{P}(\tau'\le n,\ \eta'>n\mid \mac{G}_0)
\le
\mab{E}[M_{(\tau'\wedge\eta')\wedge n}\mid \mac{G}_0]
\le
\exp(-\kappa r'_0),
\end{equation*}
which gives
\begin{equation*}
\mab{P}(\tau'\le n,\ \eta'>n\mid \mac{F}_{\sigma_k})
\le
\exp\big(-\kappa(r'_0-\epsilon)\big)
\le
\exp\big(-\kappa(\rho-\epsilon)\big)
=
\exp\!\Big(-\frac{2\mu(\rho-\epsilon)}{\Delta^2}\Big).
\end{equation*}
Finally, taking $n=T-\sigma_k$ and using
\begin{equation*}
\mab P(\tau_{k+1}\le T\mid \mac F_{\sigma_k})
\le
\mab P(\eta\le T-1\mid \mac F_{\sigma_k})
+\mab P(\tau'\le T-\sigma_k,\ \eta'>T-\sigma_k-1\mid \mac F_{\sigma_k})
\end{equation*}
yields the Proposition~\ref{cor:outer_drift_p}.
\end{proof}

\begin{proposition}
    [Flip-histogram bound under Gap initialization and outer-drift]
\label{cor:gap_plus_OD}
Assume the hypotheses of Proposition~\ref{cor:gap_q} (Gap initialization) and
Proposition~\ref{cor:outer_drift_p}, and keep the definitions of
$K^{\mathrm{eff}}_T(\rho)$, $\tau_k$, $\sigma_k$ from Section~\ref{sec:lockin}.
Then, for all $k\ge 1$,
\begin{equation}
\label{eq:tail_gap_plus_OD}
\mab{P}\big[K^{\mathrm{eff}}_T(\rho)\ge k\big]
\le
h_T^{\mathrm{gap}}\,(g_T^{\mathrm{OD}})^{k-1}
+\delta_{\mathrm{upd}},
~~
g_T^{\mathrm{OD}}=\delta_{\mathrm{upd}}+\exp\!\Big(-\frac{2\mu(\rho-\epsilon)}{\Delta^2}\Big).
\end{equation}
In particular, since $K^{\mathrm{eff}}_T(\rho)\le T$, the expected effective flip count admits the bound
\begin{equation*}
\label{eq:mean_gap_plus_OD}
\mab{E}\big[K^{\mathrm{eff}}_T(\rho)\big]
=
\sum_{k=1}^{T}\mab{P}\big[K^{\mathrm{eff}}_T(\rho)\ge k\big]
\le
\frac{h_T^{\mathrm{gap}}}{1-g_T^{\mathrm{OD}}}
+T\,\delta_{\mathrm{upd}},
\end{equation*}
whenever $g_T^{\mathrm{OD}}<1$.
\end{proposition}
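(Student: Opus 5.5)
The plan is to treat this statement as a direct instantiation of Theorem~\ref{thm:sign_lockin_T_init}: its two inputs are supplied by Proposition~\ref{cor:gap_q} and Proposition~\ref{cor:outer_drift_p}. Write $h_T^{\mathrm{gap}}$ for the right-hand side of Proposition~\ref{cor:gap_q}, namely $\mab P[a_{\mathrm{init}}\le |Z|\le b_T]/\mab P[|Z|\ge a_{\mathrm{init}}]+\delta_{\mathrm{upd}}$. That proposition gives $h_T=\mab P[\tau_1\le T]\le h_T^{\mathrm{gap}}$.

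Proposition~\ref{cor:outer_drift_p} shows that Assumption~\ref{ass:reentry_T} holds with $g_T=g_T^{\mathrm{OD}}$, provided $g_T^{\mathrm{OD}}<1$. If instead $g_T^{\mathrm{OD}}\ge 1$, then for $k\ge 2$ the right-hand side of Eq.~\eqref{eq:tail_gap_plus_OD} is at least $h_T^{\mathrm{gap}}\ge \mab P[\tau_1\le T]$. Lemma~\ref{lem:flip_implies_tau} then gives the bound anyway, since $\tau_1\le\tau_k$.

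With Assumptions~\ref{ass:bounded_update}--\ref{ass:reentry_T} verified, I will not just cite the final display of Theorem~\ref{thm:sign_lockin_T_init}, because it is stated with $h_T$ itself. Instead I will rerun its induction. The induction gives $\mab P[\tau_k\le T]\le \mab P[\tau_1\le T]\,(g_T^{\mathrm{OD}})^{k-1}$, and monotonicity in the prefactor lets me replace $\mab P[\tau_1\le T]$ by $h_T^{\mathrm{gap}}$. Lemma~\ref{lem:flip_implies_tau} then converts this into $\mab P[K^{\mathrm{eff}}_T(\rho)\ge k]\le h_T^{\mathrm{gap}}(g_T^{\mathrm{OD}})^{k-1}+\delta_{\mathrm{upd}}$, which is Eq.~\eqref{eq:tail_gap_plus_OD}.

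For the mean bound, I first check that $K^{\mathrm{eff}}_T(\rho)\le T$. The times $\sigma_k$ are strictly increasing, since $\sigma_k>\tau_k>\sigma_{k-1}$, so at most $T$ indices $k\ge1$ can satisfy $\sigma_k\le T$. Using the tail-sum formula for a nonnegative integer variable, I write $\mab E[K]=\sum_{k=1}^{T}\mab P[K\ge k]$. I then substitute the tail bound and sum the geometric series, $\sum_{k\ge1}(g_T^{\mathrm{OD}})^{k-1}=1/(1-g_T^{\mathrm{OD}})$. The $\delta_{\mathrm{upd}}$ terms sum to at most $T\delta_{\mathrm{upd}}$.

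The only genuinely delicate step is the compatibility of hypotheses. The outer-drift condition Eq.~\eqref{eq:outer_drift_assumption} and gap initialization must hold for the same process with the same $\epsilon=\max\{\epsilon_0,\Delta\}$ and the same $\Delta$. The re-entry bound of Proposition~\ref{cor:outer_drift_p} must also hold for $k=0$ as well as for $k\ge1$, uniformly on $\{\sigma_k\le T\}$; that proposition states it for every $k\ge 0$. Beyond that, the argument is bookkeeping, and I expect no real obstacle.
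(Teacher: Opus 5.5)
Your proposal is correct and follows the paper's own proof. Proposition~\ref{cor:outer_drift_p} supplies Assumption~\ref{ass:reentry_T} with $g_T=g_T^{\mathrm{OD}}$, Theorem~\ref{thm:sign_lockin_T_init} gives $\mathbb{P}[\tau_k\le T]\le h_T(g_T^{\mathrm{OD}})^{k-1}$, $h_T$ is replaced by the gap bound of Proposition~\ref{cor:gap_q}, Lemma~\ref{lem:flip_implies_tau} passes to $K^{\mathrm{eff}}_T(\rho)$, and the mean bound is the tail sum over $k\le T$. Your extra care tightens points the paper passes over: treating $g_T^{\mathrm{OD}}\ge 1$ separately, and taking $h_T^{\mathrm{gap}}$ to include $\delta_{\mathrm{upd}}$ so that $h_T\le h_T^{\mathrm{gap}}$ genuinely follows from Proposition~\ref{cor:gap_q}. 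Rerunning the induction is unnecessary, though, since the theorem's first display already gives $\mathbb{P}[\tau_k\le T]\le h_T g_T^{k-1}$ and monotonicity in the prefactor finishes it.
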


\begin{proof}
Corollary~\ref{cor:outer_drift_p} verifies Assumption~\ref{ass:reentry_T} with $g_T=g_T^{\mathrm{OD}}$.
Applying Theorem~\ref{thm:sign_lockin_T_init} then gives
$\mab{P}[\tau_k\le T]\le h_T (g_T^{\mathrm{OD}})^{k-1}$.
Replacing $h_T$ by the Gap-initialization probability $h_T^{\mathrm{gap}}$ (Corollary~\ref{cor:gap_q})
and using Eq.~\eqref{eq:Keff_bound_qT} in Theorem~\ref{thm:sign_lockin_T_init} yields Eq.~\eqref{eq:tail_gap_plus_OD}.

Finally, since $K^{\mathrm{eff}}_T(\rho)\le T$, we can sum the tail bound up to $T$:
\begin{equation*}
\mab E[K^{\mathrm{eff}}_T(\rho)]
=
\sum_{k=1}^{T}\mab P[K^{\mathrm{eff}}_T(\rho)\ge k]
\le
\sum_{k=1}^{T}\Big(h_T^{\mathrm{gap}}(g_T^{\mathrm{OD}})^{k-1}+\delta_{\mathrm{upd}}\Big)
\le
\frac{h_T^{\mathrm{gap}}}{1-g_T^{\mathrm{OD}}}+T\,\delta_{\mathrm{upd}}.
\end{equation*}
\end{proof}

\subsection{Sign Floating Mode}
Although sign lock-in is generally expected to occur, one can artificially establish conditions to break this lock-in.
Here, we demonstrate an example of this approach.
This sign floating mode is induced by an auxiliary regularizer with an extraordinarily large weight, and should be viewed as an artificial regime rather than the typical behavior of standard deep-network training.
\label{app:floating_mode}

\subsubsection{Theory for Sign Floating Mode}
We keep the same stopping-time framework as in Section~\ref{sec:lockin}:
outer-entry times $(\sigma_k)_{k\ge0}$, boundary-hit times $(\tau_k)_{k\ge1}$,
and the effective outer-to-outer flip count
$K^{\mathrm{eff}}_T(\rho)$.
In sign floating mode, boundary re-entries are frequent and, crucially,
the sign upon exiting back to the outer region is \emph{well-mixed} (approximately random).
This yields a binomial law for the histogram of effective flips.

\begin{assumption}[Floating regime: excursion-count concentration up to $T$]
\label{ass:floating_reentry}
There exist an integer $\bar n_T\ge 0$ and parameters $\Delta_T\ge 0$ and $\delta_T\in[0,1)$ such that
\begin{equation*}
\mab{P}\bigl(|N_T(\rho)-\bar n_T|\le \Delta_T\bigr)\ \ge\ 1-\delta_T,
\end{equation*}
where the completed outer-entry count is $N_T(\rho):=\max\{k\ge 0:\sigma_k\le T\}$.
In the induced floating regime, $\bar n_T$ is typically moderate to large.
\end{assumption}

\begin{assumption}[Sign mixing at outer re-entry]
\label{ass:floating_mix_sym}
Define the outer-entry sign sequence
\begin{equation*}
Z_k := \mathrm{sign}(w_{\sigma_k}) \in \{\pm 1\}, ~~ k\ge 0,
\end{equation*}
(on $\{\sigma_k\le T\}$ this is well-defined since $|w_{\sigma_k}|\ge \rho>0$).
Assume that, for every $k\ge 1$,
\begin{equation*}
\mab{P}[Z_k=+1 \mid \mac{F}_{\tau_k}]
=\mab{P}[Z_k=-1 \mid \mac{F}_{\tau_k}]
=\tfrac12,
\end{equation*}
and moreover, conditionally on the completed outer-entry count
\begin{equation*}
N_T(\rho):=\max\{k\ge 0:\sigma_k\le T\},
\end{equation*}
the random variables $(Z_1,Z_2,\dots,Z_{N_T(\rho)})$ are independent.
(No assumption is imposed on $Z_0$.)
\end{assumption}

\begin{remark}
Assumption~\ref{ass:floating_reentry} posits that the completed outer-entry count $N_T(\rho)$ is effectively fixed
(or tightly concentrated) up to time $T$, so boundary excursions are not governed by the ``rare re-entry'' regime behind
Theorem~\ref{thm:sign_lockin_T_init}. However, having many excursions alone does not determine the histogram shape of
$K_T^{\mathrm{eff}}(\rho)$.
The binomial law in Theorem~\ref{thm:floating_binom_sym} is driven by Assumption~\ref{ass:floating_mix_sym}, which enforces
(approximately) symmetric sign re-randomization upon each outer re-entry; Assumption~\ref{ass:floating_reentry} controls how
the resulting conditional binomial laws mix through $N_T(\rho)$ (see the remark below).
\end{remark}
\begin{theorem}[Binomial histogram of effective flips in sign floating mode]
\label{thm:floating_binom_sym}
Assume Assumption~\ref{ass:floating_mix_sym}. Conditionally on $N_T(\rho)=n$, the effective flip count satisfies
\begin{equation*}
K^{\mathrm{eff}}_T(\rho)\ \big|\ \{N_T(\rho)=n\}\ \sim\
\begin{cases}
\delta_{0}, & n=0,\\
\mathrm{Binomial}\!\left(n,\tfrac12\right), & n\ge 1.
\end{cases}
\end{equation*}
\end{theorem}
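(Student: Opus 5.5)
The plan is to rewrite $K^{\mathrm{eff}}_T(\rho)$ as a sum of indicator variables built from the sign sequence $(Z_k)$, and then to show that these indicators are i.i.d.\ Bernoulli$(\tfrac12)$ conditionally on $N_T(\rho)=n$.

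\textbf{Step 1 (rewrite the count).} Since the stopping times $\sigma_k$ are strictly increasing while finite, $\{\sigma_k\le T\}=\{k\le N_T(\rho)\}$. The definition \eqref{eq:keff_def_in_thm} then gives
\[
K^{\mathrm{eff}}_T(\rho)=\sum_{k=1}^{N_T(\rho)}\mathbf{1}[Z_k\neq Z_{k-1}].
\]

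\textbf{Step 2 (the case $n=0$).} If $n=0$ the sum is empty, so $K^{\mathrm{eff}}_T(\rho)=0$ on $\{N_T(\rho)=0\}$. This gives the point mass $\delta_0$.

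\textbf{Step 3 (the case $n\ge1$).} Define the flip indicators $\xi_k:=\mathbf{1}[Z_k\neq Z_{k-1}]$, or equivalently $Z_kZ_{k-1}=-1$. I will show that conditionally on $\{N_T(\rho)=n\}$ the vector $(\xi_1,\dots,\xi_n)$ is i.i.d.\ Bernoulli$(\tfrac12)$. The key observation is that $Z_{k-1}$ is $\mac F_{\tau_k}$-measurable, because $\sigma_{k-1}<\tau_k$. The symmetric conditional law $\mab P[Z_k=\pm1\mid\mac F_{\tau_k}]=\tfrac12$ therefore yields
\[
\mab P[\xi_k=1\mid\mac F_{\tau_k}]=\mab P[Z_k=-Z_{k-1}\mid\mac F_{\tau_k}]=\tfrac12 .
\]
The values $\xi_1,\dots,\xi_{k-1}$ depend only on $Z_0,\dots,Z_{k-1}$, all of which are $\mac F_{\tau_k}$-measurable. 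The tower property then gives $\mab P[\xi_k=1\mid \xi_1,\dots,\xi_{k-1}]=\tfrac12$. Chaining over $k$ shows that every pattern in $\{0,1\}^n$ has probability $2^{-n}$. An alternative route uses the map $(Z_0,Z_1,\dots,Z_n)\mapsto(Z_0,\xi_1,\dots,\xi_n)$, which is a bijection given $Z_0$. If $(Z_1,\dots,Z_n)$ are conditionally i.i.d.\ uniform signs independent of $Z_0$, then the $\xi_k$ are uniform on $\{0,1\}^n$. Summing i.i.d.\ Bernoulli$(\tfrac12)$ variables gives $\mathrm{Binomial}(n,\tfrac12)$.

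\textbf{Main obstacle.} The difficulty lies in Step 3, where the conditioning on the event $\{N_T(\rho)=n\}$ must interact correctly with the conditional symmetry at $\mac F_{\tau_k}$. The event $\{N_T(\rho)=n\}$ depends on the future trajectory, so the unconditional symmetry does not obviously survive this conditioning. My first attempt will read Assumption~\ref{ass:floating_mix_sym} as stating that, given $N_T(\rho)=n$, the $Z_1,\dots,Z_n$ are independent with uniform marginals, and also independent of $Z_0$. I would justify uniformity of the marginals by arguing that the symmetric re-randomization at $\tau_k$ does not bias the remaining excursion count. If that does not follow directly, I will take it as the intended content of the independence clause in the assumption and proceed with the bijection argument above.
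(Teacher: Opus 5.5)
Your fallback argument, which conditions on $Z_0$ and uses that $(Z_0,Z_1,\dots,Z_n)\mapsto(Z_0,\xi_1,\dots,\xi_n)$ is a bijection so that uniform $(Z_1,\dots,Z_n)$ gives uniform $(\xi_1,\dots,\xi_n)$, is exactly the paper's proof. The paper, like you, reads Assumption~\ref{ass:floating_mix_sym} as directly supplying, conditionally on $N_T(\rho)=n$, i.i.d.\ uniform signs $Z_1,\dots,Z_n$ independent of $Z_0$, without deriving this from the $\mathcal{F}_{\tau_k}$-symmetry; you are right that your tower-property route only yields the unconditional law and does not survive conditioning on the future-dependent event $\{N_T(\rho)=n\}$, so this strengthened reading is needed in both arguments.
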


\begin{proof}
Fix $n\ge 1$ and condition on $\{N_T(\rho)=n\}$. On this event,
\begin{equation*}
K^{\mathrm{eff}}_T(\rho)=\sum_{k=1}^{n}\mathbf{1}\{Z_k\neq Z_{k-1}\}.
\end{equation*}
Define flip indicators $I_k:=\mathbf{1}\{Z_k\neq Z_{k-1}\}\in\{0,1\}$ for $k=1,\dots,n$.
Let $B_k:=(1+Z_k)/2\in\{0,1\}$. Under Assumption~\ref{ass:floating_mix_sym},
$(B_1,\dots,B_n)$ are i.i.d.\ $\mathrm{Bernoulli}(1/2)$, and we impose no constraint on $B_0$.

Condition on $B_0=b_0\in\{0,1\}$. Then $(B_1,\dots,B_n)$ is uniform on $\{0,1\}^n$.
Moreover,
\begin{equation*}
I_k = \mathbf{1}\{Z_k\neq Z_{k-1}\} = B_k \oplus B_{k-1},
\end{equation*}
where $\oplus$ denotes XOR on $\{0,1\}$.
Consider the mapping
\begin{equation*}
\Phi_{b_0}:\ (B_1,\dots,B_n)\ \mapsto\ (I_1,\dots,I_n)
~~ \text{with fixed } B_0=b_0.
\end{equation*}
This map is a bijection: given $(I_1,\dots,I_n)$, we recover recursively
\begin{equation*}
B_k = b_0 \oplus I_1 \oplus \cdots \oplus I_k.
\end{equation*}
Hence, conditional on $B_0=b_0$, the vector $(I_1,\dots,I_n)$ is uniform on $\{0,1\}^n$,
so $I_1,\dots,I_n$ are i.i.d.\ $\mathrm{Bernoulli}(1/2)$.
Since the conditional law does not depend on $b_0$, the same holds given $N_T(\rho)=n$.
Therefore, $K^{\mathrm{eff}}_T(\rho)=\sum_{k=1}^{n} I_k \sim \mathrm{Binomial}(n,1/2)$. 
\end{proof}

\begin{remark}[Unconditional law is a mixture]
Unconditionally, $K^{\mathrm{eff}}_T(\rho)$ is a mixture over the random excursion count:
\begin{equation*}
\mab{P}\!\left[K^{\mathrm{eff}}_T(\rho)=k\right]
= \sum_{n\ge k}\mab{P}[N_T(\rho)=n]\binom{n}{k}\left(\tfrac12\right)^n.
\end{equation*}
Thus, the empirical flip histogram is binomial-shaped once the layer-wise excursion count $n$ is effectively fixed
(or tightly concentrated).
\end{remark}

\subsubsection{Induction of Sign Floating}
\paragraph{Inward Drift.}
To induce sign floating, we assume the \emph{opposite} radial drift to
Proposition~\ref{cor:outer_drift_p}: the dynamics are biased \emph{toward} the sign boundary
on the band $(\epsilon,\rho]$.
Concretely, we postulate the existence of $\mu_{\mathrm{ID}}>0$ such that
\begin{equation}
\label{eq:inward_drift_assumption}
\mab{E}\big[\,|w_{t+1}|-|w_t| \,\big|\, \mac{F}_t\big] \le -\mu_{\mathrm{ID}}
~~\text{almost surely on }\{\epsilon<|w_t|\le \rho\}.
\end{equation}

\paragraph{One concrete enhancement method: local $l_1$ attraction.}
A simple way to realize Eq.~\eqref{eq:inward_drift_assumption} is to add a
\emph{local} $l_1$ penalty that is active only near the boundary:
\begin{equation*}
\label{eq:local_l1_inward}
R_{\mathrm{ID}}(W^{(l)};\rho_f)
:=
\frac{1}{mn}\sum_{i,j}\min\!\left\{|W^{(l)}_{ij}|,\ \rho_f\right\},
~~ l\in\mac{M}_{\mathrm{float}},
\end{equation*}
and optimize
\begin{equation*}
\label{eq:float_objective_short}
\mac{L}_{\mathrm{float}}(\theta)
=
\mac{L}_{\mathrm{task}}(\theta)
+
\lambda_{\mathrm{float}}\sum_{l\in\mac{M}_{\mathrm{float}}}
R_{\mathrm{ID}}(W^{(l)};\rho_f),
~~ \lambda_{\mathrm{float}}>0.
\end{equation*}
For a single coordinate update (e.g., SGD with step size $\gamma$), the added term contributes
approximately $-\gamma\lambda_{\mathrm{float}}\sign(w_t)$ when $0<|w_t|<\rho_f$,
which decreases $|w_t|$ in expectation and thus promotes repeated boundary visits,
enabling sign floating.

\subsubsection{Experimental Validation}
\label{app:floating:exp}

\begin{figure*}[ht]
  \centering
  \begin{subfigure}[t]{0.48\textwidth}
    \centering
    \includegraphics[width=\linewidth]{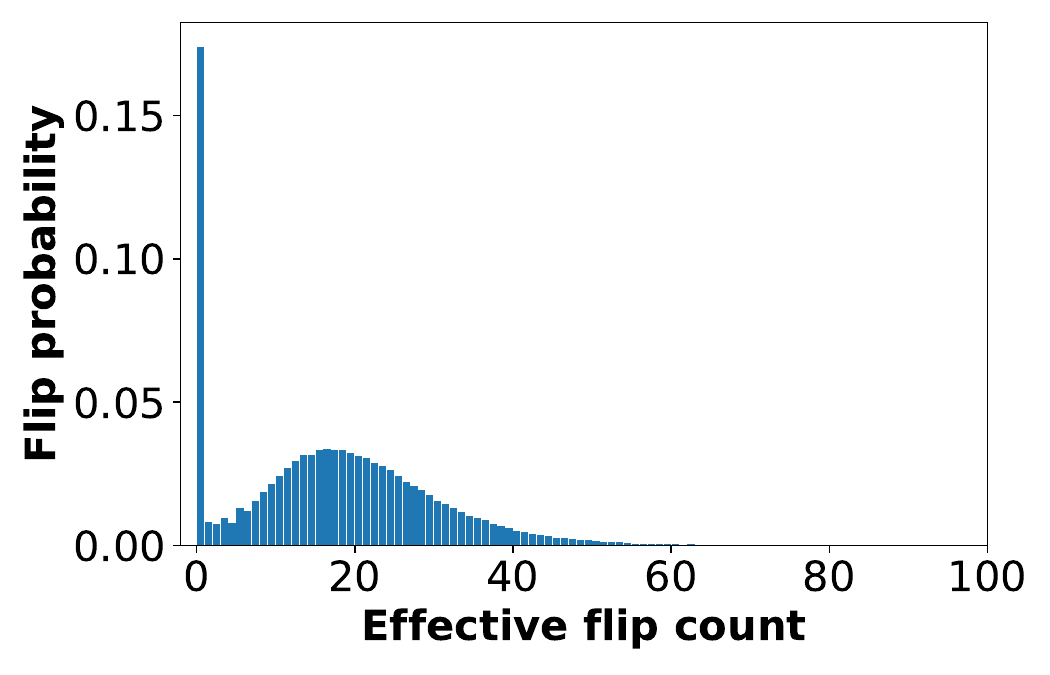}
    \caption{Histogram of effective flip counts $K_{\mathrm{eff},T}(\rho)$ in a representative floating-mode run ($\lambda_{\mathrm{float}}=10^{4}$).}
    \label{fig:floating_mode:hist}
  \end{subfigure}\hfill
  \begin{subfigure}[t]{0.48\textwidth}
    \centering
    \includegraphics[width=\linewidth]{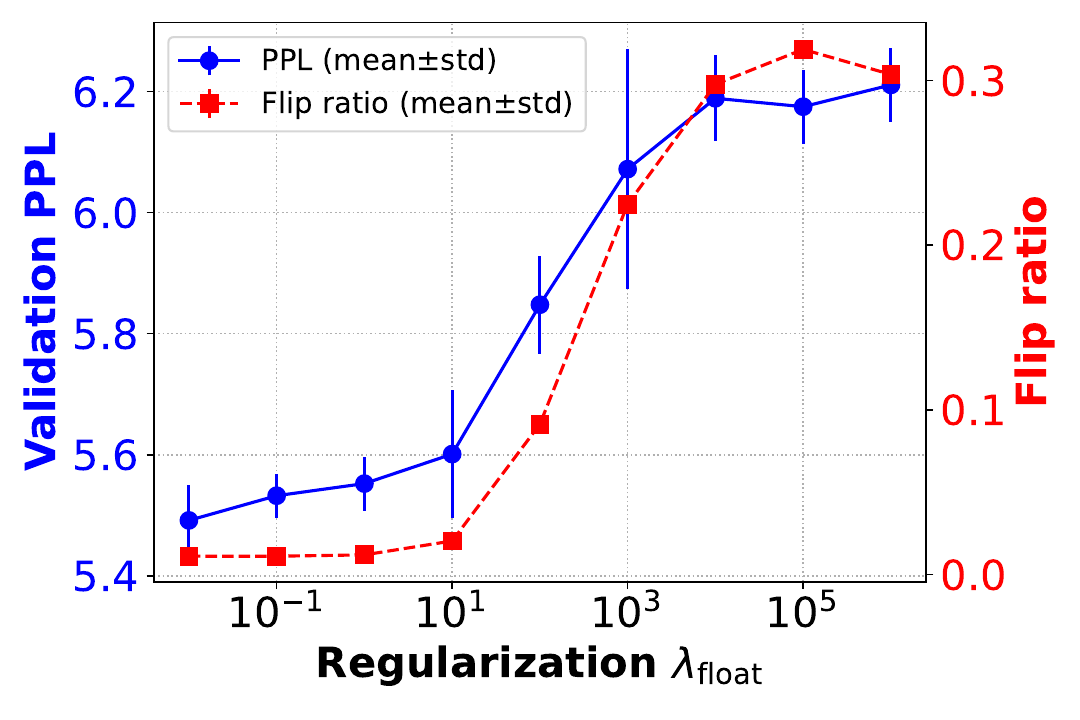}
    \caption{Sweep over $\lambda_{\mathrm{float}}$: validation PPL and flip ratio at training end (mean$\pm$std over 3 seeds).}
    \label{fig:floating_mode:sweep}
  \end{subfigure}
  \caption{\textbf{Empirical induction of sign floating mode via local $l_1$ attraction near the boundary} (Appendix~\ref{app:floating_mode}).}
  \label{fig:floating_mode:combined}
\end{figure*}

\paragraph{Setup.}
We empirically induce the sign floating mode by adding an inward-drift regularizer that attracts weights toward the sign boundary.
Concretely, we train the same Tiny Shakespeare character-level Transformer (CharLM) setting as in Appendix~\ref{app:charlm_q1q2q3_details}
(sequence length 64, batch size 64; AdamW optimizer used), but we optimize the following objective:
\begin{equation*}
\mac{L}_{\mathrm{float}}(\theta)
=
\mac{L}_{\mathrm{task}}(\theta)
+
\lambda_{\mathrm{float}}
\sum_{l\in\mac{M}_{\mathrm{float}}}
R_{\mathrm{ID}}\!\left(W^{(l)};\rho_f\right),
\label{eq:float_obj_appendix}
\end{equation*}
where $\mac{M}_{\mathrm{float}}$ denotes the set of matrix-shaped parameters (2D tensors), and the local $l_1$ attraction is
\begin{equation*}
R_{\mathrm{ID}}\!\left(W;\rho_f\right)
:=
\frac{1}{mn}\sum_{i,j}\min\left\{|W_{ij}|,\rho_f\right\}.
\label{eq:rid_appendix}
\end{equation*}
In all runs, we set $\rho_f=10^{-2}$ and train for $T=2000$ steps. We sweep
\begin{equation*}
\lambda_{\mathrm{float}} \in \{10^{-2},10^{-1},1.0,10,10^2,10^3,10^4,10^5,10^6\},
\end{equation*}
and report mean$\pm$std over 3 random seeds.

\paragraph{Result.}
Figure~\ref{fig:floating_mode:combined} summarizes the behavior.
As $\lambda_{\mathrm{float}}$ increases, the step-wise flip ratio increases sharply, indicating frequent sign changes consistent with sign floating.
At the same time, the validation PPL degrades moderately, reflecting the cost of forcing weights to repeatedly approach the sign boundary.
Moreover, the histogram of $K_{\mathrm{eff},T}(\rho)$ becomes broad, rather than geometrically decaying as in the lock-in regime,
qualitatively matching the binomial-shaped flip histogram predicted in Theorem~\ref{thm:floating_binom_sym}
when outer re-entries are frequent and the re-entry sign is well-mixed.
We note that inducing sign floating required an extraordinarily large $\lambda_{\mathrm{float}}$, such as $10^4$,
suggesting that such strong attraction toward the sign boundary is hard to realize under standard training protocols.

\newpage
\section{Additional Experiments}
\label{app:add_exp}
This appendix provides additional experiments and extended empirical results that complement the main text.
Specifically:
\begin{itemize}\itemsep0.2em \topsep0.2em \parsep0em \partopsep0em
  \item \textbf{Appendix~\ref{app:rebuttal_results}} summarizes additional empirical results on scale, ImageNet compression, AdamW update magnitudes, and CNN sign lock-in.
  \item \textbf{Appendix~\ref{sec:algo-compress}} reports additional randomness tests and supplementary analyses.
  \item \textbf{Appendix~\ref{app:mnist-q1q2q3}} provides additional vision-task experiments and sign lock-in validations.
  \item \textbf{Appendix~\ref{app:zero_cost_sign_template}} documents the (approximately) zero-cost sign template method and the full bit-budget accounting for sub-bit compression.
\end{itemize}

\subsection{Additional Empirical Results}
\label{app:rebuttal_results}

This appendix collects additional empirical results on scale, expressivity, optimizer assumptions, and the practical role of sign-template compression.

\subsubsection{Template Expressivity Across Model Scales}
\label{app:rebuttal_scale_expressivity}

To test whether a fixed rank-$2$ sign template limits expressivity at larger scales, we trained Transformer language models from $0.3$M to $512$M parameters. Table~\ref{tab:rebuttal_best_ppl} reports the best validation perplexity. The gap between vanilla training and gap+regularization decreases with model size and becomes negligible.

\begin{table}[ht]
\centering
\caption{\textbf{Best validation PPL on Transformer next-token prediction.}}
\label{tab:rebuttal_best_ppl}
\begin{tabular}{@{}rrrr@{}}
\toprule
\#Params & Vanilla & Gap+Reg & Degradation PPL \\
\midrule
0.3M  & 5.22 & 5.61 & +0.39 \\
1.9M  & 4.55 & 4.69 & +0.14 \\
4.8M  & 4.67 & 4.64 & -0.03 \\
10.8M & 4.67 & 4.63 & -0.04 \\
25.4M & 4.70 & 4.69 & -0.01 \\
59.3M & 4.82 & 4.78 & -0.04 \\
99.5M & 4.87 & 4.78 & -0.09 \\
512M  & 4.93 & 4.94 & +0.01 \\
\bottomrule
\end{tabular}
\end{table}

Table~\ref{tab:rebuttal_overfit} reports best-vs-final validation perplexity. At larger scales, vanilla training overfits strongly, while the low-rank sign-template training with gap+regularization substantially reduces this degradation.

\begin{table}[ht]
\centering
\caption{\textbf{Overfitting: best vs. final PPL on Transformer next-token prediction.}}
\label{tab:rebuttal_overfit}
\begin{tabular}{@{}rrrrrrr@{}}
\toprule
\#Params & Vanilla Best & Vanilla Final & Vanilla Overfit & Gap+Reg Best & Gap+Reg Final & Gap+Reg Overfit \\
\midrule
0.3M  & 5.22 & 5.22 & +0.00 & 5.61 & 5.56 & -0.05 \\
1.9M  & 4.55 & 4.69 & +0.14 & 4.69 & 4.70 & +0.01 \\
4.8M  & 4.67 & 5.96 & +1.29 & 4.64 & 4.67 & +0.03 \\
10.8M & 4.67 & 9.42 & +4.75 & 4.63 & 4.94 & +0.31 \\
25.4M & 4.70 & 12.58 & +7.88 & 4.69 & 5.76 & +1.07 \\
59.3M & 4.82 & 15.36 & +10.54 & 4.78 & 5.79 & +1.01 \\
99.5M & 4.87 & 18.09 & +13.22 & 4.78 & 5.79 & +1.01 \\
512M  & 4.93 & 7.92 & +2.99 & 4.94 & 5.00 & +0.06 \\
\bottomrule
\end{tabular}
\end{table}

\subsubsection{Billion-Scale Sign Lock-In Evidence}
\label{app:rebuttal_pythia}

We computed the effective sign-flip histogram on a 1B-parameter Transformer trained on 300B tokens. Table~\ref{tab:rebuttal_pythia_hist} shows that the empirical distribution is well captured by a zero-inflated geometric fit.

\begin{table}[ht]
\centering
\caption{\textbf{Pythia-1B trained on 300B tokens: effective flip-count histogram and geometric-tail fit.}}
\label{tab:rebuttal_pythia_hist}
\begin{tabular}{@{}rrrrr@{}}
\toprule
$K_{\mathrm{eff}}$ & Count & Empirical prob. & Fitted prob. & Abs. error \\
\midrule
0 & 168032 & 0.840160 & 0.840160 & 0.000000 \\
1 & 25405  & 0.127025 & 0.126960 & 0.000065 \\
2 & 5170   & 0.025850 & 0.026116 & 0.000266 \\
3 & 1120   & 0.005600 & 0.005372 & 0.000228 \\
4 & 230    & 0.001150 & 0.001105 & 0.000045 \\
5 & 37     & 0.000185 & 0.000227 & 0.000042 \\
6 & 5      & 0.000025 & 0.000047 & 0.000022 \\
7 & 1      & 0.000005 & 0.000010 & 0.000005 \\
\bottomrule
\end{tabular}
\end{table}

\subsubsection{ImageNet-Scale End-to-End Compression}
\label{app:rebuttal_imagenet}

To validate the compression consequence on a larger benchmark, we applied the sign-template method to ImageNet training with ResNet and evaluated SVD compression with NormalFloat factor quantization. Table~\ref{tab:rebuttal_imagenet} reports Top-5 accuracy. The sign-template method is substantially stronger near and below the one-bit regime.

\begin{table}[ht]
\centering
\caption{\textbf{ImageNet Top-5 accuracy (\%) of ResNet under compression via SVD with NormalFloat factor quantization.}}
\label{tab:rebuttal_imagenet}
\begin{tabular}{@{}rrrr@{}}
\toprule
bpw & Sign Template (ours) & Vanilla SVD (baseline) & $\Delta$ \\
\midrule
0.50 & 5.3  & 0.5  & +4.8 \\
0.75 & 23.1 & 0.5  & +22.6 \\
1.00 & 42.0 & 0.5  & +41.5 \\
1.50 & 65.6 & 12.1 & +53.5 \\
2.00 & 74.7 & 65.1 & +9.6 \\
2.50 & 78.1 & 76.3 & +1.8 \\
3.00 & 79.8 & 80.9 & -1.1 \\
4.00 & 81.3 & 83.8 & -2.5 \\
\bottomrule
\end{tabular}
\end{table}

\subsubsection{AdamW Update Magnitudes}
\label{app:rebuttal_adamw}

All main training experiments used AdamW. To empirically check the high-probability bounded-update condition, we measured elementwise update magnitudes over training phases using outer threshold $\rho=10^{-3}$. Table~\ref{tab:rebuttal_adamw_updates} shows that the maximum observed update remains far below $2\rho=2\times 10^{-3}$ in all phases, and no step exceeds this threshold.

\begin{table}[ht]
\centering
\caption{\textbf{Phase-wise empirical AdamW update magnitudes.}}
\label{tab:rebuttal_adamw_updates}
\begin{tabular}{@{}llr@{}}
\toprule
Phase & Statistic & Value \\
\midrule
Early & median update & $9.999\times 10^{-5}$ \\
Early & 90th-percentile update & $1.191\times 10^{-4}$ \\
Early & 99th-percentile update & $1.540\times 10^{-4}$ \\
Early & 99.9th-percentile update & $1.865\times 10^{-4}$ \\
Early & maximum update & $2.346\times 10^{-4}$ \\
Early & \# steps with max update $>2\rho$ & 0 \\
\addlinespace
Intermediate & median update & $2.364\times 10^{-5}$ \\
Intermediate & 90th-percentile update & $6.021\times 10^{-5}$ \\
Intermediate & 99th-percentile update & $1.122\times 10^{-4}$ \\
Intermediate & 99.9th-percentile update & $1.718\times 10^{-4}$ \\
Intermediate & maximum update & $2.354\times 10^{-4}$ \\
Intermediate & \# steps with max update $>2\rho$ & 0 \\
\addlinespace
Late & median update & $1.452\times 10^{-5}$ \\
Late & 90th-percentile update & $3.893\times 10^{-5}$ \\
Late & 99th-percentile update & $6.902\times 10^{-5}$ \\
Late & 99.9th-percentile update & $9.574\times 10^{-5}$ \\
Late & maximum update & $1.943\times 10^{-4}$ \\
Late & \# steps with max update $>2\rho$ & 0 \\
\bottomrule
\end{tabular}
\end{table}

\subsubsection{Component-Wise Low-Rank Error}
\label{app:rebuttal_component_error}

To clarify why direct low-rank approximation of $W$ is difficult, Table~\ref{tab:rebuttal_component_error} reports component-wise rank approximation error at $r/d=0.03125$. Across models, $E_r(W)$ is much closer to $E_r(\sign(W))$ than to $E_r(|W|)$, supporting the claim that sign structure is the dominant obstacle.

\begin{table}[ht]
\centering
\caption{\textbf{Low-rank approximation error by component at $r/d=0.03125$.}}
\label{tab:rebuttal_component_error}
\begin{tabular}{@{}lrrr@{}}
\toprule
Model & $E_r(\sign(W))$ & $E_r(|W|)$ & $E_r(W)$ \\
\midrule
MLP & 0.927 & 0.573 & 0.895 \\
ResNet & 0.943 & 0.593 & 0.921 \\
TinyLlama & 0.934 & 0.571 & 0.908 \\
\bottomrule
\end{tabular}
\end{table}

\subsubsection{CNN Effective Flip Histogram}
\label{app:rebuttal_cnn_hist}

Finally, Table~\ref{tab:rebuttal_cnn_hist} reports the effective sign-flip histogram in a CNN experiment, showing that sign lock-in is also observed outside Transformer language models.

\begin{table}[ht]
\centering
\caption{\textbf{Effective sign-flip histogram in the CNN experiment.}}
\label{tab:rebuttal_cnn_hist}
\begin{tabular}{@{}rrr@{}}
\toprule
Flip count & Observed count & Fitted count \\
\midrule
0 & 39161 & 39161 \\
1 & 2059 & 1929 \\
2 & 703 & 961 \\
3 & 574 & 479 \\
4 & 263 & 239 \\
5 & 142 & 119 \\
6 & 57 & 59 \\
7 & 26 & 30 \\
8 & 15 & 15 \\
9 & 6 & 7 \\
\bottomrule
\end{tabular}
\end{table}

\subsection{Additional Randomness Test}
\label{sec:algo-compress}
\subsubsection{Algorithmic Compressibility of Sign Bits via General-Purpose Compressors}

\begin{table*}[t]
\centering
\scriptsize
\setlength{\tabcolsep}{3pt}
\renewcommand{\arraystretch}{1.1}
\caption{Compressibility of sign bits across compression algorithms (ratio = compressed / raw).}
\label{tab:compress}
\resizebox{\textwidth}{!}{%
\begin{tabular}{lrrrrrrrrrr}
\toprule
model & brotli(q11) & brotli(q5) & bz2(l9) & gzip(l9) & lzma(p6) & raw\_deflate(l9) & snappy & zlib(l9) & zstd(l10) & zstd(l3) \\
\midrule
Rademacher baseline (2048x2048) & 1.000 & 1.000 & 1.005 & 1.000 & 1.000 & 1.000 & 1.000 & 1.000 & 1.000 & 1.000 \\
2D Ising baseline (beta=0.6) & 0.001 & 0.002 & 0.004 & 0.025 & 0.004 & 0.025 & 0.067 & 0.025 & 0.002 & 0.003 \\
Low-rank sign baseline (rank=2) & 0.047 & 0.054 & 0.062 & 0.184 & 0.043 & 0.184 & 0.417 & 0.184 & 0.053 & 0.083 \\
MLP-Mixer-B16 & 0.978 & 0.983 & 0.998 & 0.985 & 0.984 & 0.985 & 1.000 & 0.985 & 0.983 & 0.983 \\
ResNet18 & 0.957 & 0.958 & 0.984 & 0.964 & 0.966 & 0.964 & 1.000 & 0.964 & 0.960 & 0.959 \\
TinyLlama-1.1B-Chat & 0.979 & 0.984 & 0.999 & 0.989 & 0.973 & 0.989 & 1.000 & 0.989 & 0.991 & 0.992 \\
\bottomrule
\end{tabular}
}
\end{table*}

In the sub-bit regime, the sign pattern becomes a first-class storage target: storing signs naively costs one bit per weight and can dominate the effective bit budget when magnitudes are aggressively compressed.
While our earlier analyses examine randomness of sign matrices through spectral statistics, here we add a complementary check based on \emph{algorithmic compressibility}.
If the learned sign pattern contains exploitable regularities, then off-the-shelf lossless compressors should achieve nontrivial savings when applied directly to the sign bitstream.

\paragraph{Protocol.}
For each selected weight matrix, we extract the elementwise sign pattern $\mathrm{sign}(W)$ using the same convention as the rest of the paper.
We bit-pack the binary signs into a byte stream and measure the \emph{compression ratio} defined as compressed size divided by the raw packed size (smaller means more compressible).
To avoid conclusions driven by a single coding scheme, we evaluate a diverse set of widely used lossless compressors.
Concretely, we always include raw DEFLATE, zlib, gzip, bzip2, and LZMA; when available, we also report Zstandard, Brotli, LZ4, and Snappy.
For the DEFLATE family, we report raw DEFLATE to reduce constant header/metadata effects (DEFLATE/zlib/gzip follow the corresponding RFC specifications) \cite{rfc1951,rfc1950,rfc1952}.
Brotli and Zstandard follow their RFC specifications \cite{rfc7932,rfc8878}.
bzip2 is a widely used implementation based on the Burrows--Wheeler transform \cite{burrowswheeler1994,seward1996bzip2}.
LZMA is based on public SDK documentation \cite{pavlov2008lzmasdk}.
LZ4 and Snappy follow the descriptions of their public reference implementations \cite{colletlz4,googlesnappy}.

\paragraph{Baselines.}
To calibrate the results, we apply the same procedure to three reference sources and include them in the same table:
(i) an i.i.d.\ Rademacher sign stream as a maximally unstructured reference;
(ii) a structured \emph{two-dimensional Ising} sign field at a fixed inverse-temperature setting, which introduces short-range correlations \cite{ising1925,onsager1944};
and (iii) a rank-2 low-rank sign template constructed by taking the sign of a low-rank factor product, which provides an explicit low-complexity sign pattern.
Because (ii) and (iii) contain deliberate structure, they are expected to be more compressible than the i.i.d.\ reference under generic codecs.

\paragraph{Results.}
Table~\ref{tab:compress} reports compression ratios for learned sign bitstreams across compressors.
Across architectures, learned sign bits compress nearly as poorly as the i.i.d.\ Rademacher reference under multiple codecs, whereas the structured baselines (2D Ising and the rank-2 template) are consistently more compressible.
This consistency across compression families supports the interpretation that the observed ``random-like'' behavior of learned sign patterns is not an artifact of a particular spectral statistic: from an algorithmic coding perspective, the learned sign bitstreams expose little redundancy that generic lossless compressors can exploit.
Taken together with our spectral evidence, these results reinforce the picture that sign patterns can dominate bit-cost in the sub-bit regime while remaining difficult to compress by generic means.

\subsection{Additional Vision Task Experiments }
\label{app:mnist-q1q2q3}

\subsubsection{Experimental setup}
\label{app:mnist-setup}

\paragraph{Dataset.}
We use MNIST in the standard Keras format, normalized by the usual mean and
standard deviation (mean $0.1307$, std $0.3081$), and we add a single channel dimension.
We form a deterministic train/validation split by shuffling the original 60k training examples with a fixed seed
and taking 10k examples for validation (50k remain for training). The test set contains 10k examples.

\paragraph{Model.}
We train a simple MLP with two hidden layers:
\emph{Flatten} $\rightarrow$ Linear($784\!\to\!256$) $\rightarrow$ ReLU
$\rightarrow$ Linear($256\!\to\!256$) $\rightarrow$ ReLU
$\rightarrow$ Linear($256\!\to\!10$).

\paragraph{Optimization.}
We optimize the cross-entropy loss using AdamW with batch size 128 (training) and 512 (evaluation), we train for $T=2000$ steps per run.
We report validation error $\mathrm{Err}_{\mathrm{val}} = 1-\mathrm{Acc}_{\mathrm{val}}$
and also track sign-related statistics described below.

\paragraph{Initialization and Gap+reg method.}
The baseline initialization samples each linear weight entry i.i.d.\ from $\mac{N}(0,\sigma_{\mathrm{init}}^2)$
with $\sigma_{\mathrm{init}}=0.02$, and sets biases to zero.
For the \texttt{Gap+reg} runs, we additionally apply a \emph{near-zero rejection} (gap) rule to matrix parameters:
we resample entries until $|w|\ge a$ (gap threshold $a$).
We also add an \emph{outer-drift (log-barrier) regularizer} with weight $\lambda$ as in Appendix~\ref{app:charlm_q1q2q3_details}.

\paragraph{Sign tracking and effective flips.}
This setting is the same as Appendix~\ref{app:charlm_q1q2q3_details}.

\paragraph{Zero-inflated geometric model.}
This setting is the same as Appendix~\ref{app:charlm_q1q2q3_details}.

\paragraph{Mean flip rate.}
This setting is the same as Appendix~\ref{app:charlm_q1q2q3_details}.

\subsubsection{Effective Flip Distribution and Geometric Tail}
\label{app:mnist-q1}

Figure~\ref{fig:mnist-q1-hist} shows a representative histogram of the effective flip count $K$ under baseline training.
A large fraction of parameters never experience an effective flip ($K=0$), with rapidly diminishing probability mass for
larger $K$.
This heavy mass at zero is consistent with the sign lock-in picture: once a parameter commits to an outer-region sign,
it rarely returns to the near-zero boundary band and re-emerges with the opposite sign.

To test the geometric-tail prediction, Figure~\ref{fig:mnist-q1-tail} plots the tail probability
$\mab{P}[K\ge k]$ on a log scale for multiple learning rates.
Across learning rates, the tail is approximately linear in $k$ on the semi-log plot, indicating an exponential/geometric
decay.
The dashed lines show simple geometric fits of the form $\mab{P}[K\ge k]\approx h\,g^{k-1}$.
We observe that increasing the learning rate increases both the fitted prefactor $h$ and the ratio $g$
(e.g., $h$ and $g$ are smallest for $\mathrm{lr}=10^{-4}$ and largest for $\mathrm{lr}=10^{-3}$),
which corresponds to (i) more parameters reaching the boundary at least once and (ii) a heavier tail once flips occur.
This matches the intuition that larger updates make boundary re-entry events more common.

\begin{figure}[t]
  \centering
  \includegraphics[width=0.72\linewidth]{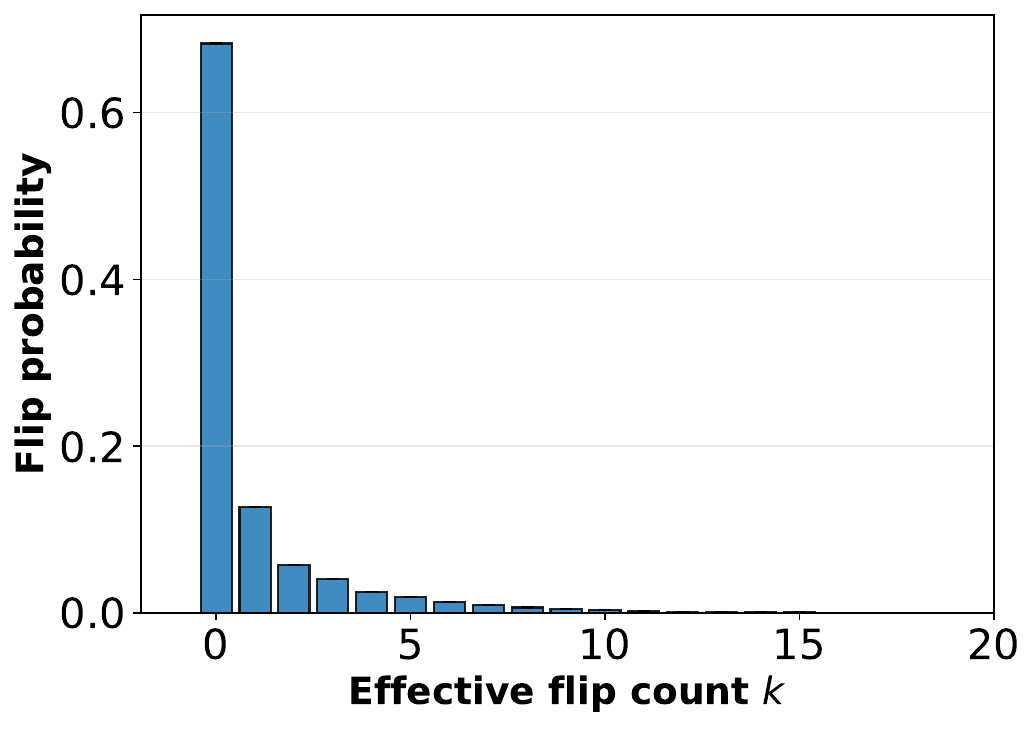}
  \caption{\textbf{Empirical distribution of the effective flip count $K$.}
  Most weights exhibit $k=0$, with a rapidly decaying tail for $k\ge 1$.}
  \label{fig:mnist-q1-hist}
\end{figure}

\begin{figure}[t]
  \centering
  \includegraphics[width=0.78\linewidth]{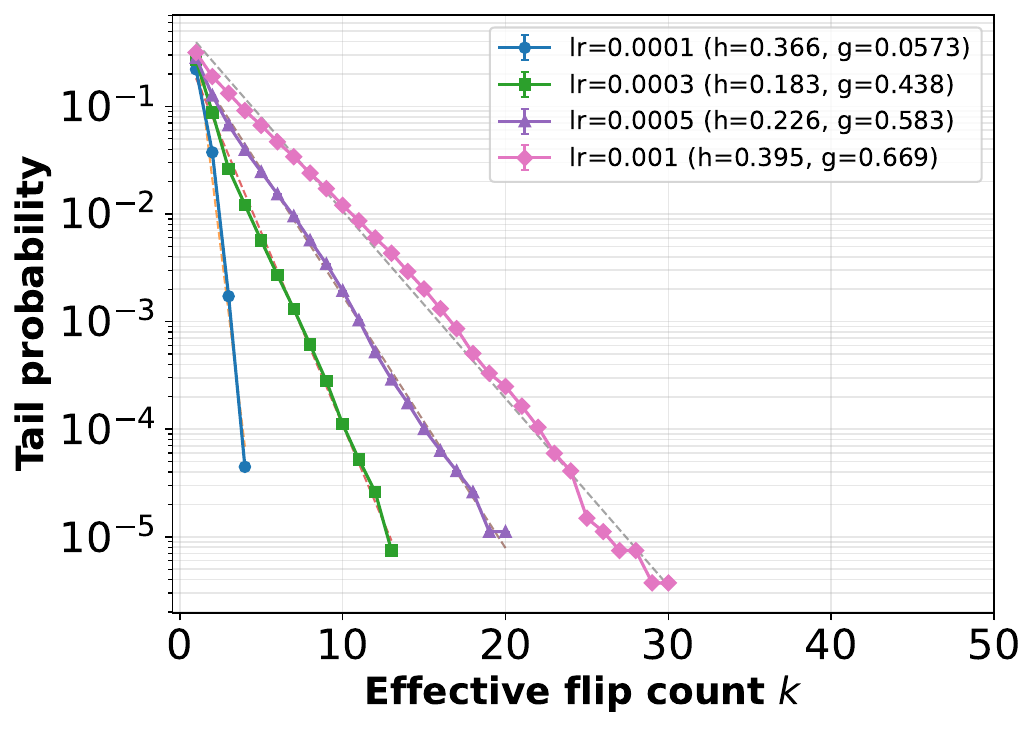}
  \caption{\textbf{Semi-log plot of tail probabilities $\mab{P}[K\ge k]$ for different learning rates.}
  Dashed lines show geometric fits $\mab{P}[K\ge k]\approx h\,g^{k-1}$, indicating an approximately geometric tail whose
  heaviness increases with learning rate.}
  \label{fig:mnist-q1-tail}
\end{figure}

\subsubsection{Estimated lock-in parameters over gap-init and regularization on vision task.}
\label{app:mnist-q2}

We sweep the gap threshold and regularizer weight over
\[
a\in\{0.001,0.005,0.02,0.03,0.05\},\qquad
\lambda\in\{10^{-4},10^{-3},10^{-2},0.05,0.1,0.3,0.5\}.
\]
We then fit $(\hat{h},\hat{g})$ from the sampled weights and report mean$\pm$std over 3 seeds.

Figure~\ref{fig:mnist-q2} supports the interpretation that the two mechanisms act in complementary ways:
\emph{(i) Gap initialization primarily suppresses $\hat{h}$}, because moving initial weights away from the boundary reduces
the probability that a parameter ever visits the near-zero band during training;
indeed, at $\lambda\approx 0$, $\hat{h}$ decreases sharply as $a$ increases.
\emph{(ii) Outer-drift regularization strongly reduces $\hat{g}$}, because the log-barrier discourages re-entry into the
boundary band once a parameter is in the outer region; correspondingly, $\hat{g}$ drops monotonically as $\lambda$ grows.
At sufficiently large $\lambda$, both $\hat{h}$ and $\hat{g}$ become small, implying a much lighter tail and fewer total
effective sign flips.

\begin{figure}[t]
  \centering
  \begin{subfigure}{0.49\linewidth}
    \centering
    \includegraphics[width=\linewidth]{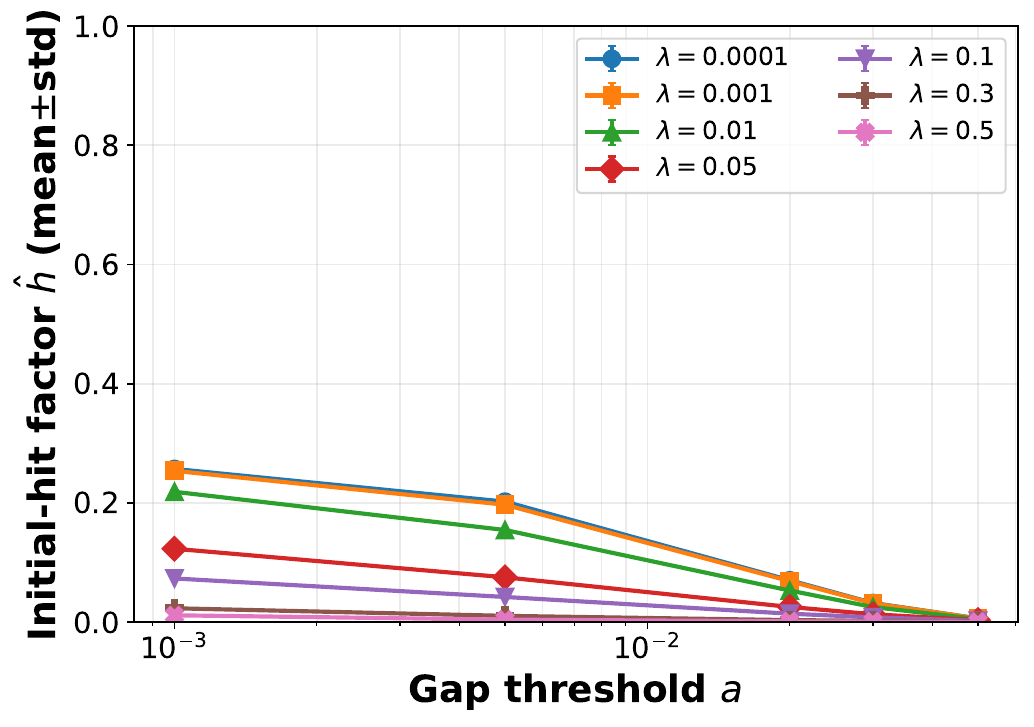}
    \caption{Initial-hit factor reduction.}
    \label{fig:mnist-q2-h}
  \end{subfigure}
  \begin{subfigure}{0.49\linewidth}
    \centering
    \includegraphics[width=\linewidth]{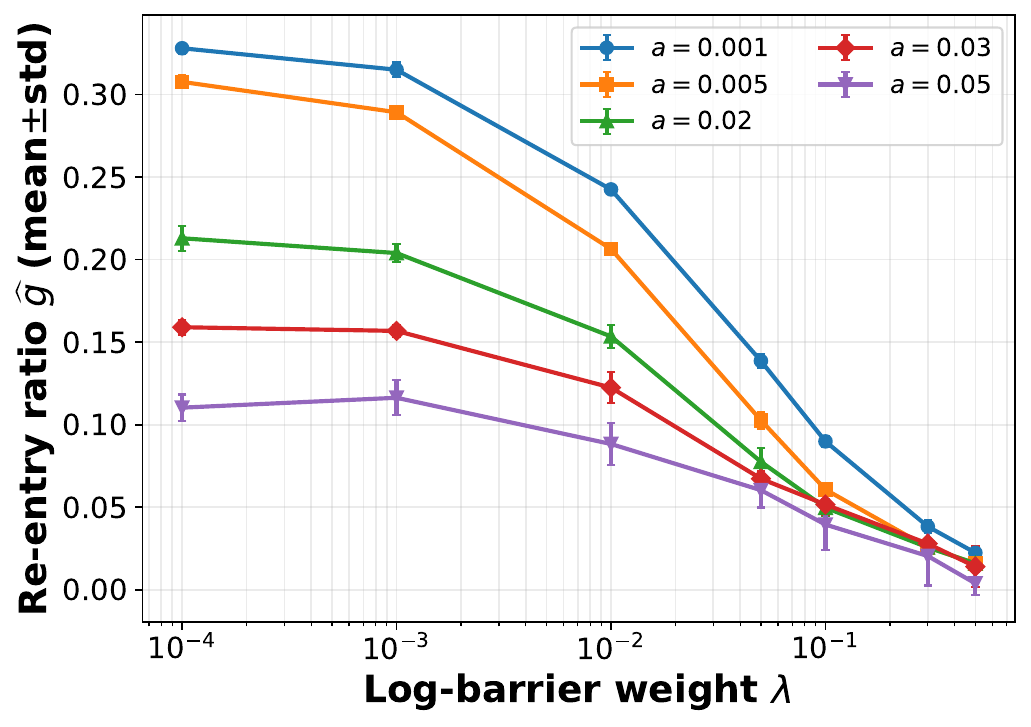}
    \caption{Re-entry ratio reduction.}
    \label{fig:mnist-q2-g}
  \end{subfigure}
  \caption{\textbf{\textbf{Estimated lock-in parameters over gap-init and regularization on vision task.}}
  \textbf{Left:} $\hat h$ (initial-hit factor) as a function of the gap threshold $a_{\text{init}}$ for different log-barrier weights $\lambda$.
  \textbf{Right:} $\hat g$ (re-entry ratio) as a function of $\lambda$ (log scale) for different $a_{\text{init}}$.
  Points show mean$\pm$std over three seeds.}
  \label{fig:mnist-q2}
\end{figure}

\subsubsection{Trade-Off Between Mean Flip Rate and Validation Error}
\label{app:mnist-q3}

Figure~\ref{fig:mnist-q3} plots validation error against the mean per-step flip rate (log-$x$) for the baseline model
and for the Gap+reg sweep.
Each curve corresponds to a fixed gap threshold $a$ and varying $\lambda$.

We observe a clear sign-stability/performance trade-off for some settings (notably small $a$):
aggressive regularization can reduce flip rates substantially but may increase validation error, suggesting that overly
constraining early dynamics can slow or misdirect optimization.
However, the sweep also reveals favorable regimes (e.g., moderate/large $a$ with intermediate $\lambda$) where flip rates
drop by orders of magnitude relative to the baseline while validation error remains comparable.
Overall, these results indicate that sign stabilization can be tuned to reduce sign churn substantially without requiring
a large accuracy penalty.

The resulting curves are shown in Figure~\ref{fig:mnist_svd_baseline}--\ref{fig:mnist_svd_reg_on}.
Consistent with the CharLM analysis in the main paper, the sign matrix is less compressible than the magnitude matrix in the
baseline regime, while the sign becomes substantially more low-rank compressible under the enhanced lock-in settings.
In particular, the regularized regime exhibits the strongest improvement for $\mathrm{sign}(W)$ across the rank-ratio sweep,
indicating that enforcing sign lock-in can mitigate the sign bottleneck in the low-rank compression view.

\begin{figure}[t]
  \centering
  \includegraphics[width=\linewidth]{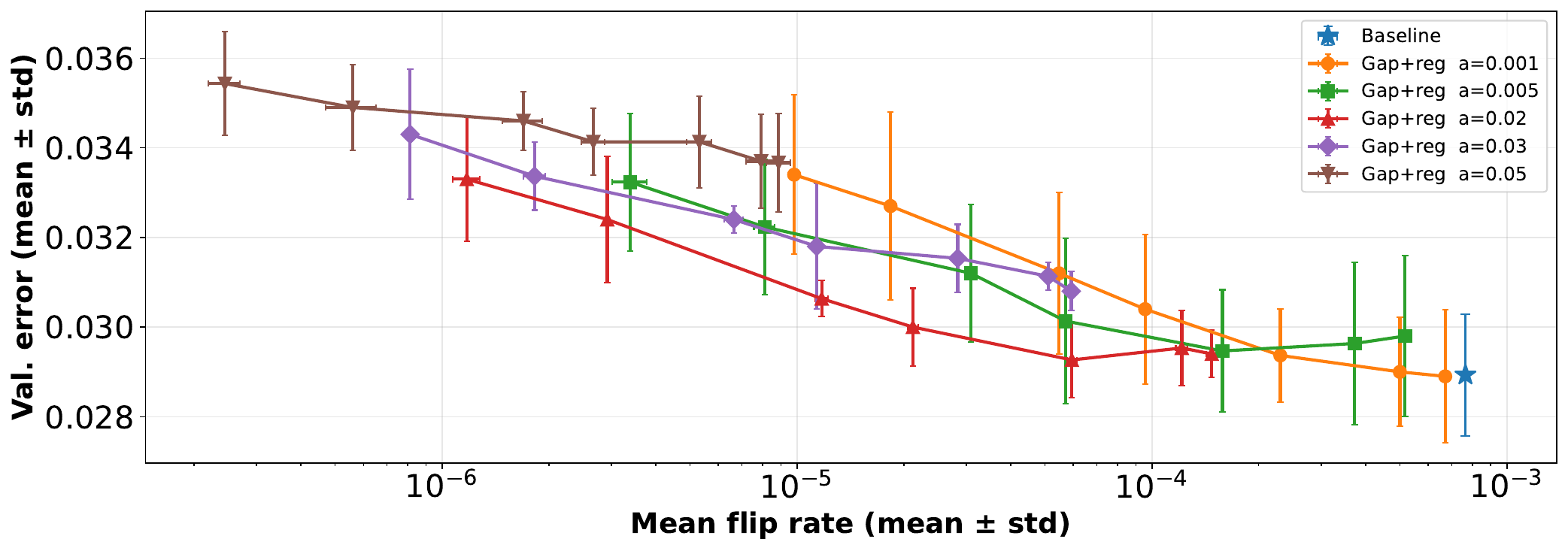}
  \caption{\textbf{Validation error vs.\ mean flip rate.}
  Baseline is shown as a single point, and Gap+reg curves connect runs with the same $a$ while varying $\lambda$.
  Error bars denote mean$\pm$std over seeds.
  The plot highlights both the trade-off regime (very low flip rate can hurt validation error for small $a$) and regimes
  where flip rates are drastically reduced with little or no degradation in validation error.   Curves connect points with the same gap threshold $a_{\text{init}}$ and the log barrier weight decreases from left to right as
$0.5, 0.3, 0.1, 0.05, 0.01, 0.001, 0.0001$.}
  \label{fig:mnist-q3}
\end{figure}

\begin{figure*}[t]
  \centering
  \begin{subfigure}{0.32\textwidth}
    \centering
    \includegraphics[width=\linewidth]{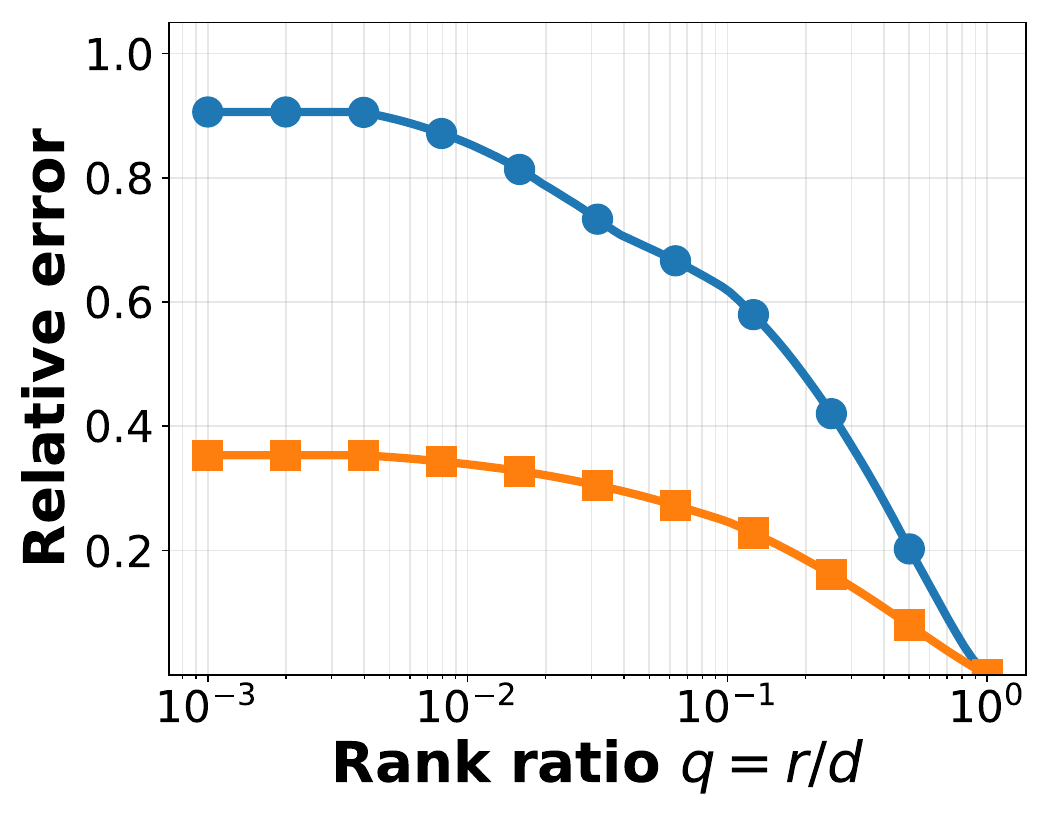}
    \caption{Baseline.}
    \label{fig:mnist_svd_baseline}
  \end{subfigure}\hfill
  \begin{subfigure}{0.32\textwidth}
    \centering
    \includegraphics[width=\linewidth]{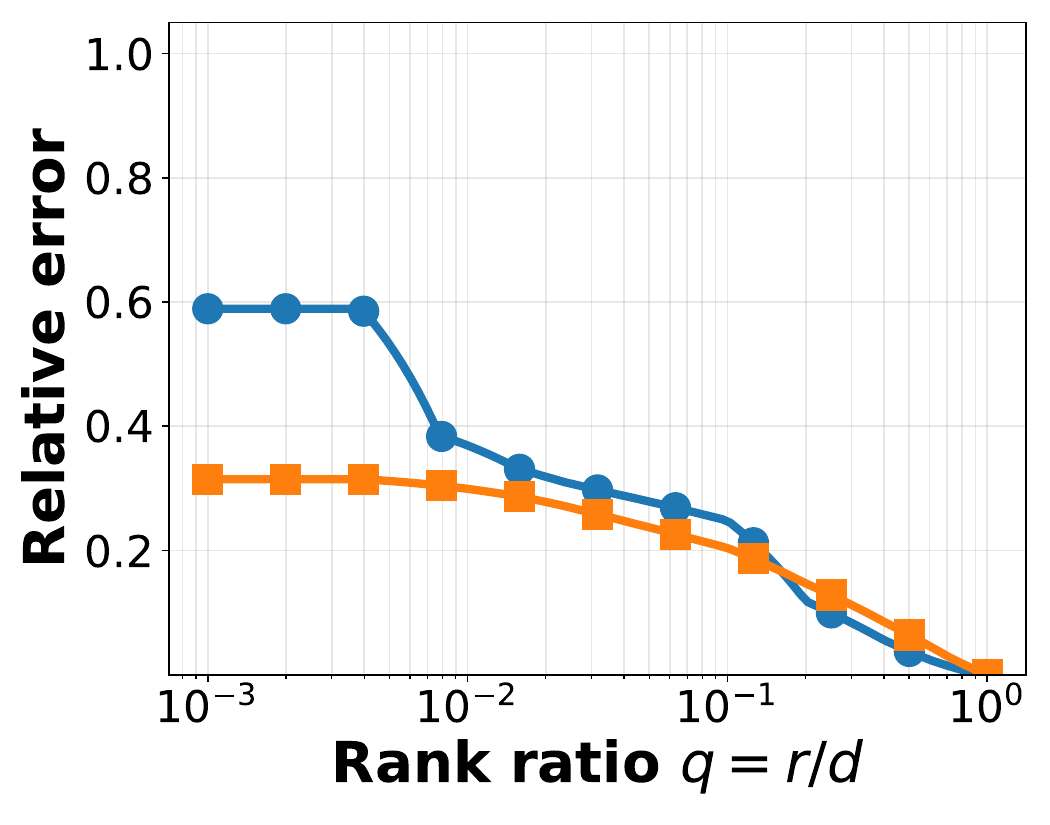}
    \caption{Gap initialization.}
    \label{fig:mnist_svd_gap_only}
  \end{subfigure}\hfill
  \begin{subfigure}{0.32\textwidth}
    \centering
    \includegraphics[width=\linewidth]{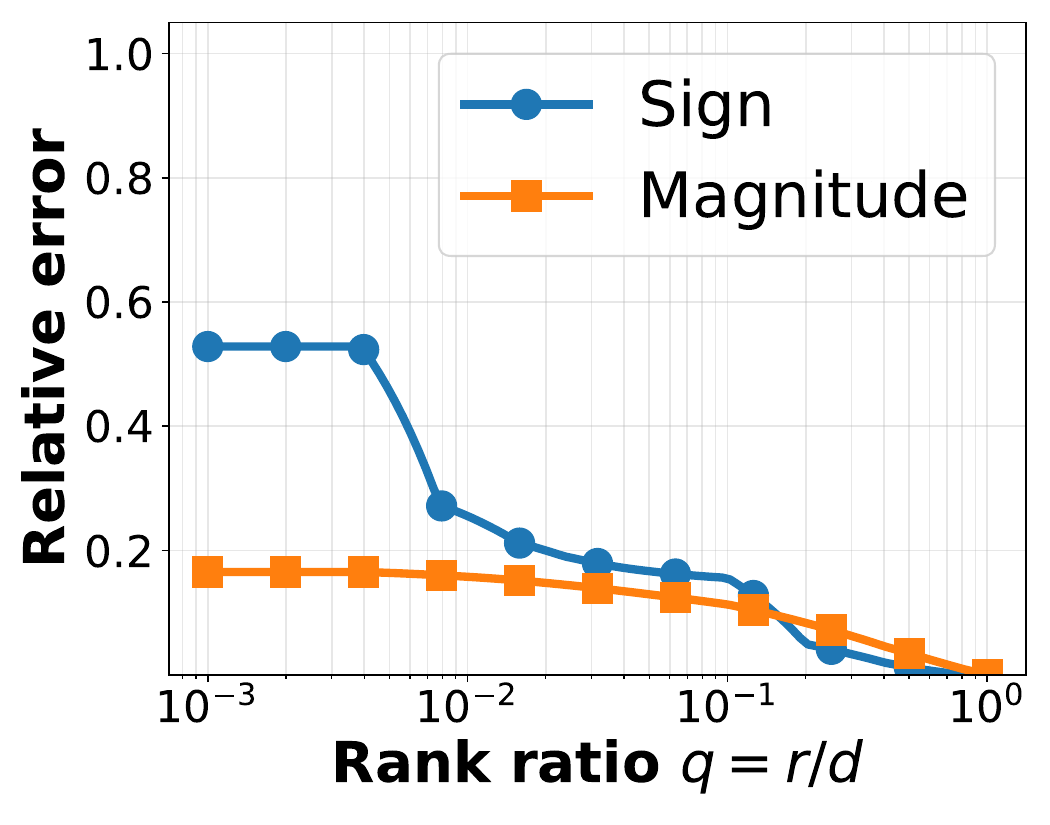}
    \caption{Gap init.\ + regularization.}
    \label{fig:mnist_svd_reg_on}
  \end{subfigure}

  \caption{\textbf{Sign vs. magnitude low-rank compressibility.}
  Relative Frobenius error $E_r(M)=\|M-M_r\|_F/\|M\|_F$ as a function of rank ratio $q=r/d$ (log scale), for
  the sign matrix $S=\mathrm{sign}(W)$ and magnitude matrix $A=|W|$, evaluated on the final trained weights under
  (a) baseline, (b) gap initialization only, and (c) gap initialization with outer-drift regularization.}
  \label{fig:mnist_svd_triptych}
\end{figure*}

\newpage

\subsection{Approximately Zero-Cost Sign Template Method for Sub-Bit Compression}
\label{app:zero_cost_sign_template}

We provide additional details on the template-based sub-bit compression method referenced in the main text.
Let $W\in\mab{R}^{m\times n}$ be a weight matrix and decompose it as
\begin{equation*}
    W = S \odot A,
    ~~
    S\in\{\pm 1\}^{m\times n},~~A\in\mab{R}_{\ge 0}^{m\times n},
    \label{eq:sign_amp_factorization3_app}
\end{equation*}
where $S=\sign(W)$ and $A=|W|$.
In a conventional representation, storing $S$ requires $1$ bit per weight.
In contrast, we restrict the sign matrix to a \emph{known sign template}
$\mathbf{T}\in\{\pm 1\}^{m\times n}$, which can be deterministically regenerated from minimal side information
(e.g., a seed parameter).
If we enforce $\sign(W)=\mathbf{T}$ for the targeted matrices, the decoder can reconstruct $\mathbf{T}$ on the fly, and only the magnitudes $A$ need to be stored.
As a result, the cost per-weight sign becomes approximately zero:
\begin{equation}
\label{eq:zero_sign_cost3}
    \text{bits}_{\text{sign}}(W) \approx 0.
\end{equation}
With respect to the memory capacity of the sign matrix in the sign template method, the storage cost of the model becomes zero when the matrix is generated using an identical seed and the same random number generator. Consequently, the storage cost associated with the sign matrix of the model is fully attributable to the storage cost of the program, except the seed integer. When deploying on GPUs, the constraints differ from those in CPU-based settings. In particular, constructing the coding (sign) matrices on-the-fly during inference via a pseudorandom number generator is impractical. Instead, we generate a single global coding matrix from a fixed seed and obtain the coding matrix for each weight matrix by slicing the corresponding submatrix. Even in this scheme,  $\text{bits}_{\text{sign}}(W)$ is less than $1/100$ because the single global coding matrix is reduced by the low-rank matrix factorization in Section~\ref{sec:sign_templates} and the reuse effect, which has a large contribution in LLM.
In this Appendix, the bit cost of magnitude is about $1/2^4$ in lowest case and we ignore the cost of the sign template.


\subsubsection{Sign Template}
\label{sec:sign_templates}
For each target layer $l\in\mac{M}$ with weight shape $m\times n$,
we use a re-generable sign template $\mathbf{T}^{(l)}\in\{\pm 1\}^{m\times n}$ and enforce the sign constraint.

\paragraph{Low-rank sign template.}
We generate low-rank real factors and then take the sign:
\begin{equation}
    \label{eq:template_lowrank}
G\in\mab{R}^{m\times r},\; H\in\mab{R}^{n\times r},
~~
G_{ik}\stackrel{\mathrm{i.i.d.}}{\sim}P_{\mathrm{init}},\;
H_{jk}\stackrel{\mathrm{i.i.d.}}{\sim}P_{\mathrm{init}},
~~
\mathbf{T}^{(l)}=\sign(G H^{\top}),
\end{equation}
where $r\ll \min(m,n)$ controls the intrinsic degrees of freedom.

\subsubsection{Additional Sign Lock-In Enhancement Mechanism}
\label{sec:enhance_with_template_app}
In practice, we combine the template approach with the two interventions introduced in the main text, such as gap initialization and outer-drift regularization, to suppress early visits to the sign boundary.

\paragraph{Template-aware gap initialization.}
Denoting the near-zero-rejection sample matrix by $Z^{(l)}$, we initialize
\begin{equation*}
W^{(l)}_0 \;=\; \mathbf{T}^{(l)} \odot \bigl|Z^{(l)}\bigr|.
\label{eq:gap_init_template_app}
\end{equation*}

\paragraph{Hard projection.}
After each optimizer update, we perform an element-wise hard projection
\begin{equation}
W^{(l)} \;\leftarrow\; \Pi_{\text{hard}}^{(l)}(W^{(l)}),
~~
\Pi_{\text{hard}}^{(l)}(W)_{ij}
=
\mathbf{T}^{(l)}_{ij}\cdot |W_{ij}|,
~~ l\in\mac{M},
\label{eq:hard_proj_template_only}
\end{equation}
which enforces $\sign(W^{(l)})=\mathbf{T}^{(l)}$ precisely while preserving magnitudes. 
This method eliminates remaining sign flips that could not be corrected by gap initialization and outer-drift regularization.

\begin{remark}[Combining Gap init and outer-drift reduces hard-projection activations]
\label{rem:proj_activation}
\hyperref[cor:gap_plus_OD]{Proposition~\ref*{cor:gap_plus_OD}} shows that under Gap initialization and outer-drift regularization,
the boundary-hit times satisfy a geometric tail bound:
$\mab{P}[\tau_k\le T]\le h_T^{\mathrm{gap}}(g_T^{\mathrm{OD}})^{k-1}$.
In our implementation, the hard projection (Eq.~\eqref{eq:hard_proj_template_only}) is nontrivial only when the raw update
crosses the sign boundary, i.e., when an entry attempts to move to the opposite sign side.
On the good event $\mac E_\Delta$ from Assumption~\ref{ass:bounded_update} (hence $|w_{t+1}-w_t|\le \Delta$ up to time $T$) and the choice $\epsilon=\max\{\epsilon_0,\Delta\}$ (Definition~\ref{def:regions}),
such a sign-crossing event can occur only after the trajectory approaches the boundary neighborhood
$\{|w_t|\le \epsilon\}$ closely (indeed, crossing $0$ in one step forces $|w_t|\le \Delta\le \epsilon$).
Therefore, reducing boundary visits (smaller $h_T^{\mathrm{gap}}$) and suppressing re-entries
(smaller $g_T^{\mathrm{OD}}$) also reduces the frequency of nontrivial hard-projection activations.
Empirically, we observe that the combination of these three components---Gap initialization, outer-drift regularization,
and hard projection---yields substantially fewer projection activations than using hard projection alone.
\end{remark}
\subsubsection{Magnitude Quantization using SVD}
\label{sec:amp_svd_app}
After fixing signs, we only store $A^{(l)} = |W^{(l)}|$ for each $l\in\mac{M}$.
We apply truncated SVD:
\begin{equation*}
A^{(l)} \approx A^{(l)}_{r}
:= U^{(l)}_{r}\,\Sigma^{(l)}_{r}\,(V^{(l)}_{r})^{\top},
\label{eq:amp_svd}
\end{equation*}
and quantize the factors with a $b$-bit symmetric uniform quantizer
\begin{equation}
Q_b(x;\alpha) := \alpha \cdot \mathrm{clip}\!\left(\left\lfloor x/\alpha \right\rceil,\;
-2^{b-1},\; 2^{b-1}-1\right).
\label{eq:uniform_quant}
\end{equation}
Define
\begin{equation*}
\widehat{A}^{(l)} :=
Q_{b_U}\!\big(U^{(l)}_{r};\alpha_U^{(l)}\big)\;
Q_{b_\Sigma}\!\big(\Sigma^{(l)}_{r};\alpha_\Sigma^{(l)}\big)\;
Q_{b_V}\!\big(V^{(l)}_{r};\alpha_V^{(l)}\big)^{\top},
\label{eq:amp_qsvd_app}
\end{equation*}
so the reconstructed weight is
\begin{equation*}
    \widehat{W}^{(l)} = T^{(l)} \odot \widehat{A}^{(l)}.
\label{eq:recon_weight_app}
\end{equation*}
The amortized magnitude bit-cost per weight for an $m\times n$ matrix is
\begin{equation}
\mathrm{bits}_{\mathrm{amp}}(W^{(l)})
\approx \frac{b_U\,mr + b_V\,nr + b_\Sigma\,r}{mn},
\label{eq:amp_bitcost}
\end{equation}
while the sign bit-cost remains approximately zero for $l\in\mac{M}$.

\subsubsection{Experimental Validation}
\label{sec:valid_zero_template}
\begin{figure*}[!ht]
  \centering
  \begin{subfigure}[t]{0.32\textwidth}
    \includegraphics[width=\linewidth]{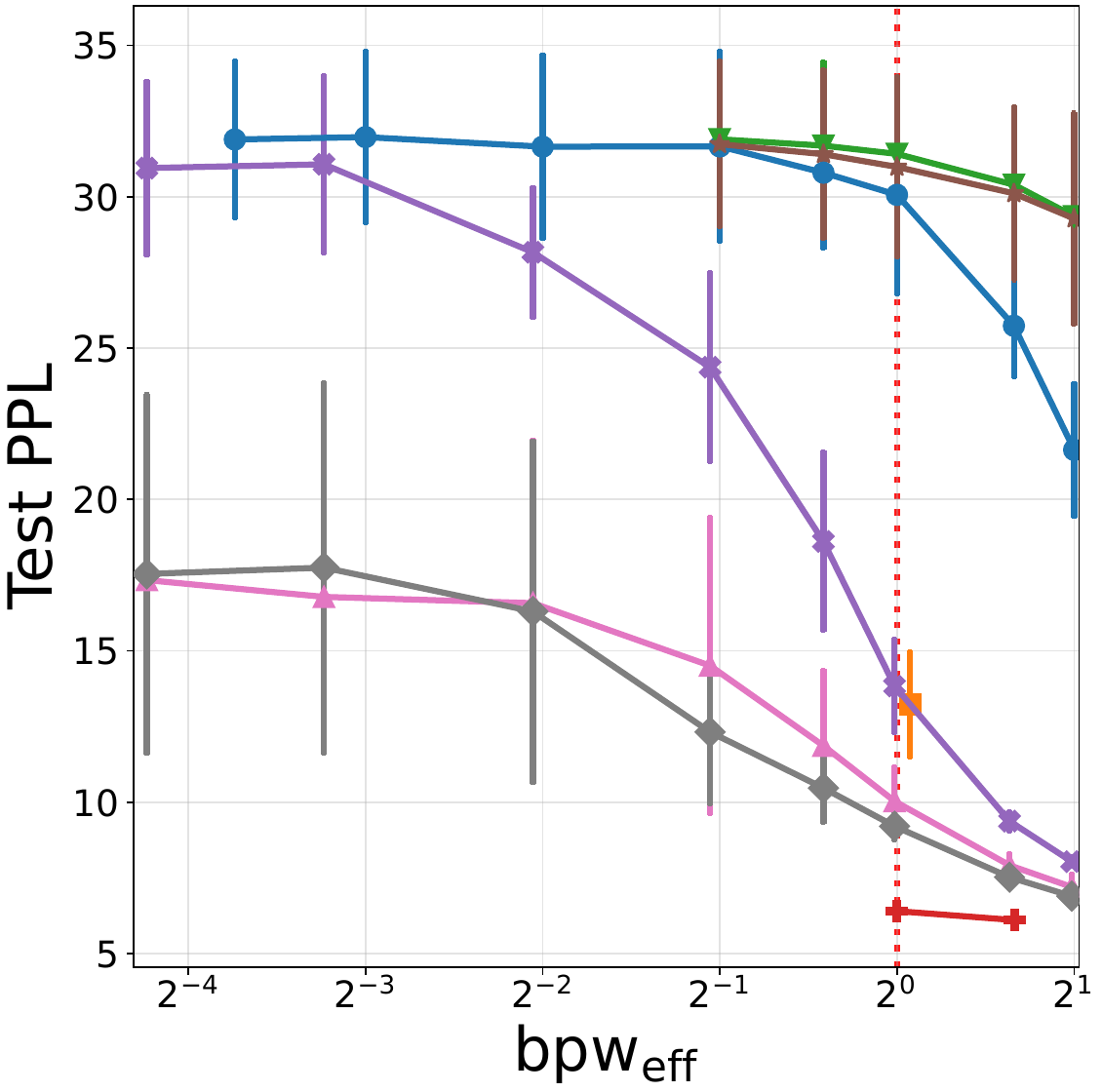}
    \caption{\texttt{charlm\_kd}}
  \end{subfigure}\hfill
  \begin{subfigure}[t]{0.32\textwidth}
    \includegraphics[width=\linewidth]{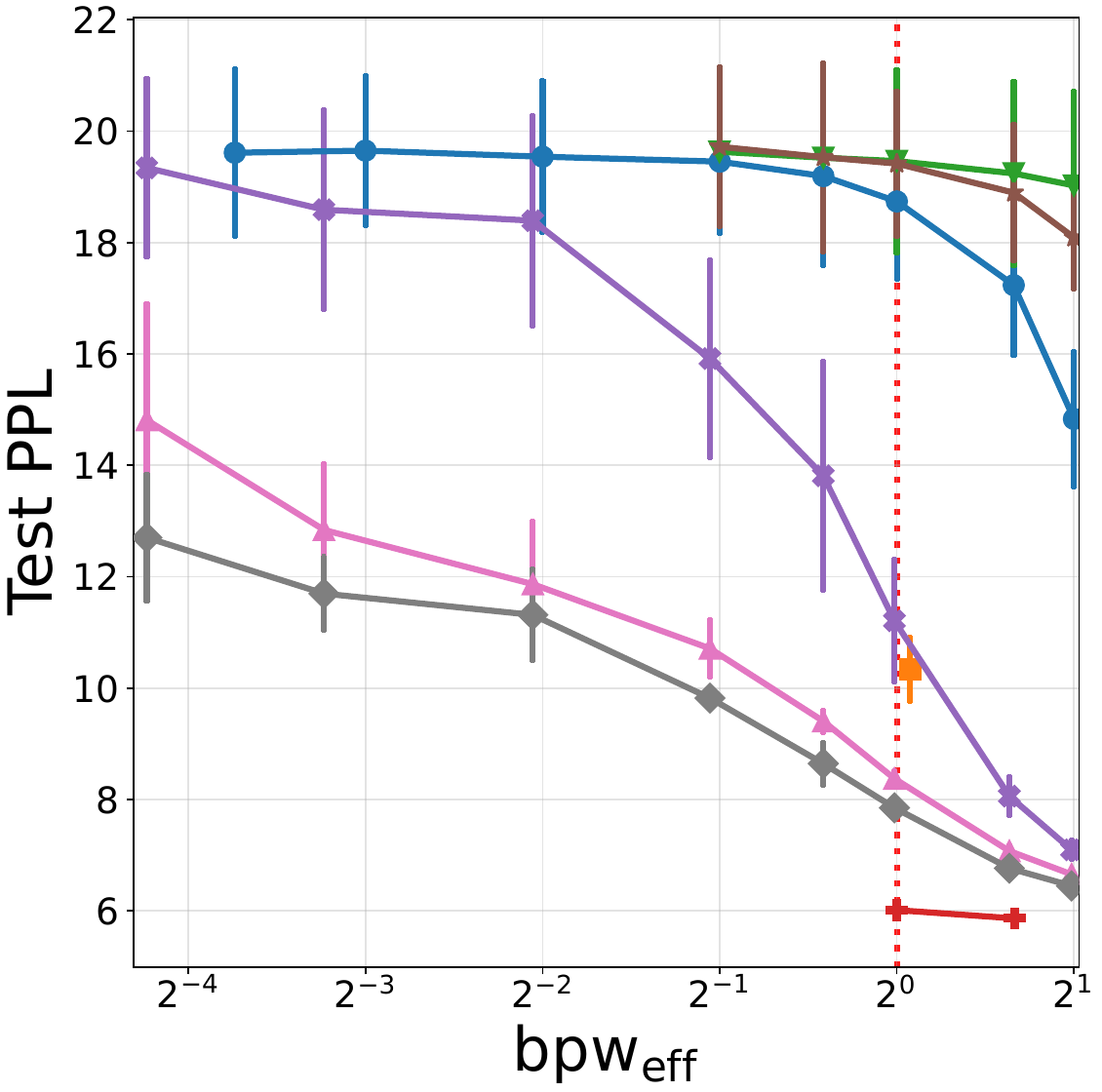}
    \caption{\texttt{text8\_char\_kd}}
  \end{subfigure}\hfill
  \begin{subfigure}[t]{0.32\textwidth}
    \includegraphics[width=\linewidth]{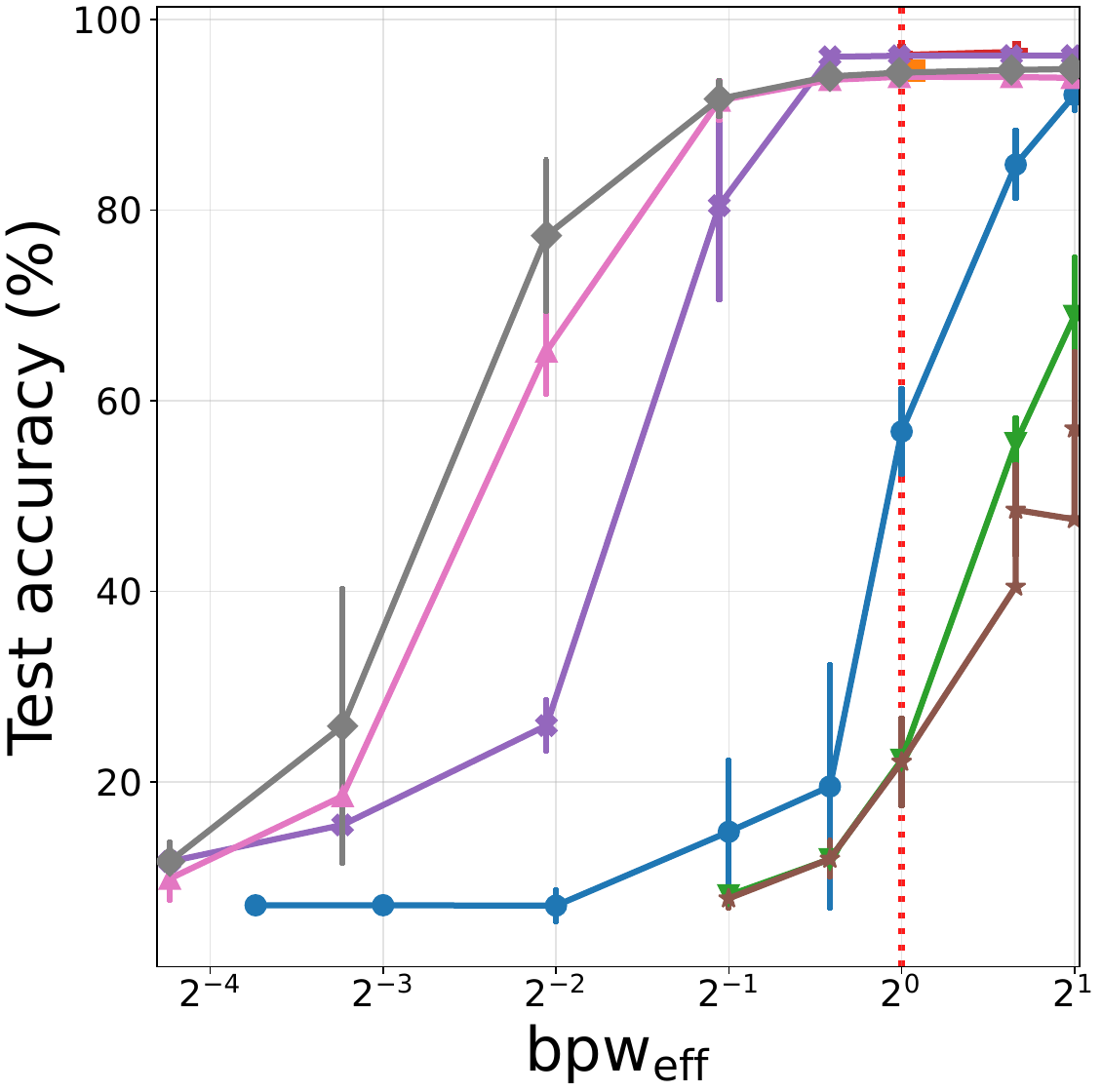}
    \caption{\texttt{dbpedia14\_kd}}
  \end{subfigure}

  \caption{
  \textbf{Performance vs. effective bits per weight ($\mathrm{bpw}_{\mathrm{eff}}$) under knowledge distillation.}
  Markers indicate the mean over three seeds and error bars show one standard deviation.
  Lower is better for perplexity (CharLM-KD, Text8-Char-KD), while higher is better for accuracy (DBPedia14-KD).
  These panels complement the main benchmark results in Figure~\ref{fig:bpweff_main}.
  }
  \label{fig:bpweff_kd}
\end{figure*}
We next validate the practical effectiveness of the proposed \emph{zero-cost sign template} approach.
Recall that, when we constrain the sign of each targeted weight matrix to a deterministically re-generable template,
the sign storage becomes zero in Eq.~\eqref{eq:zero_sign_cost3}, and only the nonnegative magnitudes need to be stored and compressed.
In our implementation, we compress magnitudes via truncated SVD and store quantized factors.
We focus on the Transformer for language tasks due to emerging needs for the model compression.
Figure~\ref{fig:bpweff_main} reports the main benchmark tasks, while Figure~\ref{fig:bpweff_kd} reports additional knowledge-distillation variants.
The details of this validation are reported in Section~\ref{app:zero_template_details}.

\paragraph{Protocol and metrics.}
Figures~\ref{fig:bpweff_main} and~\ref{fig:bpweff_kd} report task performance as a function of the \emph{effective} bits-per-weight
$\mathrm{bpw}_{\mathrm{eff}}$ on six benchmarks:
CharLM and Text8-Char (test perplexity; lower is better), and DBPedia14 (test accuracy; higher is better),
including their KD variants.
Each marker indicates the mean over three random seeds and error bars show one standard deviation.
All methods are compared at approximately matched $\mathrm{bpw}_{\mathrm{eff}}$ under the accounting rules in
Appendix~\ref{app:zero_template_details}.

\paragraph{Main result: magnitude-only SVD with zero-cost signs is consistently strong in the sub-bit regime.}
Across all benchmarks, our template-based method (SVD $|W_{\mathrm{lockin}}|$) substantially
improves over applying the same SVD budget directly to raw weights (SVD $W$ baseline) applied to the baseline weight matrix in the extreme low-bit
region $\mathrm{bpw}_{\mathrm{eff}} < 1$.
This behavior matches the empirical motivation in Section~\ref{sec:introduction}: sign patterns are random-like and difficult to compress,
while magnitudes are more compressible and become the natural target once signs are made free.
Concretely, on DBPedia14 at $\mathrm{bpw}_{\mathrm{eff}}\approx 0.24$, SVD $|W_{\mathrm{lockin}}|$ achieves
high accuracy while SVD $W$ baseline collapses; similarly, on Text8-Char and CharLM at the same budget,
SVD $|W_{\mathrm{lockin}}|$ yields clearly lower perplexity than SVD $W$ baseline.
The gains persist on KD tasks (Figure~\ref{fig:bpweff_kd}), indicating that the advantage is not specific to a
single training objective.

\paragraph{Effect of simple preconditioning (naive vs.\ z-score).}
We report two variants of magnitude SVD:
(i) a naive pipeline that factorizes $|W|$ as-is, and
(ii) a lightly preconditioned variant that normalizes magnitudes before SVD (z-score).
The detail of this method is described in Appendix~\ref{app:SVD_zscore}.
The z-score variant consistently dominates the naive variant across budgets and tasks, suggesting that even simple
normalization improves the stability of low-rank factor quantization at very small $\mathrm{bpw}_{\mathrm{eff}}$.

\paragraph{Comparison with existing extreme-compression baselines.}
We additionally include representative existing approaches:
HashedNets (weight sharing), OneBit, and unstructured pruning baselines (magnitude pruning and WANDA).
Overall, the template-based magnitude SVD is the most reliable performer in the \emph{sub-bit} region.
Notably, when pruning is evaluated with a realistic sparse-storage model (CSR; Appendix~\ref{app:zero_template_details}),
the indexing overhead dominates at very small densities, leading to weak performance at matched $\mathrm{bpw}_{\mathrm{eff}}$.
HashedNets improves as the budget increases but remains unreliable at the smallest budgets.
\subsubsection{Experimental Details of Zero-Cost Sign Template Method}
\label{app:zero_template_details}

This appendix describes the experimental setup, bit-budget accounting, and baseline implementations used to produce the results in Figures~\ref{fig:bpweff_main} and~\ref{fig:bpweff_kd}.

\paragraph{Benchmarks.} We evaluate on six tasks:
\begin{itemize}
  \item \textbf{CharLM} / \textbf{Text8-Char}: character-level language modeling; we report \emph{test perplexity} (PPL).
  \item \textbf{DBPedia14}: 14-way text classification; we report \emph{test accuracy}.
  \item \textbf{KD variants} (\texttt{\_kd}): student models trained with knowledge distillation (KD) using a teacher trained on the corresponding base task.
\end{itemize}
Across all tasks, we report mean$\pm$std over three seeds (0/1/2).

\paragraph{Models and Targeted Weights.} Our zero-template method is applied to a fixed set of targeted weight matrices (linear layers) in each model; all other parameters are maintained in full precision.
In the experiments presented in Figures~\ref{fig:bpweff_main} and~\ref{fig:bpweff_kd}, the number of targeted parameter tensors is as follows:
\begin{itemize}
  \item CharLM / Text8-Char: 14 targeted tensors.
  \item DBPedia14: 28 targeted tensors.
\end{itemize}

\paragraph{Sign template.} For each targeted layer $l$ with shape $m\times n$, we define a re-generable sign template $\mathbf{T}^{(l)}\in\{\pm 1\}^{m\times n}$.
Unless stated otherwise, we use $P_{\mathrm{init}}=\mathrm{Unif}[-1,1]$ to sample the i.i.d.\ entries of the low-rank
factors $(G,H)$ in Eq.~\eqref{eq:template_lowrank} with a fixed global seed.

\paragraph{Gap initialization and regularization.}
We use a simple gap initialization that avoids near-zero magnitudes by enforcing an absolute minimum magnitude at initialization.
In the reported runs, the gap threshold is chosen from
\begin{equation*}
a_{\text{init}} = \{0.0, 0.01, 0.02, 0.03\}
\end{equation*}
and the outer-drift (log-barrier) regularization weight is set from
\begin{equation*}
\lambda = \{0.0, 10^{-4}, 3.0\times 10^{-4}, 10^{-3}\}
\end{equation*}
within 0.02 PPL drop or 2 \% accuracy drop. This small performance degradation is consistent across tasks, so we could choose parameters other than $a_{\text{init}} = 0$ and $\lambda = 0$ for two proposed methods.
The two proposed methods are evaluated using the resulting sign-fixed weights.
For the baseline, we use the resulting weight of $a_{\text{init}} = 0$ and $\lambda = 0$.

\paragraph{Hard projection.}
During template-constrained training, after each optimizer update we apply the hard projection (Eq.~\eqref{eq:hard_proj_template_only})
to enforce $\sign(W^{(l)}) = \mathbf{T}^{(l)}$ exactly for all targeted layers.
This guarantees that the sign component incurs zero storage cost at compression time.

\subsubsection{Preconditioning Variants for Magnitude SVD.}
\label{app:SVD_zscore}
For \texttt{SVD} $|W_{\mathrm{lockin}}|$-\texttt{naive}, we apply truncated SVD directly to the magnitude matrix
$A := |W_{\mathrm{lockin}}|\in\mab{R}^{m\times n}_{\ge 0}$.
For \texttt{SVD} $|W_{\mathrm{lockin}}|$-\texttt{zscore}, we normalize $A$ column-wise before SVD and invert the
normalization after reconstruction.
Let $A := |W_{\mathrm{lockin}}|\in\mab{R}^{m\times n}_{\ge 0}$.
Define the per-column mean and (population) standard deviation by
\begin{equation*}
\zeta_j := \frac{1}{m}\sum_{i=1}^m A_{ij},
~~
\omega_j := \sqrt{\frac{1}{m}\sum_{i=1}^m (A_{ij}-\zeta_j)^2},
~~ j=1,\ldots,n .
\label{eq:H4_zs_stats}
\end{equation*}
Let $\zeta:=(\zeta_1,\ldots,\zeta_n)^\top\in\mab{R}^n$,
$\omega:=(\omega_1,\ldots,\omega_n)^\top\in\mab{R}^n_{\ge 0}$,
and let $\mathbf{1}_m\in\mab{R}^m$ and $\mathbf{1}_n\in\mab{R}^n$ denote all-ones vectors.
Introduce a small numerical stabilizer $\epsilon_{\mathrm{zs}}>0$ and define
\begin{equation*}
D_{\mathrm{zs}} := \mathrm{diag}(\omega + \epsilon_{\mathrm{zs}}\mathbf{1}_n)\in\mab{R}^{n\times n}.
\label{eq:H4_zs_D}
\end{equation*}
The column-wise z-score normalized matrix is
\begin{equation*}
A^{\mathrm{zs}}
:= (A - \mathbf{1}_m\zeta^\top)\,D_{\mathrm{zs}}^{-1}
\in\mab{R}^{m\times n}.
\label{eq:H4_zs_A}
\end{equation*}
We compute a rank-$r$ truncated SVD
\begin{equation*}
A^{\mathrm{zs}} \approx U_r\Sigma_r V_r^\top,
\label{eq:H4_zs_svd}
\end{equation*}
fold the singular values into the left factor $\widetilde U_r:=U_r\Sigma_r$,
apply the main-text symmetric uniform $b$-bit quantizer $Q_b(\cdot)$ to the factors, and reconstruct
\begin{equation*}
A^{\mathrm{zs}}_{b} := Q_b(\widetilde U_r)\,Q_b(V_r^\top).
\label{eq:H4_zs_recon}
\end{equation*}
Finally, we invert the normalization and enforce nonnegativity (element-wise):
\begin{equation*}
A_b := A^{\mathrm{zs}}_{b}\,D_{\mathrm{zs}} + \mathbf{1}_m\zeta^\top,
~~
A_b \leftarrow \max(A_b,0).
\label{eq:H4_zs_inv}
\end{equation*}
We then form the sign-fixed weight using the fixed sign template $\mathbf{T}$:
\begin{equation*}
W_b := T \odot A_b.
\label{eq:H4_zs_finalW}
\end{equation*}

\paragraph{Bit-budget Grid.} We evaluate a sweep of target effective budgets
\begin{equation*}
\mathrm{bpw}_{\mathrm{target}} \in \{0.075,\,0.125,\,0.25,\,0.5,\,0.75,\,1.0,\,1.58,\,2.0\}.
\end{equation*}
Figures~\ref{fig:bpweff_main} and~\ref{fig:bpweff_kd} visualize the range up to $2.0$ bpw to emphasize the sub-bit regime.

\paragraph{Bit accounting of SVD-based storage for dense matrices (ours and SVDW baseline).}
Given a target matrix $M\in\mab{R}^{m\times n}$ and rank $r$, we store quantized SVD factors
($U_r\in\mab{R}^{m\times r}$, $\Sigma_r\in\mab{R}^{r\times r}$ as a diagonal vector, and $V_r\in\mab{R}^{n\times r}$)
using symmetric uniform $b$-bit quantizers (Eq.~\eqref{eq:uniform_quant}).
Following Eq.~\eqref{eq:amp_bitcost}, the amortized bit cost per original weight is approximated by
\begin{equation*}
\mathrm{bpw}_{\mathrm{eff}} \approx \frac{b\cdot (mr + nr + r)}{mn},
\end{equation*}
with $b=4$ in our experiments (i.e., 4-bit quantization for the stored factors).
For the \textbf{zero-template} method, sign bits are not stored (Eq.~\eqref{eq:zero_sign_cost3});
for \textbf{SVDW baseline}, the same SVD method is applied directly to $W$.

\paragraph{Bit accounting of CSR storage for unstructured sparsity (pruning and WANDA).}
For pruning-based methods, we assume CSR storage for each pruned matrix:
\begin{equation*}
\mathrm{bpw}_{\mathrm{eff}} \approx \frac{\underbrace{\mathrm{nnz}\cdot b_{\mathrm{val}}}_{\text{stored values}} +
\underbrace{\mathrm{nnz}\cdot b_{\mathrm{idx}} + (m+1)\cdot b_{\mathrm{ptr}}}_{\text{indices/row pointers}}}{mn},
\end{equation*}
where $\mathrm{nnz}$ is the number of nonzeros, $b_{\mathrm{val}}=4$ bits for quantized nonzero values, and
we use 32-bit indices/pointers ($b_{\mathrm{idx}}=b_{\mathrm{ptr}}=32$).
This accounting explains why pruning can be unfavorable at extreme densities: index overhead dominates.
We also impose a practical start threshold and only run pruning methods when the implied keep ratio exceeds
\begin{equation*}
\texttt{prune\_start\_keep\_frac}=0.005.
\end{equation*}

\paragraph{Baselines.} We summarize the baselines shown in Figures~\ref{fig:bpweff_main} and~\ref{fig:bpweff_kd}. All results shown in these figures are computed using three seeds (0, 1, 2) and reported as mean$\pm$std.
We log per-method $\mathrm{bpw}_{\mathrm{eff}}$ along with its decomposition (e.g., sign/index overhead vs. value bits) to ensure
bit-budget correctness for each storage model.

\begin{itemize}
  \item \textbf{HashedNets}: weight sharing through hashing into a budget-dependent number of buckets; bucket values are stored
  with the same value-bitwidth used elsewhere (4-bit in our implementation). No task-specific fine-tuning is employed.
  \item \textbf{OneBit}: explicit 1-bit sign storage with a small overhead for scale information.
  \item \textbf{Pruning}: magnitude-based unstructured pruning with CSR accounting; prune-only (no recovery fine-tuning applied).
  \item \textbf{WANDA}: activation-aware pruning utilizing the standard WANDA score; prune-only; CSR accounting.
  \item \textbf{QAT (reference)}: we include 1-bit and ternary QAT as reference points. In our code, QAT employs a STE with a short fine-tuning schedule applied to the targeted tensors; KD tasks utilize the same teacher as the KD student training.
\end{itemize}

\end{document}
